\def\eqref#1{equation~\ref{#1}}
\def\1{\bm{1}}
\DeclareMathAlphabet{\mathsfit}{\encodingdefault}{\sfdefault}{m}{sl}
\SetMathAlphabet{\mathsfit}{bold}{\encodingdefault}{\sfdefault}{bx}{n}
\newtheorem{theorem}{Theorem}[section]
\newtheorem{remark}{Remark}[section]
\newtheorem{proposition}{Proposition}[section]
\newtheorem{assumption}{Assumption}[section]
\newtheorem{lemma}{Lemma}[section]
\newtheorem{definition}{Definition}[section]
\newcommand{\Rmnum}[1]{\expandafter\@slowromancap\romannumeral #1@}
\title{On the Tension Between Optimality and Adversarial Robustness in Policy Optimization}
\author{Haoran Li, Jiayu Lv \& Congying Han \thanks{Corresponding author} \\
School of Mathematical Sciences\\
University of Chinese Academy of Sciences\\
Beijing, China \\
\texttt{\{lihaoran21, lvjiayu24\}@mails.ucas.ac.cn} \\
\texttt{hancy@ucas.ac.cn}
\And
Zicheng Zhang \\
JD.com \\
Beijing, China \\
\texttt{zhangzicheng6@jd.com} 
\And
Anqi Li \\
School of Mathematical Sciences \\
Nankai University \\
Tianjin, China \\
\texttt{anqili@nankai.edu.cn} 
\And
Yan Liu \\
School of Statistics and Data Science \\
Nankai University \\
Tianjin, China \\
\texttt{liuyan23@nankai.edu.cn} 
\And
Tiande Guo \\
School of Mathematical Sciences\\
University of Chinese Academy of Sciences\\
Beijing, China \\
\texttt{tdguo@ucas.ac.cn}
\And
Nan Jiang \\
Siebel School of Computing and Data Science\\
University of Illinois Urbana-Champaign\\
Urbana, IL 61801, USA \\
\texttt{nanjiang@illinois.edu}
}
\begin{document}

\maketitle

\begin{abstract}
Achieving optimality and adversarial robustness in deep reinforcement learning has long been regarded as conflicting goals. 
Nonetheless, recent theoretical insights presented in CAR~\citep{li2024towards} suggest a potential alignment, raising the important question of how to realize this in practice.
This paper first identifies a key gap between theory and practice by comparing standard policy optimization~(SPO) and adversarially robust policy optimization (ARPO). 
Although they share theoretical consistency, \textit{a fundamental tension between robustness and optimality arises in practical policy gradient methods}.
SPO tends toward convergence to vulnerable first-order stationary policies~(FOSPs) with strong natural performance, whereas ARPO typically favors more robust FOSPs at the expense of reduced returns.
Furthermore, we attribute this tradeoff to the \textit{reshaping effect of the strongest adversaries} in ARPO, which significantly complicates the global landscape by inducing \textit{deceptive sticky FOSPs}. This improves robustness but makes navigation more challenging.
To alleviate this, we develop the \textit{BARPO}, a bilevel framework unifying SPO and ARPO by modulating adversary strength, thereby facilitating navigability while preserving global optima.
Extensive empirical results demonstrate that BARPO consistently outperforms vanilla ARPO, providing a practical approach to reconcile theoretical and empirical performance. 
\end{abstract}

\section{Introduction}

Deep reinforcement learning (DRL) has demonstrated remarkable success across a variety of complex tasks~\citep{mnih2015human, lillicrap2015continuous, silver2016mastering} and real-world applications, including robotics~\citep{ibarz2021train, liu2022robot, tang2025deep}, autonomous driving~\citep{kiran2021deep, chen2022milestones, chen2024end}, and large model training~\citep{xin2025deepseekproverv, kumar2025training, guo2025deepseek}.
Despite such substantial progress, DRL agents remain highly vulnerable to imperceptible attacks on their state observations, leading to severe performance degradation and even complete system failure \citep{huang2017adversarial, behzadan2017vulnerability, lin2017tactics, kos2017delving, pattanaik2017robust, weng2019toward, inkawhich2019snooping, ilahi2021challenges}. 
This vulnerability threatens the reliable deployment of DRL in safety-critical environments, highlighting the urgent need for robust agents capable of withstanding subtle perturbations and malicious attacks.

Motivated by these challenges, \citet{zhang2020robust} introduced the state-adversarial Markov decision process (SA-MDP) framework, providing a theoretical foundation for adversarial robustness in reinforcement learning.
Building on this framework, they demonstrated that an optimal robust policy (ORP) may not always exist, indicating a potential conflict between the objectives of optimality and robustness.
Recently, \citet{li2024towards, li2025towards} further explored the concept of ORP in practical settings and proposed the ISA-MDP formulation, a refinement that formally establishes the existence of an ORP that coincides with the Bellman optimality policy.
These theoretical results suggest that the objectives of adversarial robustness and optimality are, in principle, aligned in most practical tasks.

This insight drives the pursuit of DRL agents that not only maintain strong performance in clean environments but also exhibit resilience to adversarial attacks. 
This theoretical consistency is critical for deploying DRL agents in real-world applications, where strong malicious attacks are comparatively rare. 
However, in practice, achieving this alignment remains challenging. 
Despite sharing this theoretical consistency, conventional policy optimization methods struggle under adversarial settings~\citep{zhang2021robust, sun2021strongest, sun2024breaking}.
It remains an open problem whether this theoretical alignment can be technically realized, leaving a critical gap between theory and practice.

% \begin{figure}[t]
%   \centering
%     \subfigure[SPO]{
%     \begin{minipage}{0.3\linewidth}
%         \vspace{-3em}
%         \centering
%         \includegraphics[width=\linewidth]{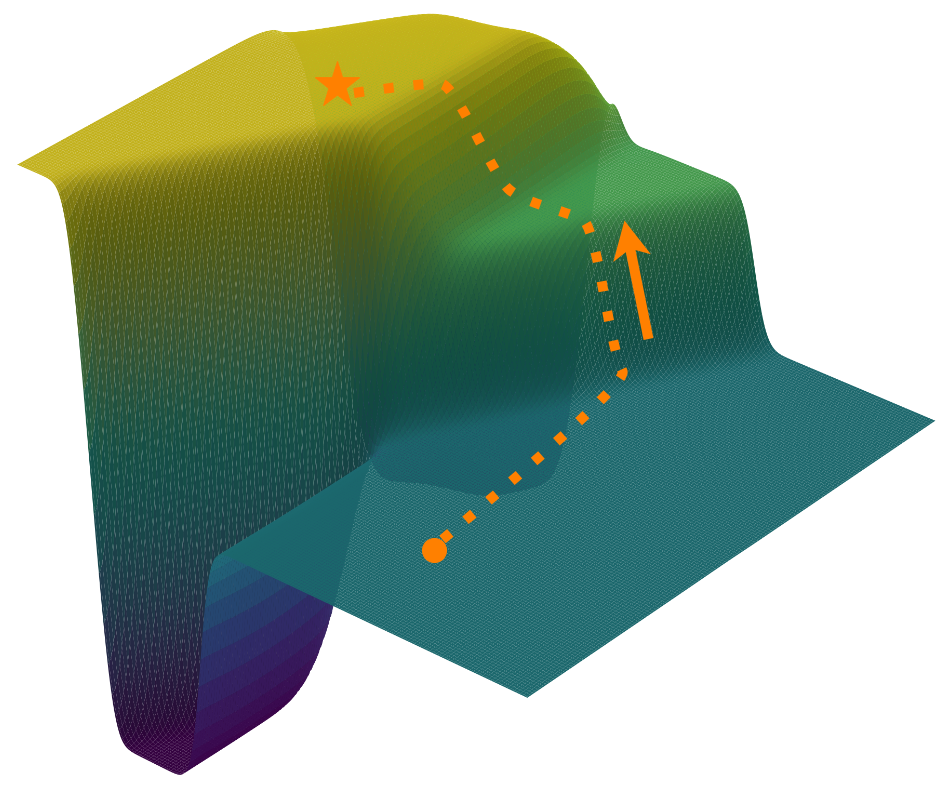}
%         % \vspace{-3em}
%     \end{minipage}
%     }
%     \subfigure[ARPO]{
%     \begin{minipage}{0.3\linewidth}
%         \vspace{-3em}
%         \centering
%         \includegraphics[width=\linewidth]{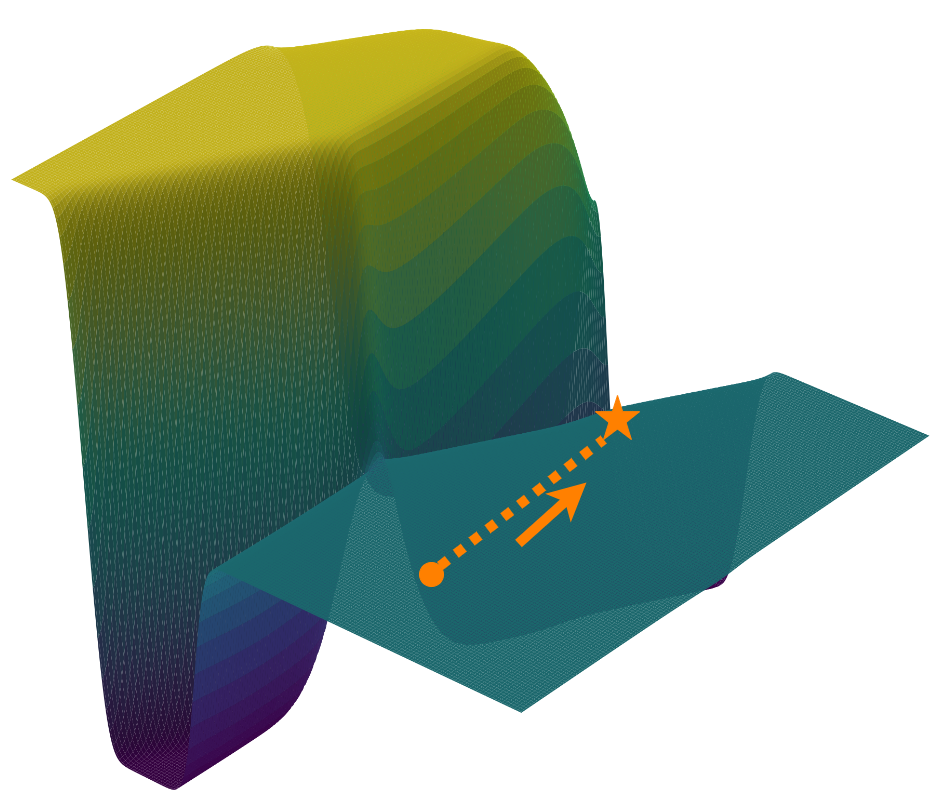}
%         % \vspace{-3em}
%     \end{minipage}
%     }
%     \subfigure[BARPO]{
%     \begin{minipage}{0.3\linewidth}
%         \vspace{-3em}
%         \centering
%         \includegraphics[width=1.1\linewidth]{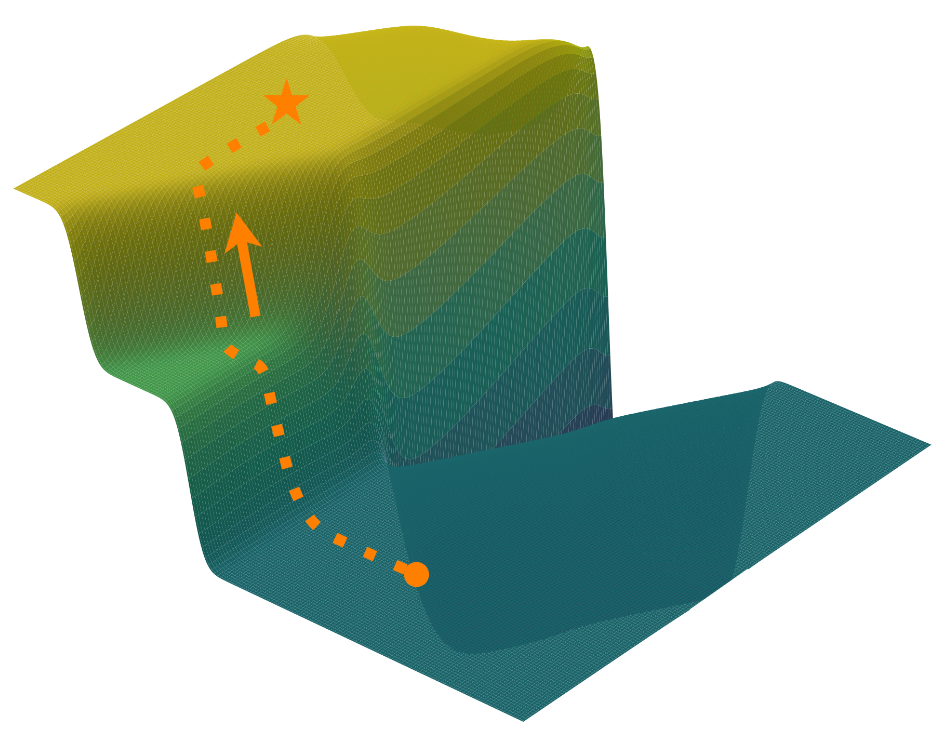}
%         % \vspace{-3em}
%     \end{minipage}
%     }
%     \vspace{-1em}
%     \caption{Landscape examples of SPO, ARPO, and BARPO.}
%     \label{fig:landscape}
%     \vspace{-1.8em}
% \end{figure}

\begin{figure}[t]
  \centering
    \setcounter{subfigure}{0}
    \subfigure[SPO]{
    \begin{minipage}{0.232\linewidth}
        \centering
        \includegraphics[width=\linewidth]{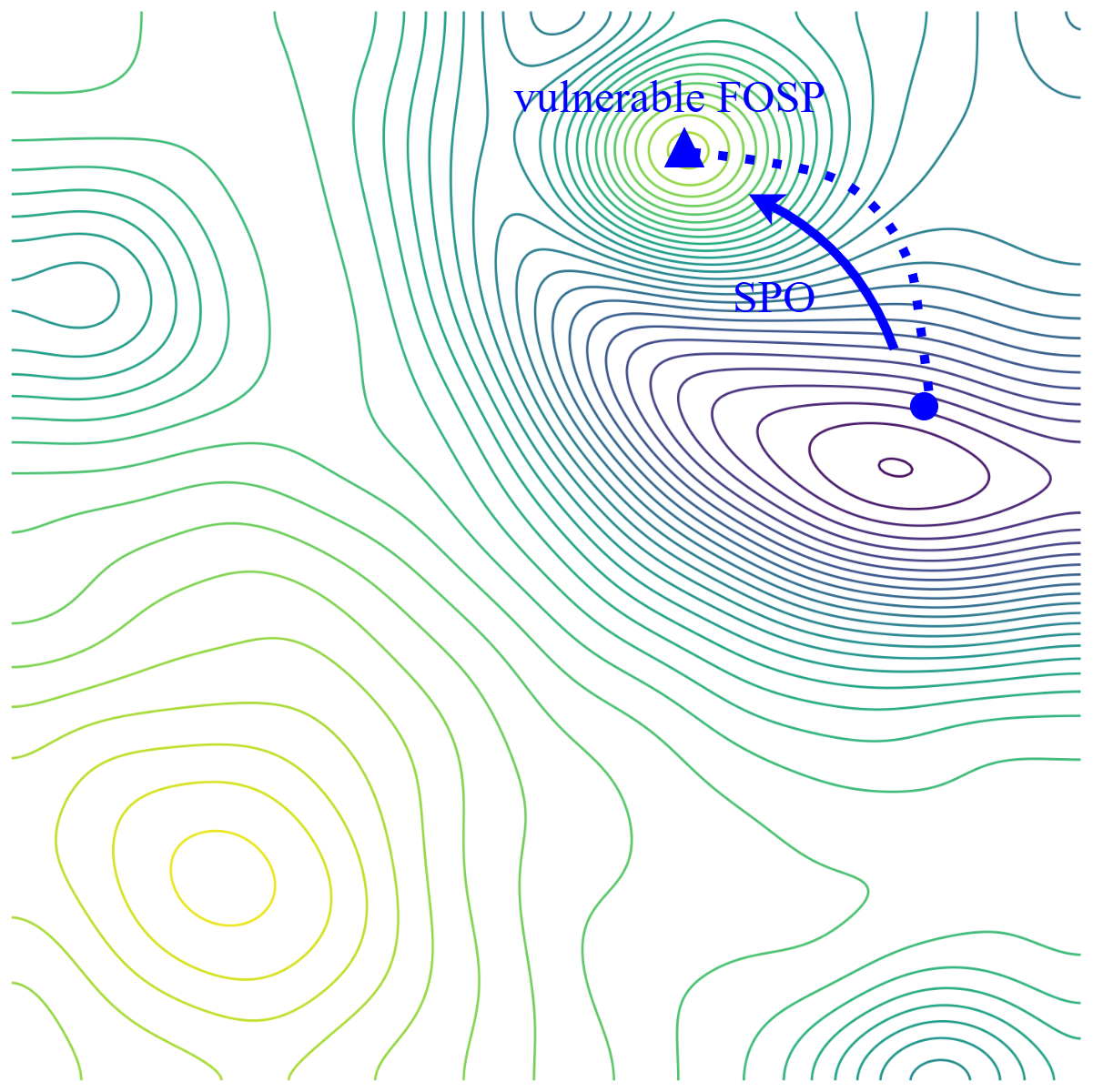}
    \end{minipage}
    }
    \subfigure[ARPO]{
    \begin{minipage}{0.232\linewidth}
        \centering
        \includegraphics[width=\linewidth]{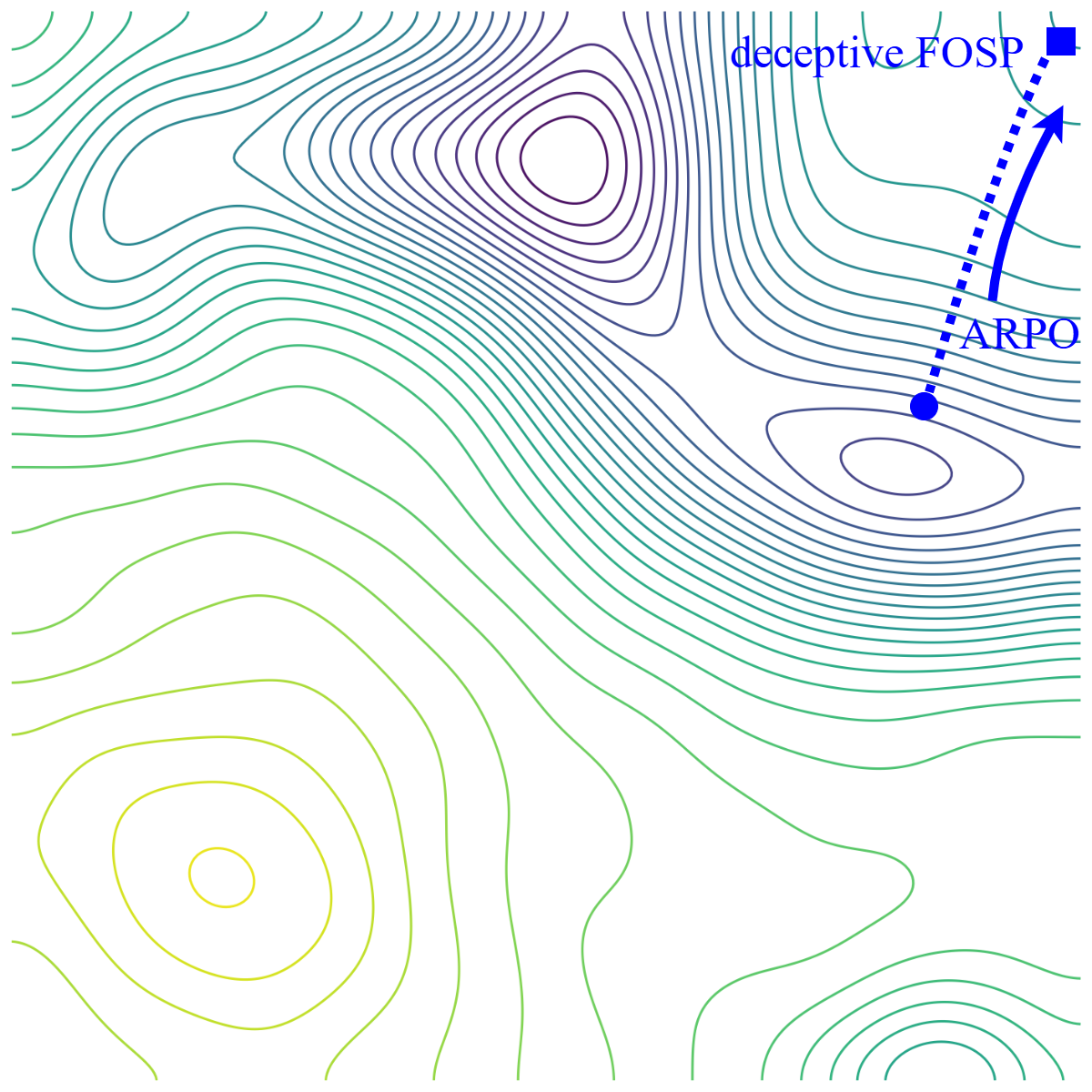}
    \end{minipage}
    }
    \subfigure[BARPO]{
    \begin{minipage}{0.232\linewidth}
        \centering
        \includegraphics[width=\linewidth]{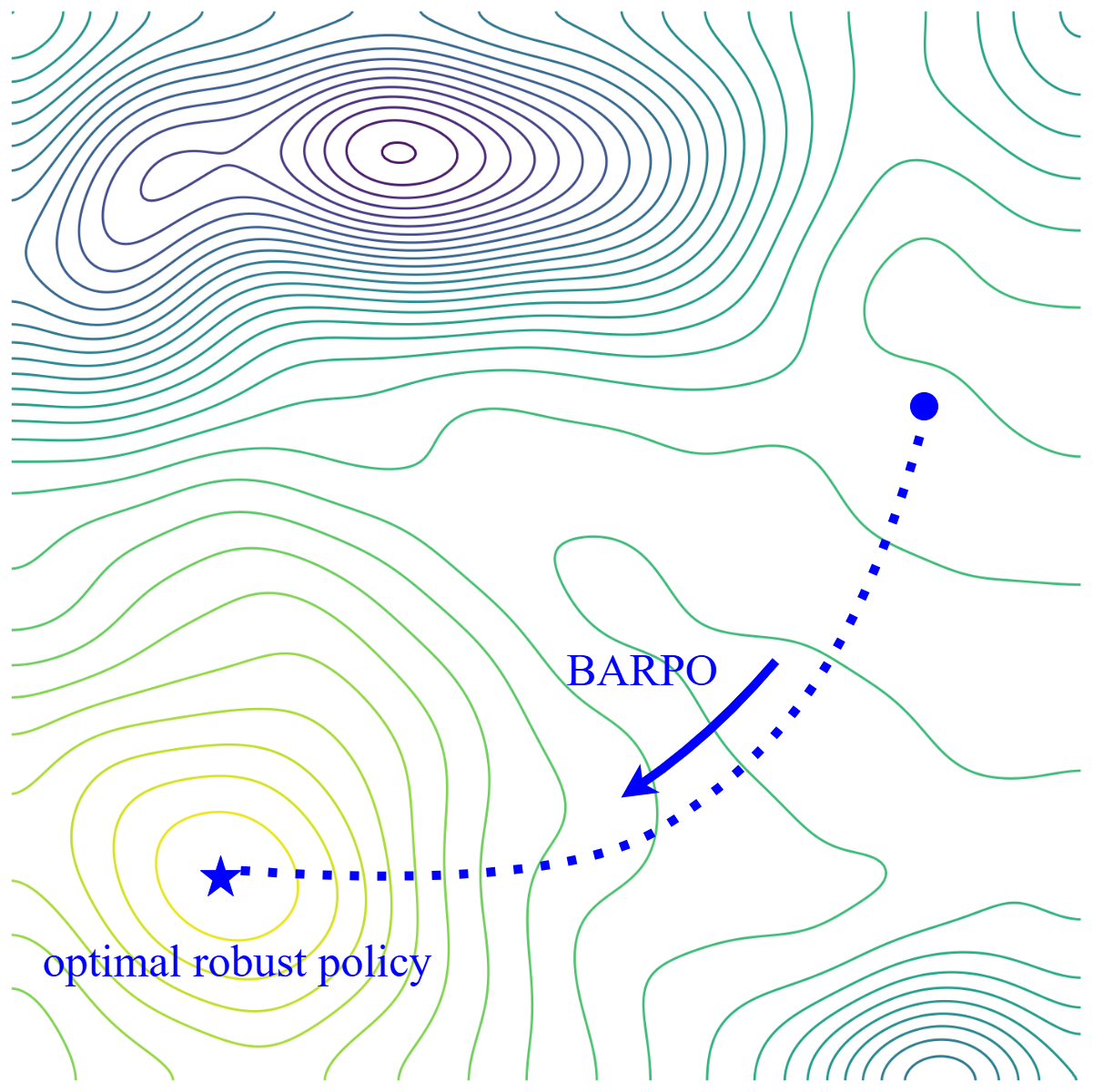}
    \end{minipage}
    }
    \subfigure[Overall]{
    \begin{minipage}{0.232\linewidth}
        \centering
        \includegraphics[width=\linewidth]{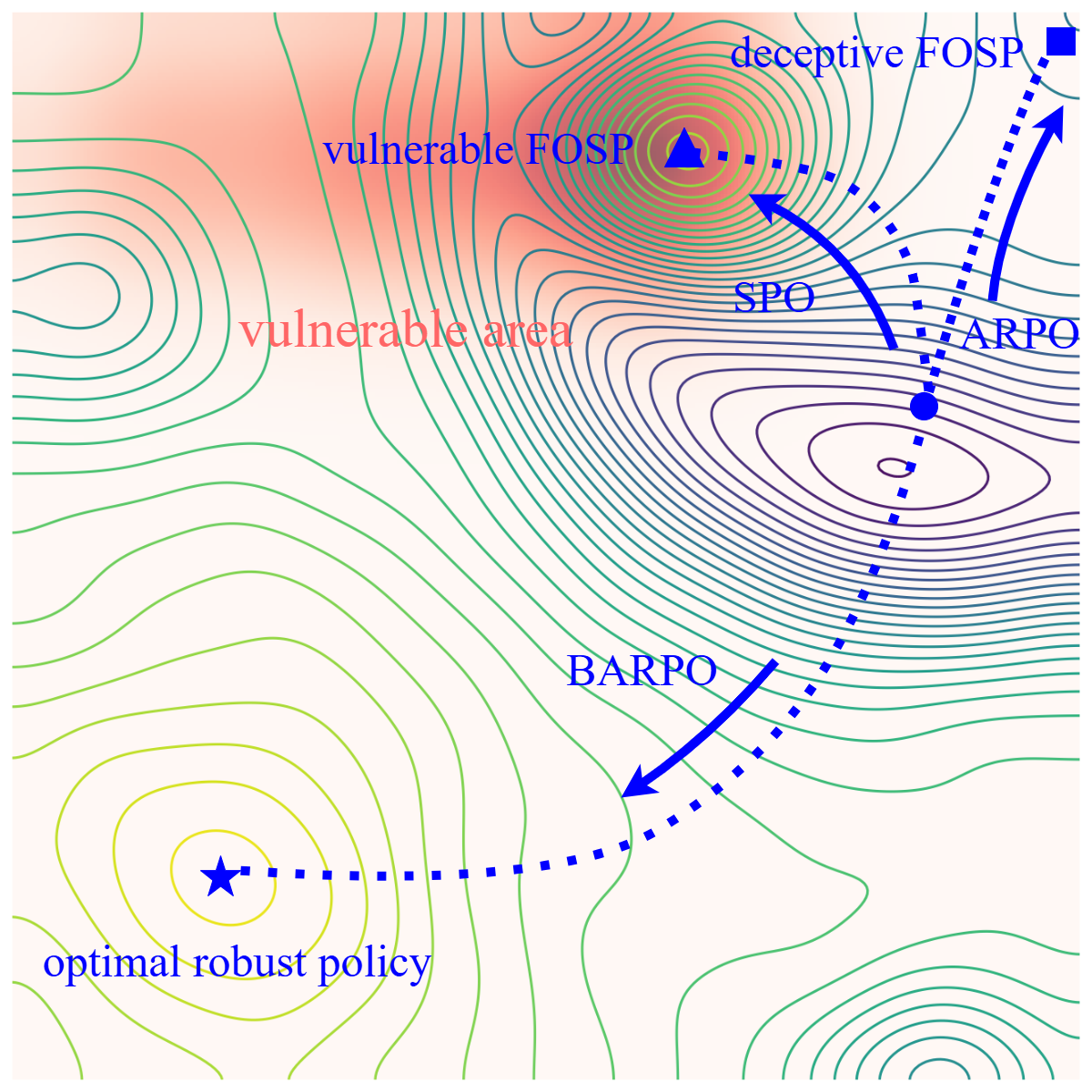}
    \end{minipage}
    }
    \subfigure{
    \begin{minipage}[b]{0.5\linewidth}
        \centering
        \vspace{-2.3em}
        \includegraphics[width=\linewidth]{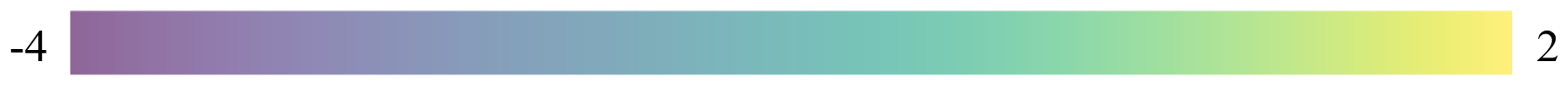}
    \end{minipage}
    }
    \vspace{-1em}
  \caption{Schematic illustration of the optimization landscapes under SPO, ARPO, and BARPO. (a) SPO acsents along fragile directions, leading to vulnerable FOSPs with high natural value. (b) ARPO becomes trapped in robust regions but is limited to low-return solutions. (c) BARPO reshapes the landscape by lifting robust but low-return regions, enabling convergence to robust FOSPs with high returns. (d) Overall comparison of the three paradigms: contour lines represent natural returns, while background color indicates robustness, with darker red denoting lower robustness.}
  \label{fig:optimization path}
  \vspace{-2em}
\end{figure}

In this paper, we delve into the optimization behavior and learning dynamics of two training paradigms: \textit{standard policy optimization (SPO)} and \textit{adversarially robust policy optimization (ARPO)}. 
SPO aims to maximize the standard value function, i.e., $\max_\pi V^\pi(s),\ \forall\ s$, while ARPO optimizes the worst-case adversarial value function, i.e., $\max_\pi \min_\nu V^{\pi \circ \nu}(s),\ \forall\ s$. 
By conducting a systematic comparison, we reveal a novel optimality-robustness tension in policy gradient methods.

% We begin by showing that both SPO and ARPO converge to first-order stationary policies (FOSPs), rather than the global optima.
In our analysis, we first prove that both SPO and ARPO converge to first-order stationary policies (FOSPs) rather than global optima. 
% We then compare their learning behaviors near FOSPs and identify critical discrepancies.
% Specifically, ARPO leads to FOSPs with significantly improved robustness to state perturbations, whereas SPO lacks this resilience.
Next, by examining the learning behaviors around these FOSPs, we uncover a key discrepancy: ARPO consistently produces policies with markedly higher robustness to state perturbations, whereas SPO-derived FOSPs remain vulnerable.
These findings help explain the empirical vulnerability of SPO, despite the theoretical existence of an ORP, and emphasize the importance of ARPO in building robust DRL agents. 
% However, ARPO often suffers from substantially degraded returns, whereas SPO typically achieves strong natural performance.
Finally, we reveal that this robustness gain comes at a significant cost: ARPO-trained policies exhibit substantially lower natural returns compared to SPO. 
% This contrast highlights a fundamental trade-off in practical policy-gradient methods: robustness of ARPO is gained at the cost of reduced returns, posing a challenge for deploying robust policy-based RL agents.
Altogether, these results expose a fundamental robustness-performance tradeoff in policy-gradient methods, highlighting a practical challenge for closing the theory-practice gap.
% FOSPs obtained via ARPO exhibit stronger robustness, in contrast to SPO, which lacks such resilience to state perturbations.
% These findings help explain why, despite the theoretical existence of an aligned ORP, SPO is vulnerable and underscore the critical role of ARPO in developing robust DRL agents.
% Moreover, FOSPs achieved by ARPO show poor natural performance while SPO exhibits great cnatural returns.
% while ARPO enhances robustness, it sacrifices some of the natural performance typically observed in SPO, presenting a challenge for deploying policy-based RL algorithms. 

To understand this tradeoff, we analyze the optimization landscape and value geometry induced by SPO and ARPO.
We attribute the tension to the \textit{reshaping effect} induced by the strongest adversaries: although they help guide the agent toward robust FOSPs, they often impede optimization by compromising the navigability of the landscape.
% Our key insight is that this trade-off arises from a reshaping effect introduced by the presence of strong adversaries. 
% While these adversaries help steer the policy toward robust FOSPs, they also significantly alter the geometry of the optimization landscape, often in detrimental ways.
As illustrated in Figure~\ref{fig:optimization path}, ARPO creates robust peaks but introduces additional valleys, distorting the global landscape to a more rugged terrain.
% As shown in Figure~\ref{fig:landscape}, ARPO tends to smooth sharp peaks in the value surface, mitigating overfitting to fragile trajectories. 
These new valleys act as traps for the optimization path, hindering gradient-based methods from ascending towards the global or better local peaks.
Instead, the optimization process stalls at these \textit{deceptive sticky FOSPs}, which often result in lower natural returns compared to those achieved by SPO.
This reshaping effect presents a core barrier to learning both optimality and robustness.
% However, this smoothing comes at a cost: it introduces additional valleys and distortions, resulting in a more rugged and less navigable landscape. 
% These newly formed valleys act as optimization traps, preventing gradient-based methods from effectively ascending toward globally or locally optimal solutions. 
% As a result, the learning dynamics become prone to converging at deceptive sticky FOSPs—local optima that exhibit strong robustness but suffer from reduced performance in clean environments.
% This reshaping effect reveals a fundamental barrier to simultaneously achieving both robustness and optimality in policy-gradient methods. 
% Understanding and mitigating this effect is therefore crucial for developing new learning algorithms that can overcome the limitations of current approaches and better align theoretical guarantees with practical performance.

To address this challenge, we propose a bilevel optimization framework that bridges SPO and ARPO, which adjusts the adversary strength to promote traversable optimization paths while maintaining robustness.
% The key idea is to dynamically modulate the strength of the adversary during training, thereby guiding the learning process along smoother and more navigable optimization paths. 
This modulation prevents the optimization landscape from becoming overly rugged, helping to avoid local traps while still encouraging robustness.
Importantly, it preserves the globally optimal robust policy shared by both SPO and ARPO.
% Crucially, the framework is designed to preserve the globally optimal robust policy that is theoretically shared by both SPO and ARPO. 
% By doing so, it retains the potential for robustness and optimality to align, rather than sacrificing one for the other. 
To enhance convergence and efficiency, we incorporate SPO dynamics into the bilevel framework, further smoothing the optimization landscape. 
% To further improve convergence and stability, we augment the framework with SPO-style updates, which inject clean-environment gradients into the learning process.
% This complementary dynamic acts to regularize the adversarial landscape and accelerates training.
% We instantiate this framework in a new algorithm, called Bilevel Adversarially Robust Policy Optimization (BARPO). 
% BARPO achieves consistently high returns across both clean and adversarial settings, outperforming existing robust RL baselines. 
The resulting approach, \textit{Bilevel ARPO (BARPO)}, consistently achieves superior natural and robust returns, offering a promising approach to bridging the gap between theoretical alignment and practical performance.
% Our results demonstrate that BARPO can effectively reconcile the practical trade-off between robustness and performance, thereby narrowing the longstanding gap between theoretical insights and real-world deployment in adversarially vulnerable environments.
% Notably, the best agents trained with BARPO achieve both high returns and strong robustness, suggesting a promising direction for bridging the gap between theoretical alignment and practical performance.
% Notably, the best agents trained with BARPO attain both high returns and strong robustness, offering a promising approach to bridging the gap between theoretical alignment and practical performance.
% To tackle this challenge, we introduce two approaches designed to smooth the global optimization landscape and reduce the occurrence of sticky stationary policies. First, \textit{Adaptive ARPO (A2RPO)} explicitly incorporates a bonus term from standard training to dynamically balance natural and robust performance. Second, \textit{Bilevel ARPO (BARPO)} applies a bilevel optimization scheme using a tailored surrogate for the adversary, which implicitly refines the training landscape.
% Both frameworks retain the consistent globally optimal robust policy with ARPO, while improving practical natural and robust performance. Notably, the best agents trained with these paradigms achieve both high returns and strong robustness, suggesting a promising path toward closing the gap between theory and practice.
To summarize, this paper makes the following cohesive contributions:
\setlength{\itemsep}{0.1pt}
\begin{itemize}[topsep=0em, leftmargin=0em, itemindent=1em]
    \item We explain the nature of vulnerability in standard policy optimization (SPO) within the theoretically aligned framework. While an optimal robust policy exists, SPO typically converges to fragile first-order stationary policies (FOSPs), which offer strong natural performance but lack robustness.    
    % \item We reveal a previously overlooked form of optimality-robustness tension in practical policy optimization. In contrast to classical formulations where robustness and optimality conflict at the global level, adversarially robust policy optimization (ARPO) shares the same global optima as SPO but favors more robust FOSPs, which typically yield lower returns in clean environments.
    \item We uncover a new form of optimality-robustness tension arising in policy optimization. Unlike prior notions of conflicting objectives, adversarially robust policy optimization (ARPO) shares the same global optima with SPO but prefers more robust FOSPs that typically yield lower returns.    
    \item We elucidate the underlying mechanism behind this trade-off. The strongest adversaries in ARPO reshape the optimization landscape by introducing numerous sticky FOSPs, which improve robustness but hinder effective policy improvement by making the landscape more difficult to navigate.
    % \item We demystify the underlying mechanism behind this tradeoff. In ARPO, strong adversaries reshape the optimization landscape by introducing numerous sticky FOSPs, which increase robustness but hinder effective policy improvement by making the landscape more difficult to navigate. 
    % \item We propose a unified bilevel optimization framework that bridges SPO and ARPO, and instantiate it as \textit{Bilevel ARPO (BARPO)}. BARPO dynamically adjusts adversary strength to smooth the optimization landscape, suppressing suboptimal sticky FOSPs while preserving access to robust global optima. Empirical results show that BARPO consistently improves both natural and robust performance, effectively narrowing the gap between theoretical alignment and practical realization.
    \item We propose a unified bilevel framework that connects SPO and ARPO, and instantiate it as Bilevel ARPO (BARPO), a novel method that reshapes the landscape to reduce the prevalence of poor sticky FOSPs while retaining access to robust global optima. Empirical results demonstrate that BARPO significantly improves practical performance, narrowing the gap between theory and practice.
\end{itemize}
\section{Preliminaries and Notations}

\textbf{MDP and SPO.}
A \textit{Markov Decision Process (MDP)} is formulated as a tuple $\left( \mathcal{S}, \mathcal{A}, r, \mathbb{P}, \gamma, \mu_0 \right)$, where $\mathcal{S}$ is the state space. $\mathcal{A}$ is the action space. The reward function $r: \mathcal{S} \times \mathcal{A} \rightarrow \mathbb{R}$ assigns a scalar reward to each state-action pair. The transition dynamics $\mathbb{P}: \mathcal{S} \times \mathcal{A} \rightarrow \Delta\left(\mathcal{S}\right)$ define the probability distribution over next states, and $\mu_0 \in \Delta\left( \mathcal{S} \right)$ denotes the initial state distribution. The discount factor $\gamma \in [0,1)$ controls the trade-off between immediate and future rewards.
A stationary policy $\pi: \mathcal{S}\rightarrow \Delta\left( \mathcal{A} \right)$ maps each state to a probability space over actions.
Under a given policy $\pi$, the state value function is $V^\pi(s) = \mathbb{E}_{\pi,\mathbb{P}}\left[ \sum_{t=0}^{\infty} \gamma^t r(s_t,a_t) | s_0=s\right]$, and the action-value function (or \(Q\)-function) is $Q^\pi(s,a) = \mathbb{E}_{\pi,\mathbb{P}}\left[ \sum_{t=0}^{\infty} \gamma^t r(s_t,a_t) | s_0=s, a_0=a\right]$. 
A fundamental property of MDPs is the existence of a stationary and deterministic optimal policy $\pi^*$, termed \textit{Bellman optimality policy (BOP)}, that maximizes all $V^\pi(s)$ and $Q^\pi(s, a)$ for all states and actions. 
In practice, particularly for large-scale state and action spaces, \textit{standard policy optimization (SPO)} methods are widely used for their efficiency \citep{silver2014deterministic, schulman2015high, schulman2015trust, lillicrap2015continuous, mnih2016asynchronous, schulman2017proximal, haarnoja2018soft}.
For the parameterized $\pi_\theta$, SPO aims to solve:
\begin{equation}\label{eq: spo}
    \max_{\theta: \pi_\theta\in \Pi} V^{\pi_\theta} (s), \ \forall\ s\in\mathcal{S}, \text{ where } \Pi := \left\{ \pi_\theta  \vert \pi_\theta(\cdot\vert s) \in\Delta\left( \mathcal{A} \right), \  \forall\ s\in\mathcal{S} \right\}. \tag{SPO}
\end{equation}
To simplify notations, we take the expectation over the distribution $\mu_0$: $V^{\pi_\theta} (\mu_0) = \mathbb{E}_{s\sim \mu_0} [V^{\pi_\theta} (s)]$.

\textbf{ISA-MDP and ARPO.}
An \textit{Intrinsic State-adversarial Markov Decision Process (ISA-MDP)} is defined by the tuple $\left( \mathcal{S}, \mathcal{A}, r, \mathbb{P}, \gamma, \mu_0, B^*\right)$. This formulation extends the standard MDP by incorporating structured neighborhoods of allowable state perturbations.
For each state $s \in \mathcal{S}$, the set-value function $B^*(s) \subseteq \mathcal{S}$ specifies admissible perturbations which cannot alter the optimal action.
An adversary is represented by a mapping $\nu: \mathcal{S}\rightarrow \mathcal{S}$ that perturbs the observed state $s$ to a nearby state $s_\nu:= \nu(s) \in B^*(s)$. Under this adversary, the policy operates on the perturbed state, denoted by $(\pi \circ \nu)(s):=\pi(a|\nu(s))$.
Under this setting, the adversarial value function is given by
$V^{\pi\circ \nu}(s) = \mathbb{E}_{\pi\circ \nu,\mathbb{P}}\left[ \sum_{t=0}^{\infty} \gamma^t r(s_t,a_t) | s_0=s\right]$,
and the corresponding adversarial $Q$-function is
$Q^{\pi\circ \nu}(s,a) = \mathbb{E}_{\pi\circ \nu,\mathbb{P}}\left[ \sum_{t=0}^{\infty} \gamma^t r(s_t,a_t) | s_0=s, a_0=a\right]$.
For any policy $\pi$, there exists the strongest adversary $\nu^*(\pi)$ that minimizes the value function across all states, defined by $\nu^*(\pi) = \mathop{\arg\min}_\nu V^{\pi\circ \nu}(s),\ \forall s$.
Within this framework, an \textit{optimal robust policy (ORP)} $\pi^*$ exists, which simultaneously maximizes performance against its strongest adversary for all states, i.e., $V^{\pi^*\circ \nu^*(\pi^*)}(s) = \max_\pi V^{\pi\circ \nu^*(\pi)}(s)$ for all $s\in \mathcal{S}$ and $\pi^*$ aligns with Bellman optimality policy.

We refer to the robust optimization problem in ISA-MDP as \textit{Adversarial Robust Policy Optimization (ARPO)}.
Despite its centrality to robust decision-making, ARPO has not been thoroughly explored. In this work, we adopt the direct parameterized adversary, resulting in the following formulation:
\begin{equation}\label{eq: arpo}
    \max_{\theta:\pi_\theta\in \Pi} \min_{\vartheta:\nu_\vartheta\in \Psi} V^{\pi_\theta \circ \nu_\vartheta} (s), \ \forall\ s\in\mathcal{S}, \text{ where } \Psi:= \left\{ \nu_\vartheta \vert \nu_\vartheta(s) = s + \vartheta_s \in B^*(s),\ \forall\ s \right\}. \tag{ARPO}
\end{equation}
Similar to SPO, we consider the expected objective: $V^{\pi_\theta \circ \nu_\vartheta} (\mu_0) = \mathbb{E}_{s\sim \mu_0} [V^{\pi_\theta \circ \nu_\vartheta} (s)]$.
In particular, SPO corresponds to solving the outer maximization while fixing the inner solution to $\vartheta \equiv 0$.

% For simplicity, we slightly abuse the term "\textit{the stationary policy}" throughout the paper. Unless otherwise specified, it does not refer to a policy that depends only on the current state. Instead, 
\textbf{FOSP} refers to a stationary point in the optimization sense, where the policy gradient vanishes.
Formally, we define $\pi_\theta$ (or $\theta$) as a \textit{first-order stationary policy (FOSP)} for \ref{eq: spo} if it satisfies:
\begin{equation}
    \nabla_\theta V^{\pi_\theta}(\mu_0) = 0,\quad \nabla^2_{\theta\theta} V^{\pi_\theta}(\mu_0) \preceq 0. \tag{FOSP in SPO}
\end{equation}
Similarly, $\pi_\theta$ (or $\theta$) is a FOSP for \ref{eq: arpo} if it satisfies the following conditions:
\begin{equation}
    \nabla_\theta V^{\pi_\theta \circ \nu^*(\pi_\theta)}(\mu_0) = 0,\quad \nabla^2_{\theta\theta} V^{\pi_\theta \circ \nu^*(\pi_\theta)}(\mu_0) \preceq 0. \tag{FOSP in ARPO}
\end{equation}
We further define $\left( \pi_\theta, \nu^*(\pi_\theta) \right)$ (or $(\theta, \vartheta^*(\theta))$) as a first-order stationary policy-adversary for \ref{eq: arpo}.

\textbf{Terminology Clarification.}
To avoid ambiguity, we define the following terms used throughout the paper:
1) \textit{Natural returns} (or \textit{performance}): the estimate of standard value function $V^{\pi}(\mu_0)$.
2) \textit{Robust returns} (against $\nu$): the approximation of adversarial value function $V^{\pi \circ \nu} (\mu_0)$.
3) \textit{Robustness} (against $\nu$): an estimation of the relative degradation in performance $\frac{V^{\pi \circ \nu} (\mu_0) - V^{\pi}(\mu_0)}{V^{\pi}(\mu_0)} \le 0$.

\section{Optimality-Robustness Tension in Policy Optimization}

This section identifies the inherent tradeoff between optimality and robustness in policy gradient methods. Although \ref{eq: spo} and \ref{eq: arpo} share the consistent globally optimal robust policy, their convergence behaviors diverge significantly. In practice, SPO often tends to vulnerable policies with high natural performance, while ARPO prefers more robust policies, typically at the cost of reduced returns.
Notably, we explore the value geometry and optimization landscape, attributing this divergence to the adversarial reshaping of ARPO. The strongest adversary distorts the global terrain by introducing numerous local minima, which trap the navigation in robust but suboptimal policies.

\subsection{Robust Convergence Behavior of ARPO}

We begin with a rigorous convergence analysis of \ref{eq: arpo}, showing that it generally converges to a first-order stationary policy (FOSP) rather than the global optimum. To better understand this, we examine the nature of FOSPs in both \ref{eq: arpo} and \ref{eq: spo} by analyzing empirical behaviors and local learning dynamics. Our findings reveal that ARPO promotes stronger robustness near convergence, even without reaching the globally optimal policy. In contrast, SPO achieves high natural returns but lacks robustness against perturbations. 
% This analysis also uncovers a link between robustness and generalization, suggesting that strong enough robustness may contribute to generalization.
Furthermore, this analysis uncovers a connection between robustness and generalization, suggesting that sufficient robustness may improve generalization.

\subsubsection{Convergence to First-Order Stationary Policies}

% To formally analyze ARPO using gradient-based methods, we first derive a policy gradient expression for the adversary. 
To enable a formal gradient-based analysis of ARPO, we first derive the policy gradient expression for the adversary. 
% This expression enables sample-based estimation and numerical computation within the inner optimization. 
This formulation facilitates sample-based estimation and numerical computation within the inner optimization. 
% The general formulation and proof are shown in Appendix \ref{app subsec: pg for adversary}.
The complete derivation and proof are provided in Appendix~\ref{app subsec: pg for adversary}.
\begin{theorem}[Policy Gradient for Adversary] \label{thm: pg for adv}
    Given a policy $\pi_\theta$, for all state $s\in \mathcal{S}$, consider the direct parameterization representation for adversary $\nu_\vartheta:\mathcal{S} \rightarrow \mathcal{S},\ s\mapsto s + \vartheta_s \in B(s)$. Then, for any state $s_i\in \mathcal{S}$, we have the state-wise policy gradient for the adversary as follows:
    \begin{align*}
        \nabla_{\vartheta_{s_i}} V^{\pi_\theta\circ\nu_\vartheta}(s) = \frac{1}{1-\gamma}  \mathbb{E}_{(s^\prime,a^\prime)\sim d^{\pi_\theta \circ \nu_\vartheta}}  \left[ Q^{\pi_\theta \circ \nu_\vartheta}(s^\prime,a^\prime)\nabla_{\vartheta_{s^\prime}} \log\pi_\theta(a^\prime|s^\prime + \vartheta_{s^\prime}) \cdot \mathbb{I}(s^\prime = s_i) \right],
    \end{align*}
    where $d^{\pi \circ \nu}$ is the state-action visitation distribution under $\pi\circ\nu$, and $\mathbb{I}(\cdot)$ is the indicator function.
\end{theorem}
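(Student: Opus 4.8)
The key observation is that, for a fixed policy parameter $\theta$, the adversary only corrupts the observation that is fed to $\pi_\theta$; the true transition kernel $\mathbb{P}$ and reward $r$ are evaluated at the true state. Hence the composition $\pi_\theta \circ \nu_\vartheta$ is itself a legitimate stationary policy on the original MDP. Concretely, I would define the \emph{effective policy} $\tilde\pi_\vartheta(a\mid s) := \pi_\theta(a\mid s + \vartheta_s) = (\pi_\theta\circ\nu_\vartheta)(a\mid s)$, so that $V^{\pi_\theta\circ\nu_\vartheta}(s) = V^{\tilde\pi_\vartheta}(s)$ and $Q^{\pi_\theta\circ\nu_\vartheta} = Q^{\tilde\pi_\vartheta}$, $d^{\pi_\theta\circ\nu_\vartheta} = d^{\tilde\pi_\vartheta}$. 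The plan is then to apply the classical policy gradient theorem to $\tilde\pi_\vartheta$ with respect to the full parameter block $\vartheta = (\vartheta_s)_{s\in\mathcal{S}}$, and to read off the per-state component using the sparsity of the direct parameterization.

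First I would invoke (or, to keep things self-contained, re-derive by unrolling the Bellman recursion $V^{\tilde\pi_\vartheta}(s) = \sum_a \tilde\pi_\vartheta(a\mid s)[\,r(s,a) + \gamma\sum_{s'}\mathbb{P}(s'\mid s,a)\,V^{\tilde\pi_\vartheta}(s')\,]$, differentiating, and telescoping the resulting geometric series of transitions) the standard identity
\[
    \nabla_{\vartheta} V^{\tilde\pi_\vartheta}(s) = \frac{1}{1-\gamma}\, \mathbb{E}_{(s',a')\sim d^{\tilde\pi_\vartheta}}\!\left[ Q^{\tilde\pi_\vartheta}(s',a')\, \nabla_{\vartheta} \log \tilde\pi_\vartheta(a'\mid s') \right],
\]
where $d^{\tilde\pi_\vartheta}$ is the normalized discounted state-action visitation distribution started from $s$, and the $1/(1-\gamma)$ normalization is exactly the accumulated mass of that geometric series. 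Throughout I would assume the usual regularity conditions ($\pi_\theta(a\mid\cdot)$ differentiable in its state argument, bounded score/likelihood ratios) so that $\nabla_\vartheta$ commutes with the sums and expectations.

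Next I would exploit the block structure of the direct parameterization: $\vartheta_{s_i}$ enters $\tilde\pi_\vartheta(a'\mid s')$ only through the single factor at $s' = s_i$, and the Jacobian of $s' + \vartheta_{s'}$ with respect to $\vartheta_{s'}$ is the identity, so the chain rule gives
\[
    \nabla_{\vartheta_{s_i}} \log \tilde\pi_\vartheta(a'\mid s') = \mathbb{I}(s' = s_i)\, \nabla_{\vartheta_{s'}} \log \pi_\theta(a'\mid s' + \vartheta_{s'}).
\]
Substituting this into the $\vartheta_{s_i}$-component of the displayed gradient and rewriting $Q^{\tilde\pi_\vartheta}$, $d^{\tilde\pi_\vartheta}$ in terms of $\pi_\theta\circ\nu_\vartheta$ yields precisely the claimed state-wise formula.

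The content here is bookkeeping rather than a genuine obstacle; the two points that require care are (i) justifying that $\pi_\theta\circ\nu_\vartheta$ may be treated as an ordinary MDP policy so that the unperturbed policy gradient theorem applies verbatim — this hinges on the adversary acting only on the observation passed to $\pi_\theta$ and not on $\mathbb{P}$ or $r$ — and (ii) tracking the per-state chain rule carefully so that the indicator $\mathbb{I}(s'=s_i)$ emerges and the sum over states collapses to a single term. A fully rigorous write-up would also state the differentiability and interchange-of-limit assumptions explicitly; these are the same standard assumptions under which the classical policy gradient theorem holds, and the detailed verification is deferred to Appendix~\ref{app subsec: pg for adversary}.
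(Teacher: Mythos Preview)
Your proposal is correct and takes essentially the same approach as the paper: both treat the composite $\pi_\theta\circ\nu_\vartheta$ as an ordinary stationary policy on the underlying MDP, derive the standard policy-gradient identity for it (the paper via the explicit REINFORCE/log-trick computation over trajectory probabilities, you by invoking the classical theorem), and then specialize to the per-state block $\vartheta_{s_i}$ via the chain rule to obtain the indicator $\mathbb{I}(s'=s_i)$. Your packaging via the ``effective policy'' $\tilde\pi_\vartheta$ is slightly cleaner, but the content is identical.
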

Based on this analytical expression, we further characterize the convergence behavior of ARPO. The detailed analysis and proofs are provided in Appendix \ref{app subsec: convergence of arpo}. Justifications and potential relaxations of assumptions, and the connection with core contributions are further discussed in Appendix \ref{app subsec: dis on convergence assum}.
% Discussions on justifications and potential relaxations of involved assumptions are provided in Appendix \ref{app subsec: dis on convergence assum}.
\begin{theorem}[Convergence of ARPO]\label{main thm:convergence of ARPO}
    Denote $\Delta := \max_\theta \min_\vartheta V^{\pi_\theta\circ\nu_\vartheta}(\mu_0) - \min_\vartheta V^{\pi_{\theta_0} \circ \nu_\vartheta}(\mu_0)$. Assume that the sampled policy gradient is Lipschitz continuous and bounded by $M_{\hat{v}}$, the sampled estimation of the value function is locally $\mu$-strongly convex, the state-action visitation distribution is Lipschitz continuous, and the variance of the stochastic gradient is bounded by $\sigma^2$.
    Set the step size as $\eta_k = \sqrt{\Delta\ / (\sigma^2 L K)}$. Then, the ARPO with $\delta$-approximate adversary and $K \ge \frac{\Delta L}{\sigma^2}$ satisfies:
    \begin{align*}
        \frac{1}{K} \sum_{k=0}^{K-1} \mathbb{E} \left[ \left\| \nabla_\theta  V^{\pi_{\theta_{k}} \circ \nu^*(\pi_{\theta_{k}})}(\mu_0) \right\|_2^2 \right]  \le 4\sigma\sqrt{\frac{\Delta L}{K}} + \frac{2 L_{\theta\vartheta}^2 \delta}{\mu},
    \end{align*}
    where $L_{\theta\vartheta}$ and $L = \frac{L_{\theta\vartheta} L_{\vartheta\theta}}{\mu} + L_{\theta\theta} + M_{\hat{v}} \left( \frac{L_{d \vartheta}L_{\vartheta\theta}}{\mu} + L_{d \theta} \right)$ are the Lipschitz constants.
\end{theorem}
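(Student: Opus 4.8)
The plan is to collapse the bilevel problem \ref{eq: arpo} into a single-level smooth maximization and then run the standard biased stochastic gradient ascent analysis. Define the value of the inner game $\Phi(\theta) := \min_{\vartheta:\nu_\vartheta\in\Psi} V^{\pi_\theta\circ\nu_\vartheta}(\mu_0)$, so that ARPO is exactly $\max_\theta \Phi(\theta)$. Under the local $\mu$-strong convexity assumption on the (sampled) value function in the adversary parameter $\vartheta$, the inner minimizer $\vartheta^*(\theta)$ is unique, and Danskin's theorem gives that $\Phi$ is differentiable with $\nabla_\theta \Phi(\theta) = \nabla_\theta V^{\pi_\theta\circ\nu^*(\pi_\theta)}(\mu_0)$ — precisely the quantity whose vanishing defines an FOSP in ARPO. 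Hence it suffices to show that stochastic gradient ascent on the smooth nonconcave function $\Phi$, driven by gradients evaluated at a $\delta$-suboptimal adversary $\vartheta_k$ instead of at $\vartheta^*(\theta_k)$, produces iterates with small average squared gradient norm.

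The key structural step is to prove that $\nabla_\theta\Phi$ is $L$-Lipschitz with exactly the stated constant, which I would do in three sub-steps. (i) From $\mu$-strong convexity of the inner objective together with the cross-Lipschitz constant $L_{\vartheta\theta}$ of $\nabla_\vartheta V^{\pi_\theta\circ\nu_\vartheta}(\mu_0)$ in $\theta$, a standard implicit-function / first-order-optimality argument shows the solution map $\theta\mapsto\vartheta^*(\theta)$ is $(L_{\vartheta\theta}/\mu)$-Lipschitz. (ii) Using the adversary policy-gradient formula from Theorem~\ref{thm: pg for adv} and its policy-side analogue, I would show that $(\theta,\vartheta)\mapsto\nabla_\theta V^{\pi_\theta\circ\nu_\vartheta}(\mu_0)$ is jointly Lipschitz by splitting the perturbation into the part acting on the integrand $Q\,\nabla\log\pi$ — controlled by $L_{\theta\theta}$ in $\theta$ and $L_{\theta\vartheta}$ in $\vartheta$ — and the part acting on the state–action visitation distribution $d^{\pi_\theta\circ\nu_\vartheta}$ — controlled by the visitation Lipschitz constants $L_{d\theta}, L_{d\vartheta}$ scaled by the uniform bound $M_{\hat v}$ on the (sampled) policy-gradient integrand. (iii) Composing via the chain rule, $\|\nabla\Phi(\theta_1)-\nabla\Phi(\theta_2)\| \le \big(L_{\theta\theta}+M_{\hat v}L_{d\theta}\big)\|\theta_1-\theta_2\| + \big(L_{\theta\vartheta}+M_{\hat v}L_{d\vartheta}\big)\|\vartheta^*(\theta_1)-\vartheta^*(\theta_2)\| \le L\|\theta_1-\theta_2\|$ with $L = L_{\theta\theta} + M_{\hat v}L_{d\theta} + (L_{\theta\vartheta}+M_{\hat v}L_{d\vartheta})\,L_{\vartheta\theta}/\mu$, which is the announced expression.

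Next I would control the bias and telescope. Writing the update as $\theta_{k+1}=\theta_k+\eta_k\hat g_k$ with $\mathbb{E}[\hat g_k\mid\mathcal{F}_k]=\nabla_\theta V^{\pi_{\theta_k}\circ\nu_{\vartheta_k}}(\mu_0)$ and $\mathbb{E}\|\hat g_k-\mathbb{E}[\hat g_k\mid\mathcal{F}_k]\|^2\le\sigma^2$, the $\delta$-suboptimality of $\vartheta_k$ plus $\mu$-strong convexity give $\|\vartheta_k-\vartheta^*(\theta_k)\|^2\le 2\delta/\mu$, hence the gradient bias $b_k := \nabla_\theta V^{\pi_{\theta_k}\circ\nu_{\vartheta_k}}(\mu_0)-\nabla\Phi(\theta_k)$ satisfies $\|b_k\|^2\le 2L_{\theta\vartheta}^2\delta/\mu$. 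Applying the descent (ascent) lemma for the $L$-smooth $\Phi$, taking conditional expectations, completing the square in $\langle\nabla\Phi(\theta_k),\hat g_k\rangle-\tfrac{L\eta_k}{2}\|\hat g_k\|^2$, and using that $K\ge\Delta L/\sigma^2$ forces $\eta_k\le 1/L$, one obtains a per-step inequality of the form $\mathbb{E}[\Phi(\theta_{k+1})\mid\mathcal{F}_k]\ge\Phi(\theta_k)+\tfrac{\eta_k}{2}\|\nabla\Phi(\theta_k)\|^2-\tfrac{\eta_k}{2}\|b_k\|^2-\tfrac{L\eta_k^2\sigma^2}{2}$. Summing over $k=0,\dots,K-1$, using $\max_\theta\Phi(\theta)-\Phi(\theta_0)=\Delta$, dividing by $K$ and substituting $\eta_k=\sqrt{\Delta/(\sigma^2 L K)}$ yields $\frac1K\sum_k\mathbb{E}\|\nabla\Phi(\theta_k)\|^2 \le c\,\sigma\sqrt{\Delta L/K}+2L_{\theta\vartheta}^2\delta/\mu$ for an absolute constant $c$; a careful tracking of the inner constants gives $c=4$, which is the claim.

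The main obstacle is sub-step (ii): differentiating the visitation-distribution-weighted policy-gradient expression and correctly attributing the resulting terms to $M_{\hat v}L_{d\theta}$ and $M_{\hat v}L_{d\vartheta}$. This requires expanding $\nabla_\theta\big(\mathbb{E}_{d^{\pi_\theta\circ\nu_\vartheta}}[\cdot]\big)$ into a term where $d$ is held fixed (yielding $L_{\theta\theta}$, $L_{\theta\vartheta}$) and a term where $d$ is differentiated (bounded using the uniform integrand bound $M_{\hat v}$ and the assumed Lipschitzness of $d^{\pi\circ\nu}$ in both arguments, with the $\vartheta$-sensitivity routed through $\vartheta^*(\theta)$ and hence through $1/\mu$). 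The remaining ingredients — Danskin's theorem, Lipschitzness of the inner solution map, and the biased nonconvex SGD telescoping — are standard and need only routine bookkeeping to land the stated constants.
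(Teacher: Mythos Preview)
Your proposal is correct and follows essentially the same route as the paper: it first proves Lipschitzness of $\vartheta^*(\cdot)$ from strong convexity, then obtains the stated $L$-smoothness of $\Phi$ by exactly the integrand/visitation-distribution split you describe, bounds the inexact-adversary bias via $L_{\theta\vartheta}$, and telescopes the descent lemma (with Young's inequality) to land the constants. One small discrepancy worth flagging: the paper defines the $\delta$-approximate adversary via a first-order stationary condition rather than function-value suboptimality, which yields $\|\vartheta_k-\vartheta^*\|^2\le\delta/\mu$ without your factor of $2$; the $2$ in the final $2L_{\theta\vartheta}^2\delta/\mu$ instead comes from bounding $(1+L\eta_k)\le 2$ in the per-step estimate.
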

% Theorem \ref{main thm:convergence of ARPO} showcases that rather than finding the globally optimal robust policy, \ref{eq: arpo} approximates a first-order stationary policy (FOSP) of the value function against the strongest adversary. 
Theorem~\ref{main thm:convergence of ARPO} shows that, instead of converging to the globally optimal robust policy, \ref{eq: arpo} approximates an FOSP of the value function under the strongest adversary.
The convergence rate is $O(K^{-1/2})$, and the approximation error depends on the strength of the adversary.
Similarly, \ref{eq: spo} achieves convergence to the FOSP of the standard value function at the same rate \citep{agarwal2019reinforcement, agarwal2021theory}.
% This highlights the gap between the theoretical optimum and actual convergence.
These underscore the gap between theoretical optimum and practical convergence.

\subsubsection{Learning Behaviors Near First-Order Stationary Policies}

Based on the above convergence results, we further explore the characteristics of SPO and ARPO around their FOSPs, highlighting the critical difference between robustness and natural performance.

\textbf{Empirical Behaviors Near FOSPs.}
% We empirically investigate the properties of FOSPs obtained by ARPO across both simple and complex environments. 
We empirically examine the properties of FOSPs obtained by ARPO across both simple and complex environments. 
% Specifically, we examine a toy ISA-MDP with two states and two actions using directly parameterized policies, as well as continuous control tasks in MuJoCo using neural network policies. 
Specifically, we analyze a toy ISA-MDP with two states and two actions using directly parameterized policies, alongside continuous control tasks in MuJoCo employing neural network policies.
% Across these settings, we observe that FOSPs identified by ARPO tend to be robust but typically achieve substantially reduced returns than SPO.
In both settings, we observe that \textit{FOSPs discovered by ARPO tend to exhibit robustness but generally achieve substantially reduced returns than those achieved by SPO.}
The detailed analysis and complete proofs are provided in Appendix \ref{app sec: toy isa-mdp}.
% \begin{proposition}
%     There exists an ISA-MDP such that the following statement holds. Let $\pi_S$ be a FOSP under SPO, and $\pi_A$ be a FOSP under ARPO. Then, $\pi_A$ is a robust policy but $V^{\pi_A}(\mu_0)-V^-(\mu_0)  < \frac{1}{2}(V^{\pi_S}(\mu_0)-V^-(\mu_0))$, where $V^-$ is the worst value in ISA-MDP.
% \end{proposition}
\begin{proposition}\label{prop: suboptimal for toy isamdp}
    There exists an ISA-MDP such that the following statement holds. Let $\pi_S$ be a FOSP under SPO, and let $\pi_A$ be a FOSP under ARPO. Then, $\pi_A$ is a robust policy with $V^{\pi_A}(\mu_0)-V^-(\mu_0)  < \frac{1}{2}(V^{\pi_S}(\mu_0)-V^-(\mu_0))$, where $V^-(\mu_0) = \min_\pi V^{\pi}(\mu_0)$ denotes the worst-case value.
\end{proposition}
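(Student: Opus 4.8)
The claim only asserts the \emph{existence} of an ISA-MDP, and it is enough to exhibit one together with a pair of witnessing stationary policies $\pi_S$ (of \ref{eq: spo}) and $\pi_A$ (of \ref{eq: arpo}); note that a universal reading over \emph{all} ARPO-FOSPs cannot be intended, since the ORP is always an ARPO-FOSP with maximal value. So the plan is to construct an explicit $2$-state, $2$-action ISA-MDP $(\{s_1,s_2\},\{a_1,a_2\},r,\mathbb{P},\gamma,\mu_0,B^*)$ with directly parameterized policies, writing $p:=\pi(a_1\mid s_1)$ and $q:=\pi(a_1\mid s_2)$. I would choose the primitives so that $a_1$ is an ``aggressive'' action — high reward when taken at the true state but sharply penalized when the policy is fooled into taking it at the mismatched state — while $a_2$ is a ``conservative'' action whose reward is (nearly) state-independent, so a policy leaning on $a_2$ is insensitive to observation perturbations. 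The sets $B^*(s_1),B^*(s_2)$ are taken as large as is compatible with a single common Bellman-optimal action (so the ISA-MDP is well posed and an ORP exists), yet large enough that the strongest adversary can swap the observed state. With near-deterministic dynamics (e.g.\ self-loops plus a small symmetric mixing), $\mu_0$ uniform and a moderate $\gamma$, every value function is an explicit rational function of $(p,q)$ obtained from the $2\times 2$ Bellman linear system.

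\textbf{Identifying the two FOSPs.} First I would write $V^{\pi}(\mu_0)=f(p,q)$ in closed form and check, for the chosen numbers, that $f$ admits no feasible ascent direction at a specific (near-)deterministic corner favoring $a_1$; declaring $\pi_S$ to be that corner, I would verify the first-order condition of \ref{eq: spo} in the constrained (projected-gradient/KKT) sense together with $\nabla^2 f\preceq 0$ on the feasible cone, so $\pi_S$ is a bona fide SPO-FOSP. For ARPO, since $B^*$ is finite the inner minimum $\min_\nu V^{\pi\circ\nu}(\mu_0)$ ranges over finitely many maps $\nu:\{s_1,s_2\}\to\{s_1,s_2\}$; comparing the corresponding rational functions partitions policy space into finitely many cells on each of which $V^{\pi\circ\nu^*(\pi)}(\mu_0)=g(p,q)$ is smooth. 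I would place the conservative corner (heavy weight on $a_2$) in the interior of a cell where a single swapping adversary is the strict minimizer, so Danskin's theorem gives differentiability of $g$ there, declare $\pi_A$ to be that corner, and verify the first- and second-order stationarity conditions of \ref{eq: arpo} for $g$ exactly as above. Robustness of $\pi_A$ then follows: since $a_2$ is (nearly) observation-insensitive, $V^{\pi_A\circ\nu^*(\pi_A)}(\mu_0)=V^{\pi_A}(\mu_0)$ up to a controlled gap, so the relative degradation is essentially zero.

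\textbf{The quantitative bound.} It remains to record the three scalars $V^{\pi_S}(\mu_0)=f(\pi_S)$, $V^{\pi_A}(\mu_0)=f(\pi_A)$ and $V^-(\mu_0)=\min_{(p,q)\in[0,1]^2}f(p,q)$ (the last attained at a third corner) and to check $V^{\pi_A}(\mu_0)-V^-(\mu_0)<\tfrac12\bigl(V^{\pi_S}(\mu_0)-V^-(\mu_0)\bigr)$ for the chosen parameters. In the cleanest instantiation one arranges $f$ to be monotone toward $\pi_S$, so that $V^-(\mu_0)=V^{\pi_A}(\mu_0)$, the left-hand side is $0$, and the inequality is immediate once $V^{\pi_S}(\mu_0)>V^-(\mu_0)$; a less degenerate instantiation reduces it to a one-line numeric comparison. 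This produces the desired ISA-MDP and finishes the proof.

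\textbf{Main obstacle.} The delicate part is the simultaneous fit: one fixed choice of $r,\mathbb{P},\gamma,B^*$ must make one corner a genuine SPO-FOSP, a \emph{different} corner a genuine ARPO-FOSP under its strongest adversary, keep the ISA-MDP well posed (so $B^*$ never alters the Bellman-optimal action), and respect the factor-$\tfrac12$ margin. The subtlety specific to ARPO is the nonsmoothness of $\min_\nu$: one must confirm $\pi_A$ lies strictly inside a single adversary cell (so the envelope $g$ is smooth and the FOSP conditions are meaningful there) and that no feasible ascent direction for $g$ crosses a cell boundary. A secondary point is reconciling the differentiable FOSP definition of the paper with the simplex-constrained direct parameterization, which I handle via the projected-gradient/KKT form of stationarity at the corners. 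Everything else — solving a $2\times2$ linear system and comparing rational expressions — is routine bookkeeping.
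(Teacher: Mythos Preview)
Your proposal is correct and follows essentially the same route as the paper: an explicit two-state, two-action ISA-MDP with direct parameterization $(p,q)\in[0,1]^2$, corner policies as the SPO and ARPO stationary points, constrained (KKT/projected-gradient) verification of first-order stationarity, and a final numeric comparison of $V^{\pi_S}(\mu_0)$, $V^{\pi_A}(\mu_0)$, $V^-(\mu_0)$. The paper commits to concrete numbers ($\hat r=[-0.45,-0.1,0.5,0.5]$, a specific $\hat P$, $\gamma=0.9$, $B^*(s_i)=\{s_1,s_2\}$), takes $\pi_S=\pi_{1,1}$, $\pi_A=\pi_{0,0}$, and lands on $1.44<\tfrac12\cdot 3.12$ with $V^-$ attained at a \emph{third} corner $\pi_{1,0}$---i.e.\ your ``less degenerate'' instantiation rather than the $V^{\pi_A}=V^-$ shortcut.

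Two minor points where the paper's execution differs from your sketch. First, your Danskin step presupposes a \emph{strict} inner minimizer near $\pi_A$, but at a genuinely robust corner every adversary ties (that is exactly what ``robust'' means), so strictness fails; the paper sidesteps this by first characterizing the robust region $\Theta_{\mathrm{Robust}}$ explicitly (using the Line Theorem to order values along axis-parallel segments), observing that on $\Theta_{\mathrm{Robust}}$ the robust objective coincides with the natural value, and then checking KKT for the \emph{natural} value subject to the linear inequalities cutting out $\Theta_{\mathrm{Robust}}$. Second, the paper's reward/transition design does not split cleanly into ``aggressive $a_1$ vs.\ conservative $a_2$'' (indeed $r(s_2,a_1)=r(s_2,a_2)$); the mechanism is rather that any policy with $p=q$ is trivially observation-insensitive, which yields robustness for both the optimal corner $\pi_{1,1}$ and the suboptimal corner $\pi_{0,0}$. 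None of this changes the correctness of your plan, but it is the cleaner way to handle the degeneracy you flagged.
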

\begin{table*}[t]
\caption{Natural and robust performance of SPO and ARPO for continuous control tasks in MuJoCo.}
\vspace{0.5em}
\label{tab:spo and arpo}
\resizebox{\textwidth}{!}{%
\begin{tabular}{ccclcccccccl}
\toprule
\multirow{2}{*}{\makecell{\textbf{Environment}}} & \multicolumn{2}{c}{\raisebox{-6pt}{\textbf{Method}}} & \multirow{2}{*}{\makecell{\textbf{\hspace{17pt}Natural}\\\textbf{\hspace{17pt}Return}}} & \multicolumn{7}{c}{\textbf{Return under Attack}} & \multirow{2}{*}{\makecell{\textbf{Worst-case}\\\textbf{Robustness}}} \\
 & \multicolumn{2}{c}{} & & Random & Critic & MAD & RS & SA-RL & PA-AD & Worst & \\
\midrule
%================== Hopper ======================
\multirow{2}{*}{\makecell{\textbf{Hopper}\\\scriptsize{($\epsilon = 0.075$)}}}
    & \multicolumn{2}{c}{SPO}    & \textbf{3081} \text{\scriptsize$\pm$ 638}  &  \textbf{2923} \text{\scriptsize$\pm$ 767}&  2035 \text{\scriptsize$\pm$ 1035}&  1763 \text{\scriptsize$\pm$ 619}&   756 \text{\scriptsize$\pm$ 36}&    79 \text{\scriptsize$\pm$ 2}& 823 \text{\scriptsize$\pm$ 182}&79 \text{\scriptsize$\pm$ 2} &-0.974  \\
    & \multicolumn{2}{c}{ARPO}   & 2101 \text{\scriptsize$\pm$ 588}\ \textcolor{red}{$\downarrow 32\%$}  & 2058 \text{\scriptsize$\pm$ 559} & \textbf{3156} \text{\scriptsize$\pm$ 528} & \textbf{2225} \text{\scriptsize$\pm$ 641} & \textbf{1001} \text{\scriptsize$\pm$ 30} & \textbf{1032} \text{\scriptsize$\pm$ 47} & \textbf{1799} \text{\scriptsize$\pm$ 547} & \textbf{1001} \text{\scriptsize$\pm$ 30} & \textbf{-0.524}\ \textcolor{blue}{$\uparrow 46\%$}\\
\midrule
%================== Walker2d ======================
\multirow{2}{*}{\makecell{\textbf{Walker2d}\\\scriptsize{($\epsilon = 0.05$)}}}
    & \multicolumn{2}{c}{SPO}    & \textbf{4662} \text{\scriptsize$\pm$ 22}    &\textbf{4628} \text{\scriptsize$\pm$ 21}     &\textbf{4584} \text{\scriptsize$\pm$ 15}     &\textbf{4507} \text{\scriptsize$\pm$ 675} &1062 \text{\scriptsize$\pm$ 150}     &719 \text{\scriptsize$\pm$ 1079}     &336 \text{\scriptsize$\pm$ 252}     &336 \text{\scriptsize$\pm$ 252}  &-0.928 \\
    & \multicolumn{2}{c}{ARPO}   & 2095 \text{\scriptsize$\pm$ 983}\ \textcolor{red}{$\downarrow 56\%$} & 1815 \text{\scriptsize$\pm$ 930} & 2360 \text{\scriptsize$\pm$ 1021} & 1944 \text{\scriptsize$\pm$ 976} & \textbf{1360} \text{\scriptsize$\pm$ 835} & \textbf{1270} \text{\scriptsize$\pm$ 502} & \textbf{1460} \text{\scriptsize$\pm$ 739} & \textbf{1270} \text{\scriptsize$\pm$ 502} & \textbf{-0.394}\ \textcolor{blue}{$\uparrow 58\%$} \\
\midrule
%================== Halfcheetah ======================
\multirow{2}{*}{\makecell{\textbf{Halfcheetah}\\\scriptsize{($\epsilon = 0.15$)}}}
    & \multicolumn{2}{c}{SPO}    & \textbf{5048} \text{\scriptsize$\pm$ 526}     &\textbf{4463} \text{\scriptsize$\pm$ 650}     &\textbf{3281} \text{\scriptsize$\pm$ 1101}     &918 \text{\scriptsize$\pm$ 541}     &1049 \text{\scriptsize$\pm$ 50}     &-213 \text{\scriptsize$\pm$ 103}     &-69 \text{\scriptsize$\pm$ 22}     &-213 \text{\scriptsize$\pm$ 103} &-1.042  \\
    & \multicolumn{2}{c}{ARPO}   & 1412 \text{\scriptsize$\pm$ 99}\ \textcolor{red}{$\downarrow 72\%$} & 1363 \text{\scriptsize$\pm$ 285} & 1359 \text{\scriptsize$\pm$ 59} & \textbf{1402} \text{\scriptsize$\pm$ 64} & \textbf{1230} \text{\scriptsize$\pm$ 75} & \textbf{1079} \text{\scriptsize$\pm$ 48} & \textbf{1216} \text{\scriptsize$\pm$ 458} & \textbf{1079} \text{\scriptsize$\pm$ 48} & \textbf{-0.236}\ \textcolor{blue}{$\uparrow 77\%$} \\
\midrule
%================== Ant ======================
\multirow{2}{*}{\makecell{\textbf{Ant}\\\scriptsize{($\epsilon = 0.15$)}}}
    & \multicolumn{2}{c}{SPO}    & \textbf{5381} \text{\scriptsize$\pm$ 1308}     &\textbf{5329} \text{\scriptsize$\pm$ 976}     &\textbf{4696}  \text{\scriptsize$\pm$ 1015}    &1768 \text{\scriptsize$\pm$ 929}     &1097 \text{\scriptsize$\pm$ 633}     &-1398 \text{\scriptsize$\pm$ 318}      &-3107 \text{\scriptsize$\pm$ 1071}     &-3107 \text{\scriptsize$\pm$ 1071} &-1.577 \\
    & \multicolumn{2}{c}{ARPO}   & 1709 \text{\scriptsize$\pm$ 564}\ \textcolor{red}{$\downarrow 68\%$} & 2026 \text{\scriptsize$\pm$ 38} & 1976 \text{\scriptsize$\pm$ 131} & \textbf{1839} \text{\scriptsize$\pm$ 350} & \textbf{1661} \text{\scriptsize$\pm$ 593} & \textbf{1648} \text{\scriptsize$\pm$ 666} & \textbf{1675} \text{\scriptsize$\pm$ 573} & \textbf{1648} \text{\scriptsize$\pm$ 666} & \textbf{-0.034}\ \textcolor{blue}{$\uparrow 98\%$} \\
\bottomrule
\end{tabular}
}
\end{table*}
Furthermore, as shown in Table~\ref{tab:spo and arpo}, in complex high-dimensional tasks, SPO achieves strong natural performance but suffers substantial degradation under adversarial attacks. 
% In contrast, ARPO demonstrates markedly better robustness, especially in more challenging environments such as HalfCheetah and Ant. 
In contrast, ARPO exhibits substantially greater robustness, particularly in more challenging environments such as HalfCheetah and Ant. 
% However, its natural performance is consistently and significantly lower, often falling below half the returns achieved by SPO.
% However, its natural performance is consistently and significantly lower, often less than half that of SPO.
% However, its natural performance is consistently and substantially lower, often falling to less than half that of SPO.
However, its natural performance remains consistently and noticeably lower, often reaching less than half the returns achieved by SPO.
% These empirical findings highlight the practical differences in the FOSPs learned by SPO and ARPO, revealing a fundamental tension between robustness and optimality in practical policy gradient methods. 
These empirical results underscore the practical disparity between the FOSPs obtained by SPO and ARPO, revealing an inherent tradeoff between robustness and optimality in policy gradient methods.
We also analyze the learning dynamics of SPO and ARPO in Appendix \ref{app subsec: learning dynamics of fosp}, revealing a link between robustness and generalization.

\subsection{Reshaping Effect of Strongest Adversaries Hindering Policy Improvement}

% In this subsection, we investigate how the strongest adversary in ARPO reshapes the optimization landscape and value function space, shedding light on the root of the optimality-robustness tradeoff. 
% In this subsection, we investigate how the strongest adversary in ARPO reshapes both the optimization landscape and the value function space, offering insights into a key factor contributing to the optimality-robustness tradeoff. 
In this subsection, we investigate how the strongest adversary in ARPO reshapes the optimization landscape and value space geometry, shedding light on a key factor behind the optimality-robustness tension.
% We show that the strongest adversary reshapes the objective landscape in a way that encourages robust FOSPs, but also severely distorts the global structure, hindering effective policy improvement.
We show that although the strongest adversary promotes convergence to robust FOSPs, it also severely distorts the global landscape, thereby impeding effective policy improvement.
% We show that although the strongest adversary promotes convergence to robust FOSPs, it also severely distorts the global landscape, hindering effective policy improvement.

\begin{wrapfigure}{r}{0.5\textwidth}
    \vspace{-1.5em}
    \centering
    \includegraphics[width=0.5\textwidth]{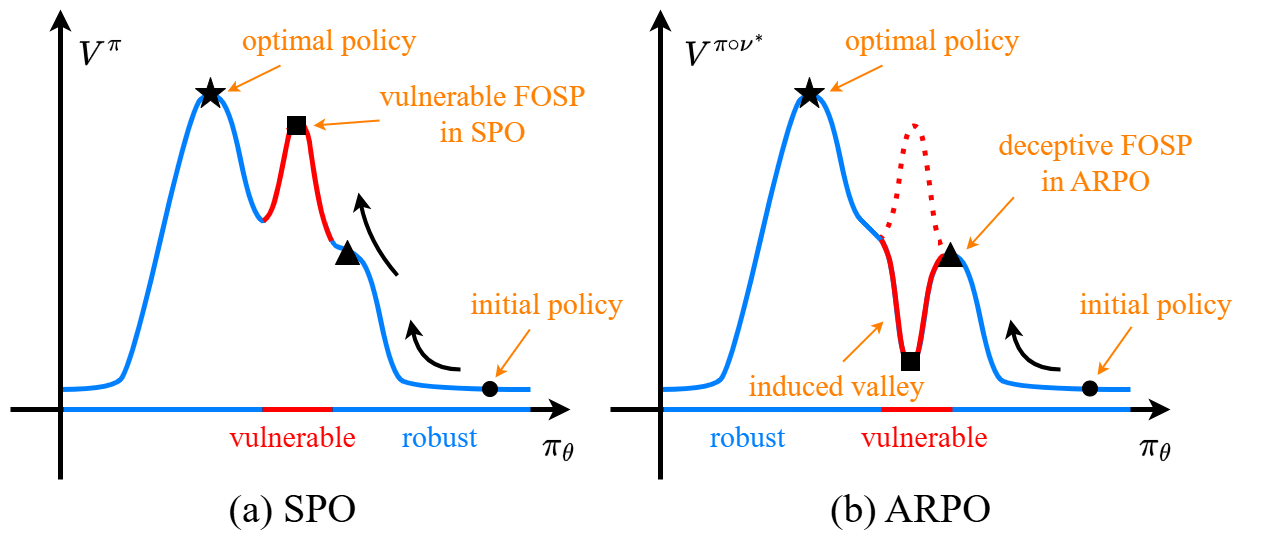}
    \vspace{-2em}
    \caption{\small{Intuitive examples of reshaping.}}
    \label{fig:deceptive fosp}
    \vspace{-1em}
\end{wrapfigure}
\textbf{Adversarial Reshaping Promotes Robust FOSPs.}
As illustrated in Figure~\ref{fig:deceptive fosp}, we compare ARPO with SPO. 
% The robust objective $V^{\pi_\theta \circ \nu^*(\pi_\theta)}(\mu_0)$ is consistently lower than the standard objective $V^{\pi_\theta}(\mu_0)$. 
It follows directly from the definition that the robust objective $V^{\pi_\theta \circ \nu^*(\pi_\theta)}(\mu_0)$ is always less than or equal to the standard objective $V^{\pi_\theta}(\mu_0)$.
% Like the blue regions, when $\pi_\theta$ achieves strong robustness, the gap $V^{\pi_\theta}(\mu_0) - V^{\pi_\theta \circ \nu^*(\pi_\theta)}(\mu_0)$ remains small, resulting in a slight shift in the position of $\theta$ within the landscape compared to that of SPO. 
% In blue regions where $\pi_\theta$ exhibits strong robustness, the gap $V^{\pi_\theta}(\mu_0) - V^{\pi_\theta \circ \nu^(\pi_\theta)}(\mu_0)$ remains small, leading to only a slight shift in the position of $\theta$ in the landscape compared to that under SPO.
In blue regions where $\pi_\theta$ exhibits strong robustness, the gap $V^{\pi_\theta}(\mu_0) - V^{\pi_\theta \circ \nu^*(\pi_\theta)}(\mu_0)$ remains small, leading to only a slight shift in the location of $\theta$ within the landscape relative to that under SPO.
% Conversely, if $\pi_\theta$ is vulnerable, such as the red regions, the gap becomes large, causing a substantial drop in the corresponding position. 
% In contrast, when $\pi_\theta$ is vulnerable in red areas, the gap is large, resulting in a sharp drop in the objective value at the corresponding location. 
In contrast, in red regions where $\pi_\theta$ is more vulnerable, the gap becomes substantial, resulting in a sharp decline in the objective value at the corresponding location.
% This reshaping effect makes the landscape more rugged, promoting robust policies as FOSPs.
% This adversarial reshaping makes the optimization landscape more rugged, guiding the learning process toward robust policies as FOSPs.
This adversarial reshaping significantly increases the ruggedness of the optimization landscape, thereby steering the learning process toward robust policies as FOSPs.

\textbf{Distorted Landscape Impairs Policy Navigation.}
However, the robust policy space typically exhibits vulnerable connectivity. Adversarial reshaping introduces deep valleys that fragment the terrain, isolate different FOSPs, and obstruct gradient-based traversal. The proof is in Appendix \ref{app sec: toy isa-mdp}.
\begin{proposition}[Vulnerable Connectivity Leading to Separated FOSPs] \label{prop: vulnerable con in toy isamdp}
    There exists an ISA-MDP where the robust policy space $\Pi_{Robust}$ contains a cut point; that is, there exists a policy $\pi$ such that $\Pi_{Robust}\setminus\{\pi\}$ is disconnected. 
    % Consequently, ARPO yields more separated FOSPs than SPO.
    Therefore, ARPO tends to yield more isolated FOSPs than SPO.
\end{proposition}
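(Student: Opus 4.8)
The plan is to construct an explicit two-state, two-action ISA-MDP in which the robust policy space $\Pi_{Robust}$ — i.e., the set of policies whose robust value meets some threshold, or equivalently the sublevel set where the strongest adversary inflicts little damage — has a ``pinched'' topology with a cut point, and then to argue that this forces gradient-based ARPO to land in one of two disconnected components depending on initialization, whereas SPO sees a connected sublevel structure and so does not suffer this fragmentation. Since the statement only asserts \emph{existence}, I would reuse the very toy ISA-MDP already invoked for Proposition~\ref{prop: suboptimal for toy isamdp}, so that all the Bellman-equation computations ($V^\pi(\mu_0)$, $Q^\pi$, the form of $\nu^*(\pi)$ on $B^*(s)$) are shared with that construction and need not be redone here.

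The concrete steps are: (i) write the directly parameterized policy as a point $\pi = (p_1, p_2) \in [0,1]^2$, where $p_i$ is the probability of action $1$ in state $i$, and compute $V^{\pi\circ\nu^*(\pi)}(\mu_0)$ in closed form — this is a piecewise-rational function of $(p_1,p_2)$ whose pieces are delimited by the switching surfaces of $\nu^*$; (ii) characterize $\Pi_{Robust} := \{\pi : V^{\pi\circ\nu^*(\pi)}(\mu_0) \ge \tau\}$ for a suitable threshold $\tau$ and show that it is the union of two ``lobes'' meeting only at a single policy $\pi^\star$ (for instance the two lobes are neighborhoods of the two robust deterministic policies, and the only stochastic policy robust enough to bridge them is the isolated corner/saddle $\pi^\star$); (iii) verify that $\Pi_{Robust}\setminus\{\pi^\star\}$ is disconnected — this is immediate once the lobe structure is established, since removing the unique meeting point separates the two lobes; (iv) conclude the ``therefore'' clause by invoking Theorem~\ref{main thm:convergence of ARPO}: ARPO converges to an FOSP of $V^{\pi_\theta\circ\nu^*(\pi_\theta)}$, and a gradient trajectory started strictly inside one lobe cannot cross the low-robustness valley separating the lobes (it would have to pass through the measure-zero cut set or climb against the gradient), so it is confined to that lobe's FOSP, giving at least two distinct, mutually unreachable FOSPs; by contrast, for SPO the analogous sublevel sets of $V^{\pi_\theta}(\mu_0)$ in this example are connected (no adversarial depression), so SPO's accessible stationary set is not partitioned this way.

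For the navigation/confinement half, the cleanest argument is topological rather than dynamical: show that the ``high-return wall'' $\{\pi : V^{\pi}(\mu_0) - V^{\pi\circ\nu^*(\pi)}(\mu_0) > c\}$ forms a barrier of strictly sub-threshold robust value that any continuous ascent path of $V^{\pi_\theta\circ\nu^*(\pi_\theta)}$ must avoid, so the reachable set of a gradient flow from a starting lobe stays in the connected component of the appropriate superlevel set of the robust objective — and by construction that component is one lobe. This lets us cite Theorem~\ref{main thm:convergence of ARPO} as a black box for ``gradient ascent converges to a stationary point'' without re-deriving descent lemmas.

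The main obstacle I anticipate is getting the piecewise-rational robust value $V^{\pi\circ\nu^*(\pi)}(\mu_0)$ to have \emph{exactly} the pinched sublevel geometry — a genuine cut point rather than a thin-but-connected neck. This requires choosing the rewards, transition probabilities, $\gamma$, $\mu_0$, and the perturbation sets $B^*(s)$ so that the strongest adversary's switching surfaces pass through a single common policy, making the two robust lobes tangent at one point; tuning these constants (and checking the relevant one-sided derivatives / Hessian signs so $\pi^\star$ is a legitimate FOSP-adjacent cut point and not an interior-robust region) is the delicate part. A secondary, milder obstacle is making precise the comparative claim ``ARPO tends to yield more isolated FOSPs than SPO'': I would state it as ``in this ISA-MDP, $\Pi_{Robust}$ is disconnected after removing $\pi^\star$ while the corresponding SPO sublevel set is path-connected,'' which is exactly what the construction in step (ii)–(iv) delivers, and leave the word ``tends'' as the informal summary of that concrete separation.
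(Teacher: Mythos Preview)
Your high-level plan (explicit two-state, two-action example, direct parameterization $(\alpha,\beta)\in[0,1]^2$, closed-form value computation) matches the paper, but you have the wrong definition of $\Pi_{Robust}$, and this is precisely the obstacle you flag as ``delicate.'' The paper does \emph{not} take $\Pi_{Robust}$ to be a superlevel set $\{V^{\pi\circ\nu^*(\pi)}(\mu_0)\ge\tau\}$; it takes the \emph{exact-robustness} set $\{\pi: V^{\pi}(s)=V^{\pi\circ\nu^*(\pi)}(s)\ \forall s\}$, i.e., policies on which the strongest adversary is impotent. With $B^*(s_1)=B^*(s_2)=\{s_1,s_2\}$, the adversary's only move is to swap states, so $\pi_{\alpha,\beta}\circ\nu^*$ is the worst of $\{\pi_{\alpha,\alpha},\pi_{\alpha,\beta},\pi_{\beta,\alpha},\pi_{\beta,\beta}\}$, and monotonicity arguments (via the Line Theorem of Dadashi et al.) give $\Theta_{Robust}=\{\alpha=\beta\}\cup\{\beta\le\min(\alpha,\tilde\beta)\}$ for a computable threshold $\tilde\beta\approx 0.777$. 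This is a 1D diagonal glued to a 2D triangle at the single point $(\tilde\beta,\tilde\beta)$, which is trivially a cut point. Your ``two lobes tangent at one point'' picture is the wrong geometry, and your worry about tuning constants to collapse a thin neck to a genuine cut point is exactly why the superlevel-set definition is the wrong one: for continuous (even piecewise-rational) functions on a square, superlevel sets generically have positive-width necks, not cut points.

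For the ``more FOSPs'' clause, the paper's argument is also different from your gradient-confinement/topological barrier story. It writes the ARPO problem restricted to the triangular piece of $\Theta_{Robust}$ as a constrained maximization and checks KKT conditions at the corner $(0,0)$, showing it is an FOSP for ARPO; the same point is not an FOSP for unconstrained SPO. Your dynamical argument (gradient flow cannot cross a low-robust-value valley) is heuristically fine but is not what the paper does, and it would need the superlevel-set structure you haven't established. The fix is simple: adopt the equality definition of $\Pi_{Robust}$, compute it explicitly as above, and verify the extra FOSP via KKT rather than flow confinement.
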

\begin{wrapfigure}{r}{0.5\textwidth}
    \centering
    \vspace{-1.9em}
    \subfigure{
    \begin{minipage}{0.45\linewidth}
        % \vspace{-3em}
        \centering
        \includegraphics[width=\linewidth]{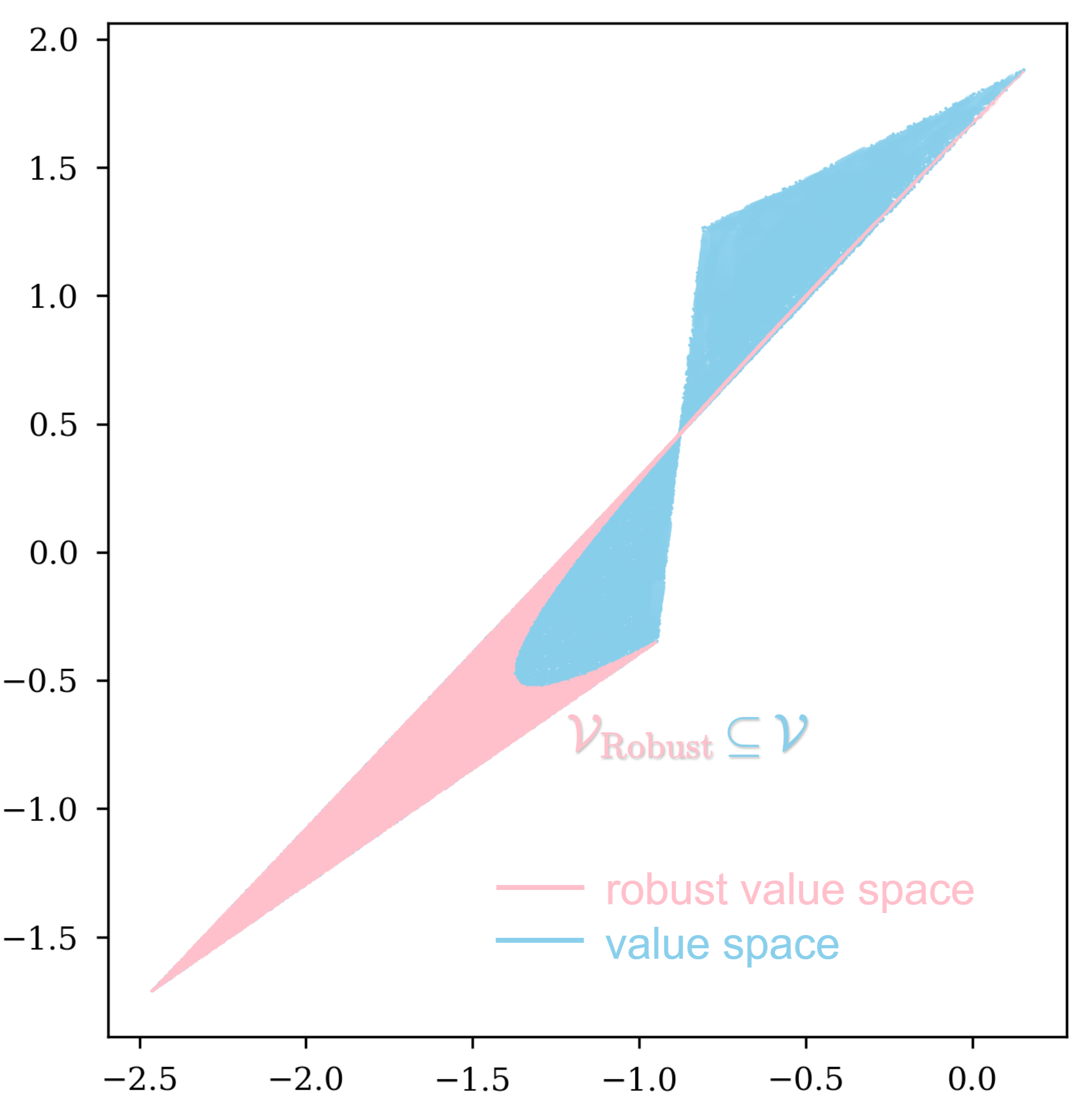}
        % \vspace{-3em}
    \end{minipage}
    }
    \subfigure{
    \begin{minipage}{0.45\linewidth}
        % \vspace{-3em}
        \centering
        \includegraphics[width=\linewidth]{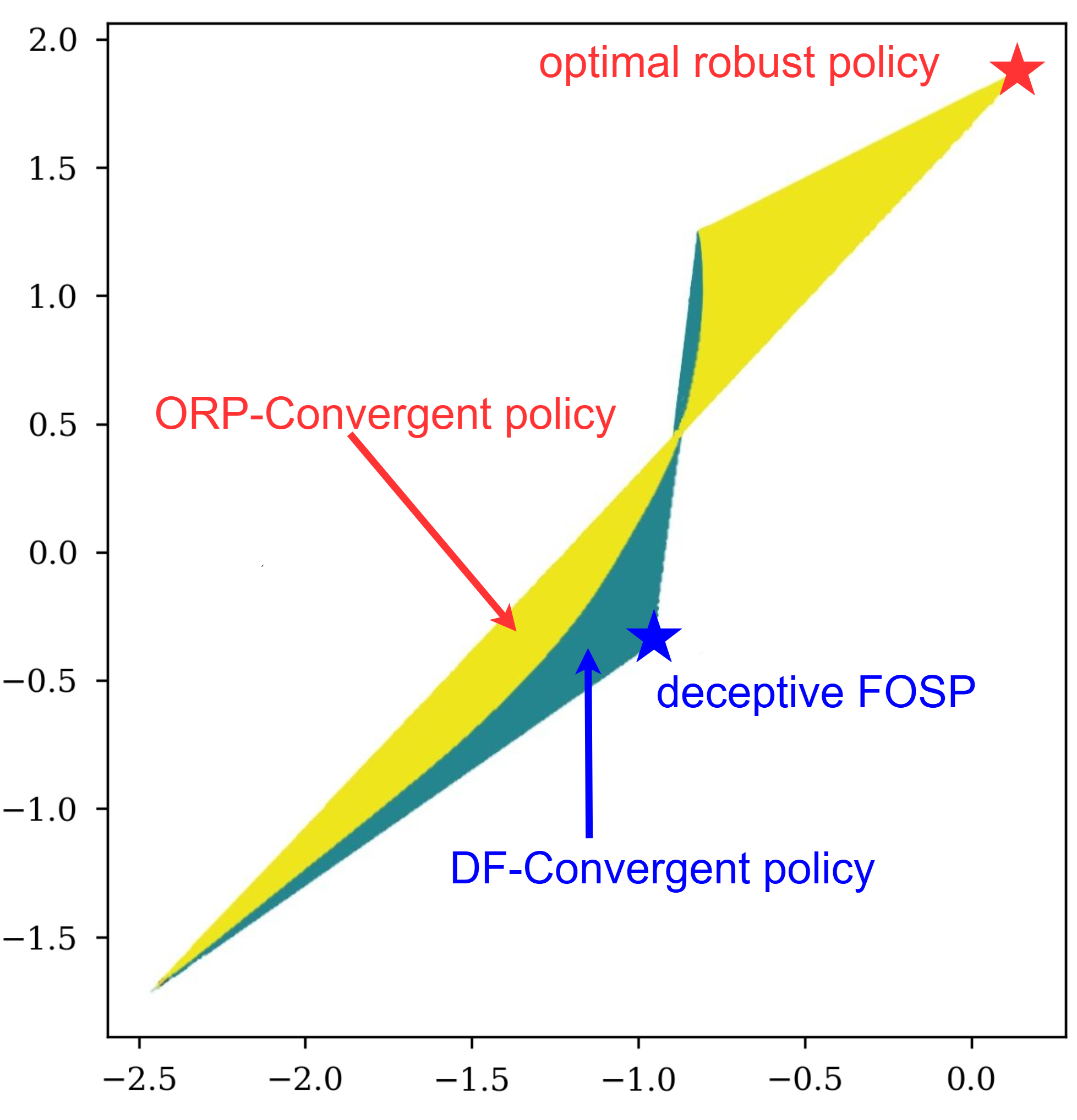}
        % \vspace{-3em}
    \end{minipage}
    }
    \vspace{-1em}
    \caption{Adversarially robust value geometry.}
    \label{fig:value geometry}
    \vspace{-1em}
\end{wrapfigure}
% As depicted in Figure \ref{fig:landscape} (b), the valleys introduced by the adversary form barriers that block ARPO from reaching better FOSPs or the globally optimal robust policy. 
% This fragmentation significantly increases the difficulty of learning, often trapping the agent in FOSPs with poor returns. 
% Furthermore, Figure \ref{fig:value geometry} shows that even in a simple ISA-MDP, approximately one-third of initial policies converge to a deceptive FOSP with low value, underscoring the stickiness of these deceptive FOSPs and the tendency of learning dynamics to be misled by the reshaped terrain. 
% These results reveal a core optimization challenge in adversarially robust DRL: the adversarial reshaping not only induces robustness but also creates misleading FOSPs that hinder consistent policy improvement.
As illustrated in Figure~\ref{fig:landscape}(b), the valleys introduced by the adversary create barriers that obstruct ARPO from reaching better FOSPs or the globally optimal robust policy. This fragmentation increases the difficulty of learning and often traps the agent in suboptimal FOSPs.
As further shown in Figure~\ref{fig:value geometry}, even in a simple ISA-MDP, approximately one-third of initial policies converge to a deceptive FOSP with low value, highlighting the stickiness of these deceptive solutions and how the reshaped terrain can mislead the learning dynamics.
These results reveal a core optimization challenge in adversarially robust DRL: adversarial reshaping not only fosters robustness but also introduces deceptive FOSPs that impede consistent policy improvement.

\section{Bridging Optimality and Robustness via a Bilevel Framework}
The above analysis highlights the strong influence of worst-case adversaries on the optimization landscape. 
% In this section, we aim to alleviate the adverse effects of such adversarial reshaping while retaining strong robustness. 
In this section, we seek to mitigate the distorting effects of such adversarial reshaping while preserving strong robustness. 
% To this end, we introduce a bilevel optimization framework that interpolates between standard and adversarially robust policy optimization by adjusting the strength of the adversary. 
To this end, we propose a bilevel optimization framework that interpolates between standard and adversarially robust policy optimization by modulating the strength of the adversary.
% Moreover, we derive a surrogate for the strongest adversary to develop a specific implementation, which smooths the distorted landscape while preserving good robustness.
Furthermore, we derive a surrogate for the strongest adversary to construct a practical algorithm that smooths the distorted landscape while maintaining robust performance.

\subsection{Unifying Standard and Adversarially Robust Policy Optimization} \label{subsec: unified formulation}
% To bridge standard and adversarially robust policy training, we introduce the following bilevel optimization framework. 
To bridge standard and adversarially robust policy training, we introduce the following bilevel optimization framework that relaxes the inner optimization to allow non-strongest adversaries.
% Specifically, this framework relaxes the inner optimization to allow for non-strongest adversaries, therefore balancing robustness and landscape smoothness:
This relaxation enables a better balance between adversarial robustness and landscape smoothness:
\begin{align}\tag{General Bilevel ARPO} \label{eq:unified bilevel}
    \max_\theta\ V^{\pi_\theta \circ \nu^{\diamond}(\theta)}(\mu_0), \quad \text{ s.t. } \nu^{\diamond}(\theta) = \mathop{\arg\max}_\vartheta G(\pi_\theta,\nu_\vartheta).
\end{align}
Here, the inner objective $G(\pi,\nu)$ defines the adversary behavior, extending robustness beyond worst-case scenarios. 
This formulation generalizes both SPO and ARPO. 
% If $\nu^\diamond(s;\theta) \equiv s$, this framework reduces to \ref{eq: spo}. When $G(\pi,\nu) = -V^{\pi \circ \nu}(\mu_0)$, it recovers \ref{eq: arpo}. 
Specifically, when $\nu^\diamond(s;\theta) \equiv s$, this framework reduces to \ref{eq: spo}; and when $G(\pi,\nu) = -V^{\pi \circ \nu}(\mu_0)$, it recovers \ref{eq: arpo}. 
% Importantly, within the ISA-MDP, this unified bilevel framework retains the same optimal robust policy as both \ref{eq: spo} and \ref{eq: arpo}.
Importantly, within the ISA-MDP setting, this unified framework preserves the same optimal robust policy as both \ref{eq: spo} and \ref{eq: arpo}, preserving the theoretical alignment of optimality and robustness.

% \begin{proposition}[Maintaining Globally Optimal Robustness]
%     The ORPs induced by \ref{eq: spo}, \ref{eq: arpo}, and \ref{eq:unified bilevel} are equivalent and coincide with Bellman optimality policies within ISA-MDP.
% \end{proposition}

\subsection{Bilevel Adversarially Robust Policy Optimization with KL-Surrogate}
To make the proposed bilevel framework applicable in practice, we need to specify a suitable inner objective $G$ that both promotes policy learning and maintains strong robustness. 
To this end, we consider adversaries practically derived via policy gradient methods, and identify the KL divergence between the original and perturbed policies as an effective surrogate for the strongest adversary.
\begin{theorem}[Surrogate Adversary]\label{thm: surrogate adversary}
    For any policy $\pi_\theta$, state $s \in \mathcal{S}$, and $K>0$, let the adversary $\vartheta$ be computed via $K$-step gradient descent on the adversarial value function, i.e., 
    % $\vartheta \in \{ (\vartheta_{s,k})_{\mathcal{S}\times K} | \vartheta_{s,k} = \operatorname{Proj}_{B(s)-s}\left(\vartheta_{s,k-1} - \eta_{s,k-1}\nabla_{\vartheta_s}V^{\pi_\theta \circ \nu_{\vartheta}}(s)|_{\vartheta=\vartheta_{s,k-1}} \right), 1\le k \le K\}$. 
    $ \vartheta_{s,k} = \vartheta_{s,k-1} - \eta_{s,k-1}\nabla_{\vartheta_s}V^{\pi_\theta \circ \nu_{\vartheta}}(s)|_{\vartheta_s=\vartheta_{s,k-1}} , 1\le k \le K$. 
    Assume that step sizes $\eta_{s,k} \leq 1/(\lambda_s+\delta)$ for some $\delta>0$, and $G(s;\pi,\nu) := \mathcal{D}_{\operatorname{KL}}(\pi(\cdot|s)\|\pi\circ\nu(\cdot|s)) \ll 1$. Then, we have the lower bound of robustness:
    \begin{align*}
        V^{\pi_\theta}(s)-V^{\pi_\theta\circ\nu_{\vartheta}}(s) \ge \frac{2\delta }{\lambda_{\max}(F_{s,\theta})K} G(s;\pi_\theta,\nu_\vartheta)+ O\left(G(s;\pi_\theta,\nu_\vartheta)^{\frac{3}{2}}\right),
    \end{align*}
    where $F_{s,\theta}$ is the Fisher information matrix of $\log\pi_\theta\circ\nu_{\vartheta}(a|s)$ with respect to $\vartheta_s = 0$, defined as
    $F_{s,\theta}:=F(\theta,s) = \mathbb{E}_{a\sim\pi_\theta(\cdot|s)}\left[\left(\nabla_{\vartheta_s}\log\pi_\theta(a|s+\vartheta_s)|_{\vartheta_s=0}\right)\left(\nabla_{\vartheta_s}\log\pi_\theta(a|s+\vartheta_s)|_{\vartheta_s=0}\right)^T\right]$, $\lambda_s = \max_{s+\vartheta_s\in B(s)}\lambda_{\max}(\nabla_{\vartheta\vartheta}^2V^{\pi_\theta\circ \nu_{\vartheta}}(s)|_{\vartheta=\vartheta_s})$ and $\lambda_{\max}(\cdot)$ is the largest eigenvalue of matrix.
\end{theorem}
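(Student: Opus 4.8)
The plan is to establish the lower bound by chaining together three estimates: (i) a lower bound on the performance gap $V^{\pi_\theta}(s) - V^{\pi_\theta \circ \nu_\vartheta}(s)$ in terms of the norm $\|\vartheta_s\|^2$ of the accumulated perturbation; (ii) a lower bound on $\|\vartheta_s\|^2$ in terms of the distance the $K$-step gradient descent has travelled along the (locally strongly concave, from the adversary's minimization viewpoint) adversarial value function; and (iii) a second-order Taylor expansion relating the KL surrogate $G(s;\pi_\theta,\nu_\vartheta)$ to $\|\vartheta_s\|^2$ via the Fisher information matrix $F_{s,\theta}$. The final inequality then follows by algebraic combination, with the $O(G^{3/2})$ term absorbing the higher-order remainders of the Taylor expansions.

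For step (iii), which I would do first since it is cleanest: since $F_{s,\theta}$ is the Hessian of $\mathrm{KL}(\pi_\theta(\cdot|s)\|\pi_\theta\circ\nu_\vartheta(\cdot|s))$ at $\vartheta_s = 0$, a Taylor expansion gives $G(s;\pi_\theta,\nu_\vartheta) = \tfrac12 \vartheta_s^\top F_{s,\theta}\, \vartheta_s + O(\|\vartheta_s\|^3)$, hence $G \le \tfrac12 \lambda_{\max}(F_{s,\theta}) \|\vartheta_s\|^2 + O(\|\vartheta_s\|^3)$, i.e. $\|\vartheta_s\|^2 \ge \tfrac{2}{\lambda_{\max}(F_{s,\theta})} G - O(G^{3/2})$ (inverting the relation and using $G \ll 1$). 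For step (ii), I would use that along the $K$ gradient-descent steps, the curvature of $V^{\pi_\theta\circ\nu_\vartheta}(s)$ in $\vartheta_s$ is controlled by $\lambda_s$; with step sizes $\eta_{s,k} \le 1/(\lambda_s+\delta)$, a standard descent-lemma argument shows each step decreases the adversarial value by at least a $\delta$-related amount, and the total displacement $\|\vartheta_s\| = \|\vartheta_{s,K} - \vartheta_{s,0}\|$ relates to the $K$-fold accumulated decrease; combined with Taylor expanding $V^{\pi_\theta}(s) - V^{\pi_\theta\circ\nu_\vartheta}(s)$ around $\vartheta_s=0$ this yields $V^{\pi_\theta}(s) - V^{\pi_\theta\circ\nu_\vartheta}(s) \ge \tfrac{\delta}{K}\|\vartheta_s\|^2 + O(\|\vartheta_s\|^3)$ or a similarly shaped bound (the precise constant $2\delta/(\lambda_{\max}(F_{s,\theta})K)$ emerges after substituting step (iii)).

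Concretely, I would combine: the performance gap is at least $c_1 \|\vartheta_s\|^2$ where the descent dynamics with $K$ steps and step size bounded by $1/(\lambda_s+\delta)$ force $c_1 \gtrsim \delta/K$ after accounting for the geometry of the update; substituting $\|\vartheta_s\|^2 \ge \tfrac{2}{\lambda_{\max}(F_{s,\theta})} G - O(G^{3/2})$ produces $V^{\pi_\theta}(s) - V^{\pi_\theta\circ\nu_\vartheta}(s) \ge \tfrac{2\delta}{\lambda_{\max}(F_{s,\theta})K} G + O(G^{3/2})$, as claimed. The higher-order control requires that $G \ll 1$ implies $\|\vartheta_s\|$ is small (which follows from step (iii) again, in the reverse direction), so all the $O(\cdot)$ remainders are legitimately of lower order.

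The main obstacle I expect is step (ii): carefully tracking how the $K$-step gradient-descent trajectory on the adversarial value function converts into a clean lower bound on the performance gap with the specific constant $\delta/K$, rather than a messier bound involving the full spectrum of $\nabla^2_{\vartheta\vartheta} V^{\pi_\theta\circ\nu_\vartheta}(s)$. This likely requires exploiting the step-size condition $\eta_{s,k}\le 1/(\lambda_s+\delta)$ to guarantee monotone, quantified descent per step, and then relating the per-step decrements (which lower-bound the total gap) to the squared displacement $\|\vartheta_s\|^2$; handling the interplay between the adversary's local strong concavity and the averaging over $K$ steps is the delicate part, and the precise form of the remainder terms will depend on how tightly the Taylor expansions of both $V^{\pi_\theta\circ\nu_\vartheta}(s)$ and the KL divergence can be controlled under the smallness assumption $G \ll 1$.
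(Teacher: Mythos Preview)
Your proposal is correct and follows essentially the same route as the paper: the Fisher-information Taylor expansion of the KL gives $\|\vartheta_s\|^2 \ge \tfrac{2}{\lambda_{\max}(F_{s,\theta})}G + O(G^{3/2})$, a telescoping per-step descent-lemma argument under $\eta_{s,k}\le 1/(\lambda_s+\delta)$ gives $V^{\pi_\theta}(s)-V^{\pi_\theta\circ\nu_\vartheta}(s)\ge \delta\sum_{k=1}^K\|\vartheta_{s,k}-\vartheta_{s,k-1}\|^2 + O(\cdot)$, and the two are combined. The obstacle you flagged---extracting the clean $\delta/K$ constant---is resolved in the paper by a single application of Jensen's inequality, $\sum_{k=1}^K\|\vartheta_{s,k}-\vartheta_{s,k-1}\|^2 \ge \tfrac{1}{K}\bigl\|\sum_{k=1}^K(\vartheta_{s,k}-\vartheta_{s,k-1})\bigr\|^2 = \tfrac{1}{K}\|\vartheta_{s,K}\|^2$.
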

% Theorem \ref{thm: surrogate adversary} indicates that for gradient-based strong adversaries, the KL divergence serves as a valid first-order surrogate to minimize the adversarial value, which is proper and reliable from the optimization perspective. This insight motivates us to develop the specific Bilevel ARPO as follows:
The detailed proof is provided in Appendix \ref{app sec: surrogate adv}.
Theorem \ref{thm: surrogate adversary} indicates that, for gradient-based strong adversaries, the KL divergence serves as a valid first-order surrogate for minimizing the adversarial value. This surrogate is both appropriate and reliable from an optimization perspective. Motivated by this insight, we propose the following instantiation of our framework, termed \textit{Bilevel ARPO}:
\begin{align}\tag{BARPO} \label{eq:barpo}
    \max_\theta\ V^{\pi_\theta \circ \nu^{\diamond}(\theta)}(\mu_0), \quad \text{ s.t. } \nu^{\diamond}(s;\theta) = \mathop{\arg\max}_\vartheta \mathcal{D}_{\operatorname{KL}}(\pi_\theta(\cdot|s)\|\pi_\theta\circ\nu_\vartheta(\cdot|s)),\ \forall\ s\in \mathcal{S}.
\end{align}
% \begin{wrapfigure}{r}{0.3\textwidth}
%     \vspace{-1em}
%     \centering
%     \includegraphics[width=0.25\textwidth]{figures/2d_path.jpg}
%     \vspace{-0.5em}
%     \caption{\small{Schematic illustrations for three training paradigms. Red regions indicate not robust.}}
%     \label{fig:optimization path}
%     \vspace{-3em}
% \end{wrapfigure}
\begin{wrapfigure}{r}{0.5\textwidth}
 \vspace{-1.5em}
    \centering
    \subfigure[SPO]{
    \begin{minipage}{0.3\linewidth}
        % \vspace{-3em}
        \centering
        \includegraphics[width=\linewidth]{figures/SPO_landscape.png}
        % \vspace{-3em}
    \end{minipage}
    }
    \subfigure[ARPO]{
    \begin{minipage}{0.3\linewidth}
        % \vspace{-3em}
        \centering
        \includegraphics[width=\linewidth]{figures/ARPO_landscape.png}
        % \vspace{-3em}
    \end{minipage}
    }
    \subfigure[BARPO]{
    \begin{minipage}{0.3\linewidth}
        % \vspace{-3em}
        \centering
        \includegraphics[width=1.1\linewidth]{figures/BARPO_landscape.png}
        % \vspace{-3em}
    \end{minipage}
    }
    \subfigure{
    \begin{minipage}[b]{0.5\linewidth}
        \centering
        \vspace{-3em}
        \includegraphics[width=\linewidth]{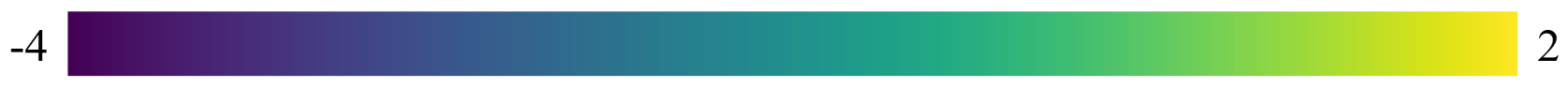}
    \end{minipage}
    }
    \vspace{-1em}
    % \caption{Intuitive Landscape examples.}
    \caption{Illustrative Optimization Landscapes}
    \label{fig:landscape}
    \vspace{-0.5em}
\end{wrapfigure}
% As illustrated in Figures \ref{fig:optimization path} and \ref{fig:landscape}, \ref{eq:barpo} effectively reshapes the landscape by elevating low-value but robust regions, which creates smoother paths towards better FOSPs and even global optima. 
As illustrated in Figures \ref{fig:optimization path} and \ref{fig:landscape}, \ref{eq:barpo} effectively reshapes the landscape by elevating low-value but robust regions, thereby creating smoother paths toward better FOSPs and even global optima.
This restructuring facilitates learning navigability while promoting updates along robust trajectories, ensuring simultaneous improvements of optimality and robustness.
In contrast, \ref{eq: spo} tends to follow vulnerable directions, frequently ascending fragile peaks. 
% Meanwhile, \ref{eq: arpo} strictly follows the most robust path, but typically is impeded by valleys induced by overly strong adversaries, making it hard to escape poor FOSPs.
Meanwhile, \ref{eq: arpo} strictly follows the most robust path but often gets stuck in valleys from overly strong adversaries, hindering escape from poor FOSPs.
We further delve into the theoretical understanding of BARPO in Appendices \ref{app subsec: gap between barpo and arpo} and \ref{app subsec: theoretical understanding of barpo}.

\section{Experiments} \label{sec: exp}

In this section, we conduct comprehensive comparison experiments and ablation studies to empirically substantiate our theoretical insights and evaluate the practical effectiveness of BARPO.

\begin{figure}[t]
  \centering
    \subfigure[Hopper]{
    \begin{minipage}{0.22\linewidth}
        \centering
        \includegraphics[width=1.1\linewidth]{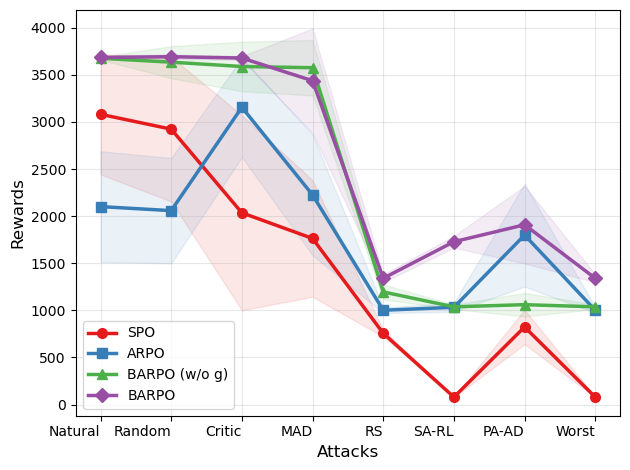}
    \end{minipage}
    }
    \subfigure[Walker2d]{
    \begin{minipage}{0.22\linewidth}
        \centering
        \includegraphics[width=1.1\linewidth]{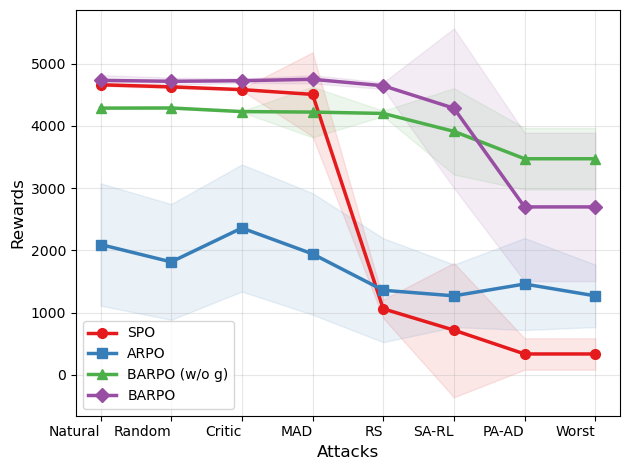}
    \end{minipage}
    }
    \subfigure[Halfcheetah]{
    \begin{minipage}{0.22\linewidth}
        \centering
        \includegraphics[width=1.1\linewidth]{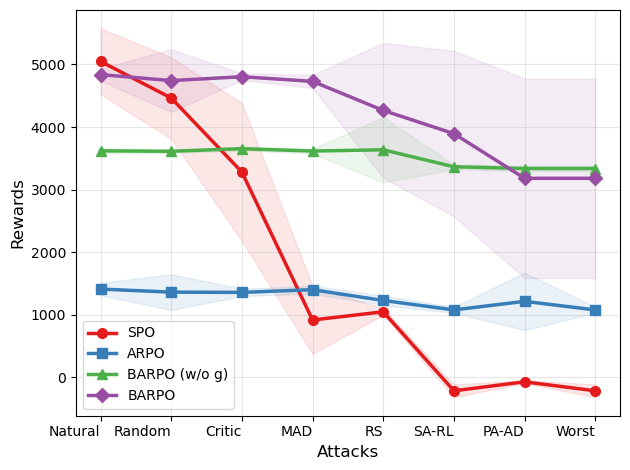}
    \end{minipage}
    }
    \subfigure[Ant]{
    \begin{minipage}{0.22\linewidth}
        \centering
        \includegraphics[width=1.1\linewidth]{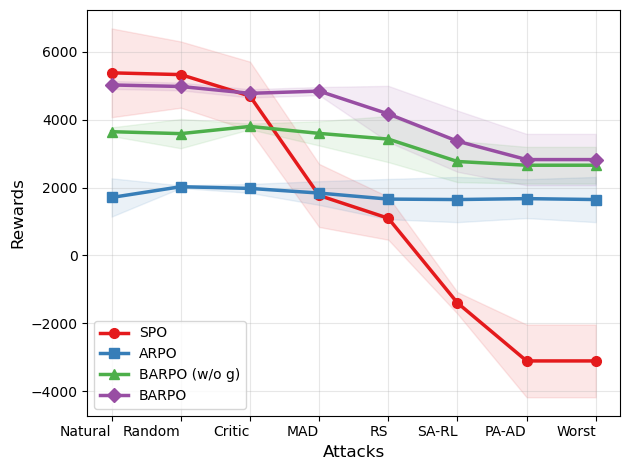}
    \end{minipage}
    }
  \caption{Natural and robust performance of SPO, ARPO, BARPO without guidance, and BARPO for four continuous control tasks in MuJoCo. BARPO (w/o g) consistently outperforms ARPO.}
  \label{fig: compare arpo and barpo}
\end{figure}

\subsection{Implementation Details}

\textbf{Environments and Baselines.}
Following baselines~\citep{zhang2020robust, oikarinen2021robust, liang2022efficient}, we perform experiments on four challenging MuJoCo benchmarks~\citep{todorov2012mujoco} with continuous action spaces: Hopper, Walker2d, HalfCheetah, and Ant. 
We compare BARPO against several state-of-the-art robust training methods.
SA-PPO~\citep{zhang2020robust} incorporates a KL-based regularization and solves the inner maximization problem using PGD~\citep{madry2017towards} and CROWN-IBP~\citep{zhang2019towards}, respectively.
RADIAL-PPO~\citep{oikarinen2021robust} incorporates adversarial regularization guided by robustness verification bounds derived from IBP~\citep{gowal2018effectiveness}.
WocaR-PPO~\citep{liang2022efficient} estimates worst-case values and also employs KL-based regularization. 
We use the official implementations of these baselines to train 17 agents per method under identical settings. 
Regularization coefficients are tuned appropriately for each method. 
To mitigate the effects of high variance in reinforcement learning training, we report the median performance across 17 runs to ensure reproducibility.
More details are in Appendix~\ref{app: additional algorithm details}.

\textbf{Evaluations.}
We evaluate the robustness of agents on MuJoCo tasks using six adversarial attack methods:
(1) Random Attack: adds uniform random noise to state observations;
(2) Critic Attack~\citep{pattanaik2017robust}: perturbs states based on the action-value function;
(3) MAD Attack~\citep{zhang2020robust}: maximizes the discrepancy between policies in clean and perturbed states;
(4) RS Attack~\citep{zhang2020robust}: first learns a robust action-value function, then conducts critic-based attacks guided by it;
(5) SA-RL~\citep{zhang2021robust}: employs a learned adversary against the victim to perturb the state; 
(6) PA-AD~\citep{sun2021strongest}: trains an adversarial agent to identify a vulnerable perturbation direction, followed by an FGSM attack along that direction.

\textbf{BAR-PPO.}
We implement both ARPO and BARPO on top of the PPO algorithm~\citep{schulman2017proximal}, a widely adopted policy-gradient method that serves as a standard baseline in adversarially robust DRL research.
To improve convergence and efficiency, we integrate SPO dynamics into the bilevel framework using a regularization weight $\kappa$, resulting in a practical instantiation of BARPO, referred to as BAR-PPO.
% We apply SGLD~\citep{gelfand1991recursive} to resolve the inner optimization and attain the solution $\nu^\diamond$. 
For solving the inner optimization problem, we employ SGLD~\citep{gelfand1991recursive} to obtain the adversarial perturbation policy $\nu^\diamond$. 
In the outer optimization, we treat the inner solution as fixed and omit second-order terms to facilitate efficient updates of the primary policy.
A complete description and further implementation details can be found in Appendix~\ref{app: additional algorithm details}.
In the following, \textit{BARPO} refers to the combination of SPO-based guidance with the bilevel framework, \textit{BARPO without guidance} uses only the bilevel framework, \textit{ARPO} uses only the maximin framework, and \textit{ARPO with guidance} improves the maximin training through combining the SPO guidance.

\subsection{Comparison Results}

\begin{table*}[t]
\caption{Average returns ($\pm$ std) over 50 episodes for baselines and BAR-PPO on MuJoCo tasks. Results include natural return, returns under six attacks, the worst return and robustness under these attacks. Bold and underlined values indicate the top and second performances, respectively. }
\vspace{0.5em}
\label{tab:compare-barppo}
\resizebox{\textwidth}{!}{%
\begin{tabular}{cccccccccccc}
\toprule
\multirow{2}{*}{\makecell{\textbf{Environment}}} & \multicolumn{2}{c}{\raisebox{-6pt}{\textbf{Method}}} & \multirow{2}{*}{\makecell{\textbf{Natural}\\\textbf{Return}}} & \multicolumn{7}{c}{\textbf{Return under Attack}} & \multirow{2}{*}{\makecell{\textbf{Worst-case}\\\textbf{Robustness}}} \\
 & \multicolumn{2}{c}{} & & Random & Critic & MAD & RS & SA-RL & PA-AD & Worst & \\
\midrule
%================== Hopper ======================
\multirow{5}{*}{\makecell{\textbf{Hopper}\\\scriptsize{($\epsilon = 0.075$)}}} 
    & \multicolumn{2}{c}{PPO}    & 3081 \text{\scriptsize$\pm$ 638}  &  2923 \text{\scriptsize$\pm$ 767}&  2035 \text{\scriptsize$\pm$ 1035}&  1763 \text{\scriptsize$\pm$ 619}&   756 \text{\scriptsize$\pm$ 36}&    79 \text{\scriptsize$\pm$ 2}& 823 \text{\scriptsize$\pm$ 182}&79 \text{\scriptsize$\pm$ 2} &-0.974  \\
    & \multicolumn{2}{c}{SA}     & 3518 \text{\scriptsize$\pm$ 272}     &2835 \text{\scriptsize$\pm$ 866}     &3662 \text{\scriptsize$\pm$ 6}     & 3045 \text{\scriptsize$\pm$ 797}    &\textbf{1407} \text{\scriptsize$\pm$ 36}     &\underline{1476} \text{\scriptsize$\pm$ 255}     &1286 \text{\scriptsize$\pm$ 282}     &\underline{1286} \text{\scriptsize$\pm$ 282} &\textbf{-0.634} \\
    & \multicolumn{2}{c}{RADIAL} & 3254 \text{\scriptsize$\pm$ 714}              & 3170 \text{\scriptsize$\pm$ 754}          & \textbf{3706} \text{\scriptsize$\pm$ 11}  & 2558 \text{\scriptsize$\pm$ 888}         & 1307 \text{\scriptsize$\pm$ 420}         & 993 \text{\scriptsize$\pm$ 717}        & 1696 \text{\scriptsize$\pm$ 574}         & 993 \text{\scriptsize$\pm$ 717}         & -0.718  \\
    & \multicolumn{2}{c}{WocaR}  & \underline{3629} \text{\scriptsize$\pm$ 35} & \underline{3637} \text{\scriptsize$\pm$ 29}  & 3657 \text{\scriptsize$\pm$ 30}          & \underline{3150} \text{\scriptsize$\pm$ 737}         & 1171 \text{\scriptsize$\pm$ 1013}         & 1452 \text{\scriptsize$\pm$ 66}       & \textbf{2124} \text{\scriptsize$\pm$ 872} & 1171 \text{\scriptsize$\pm$ 1013}        & -0.677  \\
    & \multicolumn{2}{c}{BAR}    & \textbf{3684} \text{\scriptsize$\pm$ 20}    & \textbf{3692} \text{\scriptsize$\pm$ 24} & \underline{3678} \text{\scriptsize$\pm$ 19} & \textbf{3437} \text{\scriptsize$\pm$ 555} & \underline{1340} \text{\scriptsize$\pm$ 42} & \textbf{1728} \text{\scriptsize$\pm$ 62} & \underline{1908} \text{\scriptsize$\pm$ 410} & \textbf{1340} \text{\scriptsize$\pm$ 42} & \underline{-0.636} \\
\midrule
%================== Walker2d ======================
\multirow{5}{*}{\makecell{\textbf{Walker2d}\\\scriptsize{($\epsilon = 0.05$)}}}
    & \multicolumn{2}{c}{PPO}    & 4662 \text{\scriptsize$\pm$ 22}    &4628 \text{\scriptsize$\pm$ 21}     &4584 \text{\scriptsize$\pm$ 15}     &4507 \text{\scriptsize$\pm$ 675} &1062 \text{\scriptsize$\pm$ 150}     &719 \text{\scriptsize$\pm$ 1079}     &336 \text{\scriptsize$\pm$ 252}     &336 \text{\scriptsize$\pm$ 252}  &-0.928 \\
    & \multicolumn{2}{c}{SA}     & \textbf{4875} \text{\scriptsize$\pm$ 278}     &\textbf{4907} \text{\scriptsize$\pm$ 187}     &\textbf{5029} \text{\scriptsize$\pm$ 74}     &\textbf{4833 } \text{\scriptsize$\pm$ 132}    &2775 \text{\scriptsize$\pm$ 1308}     &\underline{3356} \text{\scriptsize$\pm$ 1433}    &997 \text{\scriptsize$\pm$ 1152}    &997 \text{\scriptsize$\pm$ 1152} &-0.795  \\
    & \multicolumn{2}{c}{RADIAL} & 2531 \text{\scriptsize$\pm$ 1089}             & 2170 \text{\scriptsize$\pm$ 1102}          & 2063 \text{\scriptsize$\pm$ 1068}           & 2316 \text{\scriptsize$\pm$ 1171}         & 1239 \text{\scriptsize$\pm$ 800}         & 426 \text{\scriptsize$\pm$ 22}          & \underline{1353} \text{\scriptsize$\pm$ 921} & 426 \text{\scriptsize$\pm$ 22}         & -0.832 \\
    & \multicolumn{2}{c}{WocaR}  & 4226 \text{\scriptsize$\pm$ 938}             & 4347 \text{\scriptsize$\pm$ 892}          & 4342 \text{\scriptsize$\pm$ 875}           & 4373 \text{\scriptsize$\pm$ 850}         & \underline{3358} \text{\scriptsize$\pm$ 1242} & 2385 \text{\scriptsize$\pm$ 840}          & 1064 \text{\scriptsize$\pm$ 999}          & \underline{1064} \text{\scriptsize$\pm$ 999} & \underline{-0.752} \\
    & \multicolumn{2}{c}{BAR}    & \underline{4732} \text{\scriptsize$\pm$ 86}  & \underline{4718} \text{\scriptsize$\pm$ 59} & \underline{4727} \text{\scriptsize$\pm$ 36} & \underline{4750} \text{\scriptsize$\pm$ 65} & \textbf{4646} \text{\scriptsize$\pm$ 53}  & \textbf{4285} \text{\scriptsize$\pm$ 1282} & \textbf{2699} \text{\scriptsize$\pm$ 1192} & \textbf{2699} \text{\scriptsize$\pm$ 1192} & \textbf{-0.436} \\
\midrule
%================== Halfcheetah ======================
\multirow{5}{*}{\makecell{\textbf{Halfcheetah}\\\scriptsize{($\epsilon = 0.15$)}}} 
    & \multicolumn{2}{c}{PPO}    & \textbf{5048} \text{\scriptsize$\pm$ 526}     &4463 \text{\scriptsize$\pm$ 650}     &3281 \text{\scriptsize$\pm$ 1101}     &918 \text{\scriptsize$\pm$ 541}     &1049 \text{\scriptsize$\pm$ 50}     &-213 \text{\scriptsize$\pm$ 103}     &-69 \text{\scriptsize$\pm$ 22}     &-213 \text{\scriptsize$\pm$ 103} &-1.042  \\
    & \multicolumn{2}{c}{SA}     & 4780 \text{\scriptsize$\pm$ 1456}     &\textbf{4983} \text{\scriptsize$\pm$ 1122}     &\textbf{5035} \text{\scriptsize$\pm$ 1245}     &3759 \text{\scriptsize$\pm$ 1963}     &2727 \text{\scriptsize$\pm$ 1707}     &1443 \text{\scriptsize$\pm$ 1082}     &1551 \text{\scriptsize$\pm$ 1139}     &1443 \text{\scriptsize$\pm$ 1082}    &-0.698 \\
    & \multicolumn{2}{c}{RADIAL} & 4739 \text{\scriptsize$\pm$ 80}              & 4642 \text{\scriptsize$\pm$ 75}               & 4546 \text{\scriptsize$\pm$ 329}             & 2961 \text{\scriptsize$\pm$ 1478}       & 1327 \text{\scriptsize$\pm$ 1112}        & 1522 \text{\scriptsize$\pm$ 1059}        & 1968 \text{\scriptsize$\pm$ 1284}     & 1327 \text{\scriptsize$\pm$ 1112}        & -0.720 \\
    & \multicolumn{2}{c}{WocaR}  & 4723 \text{\scriptsize$\pm$ 462}              & \underline{4798} \text{\scriptsize$\pm$ 69}   & \underline{4846} \text{\scriptsize$\pm$ 349} & \underline{4543} \text{\scriptsize$\pm$ 566} & \underline{3302} \text{\scriptsize$\pm$ 1718} & \underline{2270} \text{\scriptsize$\pm$ 1318} & \underline{2498} \text{\scriptsize$\pm$ 1224} & \underline{2270} \text{\scriptsize$\pm$ 1318} & \underline{-0.519} \\
    & \multicolumn{2}{c}{BAR}    & \underline{4837} \text{\scriptsize$\pm$ 99}   & 4741 \text{\scriptsize$\pm$ 501}               & 4803 \text{\scriptsize$\pm$ 54}             & \textbf{4729} \text{\scriptsize$\pm$ 105} & \textbf{4265} \text{\scriptsize$\pm$ 1077} & \textbf{3894} \text{\scriptsize$\pm$ 1322}  & \textbf{3181} \text{\scriptsize$\pm$ 1593}   & \textbf{3181} \text{\scriptsize$\pm$ 1593}  & \textbf{-0.342} \\
\midrule
%================== Ant ======================
\multirow{5}{*}{\makecell{\textbf{Ant}\\\scriptsize{($\epsilon = 0.15$)}}} 
    & \multicolumn{2}{c}{PPO}    & \textbf{5381} \text{\scriptsize$\pm$ 1308}     &\textbf{5329} \text{\scriptsize$\pm$ 976}     &4696  \text{\scriptsize$\pm$ 1015}    &1768 \text{\scriptsize$\pm$ 929}     &1097 \text{\scriptsize$\pm$ 633}     &-1398 \text{\scriptsize$\pm$ 318}      &-3107 \text{\scriptsize$\pm$ 1071}     &-3107 \text{\scriptsize$\pm$ 1071} &-1.577 \\
    & \multicolumn{2}{c}{SA}     & \underline{5367} \text{\scriptsize$\pm$ 94}     &\underline{5217} \text{\scriptsize$\pm$ 410}     &\textbf{5012} \text{\scriptsize$\pm$ 113}     &\textbf{5114}  \text{\scriptsize$\pm$ 294}    &\textbf{4396} \text{\scriptsize$\pm$ 1225}     &\textbf{4227} \text{\scriptsize$\pm$ 177}     &2355 \text{\scriptsize$\pm$ 1620}     &2355 \text{\scriptsize$\pm$ 1620}     &-0.539 \\
    & \multicolumn{2}{c}{RADIAL} & 4358 \text{\scriptsize$\pm$ 98}              & 4309 \text{\scriptsize$\pm$ 206}                & 3628 \text{\scriptsize$\pm$ 267}            & 4205 \text{\scriptsize$\pm$ 127}       & 3742 \text{\scriptsize$\pm$ 696}        & 2364 \text{\scriptsize$\pm$ 47}       & \underline{3261} \text{\scriptsize$\pm$ 134} & \underline{2364} \text{\scriptsize$\pm$ 47} & \underline{-0.462} \\
    & \multicolumn{2}{c}{WocaR}  & 4069 \text{\scriptsize$\pm$ 598}              & 3911 \text{\scriptsize$\pm$ 659}                & 3978 \text{\scriptsize$\pm$ 190}            & 3689 \text{\scriptsize$\pm$ 841}       & 3176 \text{\scriptsize$\pm$ 908}        & 1868 \text{\scriptsize$\pm$ 99}       & 1830 \text{\scriptsize$\pm$ 102}       & 1830 \text{\scriptsize$\pm$ 102}           & -0.550 \\
    & \multicolumn{2}{c}{BAR}    & 5024 \text{\scriptsize$\pm$ 117}              & 4979 \text{\scriptsize$\pm$ 114}                & \underline{4777} \text{\scriptsize$\pm$ 122}            & \underline{4843} \text{\scriptsize$\pm$ 120} & \underline{4171} \text{\scriptsize$\pm$ 826} & \underline{3367} \text{\scriptsize$\pm$ 902} & \textbf{2825} \text{\scriptsize$\pm$ 757}  & \textbf{2825} \text{\scriptsize$\pm$ 757}  & \textbf{-0.438} \\
\bottomrule
\end{tabular}
}
\end{table*}

\textbf{BARPO Consistently Improves over ARPO.}
Figure~\ref{fig: compare arpo and barpo} compares the natural and robust performance of four methods: SPO, ARPO, BARPO without SPO guidance, and BARPO with SPO guidance. Detailed evaluation results are provided in Appendix~\ref{app subsec: compare spo, arpo, barpo}.
Notably, BARPO without SPO guidance consistently outperforms ARPO across all environments, improving both natural and robust returns.
Even in environments like Walker2d and Halfcheetah, it achieves substantial gains in robustness.
Specifically, BARPO without SPO guidance improves the natural return by approximately 75\%, 104\%, 156\%, and 113\% on Hopper, Walker2d, Halfcheetah, and Ant, respectively, while boosting robust return by about 4\%, 173\%, 209\%, and 61\% in the worst-case setting. 
Remarkably, in the Hopper and Walker2d tasks, BARPO without SPO guidance achieves natural returns on par with state-of-the-art methods.
As further illustrated in Figure~\ref{fig: natural training curves}, BARPO improves the training dynamics by guiding policy learning toward regions in the landscape that yield higher returns.

\textbf{BAR-PPO Exhibits Superior Natural and Robust Performance.}
As shown in Table~\ref{tab:compare-barppo}, BAR-PPO demonstrates strong overall performance across all four MuJoCo environments, achieving both high natural returns and robust resilience to adversarial perturbations.
In particular, BAR-PPO outperforms vanilla PPO in terms of natural returns on Hopper and Walker2d, and achieves comparable performance on Halfcheetah and Ant.
In terms of worst-case robust returns, BAR-PPO significantly surpasses all baselines, with improvements of 4\%, 154\%, 40\%, and 20\% over the second best, respectively.
% Notably, BAR-PPO also achieves the best robustness compared to baselines on Walker2d, Halfcheetah, and Ant, and only 0.3\% lower than SA-PPO on Hopper.
Notably, BAR-PPO achieves the highest robustness across Walker2d, Halfcheetah, and Ant, and is on par with the strongest robustness on Hopper, with a negligible gap of only 0.3\%.

\subsection{Ablation Studies} \label{subsec: ablation}

\textbf{Effect of SPO-Guidance.}
As shown in Figures~\ref{fig: compare arpo and barpo} and~\ref{fig: natural training curves}, incorporating SPO-based guidance further enhances the natural performance of BARPO, particularly in more complex environments such as Halfcheetah and Ant. 
In certain cases, the enhancement in natural returns is accompanied by corresponding improvements in robust returns, as observed in the Hopper and Ant tasks.
However, this incorporation often involves a trade-off, where improvements in natural performance may lead to a reduction in the worst-case robustness, as presented in the Walker2d and Halfcheetah environments.

\textbf{Comparisons between BARPO and ARPO with SPO-Guidance.}
To further validate the effectiveness of BARPO, we compare it with ARPO enhanced by SPO-based guidance. 
As presented in Table \ref{tab:arpo-wg-barpo}, on the Hopper, Halfcheetah, and Ant tasks, BARPO exhibits a slight reduction in natural returns compared to ARPO with guidance, approximately 0.4\%, 3.2\%, and 6.8\%, respectively. 
However, BARPO achieves significantly higher robust returns and markedly stronger robustness, with improvements of 19\%, 193\%, 138\% in robust returns, and 8.4\%, 56\%, and 43\% in robustness metrics. 
On the Walker2d task, BARPO attains gains in both natural and robust returns, with a minor compromise in robustness. 
These findings indicate that although integrating SPO guidance into ARPO can enhance performance to some extent, this direct combination does not effectively mitigate the reshaping effect induced by the strongest adversary. 
Consequently, numerous sticky FOSPs with low returns persist, posing challenges for optimization. 
In contrast, BARPO substantially reshapes the landscape, enabling the discovery of improved navigation that yields higher returns.

We also conduct additional ablations in Appendices \ref{app subsec: ablation on bilevel and spo} and \ref{app subsec: ablations on bilevel and KL} to identify the effects of bilevel structure, SPO guidance, and KL surrogate. These further validate the effectiveness of BARPO.

\begin{figure}[t]
  \centering
  \vspace{-1.5em}
    \subfigure[Hopper]{
    \begin{minipage}{0.232\linewidth}
        \centering
        \includegraphics[width=\linewidth]{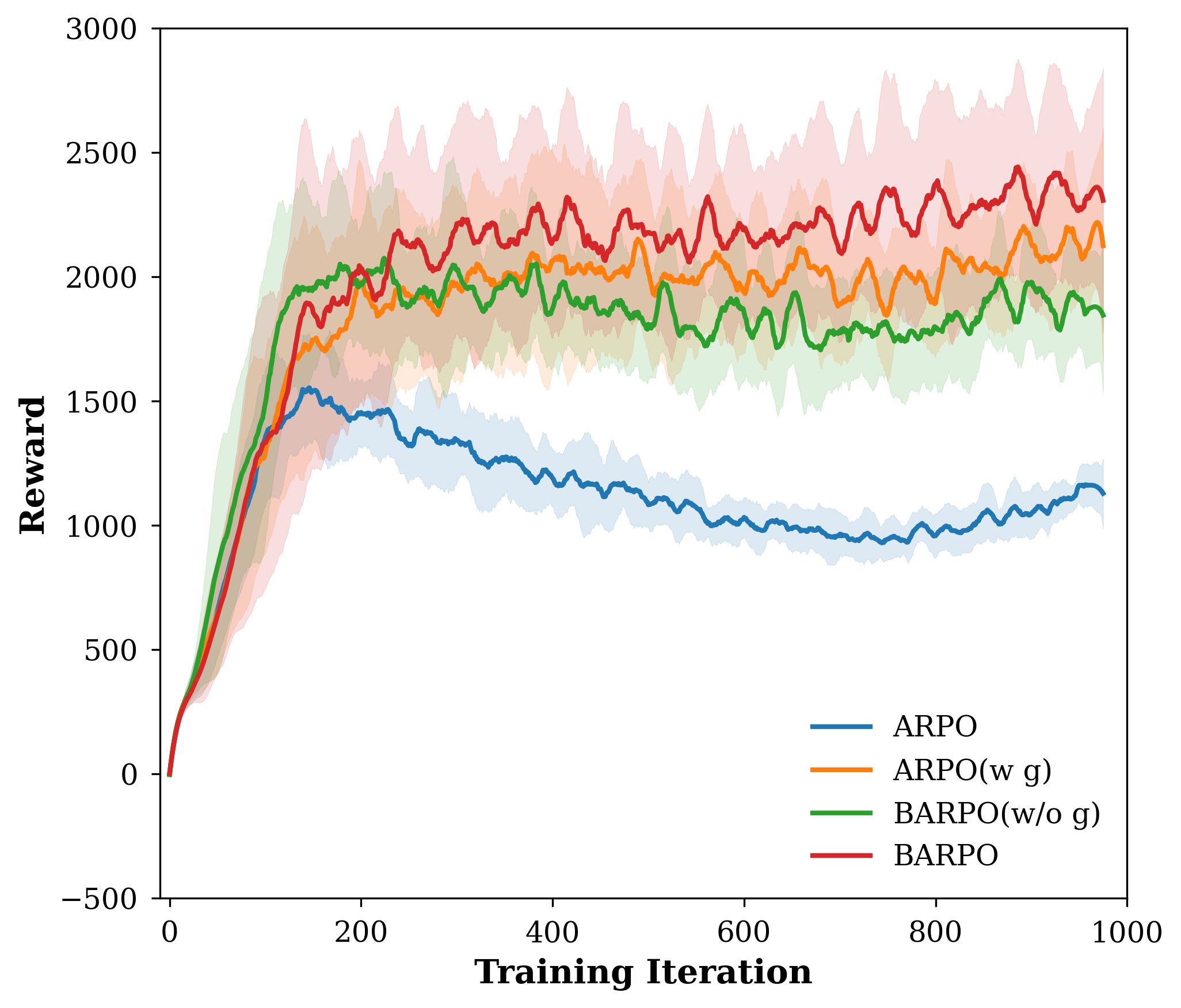}
    \end{minipage}
    }
    \subfigure[Walker2d]{
    \begin{minipage}{0.232\linewidth}
        \centering
        \includegraphics[width=\linewidth]{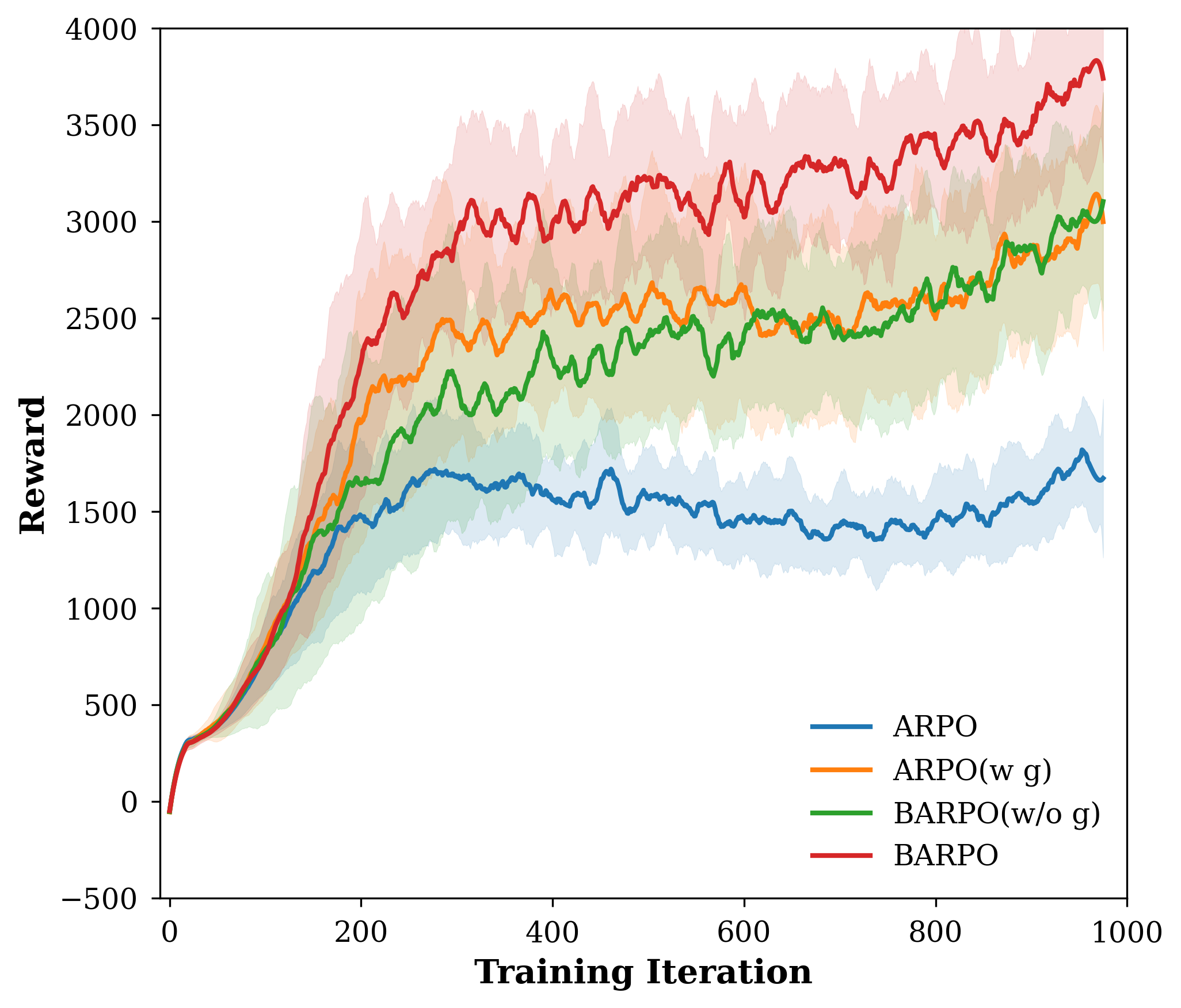}
    \end{minipage}
    }
    \subfigure[Halfcheetah]{
    \begin{minipage}{0.232\linewidth}
        \centering
        \includegraphics[width=\linewidth]{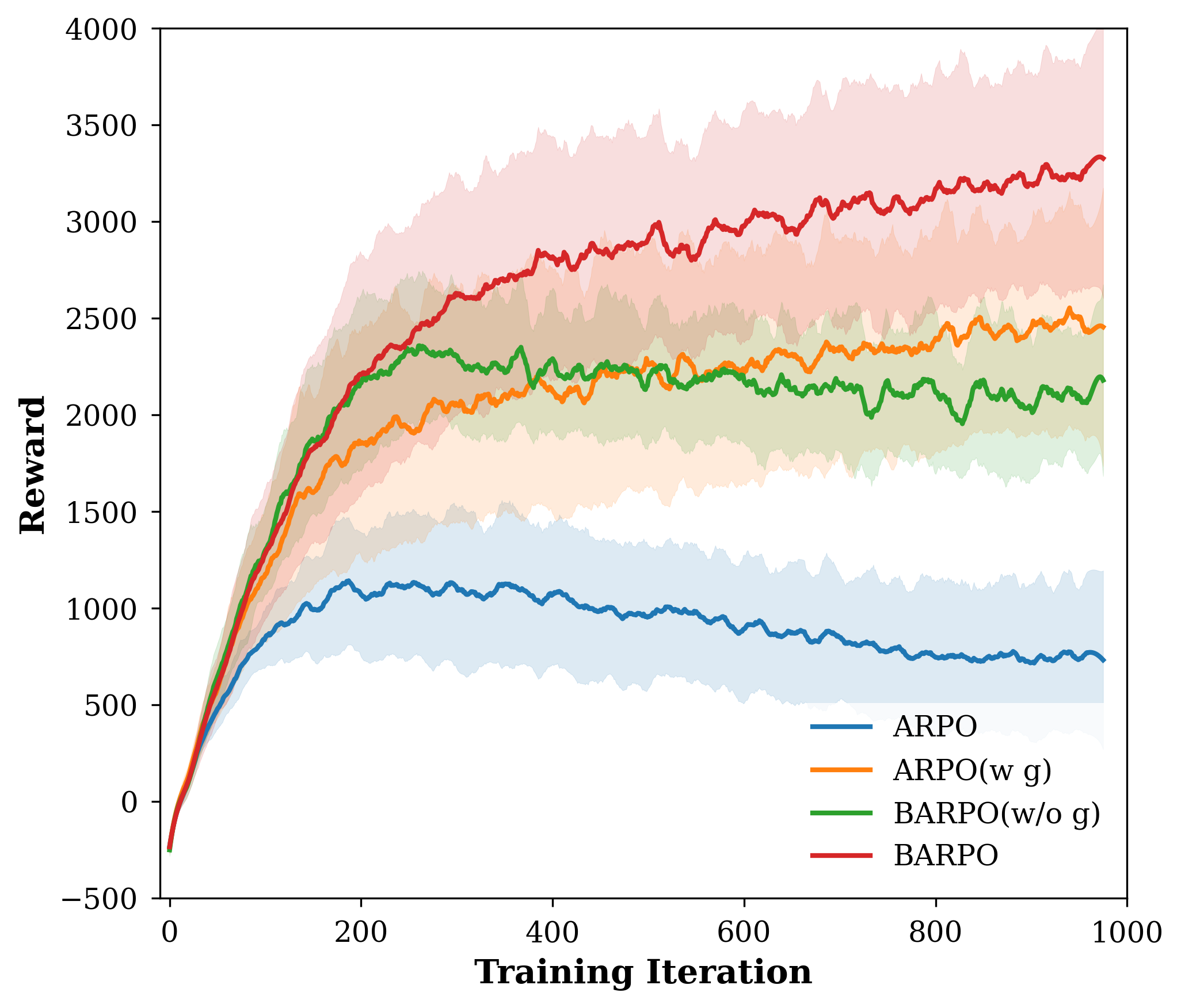}
    \end{minipage}
    }
    \subfigure[Ant]{
    \begin{minipage}{0.232\linewidth}
        \centering
        \includegraphics[width=\linewidth]{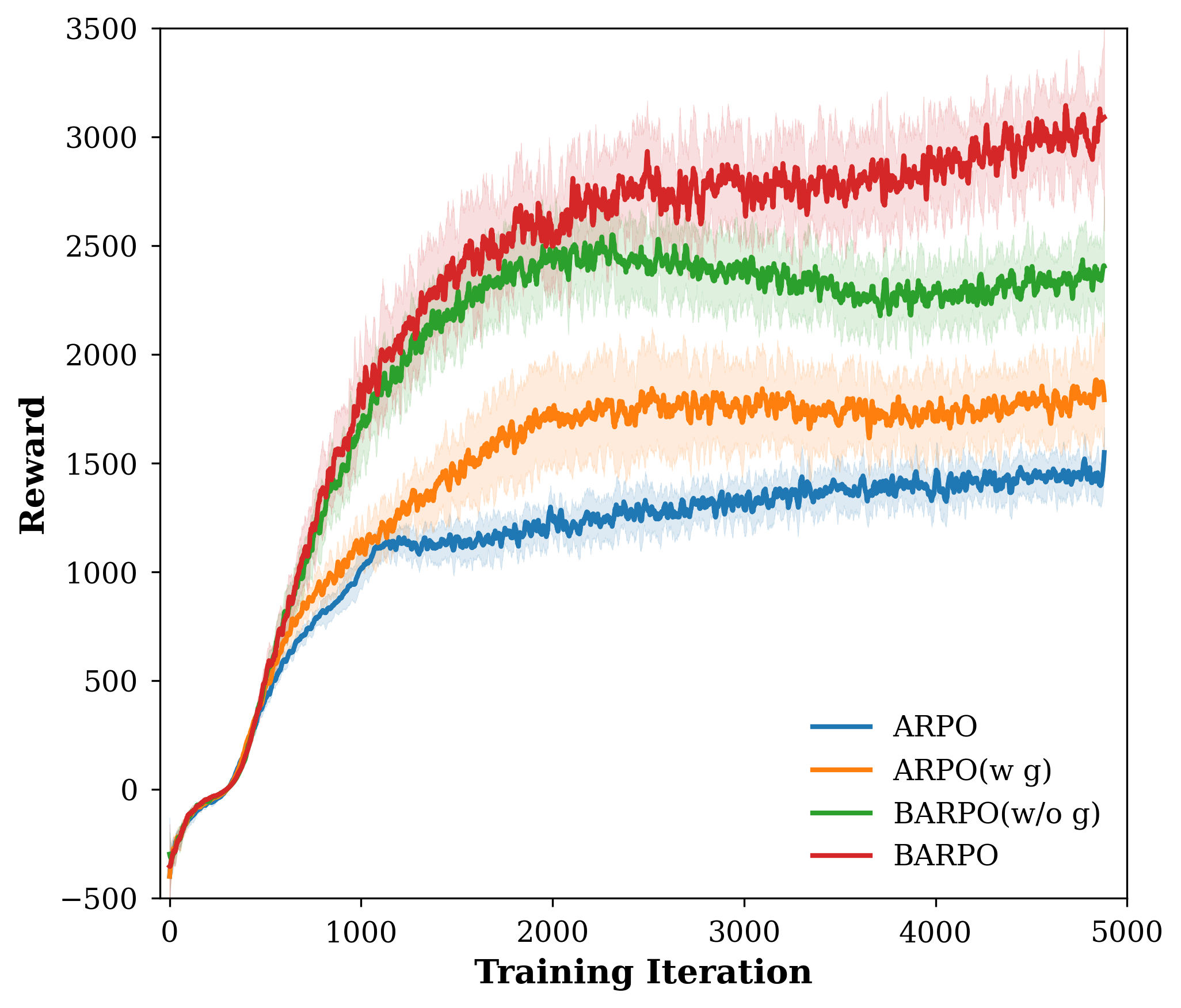}
    \end{minipage}
    }
    \vspace{-1.3em}
  \caption{Natural training curves of ARPO and BARPO with and without SPO-guidance.}
  \vspace{-1em}
  \label{fig: natural training curves}
\end{figure}

\begin{table*}[t]
\caption{Comparisons between BARPO and ARPO with SPO-based guidance.}
\vspace{0.5em}
\label{tab:arpo-wg-barpo}
\resizebox{\textwidth}{!}{%
\begin{tabular}{c c c l c c c c c c l l}
\toprule
\multirow{2}{*}{\makecell{\textbf{Environment}}} & \multicolumn{2}{c}{\raisebox{-6pt}{\textbf{Method}}} & \multirow{2}{*}{\makecell[l]{\hspace{15pt}\textbf{Natural}\\\textbf{\hspace{15pt}Return}}} & \multicolumn{7}{c}{\textbf{Return under Attack}} & \multirow{2}{*}{\makecell{\textbf{Worst-case}\\\textbf{Robustness}}} \\
 & \multicolumn{2}{c}{} & & Random & Critic & MAD & RS & SA-RL & PA-AD &\hspace{17pt} Worst & \\
\midrule
%================== Hopper ======================
\multirow{2}{*}{\makecell{\textbf{Hopper}\\\scriptsize{($\epsilon = 0.075$)}}}
    & \multicolumn{2}{c}{ARPO (w g)}  & \textbf{3699} \text{\scriptsize$\pm$ 11} & 3652 \text{\scriptsize$\pm$ 222} & \textbf{3693} \text{\scriptsize$\pm$ 8} & 3132 \text{\scriptsize$\pm$ 692} & 1130 \text{\scriptsize$\pm$ 30} & 1278 \text{\scriptsize$\pm$ 359} & 1756 \text{\scriptsize$\pm$ 415} & 1130 \text{\scriptsize$\pm$ 30} & -0.694 \\
    & \multicolumn{2}{c}{BARPO}      & 3684 \text{\scriptsize$\pm$ 20}\ \textcolor{red}{$\downarrow$\,0.4\%}   & \textbf{3692} \text{\scriptsize$\pm$ 24} & 3678 \text{\scriptsize$\pm$ 19} & \textbf{3437} \text{\scriptsize$\pm$ 555} & \textbf{1340} \text{\scriptsize$\pm$ 42} & \textbf{1728} \text{\scriptsize$\pm$ 62} & \textbf{1908} \text{\scriptsize$\pm$ 410} & \textbf{1340} \text{\scriptsize$\pm$ 42} \textcolor{blue}{$\uparrow$\,19\%} & \textbf{-0.636}\ \textcolor{blue}{$\uparrow$\,8.4\%} \\
\midrule
%================== Walker2d ======================
\multirow{2}{*}{\makecell{\textbf{Walker2d}\\\scriptsize{($\epsilon = 0.05$)}}}
    & \multicolumn{2}{c}{ARPO (w g)}  & 4206 \text{\scriptsize$\pm$ 30} & 4231 \text{\scriptsize$\pm$ 40} & 4283 \text{\scriptsize$\pm$ 38} & 4226 \text{\scriptsize$\pm$ 68} & 3139 \text{\scriptsize$\pm$ 1442} & 3565 \text{\scriptsize$\pm$ 988} & 2496 \text{\scriptsize$\pm$ 1871} & 2496 \text{\scriptsize$\pm$ 1871}& \textbf{-0.406} \\
    & \multicolumn{2}{c}{BARPO}     & \textbf{4732} \text{\scriptsize$\pm$ 86}\ \textcolor{blue}{$\uparrow$\,11\%}  & \textbf{4718} \text{\scriptsize$\pm$ 59} & \textbf{4727} \text{\scriptsize$\pm$ 36} & \textbf{4750} \text{\scriptsize$\pm$ 65} & \textbf{4646} \text{\scriptsize$\pm$ 53}  & \textbf{4285} \text{\scriptsize$\pm$ 1282} & \textbf{2699} \text{\scriptsize$\pm$ 1192} & \textbf{2699} \text{\scriptsize$\pm$ 1192} \textcolor{blue}{$\uparrow$\,8.1\%}& -0.436\ \textcolor{red}{$\downarrow$\,7.4\%} \\
\midrule
%================== Halfcheetah ======================
\multirow{2}{*}{\makecell{\textbf{Halfcheetah}\\\scriptsize{($\epsilon = 0.15$)}}}
    & \multicolumn{2}{c}{ARPO (w g)}  & \textbf{4997} \text{\scriptsize$\pm$ 71} & \textbf{4934} \text{\scriptsize$\pm$ 57} & \textbf{5040} \text{\scriptsize$\pm$ 50} & 2570 \text{\scriptsize$\pm$ 1588} & 3605 \text{\scriptsize$\pm$ 48} & 1356 \text{\scriptsize$\pm$ 1165} & 1086 \text{\scriptsize$\pm$ 758} & 1086 \text{\scriptsize$\pm$ 758} & -0.783 \\
    & \multicolumn{2}{c}{BARPO}      & 4837 \text{\scriptsize$\pm$ 99}\ \textcolor{red}{$\downarrow$\,3.2\%}   & 4741 \text{\scriptsize$\pm$ 501}               & 4803 \text{\scriptsize$\pm$ 54}             & \textbf{4729} \text{\scriptsize$\pm$ 105} & \textbf{4265} \text{\scriptsize$\pm$ 1077} & \textbf{3894} \text{\scriptsize$\pm$ 1322}  & \textbf{3181} \text{\scriptsize$\pm$ 1593}   & \textbf{3181} \text{\scriptsize$\pm$ 1593}  \textcolor{blue}{$\uparrow$\,193\%}& \textbf{-0.342}\ \textcolor{blue}{$\uparrow$\,56\%} \\
\midrule
%================== Ant ======================
\multirow{2}{*}{\makecell{\textbf{Ant}\\\scriptsize{($\epsilon = 0.15$)}}}
    & \multicolumn{2}{c}{ARPO (w g)}  & \textbf{5390} \text{\scriptsize$\pm$ 120} & \textbf{5227} \text{\scriptsize$\pm$ 186} & 4768 \text{\scriptsize$\pm$ 92} & 4059 \text{\scriptsize$\pm$ 921} & 2670 \text{\scriptsize$\pm$ 1365} & 1185 \text{\scriptsize$\pm$ 253} & 1757 \text{\scriptsize$\pm$ 811} & 1185 \text{\scriptsize$\pm$ 253} & -0.773 \\
    & \multicolumn{2}{c}{BARPO}      & 5024 \text{\scriptsize$\pm$ 117}\ \textcolor{red}{$\downarrow$\,6.8\%}              & 4979 \text{\scriptsize$\pm$ 114}                & \textbf{4777} \text{\scriptsize$\pm$ 122}            & \textbf{4843} \text{\scriptsize$\pm$ 120} & \textbf{4171} \text{\scriptsize$\pm$ 826} & \textbf{3367} \text{\scriptsize$\pm$ 902} & \textbf{2825} \text{\scriptsize$\pm$ 757}  & \textbf{2825} \text{\scriptsize$\pm$ 757}  \textcolor{blue}{$\uparrow$\,138\%}& \textbf{-0.438}\ \textcolor{blue}{$\uparrow$\,43\%} \\
\bottomrule
\end{tabular}
}
\vspace{-1em}
\end{table*}

\section{Conclusion}
In this paper, we explore whether the theoretical alignment between optimality and robustness can be realized in practical policy optimization, and propose BARPO, a novel bilevel framework that unifies standard and adversarially robust policy optimization. 
Through extensive analysis, we reveal a fundamental tradeoff between optimality and robustness in existing policy gradient methods, exposing a critical gap between theory and practice.
Notably, our work demystifies the underlying cause of this tension from the perspective of optimization landscapes and value space geometry. 
Specifically, the strongest adversaries in ARPO significantly reshape the learning landscapes by introducing numerous sticky and deceptive local optima.
While this promotes robustness, it also hinders effective policy improvement by making the landscape more rugged and difficult to navigate.
BARPO refines the landscapes by modulating the adversary strength, thereby creating pathways toward superior policies and reducing the prevalence of poor sticky local optima. 
% As a result, this framework achieves superior natural performance and robustness, highlighting its potential to bridge the theory-practice gap.
Surprisingly, this bilevel framework empirically showcases the potential for approaching the globally optimal robust policy, suggesting a promising step toward bridging theoretical guarantees and practical performance.

\section*{Ethics Statement}

All authors of this paper have carefully reviewed and adhered to the ICLR Code of Ethics. %Our paper presents foundational research that investigates existing policy optimization paradigms from a theoretical perspective and proposes a general bilevel framework, which is not specific to any particular application scenario.

\section*{Reproducibility Statement}

All benchmarks, as well as the experimental setup, are based on or derived from open-sourced work. We have given relevant references in the paper, and the details of our algorithm and experiments are provided in Section~\ref{sec: exp} and Appendices~\ref{app: additional algorithm details} and \ref{app sec: add exp results}.

% Since we have conducted extensive experiments on several environments and implemented several algorithms, providing detailed scripts is intricate. The code will be made available upon acceptance of the article.

We have provided detailed analysis and complete proofs for each theorem and proposition in the Appendices \ref{app sec: arpo}, \ref{app sec: toy isa-mdp}, \ref{app sec: surrogate adv}, and \ref{app sec: add dis}.

\bibliography{reference}
\bibliographystyle{iclr2026_conference}

\newpage
\appendix

\section{Outline of Appendix}

\setlength{\itemsep}{0.1pt}
\begin{itemize}[topsep=0em, leftmargin=0em, itemindent=1em]
    \item \textbf{Appendix \ref{sec: related work}:} We review related work on adversarial attacks and defenses for reinforcement learning against perturbations on state observation.

    \item \textbf{Appendix \ref{app sec: arpo}:} Detailed theoretical analysis of ARPO, including proofs for the adversary's policy gradient (Theorem \ref{thm: pg for adv}) in Appendix \ref{app subsec: pg for adversary}, convergence (Theorem \ref{main thm:convergence of ARPO}) in Appendix \ref{app subsec: convergence of arpo}, and learning dynamics in Appendix \ref{app subsec: learning dynamics of fosp}.

    \item \textbf{Appendix \ref{app sec: toy isa-mdp}:} An intuitive analysis of a toy ISA-MDP, corresponding to Propositions \ref{prop: suboptimal for toy isamdp} and \ref{prop: vulnerable con in toy isamdp}.

    \item \textbf{Appendix \ref{app sec: surrogate adv}:} The analysis and proof for the KL-based surrogate adversary (Theorem \ref{thm: surrogate adversary}).

    \item \textbf{Appendix \ref{app: additional algorithm details}:} Additional implementation details, including pseudocode for ARPO (Appendix \ref{app subsec: arppo}) and BARPO (Appendix \ref{app subsec: barppo}), hyperparameter settings (Appendix \ref{app subsec: add imp details}), and compute resources (Appendix \ref{app subsec: exp com res}).
    
    \item \textbf{Appendix \ref{app sec: add exp results}:} Additional experimental results, including detailed comparisons (Appendix \ref{app subsec: compare spo, arpo, barpo}), statistical significance analysis (Appendix \ref{app subsec: stat sign}), extended experiments on Humanoid (Appendix \ref{app subsec: humanoid exp}), and ablation studies on the regularization coefficient $\kappa$ (Appendix \ref{app subsec: exp regu coef}) and our proposed components (Appendices \ref{app subsec: ablation on bilevel and spo} and \ref{app subsec: ablations on bilevel and KL}).

    \item \textbf{Appendix \ref{app sec: add dis}:} Further discussions clarifying our contributions and convergence proof (Appendix \ref{app subsec: dis on convergence assum}), the novelty of the bilevel framework (Appendix \ref{app subsec: bilevel novelty}), and a deeper theoretical examination of BARPO (Appendices \ref{app subsec: gap between barpo and arpo} and \ref{app subsec: theoretical understanding of barpo}).

    \item \textbf{Appendices \ref{sec: limit} and \ref{app sec: llm}:} A discussion of limitations, future work, and a clarification on the use of large language models.
\end{itemize}

\section{Related Work} \label{sec: related work}

\subsection{Attacks on State Observations of DRL Agents}
\paragraph{Adversarial Attacks on State Observations of DRL Agents.}
The susceptibility of DRL agents to adversarial attacks was initially exposed by~\cite{huang2017adversarial}, who demonstrated that DRL policies could be significantly disrupted using the Fast Gradient Sign Method (FGSM)~\citep{goodfellow2014explaining} in Atari environments. This seminal discovery laid the foundation for a wave of subsequent studies on adversarial strategies and policy robustness in DRL.
Building on this, \cite{lin2017tactics} and \cite{kos2017delving} proposed limited-step perturbation techniques aimed at misleading DRL policies, revealing that even small, strategically applied modifications could impair agent behavior. In a further advancement, \cite{pattanaik2017robust} exploited the critic function alongside gradient descent to craft adversarial inputs that degrade policy performance more systematically.
\cite{behzadan2017vulnerability} extended the threat model by introducing black-box attacks on DQN, confirming that adversarial examples could successfully transfer across different network architectures, thereby emphasizing their real-world applicability. Meanwhile, \cite{inkawhich2019snooping} demonstrated that even adversaries with access restricted to action and reward signals, without internal model information, could still inflict considerable harm.
In the domain of continuous control, \cite{weng2019toward} devised a two-step adversarial framework leveraging learned dynamics models to generate more effective attacks. Furthermore, \cite{zhang2021robust} and \cite{sun2021strongest} advanced the field by training adversarial agents using reinforcement learning, giving rise to adaptive attack strategies known as SA-RL and PA-AD.
\cite{korkmaz2023adversarial} investigated adversarial directions in the Arcade Learning Environment and discovered that even state-of-the-art robust agents~\citep{zhang2020robust, oikarinen2021robust} remain susceptible to perturbations aligned with policy-independent sensitivity axes.
Most recently, \cite{liang2023game} proposed a temporally-coupled attack strategy that exploits temporal correlations to further compromise the performance of robust policies.

\paragraph{Other Attacks on DRL Agents.}
Research by~\cite{kiourti2020trojdrl, wang2021backdoorl, bharti2022provable, guo2023policycleanse} delved deeper into backdoor attacks within reinforcement learning, revealing profound vulnerabilities that pose significant threats to policy integrity.
In a novel contribution, \cite{lu2023adversarial} proposed an adversarial cheap talk framework, where an attacker is trained via meta-learning to manipulate agent behavior through indirect communication.
\cite{franzmeyerillusory} introduced a dual ascent-based approach to train an end-to-end illusory attack capable of deceiving DRL agents effectively.
In the context of multi-agent environments, \cite{gleave2019adversarial} examined how adversarial policies can manipulate or destabilize cooperative or competitive dynamics among agents.

% Collectively, these studies highlight the persistent challenge of building adversarially robust DRL agents and underscore the urgent need for continued advancements in defense mechanisms.
Collectively, these studies reveal the broad and persistent vulnerability of DRL agents to adversarial manipulation across diverse threat models and environments. From observation perturbations and black-box attacks to communication-based deception and multi-agent exploitation, adversaries have consistently demonstrated the ability to significantly degrade policy performance, highlighting the urgent need for continued advancements in defense mechanisms.
% Notably, even the most robust DRL agents remain susceptible to carefully crafted or temporally coordinated attacks. These findings underscore a fundamental gap in current defense methods and call for more principled, generalizable strategies. As adversarial techniques grow increasingly adaptive, developing truly resilient DRL agents remains an open and pressing challenge.

\subsection{Developing Robust DRL Agents Against Perturbations on State Observation}
\paragraph{Early Attempts to Develop Robust Agents.}
Early efforts by~\cite{kos2017delving, behzadan2017whatever} explored incorporating adversarial states into the replay buffer during training on Atari environments, although these approaches yielded only limited robustness improvements.
To address this, \cite{fischer2019online} proposed a decoupled architecture for DQN, separating it into a Q-network and a policy network, and robustly trained the policy component using generated adversarial states under provable robustness guarantees.

\paragraph{Smoothing Techniques.}
\cite{shen2020deep} demonstrated that applying smoothness regularization can simultaneously improve both the natural performance and robustness of TRPO~\citep{schulman2015trust} and DDPG~\citep{silver2014deterministic} algorithms.
Building on this idea, \cite{wu2021crop} and \cite{kumar2021policy} employed Randomized Smoothing (RS) techniques to achieve certifiable robustness guarantees.
More recently, \cite{sun2024breaking} proposed an innovative smoothing approach designed to mitigate the common issue of robustness overestimation, thereby enhancing both natural and robust returns in tandem.
Importantly, these smoothing methods are complementary to our work and can be seamlessly integrated into our proposed algorithms.

\paragraph{Robust Multi-Agent DRL.}
In the context of multi-agent systems, \cite{he2023robust} investigated state adversaries within Markov Games and developed robust multi-agent Q-learning and actor-critic algorithms to achieve robust equilibrium solutions.
Building on robustness regularization techniques~\citep{shen2020deep, zhang2020robust}, \cite{bukharin2024robust} extended these methods to multi-agent environments by considering sub-optimal Lipschitz continuous policies in smooth settings.
\cite{liu2023rethinking} proposed an adversarial training framework employing two timescales to ensure effective convergence towards robust policies.
Another research direction involves alternating training between agents and learned adversaries~\citep{zhang2021robust, sun2021strongest}, which \cite{liang2023game} further formalized within a game-theoretic framework.

\paragraph{Beyond the Worst-Case Robustness.} 
Furthermore, \cite{liu2024beyond} introduced an adaptive defense mechanism during testing, leveraging a set of non-dominated policies. The latest work by \cite{belaireminimizing} considers the partial observability setting and balances the value optimization with robustness based on beliefs.

\paragraph{Adversarially Robust Training for DRL Agents.}
In this paper, we mainly focus on the adversarially robust training paradigm.
\cite{zhang2020robust} formally modeled state-adversarial reinforcement learning as a state-adversarial Markov Decision Process and highlighted the potential absence of an Optimal Robust Policy. To tackle this, they introduced a TV-distance-based regularization technique to balance robustness with nominal performance.
Building on this, \cite{oikarinen2021robust} employed certified robustness bounds to construct the adversarial loss, which they integrated with standard training objectives.
\cite{liang2022efficient} enhanced training efficiency by estimating worst-case value functions and combining them with classical Temporal Difference (TD) target~\citep{sutton1988learning} or Generalized Advantage Estimation (GAE)~\citep{schulman2015high}.
\cite{nie2023improve} proposed a DRL framework for discrete action spaces based on SortNet~\citep{zhang2022rethinking}, which ensures global Lipschitz continuity and eliminates the need for training additional adversarial agents.
More recently, \cite{li2024towards, li2025towards} developed the universal ISA-MDP framework, within which they formally proved the existence of the ORP and demonstrated its consistency with Bellman optimality policies. Moreover, they established that an infinity-measurement error is a necessary condition for adversarial robustness. However, their theoretical analysis is limited to worst-case settings for REINFORCE and PPO. In contrast, our analysis builds upon practical policy gradient methods, making it broadly applicable to real-world policy optimization.

These studies underscore the crucial importance of developing robust DRL policies and highlight the ongoing challenges in enhancing adversarial robustness in DRL agents.

\subsection{Robust RL Agents Against Perturbations on Transition Dynamics}

Research on robustness against transition uncertainty \citep{pinto2017robust, gu2019adversary, kamalaruban2020robust, dong2025variational} is primarily formulated as Robust Markov Decision Processes (RMDPs) \citep{nilim2003robustness, iyengar2005robust, wiesemann2013robust}. This paradigm differs fundamentally from SA-MDP. In SA-MDP, the adversary corrupts the policy input, forcing the agent to act on distorted observations. Consequently, robustness depends on the policy's stability with respect to state variations. In contrast, RMDPs involve perturbations to the environment's dynamics. The agent observes the true state, but the transition leads to a worst-case outcome. Since the decision precedes the perturbation, state-space smoothness does not guarantee robustness against dynamic shifts.

\section{Adversarially Robust Policy Optimization}\label{app sec: arpo}

In this section, we present a detailed theoretical analysis of ARPO. In the subsection \ref{app subsec: pg for adversary}, we show the analysis of policy gradient for the adversary. In the subsection \ref{app subsec: convergence of arpo}, we show the convergence analysis of ARPO. And in the subsection \ref{app subsec: learning dynamics of fosp}, we explore the learning dynamics in the neighborhood of FOSPs in both ARPO and SPO.

\subsection{Policy Gradient for Adversary}\label{app subsec: pg for adversary}

\begin{theorem}[Policy Gradient for Adversary]
    Given a policy $\pi_\theta$, for all state $s\in \mathcal{S}$, we have
    \begin{align*}
        \nabla_{\vartheta} V^{\pi_\theta\circ\nu_\vartheta}(s) &= \mathbb{E}_{\tau \sim \pi_\theta \circ \nu_\vartheta,\mathbb{P}}\left[ R(\tau) \sum_{t=0}^\infty \nabla_{\vartheta} \log\pi_\theta(a_t|\nu_\vartheta(s_t)) \right]\\
        &= \mathbb{E}_{\tau \sim \pi_\theta \circ \nu_\vartheta,\mathbb{P}}\left[ \sum_{t=0}^\infty \gamma^t Q^{\pi_\theta \circ \nu_\vartheta}(s_t,a_t)\nabla_{\vartheta} \log\pi_\theta(a_t|\nu_\vartheta(s_t)) \right]\\
        &= \frac{1}{1-\gamma}  \mathbb{E}_{(s^\prime,a^\prime)\sim d^{\pi_\theta \circ \nu_\vartheta}}  \left[ Q^{\pi_\theta \circ \nu_\vartheta}(s^\prime,a^\prime)\nabla_{\vartheta} \log\pi_\theta(a^\prime|\nu_\vartheta(s^\prime)) \right].
    \end{align*}
    Specifically, consider the direct parameterization representation for adversary $\nu_\vartheta:\mathcal{S} \rightarrow \mathcal{S},\ s\mapsto s + \vartheta_s \in B(s)$. Then, for any state $s_i\in \mathcal{S}$, we have the state-wise policy gradient for the adversary
    \begin{equation}\label{eq: state-wise policy gradient for adversary}
        \begin{aligned}
            &\quad\ \nabla_{\vartheta_{s_i}} V^{\pi_\theta\circ\nu_\vartheta}(s) \\
            &= \mathbb{E}_{\tau \sim \pi_\theta \circ \nu_\vartheta,\mathbb{P}}\left[ R(\tau) \sum_{t=0}^\infty \nabla_{\vartheta_{s_t}} \log\pi_\theta(a_t|s_t + \vartheta_{s_t}) \cdot \mathbb{I}(s_t = s_i) \right], \\
            &= \mathbb{E}_{\tau \sim \pi_\theta \circ \nu_\vartheta,\mathbb{P}}\left[ \sum_{t=0}^\infty \gamma^t Q^{\pi_\theta \circ \nu_\vartheta}(s_t,a_t)\nabla_{\vartheta_{s_t}} \log\pi_\theta(a_t|s_t + \vartheta_{s_t}) \cdot \mathbb{I}(s_t = s_i) \right] \\
            &= \frac{1}{1-\gamma}  \mathbb{E}_{(s^\prime,a^\prime)\sim d^{\pi_\theta \circ \nu_\vartheta}}  \left[ Q^{\pi_\theta \circ \nu_\vartheta}(s^\prime,a^\prime)\nabla_{\vartheta_{s^\prime}} \log\pi_\theta(a^\prime|s^\prime + \vartheta_{s^\prime}) \cdot \mathbb{I}(s^\prime = s_i) \right],
        \end{aligned}
    \end{equation}
    where $d^{\pi \circ \nu}$ is the state-action visitation distribution under $\pi\circ\nu$ and $\mathbb{I}(\cdot)$ is the indicator function.
\end{theorem}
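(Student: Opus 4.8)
The plan is to recognize that the adversary-composed value function is just an ordinary value function of a \emph{reparameterized} policy on the original MDP, and then to invoke the classical policy gradient (likelihood-ratio) theorem, with $\vartheta$ playing the role of the policy parameters. First I would define the effective policy $\tilde\pi_\vartheta(\cdot\,|\,s) := \pi_\theta(\cdot\,|\,\nu_\vartheta(s))$, which for fixed $\theta$ is a stochastic Markov policy parameterized entirely by $\vartheta$. Since $\nu_\vartheta$ is a deterministic state-to-state map and injects no randomness, the trajectory law under $(\pi_\theta\circ\nu_\vartheta,\mathbb{P})$ coincides with that under $(\tilde\pi_\vartheta,\mathbb{P})$; hence $V^{\pi_\theta\circ\nu_\vartheta}(s)=V^{\tilde\pi_\vartheta}(s)$, $Q^{\pi_\theta\circ\nu_\vartheta}=Q^{\tilde\pi_\vartheta}$, and $d^{\pi_\theta\circ\nu_\vartheta}=d^{\tilde\pi_\vartheta}$, so the statement reduces to the standard policy gradient theorem for $\tilde\pi_\vartheta$.

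Next I would carry out the three-step derivation of the standard theorem applied to $\tilde\pi_\vartheta$. Writing $P_\vartheta(\tau)=\mu_0(s_0)\prod_{t\ge0}\tilde\pi_\vartheta(a_t|s_t)\,\mathbb{P}(s_{t+1}|s_t,a_t)$ and $R(\tau)=\sum_{t\ge0}\gamma^t r(s_t,a_t)$, differentiating $V^{\tilde\pi_\vartheta}(s)=\int R(\tau)P_\vartheta(\tau)\,d\tau$ and using $\nabla_\vartheta P_\vartheta(\tau)=P_\vartheta(\tau)\sum_t\nabla_\vartheta\log\tilde\pi_\vartheta(a_t|s_t)$ (only the policy factors depend on $\vartheta$) yields the trajectory form. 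The reward-to-go argument — terms $\gamma^{t'}r(s_{t'},a_{t'})$ with $t'<t$ are, conditioned on the history through time $t$, independent of $\nabla_\vartheta\log\tilde\pi_\vartheta(a_t|s_t)$ in expectation, hence contribute zero — collapses $R(\tau)$ to $\sum_{t'\ge t}\gamma^{t'}r$, whose conditional expectation is $\gamma^t Q^{\tilde\pi_\vartheta}(s_t,a_t)$, giving the second form. Re-indexing the discounted double sum as an expectation over the normalized discounted state-action visitation distribution $d^{\tilde\pi_\vartheta}$ with the $1/(1-\gamma)$ factor gives the third form. Substituting back $\tilde\pi_\vartheta(a|s)=\pi_\theta(a|\nu_\vartheta(s))$ recovers the three displayed identities.

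For the direct parameterization $\nu_\vartheta(s)=s+\vartheta_s$, I would note that $\vartheta$ decomposes into per-state blocks $\{\vartheta_{s_i}\}_{s_i\in\mathcal{S}}$, and $\log\tilde\pi_\vartheta(a_t|s_t)=\log\pi_\theta(a_t|s_t+\vartheta_{s_t})$ depends only on the single block $\vartheta_{s_t}$. By the chain rule, $\nabla_{\vartheta_{s_i}}\log\pi_\theta(a_t|s_t+\vartheta_{s_t})=\mathbb{I}(s_t=s_i)\,\nabla_{\vartheta_{s_i}}\log\pi_\theta(a_t|s_i+\vartheta_{s_i})$, so inserting the indicator into each of the three general forms gives \eqref{eq: state-wise policy gradient for adversary}.

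The only point requiring care — not a genuine obstacle — is justifying the interchange of $\nabla_\vartheta$ with the expectation over trajectories and with the infinite time sum. This follows from $\gamma\in[0,1)$, boundedness of $r$ (so $R(\tau)$ is uniformly bounded and the series converges absolutely), and smoothness of $\pi_\theta(a|\cdot)$ in its input together with a local bound on $\|\nabla_{\vartheta_s}\log\pi_\theta(a|s+\vartheta_s)\|$ on the admissible set $B(s)$, which supplies an integrable dominating function and lets dominated convergence apply exactly as in the standard proof of the policy gradient theorem. I would state these as standing regularity assumptions and keep the verification brief.
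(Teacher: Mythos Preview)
Your proposal is correct and follows essentially the same route as the paper: the log-derivative trick on the trajectory probability to obtain the $R(\tau)$ form, the reward-to-go/causality argument to reach the $Q$-weighted form, the visitation-distribution re-indexing for the third form, and the block-structure observation for the state-wise gradient under the direct parameterization. Your framing via the effective policy $\tilde\pi_\vartheta$ is a clean way to package the same computation, and your explicit mention of the regularity conditions for the interchange of $\nabla_\vartheta$ with expectation is something the paper leaves implicit.
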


\begin{remark}
    Equation (\ref{eq: state-wise policy gradient for adversary}) indicates that for computing the strongest adversary, we can decompose it into calculating the strongest adversary in every state after sampling.
\end{remark}

\begin{proof}
    The definition of $V^{\pi_\theta\circ\nu_\vartheta}(s)$
    \begin{equation*}
        V^{\pi_\theta\circ\nu_\vartheta}(s) = \mathbb{E}_{\tau \sim \pi_\theta \circ \nu_\vartheta,\mathbb{P}}\left[ \sum_{t=0}^{\infty} \gamma^t r(s_t,a_t) | s_0=s\right] = \mathbb{E}_{\tau \sim \pi_\theta \circ \nu_\vartheta,\mathbb{P}}\left[ R(\tau) | s_0=s\right].
    \end{equation*}
    The probability of attaining trajectory $\tau=\{ s_0, a_0, s_1 ,a_1,\dots \}$ under $\pi_\theta$ and $\nu_\vartheta$ with initial state distribution $\mu_0$ is
    \begin{align*}
        P_{\theta, \vartheta}(\tau) &= \mu_0(s_0) \pi_\theta(a_0|\nu_\vartheta(s_0)) \prod_{i=1}^{\infty} \mathbb{P}(s_i|s_{i-1},a_{i-1}) \pi_\theta(a_i|\nu_\vartheta(s_i)) \\
        &= \mu_0(s_0) \pi_\theta \circ \nu_\vartheta(a_0|s_0) \prod_{i=1}^{\infty} \mathbb{P}(s_i|s_{i-1},a_{i-1}) \pi_\theta \circ \nu_\vartheta(a_i|s_i).
    \end{align*}
    For any state $s$, compute the gradient of the adversarial value function with respect to $\nu$.
    \begin{align*}
        &\quad\ \nabla_{\vartheta} V^{\pi_\theta\circ\nu_\vartheta}(s)\\
        &= \nabla_{\vartheta} \int_{\tau} R(\tau) P_{\theta, \vartheta}(\tau|s_0=s) \\
        &= \int_{\tau} R(\tau) \nabla_{\vartheta} P_{\theta, \vartheta}(\tau|s_0=s) \\
        &= \int_{\tau} R(\tau) P_{\theta, \vartheta}(\tau|s_0=s) \nabla_{\vartheta} \log P_{\theta, \vartheta}(\tau|s_0=s) \\
        &= \int_{\tau} R(\tau) P_{\theta, \vartheta}(\tau) \sum_{t=0}^\infty \nabla_{\vartheta} \log\pi_\theta \circ \nu_\vartheta(a_t|s_t) \\
        &= \mathbb{E}_{\tau \sim \pi_\theta \circ \nu_\vartheta,\mathbb{P}}\left[ R(\tau) \sum_{t=0}^\infty \nabla_{\vartheta} \log\pi_\theta \circ \nu_\vartheta(a_t|s_t) \right] \\
        &= \mathbb{E}_{\tau \sim \pi_\theta \circ \nu_\vartheta,\mathbb{P}}\left[ R(\tau) \sum_{t=0}^\infty \nabla_{\vartheta} \log\pi_\theta(a_t|\nu_\vartheta(s_t)) \right] .
    \end{align*}
    Let $\tau_{:t}=\{ s_0, a_0, s_1 ,a_1,\dots, s_t, a_t \}$. Furthermore, we have that
    \begin{align*}
        &\quad\ \nabla_{\vartheta} V^{\pi_\theta\circ\nu_\vartheta}(s)\\
        &= \mathbb{E}_{\tau \sim \pi_\theta \circ \nu_\vartheta,\mathbb{P}}\left[ \sum_{i=0}^{\infty} \gamma^i r(s_i,a_i)  \sum_{t=0}^\infty \nabla_{\vartheta} \log\pi_\theta(a_t|\nu_\vartheta(s_t)) \right] \\
        &= \mathbb{E}_{\tau \sim \pi_\theta \circ \nu_\vartheta,\mathbb{P}}\left[ \sum_{t=0}^\infty \sum_{i=0}^{\infty} \gamma^i r(s_i,a_i)   \nabla_{\vartheta} \log\pi_\theta(a_t|\nu_\vartheta(s_t)) \right] \\
        &= \mathbb{E}_{\tau \sim \pi_\theta \circ \nu_\vartheta,\mathbb{P}}\left[ \sum_{t=0}^\infty  \mathbb{E}_{\tau_{t:}} \left[ \sum_{i=0}^{\infty} \gamma^i r(s_i,a_i)   \nabla_{\vartheta} \log\pi_\theta(a_t|\nu_\vartheta(s_t)) \mid \tau_{:t-1} \right] \right] \\
        &= \mathbb{E}_{\tau \sim \pi_\theta \circ \nu_\vartheta,\mathbb{P}}\left[ \sum_{t=0}^\infty  \mathbb{E}_{\tau_{t:}} \left[ \left(\sum_{i<t} \gamma^i r(s_i,a_i) + \sum_{i\ge t} \gamma^i r(s_i,a_i)\right)   \nabla_{\vartheta} \log\pi_\theta(a_t|\nu_\vartheta(s_t)) \mid \tau_{:t-1} \right] \right] \\
        &= \mathbb{E}_{\tau \sim \pi_\theta \circ \nu_\vartheta,\mathbb{P}}\left[ \sum_{t=0}^\infty  \mathbb{E}_{\tau_{t:}} \left[ \sum_{i=t}^{\infty} \gamma^i r(s_i,a_i)   \nabla_{\vartheta} \log\pi_\theta(a_t|\nu_\vartheta(s_t)) \mid \tau_{:t-1} \right] \right] \\
        &= \mathbb{E}_{\tau \sim \pi_\theta \circ \nu_\vartheta,\mathbb{P}}\left[ \sum_{t=0}^\infty \sum_{i=t}^{\infty} \gamma^i r(s_i,a_i)   \nabla_{\vartheta} \log\pi_\theta(a_t|\nu_\vartheta(s_t)) \right] \\
        &= \mathbb{E}_{\tau \sim \pi_\theta \circ \nu_\vartheta,\mathbb{P}}\left[ \sum_{t=0}^\infty \mathbb{E}_{\tau_{t+1:}} \left[ \sum_{i=t}^{\infty} \gamma^i r(s_i,a_i)   \nabla_{\vartheta} \log\pi_\theta(a_t|\nu_\vartheta(s_t)) \mid \tau_{:t} \right] \right] \\
        &= \mathbb{E}_{\tau \sim \pi_\theta \circ \nu_\vartheta,\mathbb{P}}\left[ \sum_{t=0}^\infty \mathbb{E}_{\tau_{t+1:}} \left[ \sum_{i=t}^{\infty} \gamma^i r(s_i,a_i)   \mid s_t, a_t \right] \nabla_{\vartheta} \log\pi_\theta(a_t|\nu_\vartheta(s_t))  \right] \\
        &= \mathbb{E}_{\tau \sim \pi_\theta \circ \nu_\vartheta,\mathbb{P}}\left[ \sum_{t=0}^\infty \gamma^t Q^{\pi_\theta \circ \nu_\vartheta}(s_t,a_t)\nabla_{\vartheta} \log\pi_\theta(a_t|\nu_\vartheta(s_t)) \right].
    \end{align*}
    Furthermore, through the definition of the state-action visitation distribution following the policy $\pi_\theta \circ \nu_\vartheta$ starting from $s$: $d^{\pi_\theta\circ \nu_\vartheta}_{s}(s^\prime,a^\prime) = \mathbb{E}_{ s_{t} \sim \mathbb{P}(\cdot|s_{t-1}, a_{t-1}), a_{t} \sim \pi_\theta\circ \nu_\vartheta(\cdot|s_t)} \left[ (1-\gamma)\sum_{t=0}^\infty \gamma^t \mathbb{I}(s_t = s^\prime, a_t = a^\prime) \vert s_0 =s \right]$, we have that
    \begin{align*}
        \nabla_{\vartheta} V^{\pi_\theta\circ\nu_\vartheta}(s) = \frac{1}{1-\gamma}  \mathbb{E}_{(s^\prime,a^\prime)\sim d^{\pi_\theta \circ \nu_\vartheta}}  \left[ Q^{\pi_\theta \circ \nu_\vartheta}(s^\prime,a^\prime)\nabla_{\vartheta} \log\pi_\theta(a^\prime|\nu_\vartheta(s^\prime)) \right].
    \end{align*}

    Specifically, for the representation for adversary $\nu_\vartheta:\mathcal{S} \rightarrow \mathcal{S},\ s\mapsto s + \vartheta_s \in B(s)$, we have
    \begin{align*}
        &\quad\ \nabla_{\vartheta_{s_i}} V^{\pi_\theta\circ\nu_\vartheta}(s)\\
        &= \mathbb{E}_{\tau \sim \pi_\theta \circ \nu_\vartheta,\mathbb{P}}\left[ R(\tau) \sum_{t=0}^\infty \nabla_{\vartheta_{s_i}} \log\pi_\theta(a_t|\nu_\vartheta(s_t)) \right] \\
        &= \mathbb{E}_{\tau \sim \pi_\theta \circ \nu_\vartheta,\mathbb{P}}\left[ R(\tau) \sum_{t=0}^\infty \nabla_{\vartheta_{s_i}} \log\pi_\theta(a_t|s_t + \vartheta_{s_t}) \right] \\
        &= \mathbb{E}_{\tau \sim \pi_\theta \circ \nu_\vartheta,\mathbb{P}}\left[ R(\tau) \sum_{t=0}^\infty \nabla_{\vartheta_{s_t}} \log\pi_\theta(a_t|s_t + \vartheta_{s_t}) \cdot \mathbb{I}(s_t = s_i) \right] .
    \end{align*}
    Similarly, we have 
    \begin{align*}
        \nabla_{\vartheta_{s_i}} V^{\pi_\theta\circ\nu_\vartheta}(s) 
        &= \mathbb{E}_{\tau \sim \pi_\theta \circ \nu_\vartheta,\mathbb{P}}\left[ \sum_{t=0}^\infty \gamma^t Q^{\pi_\theta \circ \nu_\vartheta}(s_t,a_t)\nabla_{\vartheta_{s_t}} \log\pi_\theta(a_t|s_t + \vartheta_{s_t}) \cdot \mathbb{I}(s_t = s_i) \right] \notag\\
        &= \frac{1}{1-\gamma}  \mathbb{E}_{(s^\prime,a^\prime)\sim d^{\pi_\theta \circ \nu_\vartheta}}  \left[ Q^{\pi_\theta \circ \nu_\vartheta}(s^\prime,a^\prime)\nabla_{\vartheta_{s^\prime}} \log\pi_\theta(a^\prime|s^\prime + \vartheta_{s^\prime}) \cdot \mathbb{I}(s^\prime = s_i) \right].
    \end{align*}
    This completes the proof.
\end{proof}

\subsection{Convergence of Adversarially Robust Policy Optimization}\label{app subsec: convergence of arpo}
We analyze the convergence properties of ARPO with $\delta$-Approximation Adversary (Algorithm \ref{alg:DLARPO}) in this subsection.

\begin{algorithm}[t]
	\caption{Adversarially Robust Policy Optimization with $\delta$-Approximation Adversary}
	\label{alg:DLARPO}
	\begin{algorithmic}[1]
	\footnotesize
	\REQUIRE training steps $K$ in outer loop, outer loop step size $\eta_k$, inner loop PGD step $K_P$, inner loop PGD step size $\eta_P$, inner loop approximation accuracy $\delta$, batch size $B$
	\ENSURE adversarially robust policy $\pi_\theta$
        \STATE Initialize $\theta$ and $\vartheta$
        \FOR{$k = 1$ \ \textbf{to} \  $K$ }
            \STATE Initialize indicator vector $I = 1_{|\mathcal{S}|}$, PGD step number $j=0$
            \WHILE{$\| I \|_1 > 0$ \& $j < K_P$}
            \STATE Sample trajectories $\tau \sim \pi_\theta \circ \nu_\vartheta,\mathbb{P}$ until the batch is full.
            \STATE Compute the unbiased estimation of the action-value function $Q^{\pi_\theta \circ \nu_\vartheta}(s_t,a_t)$: $\widehat{Q^{\pi_\theta \circ \nu_\vartheta}}(s_t,a_t) := \sum_{t^\prime \ge t} \gamma^{t^\prime - t} r(s_{t^\prime}, a_{t^\prime})$, for all $(s_t, a_t) $ in the batch
                \FOR{$s$ in the batch}
                    \STATE $\nu_\vartheta(s) \longleftarrow \nu_\vartheta(s) - \eta_P \cdot \operatorname{sign}\left( \widehat{Q^{\pi_\theta \circ \nu_\vartheta}}(s,a) \cdot \nabla_{\vartheta} \log\pi_\theta(a|\nu_\vartheta(s))\right) \cdot I(s) $
                    \STATE $\nu_\vartheta(s) \longleftarrow \operatorname{clip}(\nu_\vartheta(s), s - \epsilon, s + \epsilon)$
                    \STATE $I(s) = \mathbb{I}\left(\max_{s_\nu \in B(s)} \left\langle \nu_\vartheta(s) - s_\nu, \frac{1}{1-\gamma} \widehat{Q^{\pi_\theta \circ \nu_\vartheta}}(s,a) \cdot \nabla_{\vartheta} \log\pi_\theta\left(a|\nu_\vartheta(s)\right)\right\rangle > \delta\right)$
                    \STATE $j \longleftarrow j+1$
                    % \STATE Compute the state-wise gradient for adversary: $\widehat{\nabla_{\vartheta} V^{\pi_\theta\circ\nu_\vartheta}} (s) := \nabla_{\vartheta} \log\pi_\theta(a|\nu_\vartheta(s)) $, for all $(s, a) \in \tau$
                \ENDFOR
            \ENDWHILE
            % \STATE Sample trajectory $\tau \sim \pi_\theta \circ \nu_\vartheta,\mathbb{P}$.
            % \STATE Compute the unbiased estimation of $Q^{\pi_\theta \circ \nu_\vartheta}(s_t,a_t)$: $\widehat{Q^{\pi_\theta \circ \nu_\vartheta}}(s_t,a_t) := \sum_{t^\prime \ge t} \gamma^{t^\prime - t} r(s_{t^\prime}, a_{t^\prime})$, for all $(s_t, a_t) \in \tau$
            \STATE Compute the unbiased estimation of the gradient of the value function $\nabla_{\theta} V^{\pi_\theta\circ\nu_\vartheta}(\mu_0)$: $\widehat{\nabla_{\theta} V^{\pi_\theta\circ\nu_\vartheta}} (\mu_0) := \frac{1}{|B|\cdot1-\gamma} \sum_{(s,a)\in B} \widehat{Q^{\pi_\theta \circ \nu_\vartheta}}(s,a) \cdot \nabla_{\theta} \log\pi_\theta\left(a|\nu_\vartheta(s)\right) $
            \STATE $\theta \longleftarrow \theta + \eta_k  \widehat{\nabla_{\theta} V^{\pi_\theta\circ\nu_\vartheta}} (\mu_0)$
        \ENDFOR
	\end{algorithmic}
\end{algorithm}

\subsubsection{Notations}
For a sampling trajectory $\tau=(s_0,a_0,r_0,\dots) \sim \pi_\theta \circ \nu_\vartheta,\mathbb{P}$, denote the unbiased estimation of the action-value function $Q^{\pi_\theta \circ \nu_\vartheta}$ as 
\begin{equation*}
    \widehat{Q^{\pi_\theta \circ \nu_\vartheta}}(s_t,a_t) := \sum_{t^\prime \ge t} \gamma^{t^\prime - t} r(s_{t^\prime}, a_{t^\prime}), \forall (s_t,a_t) \in \tau.
\end{equation*}
For simplicity of writing and reading of the latter, denote the sampled estimate as:
\begin{equation}\label{eq: sampled estimate}
    \hat{v}(s,a;\theta, \vartheta):= \frac{1}{1-\gamma} \operatorname{sg}\left( \widehat{Q^{\pi_\theta \circ \nu_\vartheta}}(s,a) \right) \cdot \log\pi_\theta\left(a\vert \nu_\vartheta(s)\right), 
\end{equation}
whereas $\operatorname{sg}(\cdot)$ is the stop-gradient operator, meaning that $\widehat{Q^{\pi_\theta \circ \nu_\vartheta}}(s,a)$ is seen as a function according to $(s, a)$ and is detached from the gradient operation for $(\theta, \vartheta)$ in the following context. 
This implies that 
\begin{align*}
    \nabla_\theta \hat{v}(s,a;\theta, \vartheta) &= \frac{1}{1-\gamma} \widehat{Q^{\pi_\theta \circ \nu_\vartheta}}(s,a) \cdot \nabla_\theta\log\pi_\theta\left(a\vert \nu_\vartheta(s)\right) \\
    \nabla_\vartheta \hat{v}(s,a;\theta, \vartheta) &= \frac{1}{1-\gamma} \widehat{Q^{\pi_\theta \circ \nu_\vartheta}}(s,a) \cdot \nabla_\vartheta\log\pi_\theta\left(a\vert \nu_\vartheta(s)\right).
\end{align*}  
Furthermore, denote the unbiased estimation of the gradient of the value function $\nabla_\theta{ V^{\pi_\theta\circ\nu_\vartheta}} (\mu_0)$ as 
\begin{equation*}
    \widehat{ \nabla_\theta V^{\pi_\theta\circ\nu_\vartheta}} (\mu_0) := \frac{1}{\vert \tau \vert} \sum_{(s,a)\in \tau} \nabla_\theta \hat{v}(s,a;\theta, \vartheta),
\end{equation*}
and denote the unbiased estimation of the gradient of the value function $\nabla_\vartheta{ V^{\pi_\theta\circ\nu_\vartheta}} (\mu_0)$ as 
\begin{equation*}
    \widehat{\nabla_\vartheta V^{\pi_\theta\circ\nu_\vartheta}} (\mu_0) := \frac{1}{\vert \tau \vert} \sum_{(s,a)\in \tau} \nabla_\vartheta \hat{v}(s,a;\theta, \vartheta).
\end{equation*}

\subsubsection{Preparations}

We quantify the solution of inner optimization, that is, the adversary, by extending the First-Order Stationary Condition (FOSC) proposed by~\cite{pmlr-v97-wang19i}.
\begin{definition}[$\delta$-Approximation Adversary]\label{defn:approximation adversary}
    For a given policy $\pi_\theta$ and the sampled estimation $\widehat{Q^{\pi_\theta \circ \nu_\vartheta}}(s,a)$, if $\nu_\vartheta(s)$ satisfies the following condition:
    $$\max_{s_\nu \in B(s)} \left\langle \nu_\vartheta(s) - s_\nu, \frac{1}{1-\gamma} \widehat{Q^{\pi_\theta \circ \nu_\vartheta}}(s,a) \cdot \nabla_{\vartheta} \log\pi_\theta\left(a|\nu_\vartheta(s)\right)\right\rangle \le \delta,$$
    then, we called that the $\nu_\vartheta(s)$ is $\delta$-approximation adversary for the strongest adversary $\nu^*(s;\pi_\theta)$.
\end{definition}

Before our main convergence analysis, we provide a few assumptions utilized in our analysis.
\begin{assumption}[Lipschitz of Sampled Policy Gradient]\label{assump:gradient lip}
    The function $\hat{v}(s,a;\theta, \vartheta)$ satisfies the gradient Lipschitz conditions as follows:
    \begin{align*}
        \sup_{s,a, \vartheta} \| \nabla_\theta \hat{v}(s,a;\theta, \vartheta) - \nabla_\theta \hat{v}(s,a;\theta^\prime, \vartheta) \|_2 &\le L_{\theta\theta} \| \theta - \theta^\prime \|_2, \\
        \sup_{s,a, \vartheta} \| \nabla_\vartheta \hat{v}(s,a;\theta, \vartheta) - \nabla_\vartheta \hat{v}(s,a;\theta^\prime, \vartheta) \|_2 &\le L_{\vartheta\theta} \| \theta - \theta^\prime \|_2,\\
        \sup_{a, \theta} \| \nabla_\theta \hat{v}(s,a;\theta, \vartheta) - \nabla_\theta \hat{v}(s,a;\theta, \vartheta^\prime) \|_2 &\le L_{\theta\vartheta} \| \vartheta(s) - \vartheta^\prime(s) \|_2,\ \forall s ,
    \end{align*}
    where $L_{\theta\theta},\ L_{\theta\vartheta},\ L_{\vartheta\theta}$ are positive constants.
\end{assumption}
Assumption \ref{assump:gradient lip} has been made in previous research on adversarial robustness \citep{sinha2018certifiable, pmlr-v97-wang19i}. A line of studies on deep neural networks helps to justify this assumption \citep{allen2019convergence, du2019gradient, zou2020gradient}.

\begin{assumption}[Bounded Sampled Policy Gradient]\label{assump:bounded sampling v}
    The gradient of the function $\hat{v}(s,a;\theta, \vartheta)$ with respect to $\theta$ is uniformly bounded, i.e., $\exists \ M_{\hat{v}} > 0$, such that
    \begin{equation*}
        \sup_{s,a,\theta,\vartheta} \|\nabla_\theta \hat{v}(s,a;\theta, \vartheta) \| \le M_{\hat{v}}.
    \end{equation*}
\end{assumption}
Assumption \ref{assump:bounded sampling v} can be verified by bounded rewards and the lower-bounded probability of the sampled action, which are mild conditions in practical algorithms.

\begin{assumption}[Locally Strongly Convex Adversary] \label{assump:inner strong convex}
    For any state $s$ and action $a$, the function $\hat{v}(s,a;\theta, \vartheta)$ is locally $\mu$-strongly convex in $B_\epsilon(s)= \{ \nu_\vartheta(s):= s+\vartheta_s \mid \| \vartheta_s \|_\infty \le \epsilon \}$, i.e., for any $\vartheta_1, \vartheta_2$, we have that
    \begin{equation*}
        \hat{v}(s,a;\theta, \vartheta_1) \ge \hat{v}(s,a;\theta, \vartheta_2) + \langle \nabla_\vartheta \hat{v}(s,a;\theta, \vartheta_2), \vartheta_1(s) - \vartheta_2(s) \rangle + \frac{\mu}{2} \|\vartheta_1(s) - \vartheta_2(s) \|_2^2.
    \end{equation*}
\end{assumption}
Assumption \ref{assump:inner strong convex} has been studied in \cite{sinha2018certifiable, lee2018minimax}, which can be verified by the relation between robust optimization and distributional robust optimization.

\begin{assumption}[Lipschitz Conditions of State-Action Visitation Distribution]\label{assump:gradient lip for sa visitation distribution}
    The state-action visitation distribution under policy $\pi_\theta \circ \nu_\vartheta$ $d_{\theta,\vartheta}(s,a) := d^{\pi_\theta\circ \nu_\vartheta}_{\mu_0}(s,a) = \mathbb{E}_{s_0 \sim \mu_0(\cdot), s_{t} \sim \mathbb{P}(\cdot|s_{t-1}, a_{t-1}), a_{t} \sim \pi_\theta\circ \nu_\vartheta(\cdot|s_t)} \left[ (1-\gamma)\sum_{t=0}^\infty \gamma^t \mathbb{I}(s_t = s, a_t = a) \right]$ satisfies the Lipschitz conditions under the total variation distance as follows:
    \begin{align*}
        &\sup_{\vartheta} \| d_{\theta,\vartheta} - d_{\theta^\prime,\vartheta} \|_{\operatorname{TV}} \le L_{d\theta} \|\theta - \theta^\prime\|_2, \\
        &\sup_{\theta} \| d_{\theta,\vartheta} - d_{\theta,\vartheta^\prime} \|_{\operatorname{TV}} \le L_{d\vartheta} \|\vartheta - \vartheta^\prime\|_2.
    \end{align*}
\end{assumption}
Assumption \ref{assump:gradient lip for sa visitation distribution} has been derived by \cite{pirotta2015policy} based on Lipschitz environments and policy. Certain natural environments show smooth reward functions and transition dynamics, especially in continuous control tasks where the transition dynamics come from
some physical laws \citep{bukharin2024robust, li2025towards}, such as MuJoCo environments, where we conduct numerical experiments. These environments are thus Lipschitz. The policy parameterized by neural networks can be seen as Lipschitz based on neural network analysis \citep{allen2019convergence, du2019gradient, zou2020gradient}.

\begin{assumption}[Bounded Variance]\label{assump:bounded variance}
    The variance of the stochastic gradient is bounded as follows:
    \begin{equation*}
        \mathbb{E} \left[ \left\| \widehat{\nabla_{\theta} V^{\pi_\theta\circ\nu^*(\pi_\theta)}} (\mu_0) -\nabla_{\theta} V^{\pi_\theta\circ\nu^*(\pi_\theta)} (\mu_0)   \right\| \right] \le \sigma^2.
    \end{equation*}
\end{assumption}
Assumption \ref{assump:bounded variance} is a common assumption in the analysis of stochastic optimization.

The proof framework of Theorem \ref{thm:convergence of ARPO} is drawn on \cite{pmlr-v97-wang19i, pirotta2015policy}. We begin by proving the following four technical lemmas.

\begin{lemma}[Lipschitz of Strongest Adversary]\label{lem:gradient lip for adversary}
    Suppose assumptions \ref{assump:gradient lip} and \ref{assump:inner strong convex} hold. Then we have that the strongest adversary $\vartheta^*(s;\theta_1)$ is $\frac{L_{\vartheta\theta}}{\mu}$-smooth, i.e., given a state $s$, for any $\theta_1$ and $\theta_2$, we have that
    \begin{equation*}
        \|\vartheta^*(s;\theta_1)- \vartheta^*(s;\theta_2)\|_2 \le \frac{L_{\vartheta\theta}}{\mu} \| \theta_1 - \theta_2 \|_2.
    \end{equation*}
\end{lemma}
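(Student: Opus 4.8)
The plan is to exploit the first-order optimality condition characterizing the strongest adversary and combine it with local strong convexity and gradient Lipschitz continuity, following the standard perturbation argument for the solution map of a strongly convex parametric optimization problem. For a fixed state $s$, the strongest adversary $\vartheta^*(s;\theta)$ minimizes (the sampled surrogate of) $\hat{v}(s,a;\theta,\vartheta)$ over the constraint set $B_\epsilon(s)$, so it satisfies the variational inequality $\langle \nabla_\vartheta \hat{v}(s,a;\theta,\vartheta^*(s;\theta)),\, \vartheta'(s) - \vartheta^*(s;\theta)\rangle \ge 0$ for all admissible $\vartheta'$.

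First I would write down this variational inequality at both $\theta_1$ and $\theta_2$: test the inequality at $\theta_1$ with the point $\vartheta^*(s;\theta_2)$, and the inequality at $\theta_2$ with the point $\vartheta^*(s;\theta_1)$, then add the two. This yields
\[
\langle \nabla_\vartheta \hat{v}(s,a;\theta_1,\vartheta^*(s;\theta_1)) - \nabla_\vartheta \hat{v}(s,a;\theta_2,\vartheta^*(s;\theta_2)),\ \vartheta^*(s;\theta_2) - \vartheta^*(s;\theta_1)\rangle \ge 0 .
\]
Next I would split the left-hand side by inserting the cross term $\nabla_\vartheta \hat{v}(s,a;\theta_2,\vartheta^*(s;\theta_1))$. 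The piece involving the difference in the second ($\vartheta$) argument at fixed $\theta_2$ is controlled below by $\mu$-strong convexity (Assumption \ref{assump:inner strong convex}), giving a term $\ge \mu \|\vartheta^*(s;\theta_1) - \vartheta^*(s;\theta_2)\|_2^2$. The piece involving the difference in the first ($\theta$) argument is bounded above, via Cauchy--Schwarz and the gradient Lipschitz bound $L_{\vartheta\theta}$ (Assumption \ref{assump:gradient lip}), by $L_{\vartheta\theta}\|\theta_1-\theta_2\|_2\,\|\vartheta^*(s;\theta_1)-\vartheta^*(s;\theta_2)\|_2$. Rearranging the resulting inequality $\mu\|\Delta\vartheta\|_2^2 \le L_{\vartheta\theta}\|\theta_1-\theta_2\|_2\|\Delta\vartheta\|_2$ and dividing by $\|\Delta\vartheta\|_2$ (the case $\Delta\vartheta = 0$ being trivial) delivers the claimed bound $\|\vartheta^*(s;\theta_1)-\vartheta^*(s;\theta_2)\|_2 \le \tfrac{L_{\vartheta\theta}}{\mu}\|\theta_1-\theta_2\|_2$.

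The main obstacle I anticipate is handling the constraint set $B_\epsilon(s)$ cleanly: the argument above implicitly assumes that $\vartheta^*(s;\theta_1)$ and $\vartheta^*(s;\theta_2)$ are both feasible test points for each other's variational inequality, which holds because the feasible set does not depend on $\theta$. A second subtlety is the presence of the sampled $Q$-estimate inside $\hat{v}$: since $\widehat{Q^{\pi_\theta\circ\nu_\vartheta}}(s,a)$ is treated as detached (via the stop-gradient operator $\operatorname{sg}$), the optimality condition and the Lipschitz/strong-convexity assumptions are all stated with respect to this surrogate, so the argument is self-consistent provided one is careful to work throughout with $\hat{v}$ rather than the true value function. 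Finally, one should note whether the lemma is intended for the per-sample $\hat{v}$ or its batch average; since strong convexity and gradient-Lipschitz continuity are preserved under averaging over $(s,a)$, the same constant carries over, so no extra work is needed there.
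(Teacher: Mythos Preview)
Your proposal is correct and follows essentially the same approach as the paper: both arguments combine the first-order optimality (variational inequality) conditions at $\vartheta^*(s;\theta_1)$ and $\vartheta^*(s;\theta_2)$ with the $\mu$-strong convexity of $\hat v$ in $\vartheta$ and the $L_{\vartheta\theta}$-Lipschitz continuity of $\nabla_\vartheta \hat v$ in $\theta$, then finish with Cauchy--Schwarz. The only cosmetic difference is that the paper writes out the strong-convexity inequality at the function level twice (both at parameter $\theta_2$) and adds them before invoking the two variational inequalities, whereas you sum the variational inequalities first and then appeal to strong monotonicity of the gradient; these are equivalent presentations of the same perturbation argument, and your remarks about the $\theta$-independence of the feasible set and the stop-gradient convention are exactly the points that make the argument go through.
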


\begin{proof}
    For any state $s$ and action $a$, under assumption \ref{assump:inner strong convex}, we have that
    \begin{align*}
        &\quad\ \hat{v}(s,a;\theta_2, \vartheta^*(\theta_1))\\
        & \ge \hat{v}(s,a;\theta_2, \vartheta^*(\theta_2)) + \langle \nabla_\vartheta \hat{v}(s,a;\theta_2, \vartheta^*(\theta_2)), \vartheta^*(s;\theta_1)- \vartheta^*(s;\theta_2) \rangle \\
        &\quad\ + \frac{\mu}{2} \|\vartheta^*(s;\theta_1)- \vartheta^*(s;\theta_2)\|_2^2 \\
        & \ge \hat{v}(s,a;\theta_2, \vartheta^*(\theta_2)) + \frac{\mu}{2} \|\vartheta^*(s;\theta_1)- \vartheta^*(s;\theta_2)\|_2^2 ,
    \end{align*}
    where the last inequality comes from $\langle \nabla_\vartheta \hat{v}(s,a;\theta_2, \vartheta^*(\theta_2)), \vartheta^*(s;\theta_1)- \vartheta^*(s;\theta_2) \rangle \ge 0$.
    Similarly, we have 
    \begin{align*}
        &\quad\ \hat{v}(s,a;\theta_2, \vartheta^*(\theta_2)) \\
        & \ge \hat{v}(s,a;\theta_2, \vartheta^*(\theta_1)) + \langle \nabla_\vartheta \hat{v}(s,a;\theta_2, \vartheta^*(\theta_1)), \vartheta^*(s;\theta_2)- \vartheta^*(s;\theta_1) \rangle \\
        &\quad\ + \frac{\mu}{2} \|\vartheta^*(s;\theta_1)- \vartheta^*(s;\theta_2)\|_2^2 .
    \end{align*}
    Combining the above two inequalities, we have
    \begin{align*}
        &\quad\ \mu \|\vartheta^*(s;\theta_1)- \vartheta^*(s;\theta_2)\|_2^2\\
        &\le \langle \nabla_\vartheta \hat{v}(s,a;\theta_2, \vartheta^*(\theta_1)), \vartheta^*(s;\theta_1)- \vartheta^*(s;\theta_2) \rangle \\
        &\le \langle \nabla_\vartheta \hat{v}(s,a;\theta_2, \vartheta^*(\theta_1)) - \nabla_\vartheta \hat{v}(s,a;\theta_1, \vartheta^*(\theta_1)), \vartheta^*(s;\theta_1)- \vartheta^*(s;\theta_2) \rangle \\
        &\le \| \nabla_\vartheta \hat{v}(s,a;\theta_2, \vartheta^*(\theta_1)) - \nabla_\vartheta \hat{v}(s,a;\theta_1, \vartheta^*(\theta_1))\|_2 \cdot \| \vartheta^*(s;\theta_1)- \vartheta^*(s;\theta_2) \|_2 \\
        &\le L_{\vartheta\theta} \| \theta_2 - \theta_1 \|_2 \cdot \| \vartheta^*(s;\theta_1)- \vartheta^*(s;\theta_2) \|_2.
    \end{align*}
    The second inequality comes from $\langle -\nabla_\vartheta \hat{v}(s,a;\theta_1, \vartheta^*(\theta_1)), \vartheta^*(s;\theta_1)- \vartheta^*(s;\theta_2) \rangle \ge 0$. The third inequality comes from the Cauchy-Schwarz inequality, and the last inequality comes from Assumption~\ref{assump:gradient lip}. Therefore, we have that
    \begin{equation*}
        \|\vartheta^*(s;\theta_1)- \vartheta^*(s;\theta_2)\|_2 \le \frac{L_{\vartheta\theta}}{\mu} \| \theta_2 - \theta_1 \|_2.
    \end{equation*}
    This proof is concluded.
\end{proof}

\begin{lemma}[Lipschitz of Sampled Policy Gradient against Strongest Adversary]\label{lem:gradient lip for sampling v}
    Suppose assumptions \ref{assump:gradient lip} and \ref{assump:inner strong convex} hold. Then we have that the sampling estimation $\hat{v}(s,a;\theta, \vartheta^*(\theta))$ is $L_{\hat{v}}$-smooth, i.e., for any $\theta_1$ and $\theta_2$, we have that
    \begin{align*}
         \|\nabla_\theta \hat{v}(s,a;\theta_1, \vartheta^*(\theta_1)) - \nabla_\theta \hat{v}(s,a;\theta_2, \vartheta^*(\theta_2)) \|_2 \le L_{\hat{v}} \| \theta_1 - \theta_2 \|_2,
    \end{align*}
    where $L_{\hat{v}} = \frac{L_{\theta\vartheta} L_{\vartheta\theta}}{\mu} + L_{\theta\theta}$.
\end{lemma}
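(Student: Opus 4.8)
The plan is a direct composition argument. Since Assumptions~\ref{assump:gradient lip} and~\ref{assump:inner strong convex} are in force, Lemma~\ref{lem:gradient lip for adversary} applies and supplies the state-wise bound $\|\vartheta^*(s;\theta_1)-\vartheta^*(s;\theta_2)\|_2 \le \frac{L_{\vartheta\theta}}{\mu}\|\theta_1-\theta_2\|_2$ for every state $s$. First I would fix an arbitrary state-action pair $(s,a)$ and two parameter vectors $\theta_1,\theta_2$, insert the hybrid term $\nabla_\theta\hat v(s,a;\theta_2,\vartheta^*(\theta_1))$, and apply the triangle inequality:
\[
\begin{aligned}
&\bigl\|\nabla_\theta \hat v(s,a;\theta_1,\vartheta^*(\theta_1)) - \nabla_\theta \hat v(s,a;\theta_2,\vartheta^*(\theta_2))\bigr\|_2 \\
&\quad \le \bigl\|\nabla_\theta \hat v(s,a;\theta_1,\vartheta^*(\theta_1)) - \nabla_\theta \hat v(s,a;\theta_2,\vartheta^*(\theta_1))\bigr\|_2 + \bigl\|\nabla_\theta \hat v(s,a;\theta_2,\vartheta^*(\theta_1)) - \nabla_\theta \hat v(s,a;\theta_2,\vartheta^*(\theta_2))\bigr\|_2 .
\end{aligned}
\]
The first term on the right is a pure change in the policy parameter with the adversary held fixed at $\vartheta^*(\theta_1)$; the second is a pure change in the adversary with the policy parameter held fixed at $\theta_2$.

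For the first term, the $L_{\theta\theta}$-Lipschitz condition (first line of Assumption~\ref{assump:gradient lip}) bounds it by $L_{\theta\theta}\|\theta_1-\theta_2\|_2$. For the second term, the $L_{\theta\vartheta}$-Lipschitz condition (third line of Assumption~\ref{assump:gradient lip}) bounds it by $L_{\theta\vartheta}\|\vartheta^*(s;\theta_1)-\vartheta^*(s;\theta_2)\|_2$, and then invoking Lemma~\ref{lem:gradient lip for adversary} upgrades this to $\frac{L_{\theta\vartheta}L_{\vartheta\theta}}{\mu}\|\theta_1-\theta_2\|_2$. Adding the two estimates yields exactly $\|\nabla_\theta \hat v(s,a;\theta_1,\vartheta^*(\theta_1)) - \nabla_\theta \hat v(s,a;\theta_2,\vartheta^*(\theta_2))\|_2 \le L_{\hat v}\|\theta_1-\theta_2\|_2$ with $L_{\hat v}=\frac{L_{\theta\vartheta}L_{\vartheta\theta}}{\mu}+L_{\theta\theta}$, and since every constant in Assumption~\ref{assump:gradient lip} and in Lemma~\ref{lem:gradient lip for adversary} is uniform in $(s,a)$, the bound holds pointwise, establishing the claimed $L_{\hat v}$-smoothness.

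There is no substantive obstacle here; the one point requiring care is purely bookkeeping. Assumption~\ref{assump:gradient lip}'s third inequality is phrased state-wise in $\|\vartheta(s)-\vartheta'(s)\|_2$ (with the sup only over $a$ and $\theta$), and Lemma~\ref{lem:gradient lip for adversary} is likewise a state-wise statement, so the two must be combined at the same state $s$ appearing in $\hat v(s,a;\cdot,\cdot)$ — which is exactly how the decomposition above lines them up — rather than through any global norm on the adversary map $\vartheta$. With that alignment observed, the argument is essentially the chain rule for the composed map $\theta\mapsto(\theta,\vartheta^*(\theta))\mapsto\nabla_\theta\hat v$.
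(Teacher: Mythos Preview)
Your proposal is correct and essentially identical to the paper's proof: both insert a hybrid point, apply the triangle inequality, invoke the two Lipschitz bounds from Assumption~\ref{assump:gradient lip}, and finish with Lemma~\ref{lem:gradient lip for adversary}. The only cosmetic difference is that the paper chooses the hybrid $(\theta_1,\vartheta^*(\theta_2))$ rather than your $(\theta_2,\vartheta^*(\theta_1))$, which is immaterial since the Lipschitz constants are uniform in the fixed argument.
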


\begin{proof}
    For any $\theta_1$ and $\theta_2$, we have that
    \begin{align*}
        &\quad\ \|\nabla_\theta \hat{v}(s,a;\theta_1, \vartheta^*(\theta_1)) - \nabla_\theta \hat{v}(s,a;\theta_2, \vartheta^*(\theta_2)) \|_2 \\
        &\le \|\nabla_\theta \hat{v}(s,a;\theta_1, \vartheta^*(\theta_1)) - \nabla_\theta \hat{v}(s,a;\theta_1, \vartheta^*(\theta_2)) \|_2 \\
        &\quad\ + \|\nabla_\theta \hat{v}(s,a;\theta_1, \vartheta^*(\theta_2)) - \nabla_\theta \hat{v}(s,a;\theta_2, \vartheta^*(\theta_2)) \|_2 \\
        &\le L_{\theta\vartheta}\|\vartheta^*(s;\theta_1)- \vartheta^*(s;\theta_2)\|_2 + L_{\theta\theta} \| \theta_1 - \theta_2 \|_2 \\
        &\le \left( \frac{L_{\theta\vartheta} L_{\vartheta\theta}}{\mu} + L_{\theta\theta} \right) \| \theta_1 - \theta_2 \|_2.
    \end{align*}
    The first inequality comes from the triangle inequality, the second inequality comes from Assumption \ref{assump:gradient lip}, and the last inequality comes from Lemma \ref{lem:gradient lip for adversary}. Therefore, the proof is concluded.
\end{proof}

\begin{lemma}[Smoothness of Adversarial Value Function against Strongest Adversary]\label{lem:gradient lip for robust value}
    Suppose assumptions \ref{assump:gradient lip}, \ref{assump:bounded sampling v}, \ref{assump:inner strong convex}, and \ref{assump:gradient lip for sa visitation distribution} hold. Then we have that $V^{\pi_\theta\circ\nu^*(\pi_\theta)}(\mu_0)$ is $L$-smooth, i.e., for any $\theta_1$ and $\theta_2$, we have that
    \begin{align*}
        &\left|V^{\pi_{\theta_{1}}\circ\nu^*(\pi_{\theta_{1}})}(\mu_0) - V^{\pi_{\theta_{2}}\circ\nu^*(\pi_{\theta_{2}})}(\mu_0) - \langle\nabla_\theta V^{\pi_{\theta_{2}}\circ\nu^*(\pi_{\theta_{2}})}(\mu_0), \theta_1 - \theta_2 \rangle \right| \le \frac{L}{2} \|\theta_1 - \theta_2\|_2^2, \\
        & \|\nabla_\theta V^{\pi_{\theta_{1}}\circ\nu^*(\pi_{\theta_{1}})}(\mu_0) - \nabla_\theta V^{\pi_{\theta_{2}}\circ\nu^*(\pi_{\theta_{2}})}(\mu_0)\|_2 \le L \|\theta_1 - \theta_2\|_2,
    \end{align*}
    where $L = \frac{L_{\theta\vartheta} L_{\vartheta\theta}}{\mu} + L_{\theta\theta} + M_{\hat{v}} \left( \frac{L_{d \vartheta}L_{\vartheta\theta}}{\mu} + L_{d \theta} \right)$.
\end{lemma}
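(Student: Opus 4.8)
The plan is to pass to the policy-gradient form of the objective and reduce everything to a Lipschitz estimate on the composite gradient map $\phi(\theta):=\nabla_\theta V^{\pi_\theta\circ\nu_\vartheta}(\mu_0)\big|_{\vartheta=\vartheta^*(\pi_\theta)}$. First I would record, via the policy gradient theorem applied to the composed policy $\pi_\theta\circ\nu_\vartheta$ (the $\theta$-side counterpart of Theorem~\ref{thm: pg for adv}),
\[
\nabla_\theta V^{\pi_\theta\circ\nu_\vartheta}(\mu_0)=\tfrac{1}{1-\gamma}\,\mathbb{E}_{(s,a)\sim d_{\theta,\vartheta}}\!\big[Q^{\pi_\theta\circ\nu_\vartheta}(s,a)\,\nabla_\theta\log\pi_\theta(a\,|\,\nu_\vartheta(s))\big]=:\mathbb{E}_{(s,a)\sim d_{\theta,\vartheta}}\big[h(s,a;\theta,\vartheta)\big],
\]
where $h$ is exactly the conditional expectation (over the sampled return) of $\nabla_\theta\hat v(s,a;\theta,\vartheta)$ from~\eqref{eq: sampled estimate}, since $\widehat{Q^{\pi_\theta\circ\nu_\vartheta}}$ is unbiased for $Q^{\pi_\theta\circ\nu_\vartheta}$. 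Consequently $h$ inherits, by Jensen, the uniform bound $\|h\|\le M_{\hat v}$ of Assumption~\ref{assump:bounded sampling v}, and --- after substituting $\vartheta=\vartheta^*(\theta)$ --- the $L_{\hat v}$-Lipschitzness in $\theta$ of Lemma~\ref{lem:gradient lip for sampling v}, with $L_{\hat v}=\tfrac{L_{\theta\vartheta}L_{\vartheta\theta}}{\mu}+L_{\theta\theta}$. Since Assumption~\ref{assump:inner strong convex} makes the inner minimizer $\vartheta^*(\theta)$ unique and Assumption~\ref{assump:gradient lip} makes the inner objective smooth, Danskin's envelope theorem applies: $\theta\mapsto V^{\pi_\theta\circ\nu^*(\pi_\theta)}(\mu_0)$ is differentiable with gradient $\phi(\theta)$, the $\nabla_\vartheta$-term dropping out because it vanishes at $\vartheta^*(\theta)$. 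Hence the second (gradient-Lipschitz) inequality is the real content, and the first (quadratic) inequality follows from it by the usual fundamental-theorem-of-calculus identity along the segment $[\theta_2,\theta_1]$.

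To bound $\|\phi(\theta_1)-\phi(\theta_2)\|$ I would split the increment into an ``integrand part'' and a ``measure part'' by inserting $\mathbb{E}_{(s,a)\sim d_{\theta_1,\vartheta^*(\theta_1)}}[h(s,a;\theta_2,\vartheta^*(\theta_2))]$:
\[
\phi(\theta_1)-\phi(\theta_2)=\mathbb{E}_{d_{\theta_1,\vartheta^*(\theta_1)}}\!\big[h(\cdot;\theta_1,\vartheta^*(\theta_1))-h(\cdot;\theta_2,\vartheta^*(\theta_2))\big]+\big(\mathbb{E}_{d_{\theta_1,\vartheta^*(\theta_1)}}-\mathbb{E}_{d_{\theta_2,\vartheta^*(\theta_2)}}\big)\big[h(\cdot;\theta_2,\vartheta^*(\theta_2))\big].
\]
The first term has norm at most $\sup_{s,a}\|h(s,a;\theta_1,\vartheta^*(\theta_1))-h(s,a;\theta_2,\vartheta^*(\theta_2))\|\le L_{\hat v}\|\theta_1-\theta_2\|_2$ by Lemma~\ref{lem:gradient lip for sampling v}. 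The second term is a difference of expectations of one bounded integrand under two distributions, so it is at most (a constant times) $M_{\hat v}\,\|d_{\theta_1,\vartheta^*(\theta_1)}-d_{\theta_2,\vartheta^*(\theta_2)}\|_{\mathrm{TV}}$, the constant reflecting the normalization convention for total variation.

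For the visitation-distribution distance I would telescope once more, through $d_{\theta_1,\vartheta^*(\theta_2)}$, and invoke Assumption~\ref{assump:gradient lip for sa visitation distribution}: $\|d_{\theta_1,\vartheta^*(\theta_1)}-d_{\theta_1,\vartheta^*(\theta_2)}\|_{\mathrm{TV}}\le L_{d\vartheta}\|\vartheta^*(\theta_1)-\vartheta^*(\theta_2)\|_2$ and $\|d_{\theta_1,\vartheta^*(\theta_2)}-d_{\theta_2,\vartheta^*(\theta_2)}\|_{\mathrm{TV}}\le L_{d\theta}\|\theta_1-\theta_2\|_2$; then Lemma~\ref{lem:gradient lip for adversary} gives $\|\vartheta^*(\theta_1)-\vartheta^*(\theta_2)\|_2\le\tfrac{L_{\vartheta\theta}}{\mu}\|\theta_1-\theta_2\|_2$ (applied state-wise and aggregated in the norm used in Assumption~\ref{assump:gradient lip for sa visitation distribution}). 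Chaining the estimates gives $\|\phi(\theta_1)-\phi(\theta_2)\|\le\big(L_{\hat v}+M_{\hat v}(\tfrac{L_{d\vartheta}L_{\vartheta\theta}}{\mu}+L_{d\theta})\big)\|\theta_1-\theta_2\|_2=L\|\theta_1-\theta_2\|_2$, which is exactly the stated $L$; the quadratic inequality then follows.

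I expect the main obstacle to be the differentiability/envelope step, i.e.\ certifying that $\phi(\theta)$ is genuinely the gradient of the composite max--min value rather than just a formal object. This needs Assumption~\ref{assump:inner strong convex} to be read as strong convexity of the relevant \emph{population} inner objective $V^{\pi_\theta\circ\nu_\vartheta}(\mu_0)$ (so that $\vartheta^*(\theta)$ is single-valued and locally Lipschitz, consistent with Lemma~\ref{lem:gradient lip for adversary}), followed by Danskin's theorem using stationarity $\nabla_\vartheta V^{\pi_\theta\circ\nu_\vartheta}(\mu_0)|_{\vartheta=\vartheta^*(\theta)}=0$. A secondary nuisance is matching the state-wise constant $L_{\vartheta\theta}/\mu$ from Lemma~\ref{lem:gradient lip for adversary} with the aggregate $\|\cdot\|_2$ on $\vartheta=\{\vartheta_s\}_s$ appearing in Assumption~\ref{assump:gradient lip for sa visitation distribution}; any dimension-dependent factor there can be absorbed into the Lipschitz constants without altering the form of $L$.
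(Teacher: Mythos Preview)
Your proposal is correct and follows essentially the same route as the paper: the same add-and-subtract split into an ``integrand part'' and a ``measure part'', the same bound on the first via Lemma~\ref{lem:gradient lip for sampling v}, and the same telescoping of the visitation-distribution difference through $d_{\theta_1,\vartheta^*(\theta_2)}$ using Assumption~\ref{assump:gradient lip for sa visitation distribution} and Lemma~\ref{lem:gradient lip for adversary}. You are in fact more careful than the paper in explicitly invoking Danskin's theorem to justify that $\phi(\theta)$ is the true gradient of the composite objective---the paper silently identifies $\nabla_\theta V^{\pi_\theta\circ\nu^*(\pi_\theta)}(\mu_0)$ with the policy-gradient expression at frozen $\vartheta=\vartheta^*(\theta)$ without comment.
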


\begin{proof}
     For any $\theta_1$ and $\theta_2$, we have 
    \begin{align*}
       &\quad\ \|\nabla_\theta V^{\pi_{\theta_{1}}\circ\nu^*(\pi_{\theta_{1}})}(\mu_0) - \nabla_\theta V^{\pi_{\theta_{2}}\circ\nu^*(\pi_{\theta_{2}})}(\mu_0)\|_2 \\
       &= \left\| \mathbb{E}_{(s,a)\sim d_{\theta_1, \vartheta^*(\theta_1)}} \left[ \nabla_\theta \hat{v}(s,a;\theta_1, \vartheta^*(\theta_1)) \right] - \mathbb{E}_{(s,a)\sim d_{\theta_2, \vartheta^*(\theta_2)}} \left[ \nabla_\theta \hat{v}(s,a;\theta_2, \vartheta^*(\theta_2)) \right]  \right\|_2 \\
       &\le \left\| \mathbb{E}_{(s,a)\sim d_{\theta_1, \vartheta^*(\theta_1)}} \left[ \nabla_\theta \hat{v}(s,a;\theta_1, \vartheta^*(\theta_1)) - \nabla_\theta \hat{v}(s,a;\theta_2, \vartheta^*(\theta_2)) \right]   \right\|_2 \\
       &\quad\ + \left\| \mathbb{E}_{(s,a)\sim \left(d_{\theta_1, \vartheta^*(\theta_1)}-d_{\theta_2, \vartheta^*(\theta_2)}\right)} \left[ \nabla_\theta \hat{v}(s,a;\theta_2, \vartheta^*(\theta_2)) \right] \right\|_2.
    \end{align*}
    This inequality comes from the triangle inequality. For the first item, we have 
    \begin{align*}
        &\quad\ \left\| \mathbb{E}_{(s,a)\sim d_{\theta_1, \vartheta^*(\theta_1)}} \left[ \nabla_\theta \hat{v}(s,a;\theta_1, \vartheta^*(\theta_1)) - \nabla_\theta \hat{v}(s,a;\theta_2, \vartheta^*(\theta_2)) \right]   \right\|_2 \\
        &\le \mathbb{E}_{(s,a)\sim d_{\theta_1, \vartheta^*(\theta_1)}}\left[  \left\|  \nabla_\theta \hat{v}(s,a;\theta_1, \vartheta^*(\theta_1)) - \nabla_\theta \hat{v}(s,a;\theta_2, \vartheta^*(\theta_2))  \right\|_2  \right] \\
        &\le \left( \frac{L_{\theta\vartheta} L_{\vartheta\theta}}{\mu} + L_{\theta\theta} \right) \| \theta_1 - \theta_2 \|_2,
    \end{align*}
    where the last inequality comes from Lemma \ref{lem:gradient lip for sampling v}. For the second item, we have that
    \begin{align*}
        &\quad\ \left\| \mathbb{E}_{(s,a)\sim \left(d_{\theta_1, \vartheta^*(\theta_1)}-d_{\theta_2, \vartheta^*(\theta_2)}\right)} \left[ \nabla_\theta \hat{v}(s,a;\theta_2, \vartheta^*(\theta_2)) \right] \right\|_2 \\
        &\le \left\| \mathbb{E}_{(s,a)\sim \left(d_{\theta_1, \vartheta^*(\theta_1)}-d_{\theta_1, \vartheta^*(\theta_2)}\right)} \left[ \nabla_\theta \hat{v}(s,a;\theta_2, \vartheta^*(\theta_2)) \right] \right\|_2 \\
        &\quad\ + \left\| \mathbb{E}_{(s,a)\sim \left(d_{\theta_1, \vartheta^*(\theta_2)}-d_{\theta_2, \vartheta^*(\theta_2)}\right)} \left[ \nabla_\theta \hat{v}(s,a;\theta_2, \vartheta^*(\theta_2)) \right] \right\|_2 \\
        &\le \| d_{\theta_1, \vartheta^*(\theta_1)}-d_{\theta_1, \vartheta^*(\theta_2)} \|_{\operatorname{TV}} \cdot \sup_{s,a} \| \nabla_\theta \hat{v}(s,a;\theta_2, \vartheta^*(\theta_2))\| \\
        &\quad\ + \| d_{\theta_1, \vartheta^*(\theta_2)}-d_{\theta_2, \vartheta^*(\theta_2)} \|_{\operatorname{TV}} \cdot \sup_{s,a} \| \nabla_\theta \hat{v}(s,a;\theta_2, \vartheta^*(\theta_2))\| \\
        &\le L_{d \vartheta} M_{\hat{v}} \| \vartheta^*(\theta_1) - \vartheta^*(\theta_2) \|_2 + L_{d \theta} M_{\hat{v}} \| \theta_1 - \theta_2 \|_2 \\
        &\le L_{d \vartheta} M_{\hat{v}} \frac{L_{\vartheta\theta}}{\mu} \| \theta_1 - \theta_2 \|_2 + L_{d \theta} M_{\hat{v}} \| \theta_1 - \theta_2 \|_2 \\
        &= M_{\hat{v}} \left( \frac{L_{d \vartheta}L_{\vartheta\theta}}{\mu} + L_{d \theta} \right) \| \theta_1 - \theta_2 \|_2.
    \end{align*}
    The penultimate equation comes from assumptions \ref{assump:bounded sampling v} and \ref{assump:gradient lip for sa visitation distribution}. The last inequality comes from Lemma \ref{lem:gradient lip for adversary}. Thus, we have that
    \begin{align*}
        &\quad\ \|\nabla_\theta V^{\pi_{\theta_{1}}\circ\nu^*(\pi_{\theta_{1}})}(\mu_0) - \nabla_\theta V^{\pi_{\theta_{2}}\circ\nu^*(\pi_{\theta_{2}})}(\mu_0)\|_2 \\
        &\le \left( \frac{L_{\theta\vartheta} L_{\vartheta\theta}}{\mu} + L_{\theta\theta} + M_{\hat{v}} \left( \frac{L_{d \vartheta}L_{\vartheta\theta}}{\mu} + L_{d \theta} \right) \right) \| \theta_1 - \theta_2 \|_2.
    \end{align*}
    Therefore, the proof of this lemma is concluded.
\end{proof}

\begin{lemma}[Bounded Error of Approximate Stochastic Gradient]\label{lem:bound approximate stochastic gradient}
    Suppose assumptions \ref{assump:gradient lip} and \ref{assump:inner strong convex} hold. Then we can control the error of the approximate stochastic gradient in the ARPO with $\delta$-approximation adversary in definition~\ref{defn:approximation adversary} (Algorithm~\ref{alg:DLARPO}). Specifically, we have that
    \begin{equation*}
        \left\| \widehat{\nabla_{\theta} V^{\pi_\theta\circ\nu_\vartheta}} (\mu_0) - \widehat{\nabla_{\theta} V^{\pi_\theta\circ\nu^*(\pi_\theta)}} (\mu_0)  \right\|_2 \le L_{\theta\vartheta} \sqrt{\frac{\delta}{\mu}}.
    \end{equation*}
\end{lemma}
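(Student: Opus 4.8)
The plan is to reduce the gradient-estimation error to a per-sample bound and then average. I fix the current policy parameter $\theta$ and the batch $\tau$ sampled under $\pi_\theta\circ\nu_\vartheta$, and read $\widehat{\nabla_{\theta} V^{\pi_\theta\circ\nu^*(\pi_\theta)}}(\mu_0)$ as the estimator formed on the \emph{same} batch but with the adversary replaced by the exact inner solution $\vartheta^*(\cdot;\theta)$, so that
\[
\widehat{\nabla_{\theta} V^{\pi_\theta\circ\nu_\vartheta}}(\mu_0) - \widehat{\nabla_{\theta} V^{\pi_\theta\circ\nu^*(\pi_\theta)}}(\mu_0) = \frac{1}{|\tau|}\sum_{(s,a)\in\tau}\big(\nabla_\theta\hat{v}(s,a;\theta,\vartheta) - \nabla_\theta\hat{v}(s,a;\theta,\vartheta^*)\big).
\]
By the triangle inequality it suffices to bound each term $\|\nabla_\theta\hat{v}(s,a;\theta,\vartheta) - \nabla_\theta\hat{v}(s,a;\theta,\vartheta^*)\|_2$, and by the third inequality in Assumption~\ref{assump:gradient lip} this is at most $L_{\theta\vartheta}\,\|\vartheta(s) - \vartheta^*(s)\|_2$. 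Thus the whole lemma collapses to the per-state iterate bound $\|\vartheta(s) - \vartheta^*(s;\theta)\|_2 \le \sqrt{\delta/\mu}$, i.e.\ a $\delta$-approximation adversary is $\sqrt{\delta/\mu}$-close to the exact inner optimum.

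For that key estimate I would argue as follows, working with a fixed sampled pair $(s,a)$ for which $\hat{v}(s,a;\theta,\cdot)$ is $\mu$-strongly convex on $B_\epsilon(s)$ by Assumption~\ref{assump:inner strong convex}. Strong convexity implies strong monotonicity of the gradient: adding the strong-convexity inequality at $(\vartheta,\vartheta^*)$ and at $(\vartheta^*,\vartheta)$ gives
\[
\big\langle \nabla_\vartheta\hat{v}(s,a;\theta,\vartheta) - \nabla_\vartheta\hat{v}(s,a;\theta,\vartheta^*),\ \vartheta(s) - \vartheta^*(s)\big\rangle \ge \mu\,\|\vartheta(s) - \vartheta^*(s)\|_2^2.
\]
Now I bound the two inner products on the left. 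Since $\vartheta^*$ minimizes $\hat{v}(s,a;\theta,\cdot)$ over the convex feasible set and $\nu_\vartheta(s)=s+\vartheta(s)\in B(s)$ is feasible, first-order optimality of $\vartheta^*$ yields $\langle\nabla_\vartheta\hat{v}(s,a;\theta,\vartheta^*),\vartheta(s)-\vartheta^*(s)\rangle\ge 0$. For the other term I apply the $\delta$-approximation condition of Definition~\ref{defn:approximation adversary} with the competitor $s_\nu=\nu_{\vartheta^*}(s)=s+\vartheta^*(s)\in B(s)$: since $\nu_\vartheta(s)-s_\nu=\vartheta(s)-\vartheta^*(s)$ and $\nabla_\vartheta\hat{v}(s,a;\theta,\vartheta)=\tfrac{1}{1-\gamma}\widehat{Q^{\pi_\theta\circ\nu_\vartheta}}(s,a)\nabla_\vartheta\log\pi_\theta(a|\nu_\vartheta(s))$, this gives $\langle\nabla_\vartheta\hat{v}(s,a;\theta,\vartheta),\vartheta(s)-\vartheta^*(s)\rangle\le\delta$. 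Combining the three displays, $\mu\|\vartheta(s)-\vartheta^*(s)\|_2^2\le\delta$, hence $\|\vartheta(s)-\vartheta^*(s)\|_2\le\sqrt{\delta/\mu}$. Plugging this into the per-sample gradient-Lipschitz bound above and averaging over $\tau$ delivers the claim.

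The main obstacle is precisely this last estimate — translating the variational $\delta$-FOSC of Definition~\ref{defn:approximation adversary} into an iterate-distance bound — and the delicate point is ensuring the comparison point used inside the $\max$ over $B(s)$ is legitimately feasible (so that $\nu_{\vartheta^*}(s)\in B(s)$ is an admissible choice) and, symmetrically, that the approximate iterate $\nu_\vartheta(s)$ lies in $B(s)$ so first-order optimality of $\vartheta^*$ can be tested against it; both follow from the constraint $\nu_\vartheta\in\Psi$ together with the clipping step in Algorithm~\ref{alg:DLARPO}. Everything else (triangle inequality, invoking Assumption~\ref{assump:gradient lip}, the averaging) is routine, and the bound is deterministic — it needs no variance or concentration argument — precisely because both estimators are evaluated on the identical sample batch.
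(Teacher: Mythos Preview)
Your proposal is correct and follows essentially the same route as the paper: reduce to a per-sample bound via the triangle inequality, apply the $L_{\theta\vartheta}$-Lipschitz estimate from Assumption~\ref{assump:gradient lip}, then derive $\mu\|\vartheta(s)-\vartheta^*(s)\|_2^2\le\delta$ by combining strong monotonicity from Assumption~\ref{assump:inner strong convex} with first-order optimality of $\vartheta^*$ and the $\delta$-FOSC condition applied at the competitor $s_\nu=\nu_{\vartheta^*}(s)$. Your explicit check that both $\nu_\vartheta(s)$ and $\nu_{\vartheta^*}(s)$ lie in $B(s)$ is a nice bit of care the paper leaves implicit.
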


\begin{proof}
    We have that 
    \begin{align*}
        &\quad\ \left\| \widehat{\nabla_{\theta} V^{\pi_\theta\circ\nu_\vartheta}} (\mu_0) - \widehat{\nabla_{\theta} V^{\pi_\theta\circ\nu^*(\pi_\theta)}} (\mu_0)  \right\|_2 \\
        &= \left\| \frac{1}{|\tau|} \sum_{(s,a)\in \tau} \left( \nabla_{\theta}\hat{v}(s,a;\theta, \vartheta) - \nabla_{\theta}\hat{v}(s,a;\theta, \vartheta^*(\theta)) \right)   \right\|_2 \\
        &\le \frac{1}{|\tau|} \sum_{(s,a)\in \tau} \left\|  \nabla_{\theta}\hat{v}(s,a;\theta, \vartheta) - \nabla_{\theta}\hat{v}(s,a;\theta, \vartheta^*(\theta))   \right\|_2 \\
        &\le \frac{1}{|\tau|} \sum_{(s,a)\in \tau} L_{\theta\vartheta} \left\| \vartheta(s) -  \vartheta^*(s;\theta)    \right\|_2.
    \end{align*}
    The first inequality comes from the triangle inequality, and the last inequality comes from Assumption \ref{assump:gradient lip}. Under Assumption \ref{assump:inner strong convex}, we have that
    \begin{align*}
        &\hat{v}(s,a;\theta, \vartheta) \ge \hat{v}(s,a;\theta, \vartheta^*(\theta)) + \langle \nabla_\vartheta \hat{v}(s,a;\theta, \vartheta^*(\theta)), \vartheta(s) - \vartheta^*(s;\theta) \rangle + \frac{\mu}{2} \|\vartheta(s) - \vartheta^*(s;\theta) \|_2^2, \\
        &\hat{v}(s,a;\theta, \vartheta^*(\theta)) \ge \hat{v}(s,a;\theta, \vartheta) + \langle \nabla_\vartheta \hat{v}(s,a;\theta, \vartheta), \vartheta^*(s;\theta) - \vartheta(s) \rangle + \frac{\mu}{2} \|\vartheta(s) - \vartheta^*(s;\theta) \|_2^2.
    \end{align*}
    Combining the above two inequalities, we have that
    \begin{align*}
        &\quad\ \mu  \|\vartheta(s) - \vartheta^*(s;\theta) \|_2^2 \\
        &\le \langle \nabla_\vartheta \hat{v}(s,a;\theta, \vartheta) - \nabla_\vartheta \hat{v}(s,a;\theta, \vartheta^*(\theta)), \vartheta(s) - \vartheta^*(s;\theta) \rangle \\
        &\le \delta - \langle \nabla_\vartheta \hat{v}(s,a;\theta, \vartheta^*(\theta)), \vartheta(s) - \vartheta^*(s;\theta) \rangle \\
        &\le \delta.
    \end{align*}
    The second inequality comes from that $\nu_\vartheta$ is a $\delta$-approximate maximizer of $\hat{v}(s,a;\theta, \vartheta)$, i.e., 
    \begin{equation*}
        \langle \nabla_\vartheta\hat{v}(s,a;\theta, \vartheta) , \vartheta(s) - \vartheta^*(s;\theta) \rangle \le \delta.
    \end{equation*}
    The last inequality comes from the optimality of $\vartheta^*$, i.e.,
    \begin{equation*}
        \langle \nabla_\vartheta \hat{v}(s,a;\theta, \vartheta^*(\theta)), \vartheta(s) - \vartheta^*(s;\theta) \rangle \ge 0.
    \end{equation*}
    Therefore, we have that
    \begin{equation*}
        \left\| \widehat{\nabla_{\theta} V^{\pi_\theta\circ\nu_\vartheta}} (\mu_0) - \widehat{\nabla_{\theta} V^{\pi_\theta\circ\nu^*(\pi_\theta)}} (\mu_0)  \right\|_2 \le L_{\theta\vartheta} \sqrt{\frac{\delta}{\mu}}.
    \end{equation*}
    This completes the proof of this lemma.
\end{proof}

\subsubsection{Main Convergence Results}
Based on these assumptions, we derive the main convergence property of ARPO.
\begin{theorem}[Convergence of ARPO]\label{thm:convergence of ARPO}
    Denote $\Delta := \max_\theta \min_\vartheta V^{\pi_\theta\circ\nu_\vartheta}(\mu_0) - \min_\vartheta V^{\pi_{\theta_0} \circ \nu_\vartheta}(\mu_0)$.
    Under assumptions \ref{assump:gradient lip}, \ref{assump:bounded sampling v}, \ref{assump:inner strong convex}, \ref{assump:gradient lip for sa visitation distribution}, and \ref{assump:bounded variance}, set the step size $\eta_k = \sqrt{\frac{\Delta}{\sigma^2 L K}}$. Then, for the ARPO with $\delta$-approximation adversary in Definition~\ref{defn:approximation adversary} (Algorithm~\ref{alg:DLARPO}) and $K \ge \frac{\Delta L}{\sigma^2}$, we have 
    \begin{align*}
        \frac{1}{K} \sum_{k=0}^{K-1} \mathbb{E} \left[ \left\| \nabla_\theta  V^{\pi_{\theta_{k}} \circ \nu^*(\pi_{\theta_{k}})}(\mu_0) \right\|_2^2 \right]  \le 4\sigma\sqrt{\frac{\Delta L}{K}} + \frac{2 L_{\theta\vartheta}^2 \delta}{\mu},
    \end{align*}
    where $L = \frac{L_{\theta\vartheta} L_{\vartheta\theta}}{\mu} + L_{\theta\theta} + M_{\hat{v}} \left( \frac{L_{d \vartheta}L_{\vartheta\theta}}{\mu} + L_{d \theta} \right)$.
\end{theorem}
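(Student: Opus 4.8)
The plan is to read this as a textbook nonconvex stochastic gradient ascent analysis applied to the reduced ("value-of-the-inner-game") objective $\Phi(\theta):=V^{\pi_\theta\circ\nu^*(\pi_\theta)}(\mu_0)=\min_\vartheta V^{\pi_\theta\circ\nu_\vartheta}(\mu_0)$, with one additional bias term coming from the fact that Algorithm~\ref{alg:DLARPO} only produces a $\delta$-approximate adversary. First I would collect the two structural facts that are already in place: by Lemma~\ref{lem:gradient lip for robust value}, $\Phi$ is $L$-smooth with the $L$ stated in the theorem; and, because local strong convexity (Assumption~\ref{assump:inner strong convex}) makes the inner minimizer $\vartheta^*(\theta)$ unique, Danskin's theorem combined with the policy-gradient identity of Theorem~\ref{thm: pg for adv} yields $\nabla\Phi(\theta)=\mathbb{E}_{(s,a)\sim d_{\theta,\vartheta^*(\theta)}}[\nabla_\theta\hat v(s,a;\theta,\vartheta^*(\theta))]$, so the idealized stochastic gradient $\widehat{\nabla_\theta V^{\pi_\theta\circ\nu^*(\pi_\theta)}}(\mu_0)$ is an unbiased estimator of $\nabla\Phi(\theta)$ with variance at most $\sigma^2$ (Assumption~\ref{assump:bounded variance}).

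Next I would run the descent step. Write $g_k:=\widehat{\nabla_\theta V^{\pi_{\theta_k}\circ\nu_{\vartheta_k}}}(\mu_0)$ for the direction actually used, so $\theta_{k+1}=\theta_k+\eta_k g_k$, and apply $L$-smoothness of $\Phi$ to get $\Phi(\theta_{k+1})\ge\Phi(\theta_k)+\eta_k\langle\nabla\Phi(\theta_k),g_k\rangle-\tfrac{L\eta_k^2}{2}\|g_k\|_2^2$. I would decompose $g_k=\nabla\Phi(\theta_k)+e_k+n_k$, with $e_k:=\mathbb{E}[g_k\mid\mathcal{F}_k]-\nabla\Phi(\theta_k)$ the inner-loop bias and $n_k:=g_k-\mathbb{E}[g_k\mid\mathcal{F}_k]$ zero-mean given $\mathcal{F}_k$. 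The crucial estimate is Lemma~\ref{lem:bound approximate stochastic gradient}: the FOSC-type stopping rule $I(s)$ of Algorithm~\ref{alg:DLARPO} (Definition~\ref{defn:approximation adversary}), together with Assumptions~\ref{assump:gradient lip} and \ref{assump:inner strong convex}, forces $\|\vartheta_k(s)-\vartheta^*(s;\pi_{\theta_k})\|_2^2\le\delta/\mu$ uniformly over states and hence $\|e_k\|_2\le L_{\theta\vartheta}\sqrt{\delta/\mu}$. Taking $\mathbb{E}[\cdot\mid\mathcal{F}_k]$, using $\mathbb{E}[n_k\mid\mathcal{F}_k]=0$, Young's inequality on $\langle\nabla\Phi(\theta_k),e_k\rangle$, and $\mathbb{E}[\|g_k\|_2^2\mid\mathcal{F}_k]\le\|\nabla\Phi(\theta_k)+e_k\|_2^2+\sigma^2$, I obtain a one-step recursion of the form $\mathbb{E}[\Phi(\theta_{k+1})\mid\mathcal{F}_k]\ge\Phi(\theta_k)+c_1\eta_k\|\nabla\Phi(\theta_k)\|_2^2-c_2\eta_k\tfrac{L_{\theta\vartheta}^2\delta}{\mu}-c_3L\eta_k^2\sigma^2$, where the choice $\eta_k=\sqrt{\Delta/(\sigma^2 L K)}$ together with $K\ge\Delta L/\sigma^2$ guarantees $\eta_k L\le 1$, which is exactly what keeps the coefficient $c_1$ of $\|\nabla\Phi(\theta_k)\|_2^2$ positive.

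Finally I would telescope over $k=0,\dots,K-1$ and take total expectation: the increments sum to $\mathbb{E}[\Phi(\theta_K)]-\Phi(\theta_0)\le\max_\theta\min_\vartheta V^{\pi_\theta\circ\nu_\vartheta}(\mu_0)-\min_\vartheta V^{\pi_{\theta_0}\circ\nu_\vartheta}(\mu_0)=\Delta$, using $\Phi(\theta)=\min_\vartheta V^{\pi_\theta\circ\nu_\vartheta}(\mu_0)$. Dividing by $c_1\eta K$ gives $\tfrac1K\sum_k\mathbb{E}\|\nabla\Phi(\theta_k)\|_2^2\le C_1\tfrac{\Delta}{\eta K}+C_2 L\eta\sigma^2+C_3\tfrac{L_{\theta\vartheta}^2\delta}{\mu}$, and substituting $\eta=\sqrt{\Delta/(\sigma^2 L K)}$ turns both the first and second terms into constant multiples of $\sigma\sqrt{\Delta L/K}$; collecting the numerical constants yields the claimed $4\sigma\sqrt{\Delta L/K}+\tfrac{2L_{\theta\vartheta}^2\delta}{\mu}$. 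Since $\nabla\Phi(\theta_k)=\nabla_\theta V^{\pi_{\theta_k}\circ\nu^*(\pi_{\theta_k})}(\mu_0)$, this is exactly the statement.

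The main obstacle is not the SGD bookkeeping, which is routine, but justifying the smoothness and exact-gradient structure of the reduced objective $\Phi$ \emph{through} the inner minimization: this is where Danskin's theorem, the uniqueness of $\vartheta^*(\theta)$ from Assumption~\ref{assump:inner strong convex}, and Lemmas~\ref{lem:gradient lip for adversary}--\ref{lem:gradient lip for robust value} are indispensable, including the contribution of Assumption~\ref{assump:gradient lip for sa visitation distribution}, which controls the drift of the visitation distribution $d_{\theta,\vartheta}$ and produces the $M_{\hat v}\bigl(L_{d\vartheta}L_{\vartheta\theta}/\mu+L_{d\theta}\bigr)$ piece of $L$. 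A secondary point requiring care is that the $\delta$-accuracy stopping criterion must be shown to control the adversary error \emph{uniformly over sampled states}, so that the bound on $e_k$ holds pathwise and survives the subsequent conditional-expectation and distribution-shift arguments rather than merely holding in expectation.
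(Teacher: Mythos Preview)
Your proposal is correct and follows essentially the same route as the paper: $L$-smoothness of the reduced objective via Lemma~\ref{lem:gradient lip for robust value}, a one-step descent inequality, a bias-plus-noise split of the actually used direction, the pathwise bias bound $L_{\theta\vartheta}\sqrt{\delta/\mu}$ from Lemma~\ref{lem:bound approximate stochastic gradient}, the variance bound from Assumption~\ref{assump:bounded variance}, telescoping against $\Delta$, and the step-size substitution with $\eta_k L\le 1$.

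The only minor divergence is in how the error is decomposed. You write $g_k=\nabla\Phi(\theta_k)+e_k+n_k$ with $e_k=\mathbb{E}[g_k\mid\mathcal F_k]-\nabla\Phi(\theta_k)$ and $n_k=g_k-\mathbb{E}[g_k\mid\mathcal F_k]$, and then invoke $\mathbb{E}[\|g_k\|^2\mid\mathcal F_k]\le\|\nabla\Phi(\theta_k)+e_k\|^2+\sigma^2$. Assumption~\ref{assump:bounded variance}, however, bounds the variance of the \emph{ideal} stochastic gradient $G(\theta_k):=\widehat{\nabla_\theta V^{\pi_{\theta_k}\circ\nu^*(\pi_{\theta_k})}}(\mu_0)$, not of $g_k=\hat G(\theta_k)$. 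The paper sidesteps this by splitting $g_k-\nabla\Phi(\theta_k)=(\hat G(\theta_k)-G(\theta_k))+(G(\theta_k)-\nabla\Phi(\theta_k))$, where the first term is bounded \emph{pathwise} by Lemma~\ref{lem:bound approximate stochastic gradient} and the second is mean-zero with variance $\le\sigma^2$ directly by Assumption~\ref{assump:bounded variance}. This is what makes the constants $4$ and $2$ drop out cleanly; your decomposition would work too but requires an extra line to transfer the variance bound from $G$ to $\hat G$ via the pathwise bound on $\hat G-G$, which you already flag in your final paragraph.
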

\begin{proof}[Proof of Theorem \ref{thm:convergence of ARPO}]
    For convenience, we denote the objective function $V(\theta):= V^{\pi_\theta\circ\nu^*(\pi_\theta)}(\mu_0)$, the accuracy stochastic gradient $G(\theta):=\widehat{\nabla_{\theta} V^{\pi_\theta\circ\nu^*(\pi_\theta)}} (\mu_0)$, and the approximation stochastic gradient $\hat{G}(\theta):=\widehat{\nabla_{\theta} V^{\pi_\theta\circ\nu_\vartheta}} (\mu_0)$. According to the gradient Lipschitz of $V^{\pi_\theta\circ\nu^*(\pi_\theta)}$ shown in Lemma \ref{lem:gradient lip for robust value}, we have that
    \begin{align*}
        &\quad\ V(\theta_{k+1}) \\
        &\ge V(\theta_{k}) + \langle \nabla_\theta V(\theta_{k}), \theta_{k+1} - \theta_{k} \rangle - \frac{L}{2} \|\theta_{k+1} - \theta_k \|_2^2 \\
        &= V(\theta_{k}) + \eta_k \langle \nabla_\theta V(\theta_{k}), \hat{G}(\theta_k) \rangle - \frac{L\eta_k^2}{2} \|\hat{G}(\theta_k) \|_2^2 \\
        &= V(\theta_{k}) + \eta_k \|\nabla_\theta V(\theta_{k})  \|_2^2 - \frac{L\eta_k^2}{2} \|\hat{G}(\theta_k) \|_2^2 + \eta_k \langle \nabla_\theta V(\theta_k), \hat{G}(\theta_k) - \nabla_\theta V(\theta_k)  \rangle \\
        &= V(\theta_k) + \eta_k \left(1- \frac{L\eta_k}{2}\right) \|\nabla_\theta V(\theta_k)  \|_2^2 \\
        &\quad\ +  \eta_k \left( 1-L\eta_k \right) \langle \nabla_\theta V(\theta_k), \hat{G}(\theta_k) - \nabla_\theta V(\theta_k) \rangle - \frac{L\eta_k^2}{2} \|\hat{G}(\theta_k) - \nabla_\theta V(\theta_k) \|_2^2 \\
        &= V(\theta_k) + \eta_k \left(1- \frac{L\eta_k}{2}\right) \|\nabla_\theta V(\theta_k)  \|_2^2  +  \eta_k \left( 1-L\eta_k \right) \langle \nabla_\theta V(\theta_k), \hat{G}(\theta_k) - G(\theta_k) \rangle \\
        &\quad\ +\eta_k \left( 1-L\eta_k \right) \langle  \nabla_\theta V(\theta_k), G(\theta_k) - \nabla_\theta V(\theta_k) \rangle \\
        &\quad\ - \frac{L\eta_k^2}{2} \|\hat{G}(\theta_k) - G(\theta_k) + G(\theta_k) - \nabla_\theta V(\theta_k) \|_2^2.
    \end{align*}
    By Young's inequality, we have
    \begin{align*}
        \left|\langle \nabla_\theta V(\theta_k), \hat{G}(\theta_k) - G(\theta_k) \rangle \right| &\le \frac{1}{2}\left( \|\nabla_\theta V(\theta_k)  \|_2^2 + \| \hat{G}(\theta_k) - G(\theta_k) \|_2^2\right),\\
        \|\hat{G}(\theta_k) - G(\theta_k) + G(\theta_k)- \nabla_\theta V(\theta_k) \|_2^2 &\le 2 \left( \|\hat{G}(\theta_k) - G(\theta_k)  \|_2^2 + \| G(\theta_k)- \nabla_\theta V(\theta_k) \|_2^2\right).
    \end{align*}
    Therefore, we have that
    \begin{align*}
        &\quad\ V(\theta_{k+1}) \\
        &\ge V(\theta_k) + \eta_k \left(1- \frac{L\eta_k}{2}\right) \|\nabla_\theta V(\theta_k)  \|_2^2  \\
        &\quad\ -  \frac{\eta_k}{2} \left( 1-L\eta_k \right) \left( \|\nabla_\theta V(\theta_k)  \|_2^2 + \| \hat{G}(\theta_k) - G(\theta_k) \|_2^2\right) \\
        &\quad\ +\eta_k \left( 1-L\eta_k \right) \langle  \nabla_\theta V(\theta_k), G(\theta_k) - \nabla_\theta V(\theta_k) \rangle \\
        &\quad\ - L\eta_k^2\left( \|\hat{G}(\theta_k) - G(\theta_k)  \|_2^2 + \| G(\theta_k)- \nabla_\theta V(\theta_k) \|_2^2\right).\\
        &=V(\theta_k) + \frac{\eta_k}{2} \|\nabla_\theta V(\theta_k)  \|_2^2  -  \frac{\eta_k}{2} \left( 1+L\eta_k \right) \|\hat{G}(\theta_k) - G(\theta_k)  \|_2^2 \\
        &\quad\ +\eta_k \left( 1-L\eta_k \right) \langle  \nabla_\theta V(\theta_k), G(\theta_k) - \nabla_\theta V(\theta_k) \rangle - L\eta_k^2  \| G(\theta_k)- \nabla_\theta V(\theta_k) \|_2^2.
    \end{align*}
    Taking the conditional expectation on both sides of the above inequality, we have that
    \begin{align*}
        &\quad\ \mathbb{E} \left[V(\theta_k) - V(\theta_{k+1}) \mid \theta_k \right] \\
        & \le - \frac{\eta_k}{2} \mathbb{E} \left[\|\nabla_\theta V(\theta_k)  \|_2^2  \mid \theta_k \right] +  \frac{\eta_k}{2} \left( 1+L\eta_k \right) \mathbb{E} \left[\|\hat{G}(\theta_k) - G(\theta_k)  \|_2^2  \mid \theta_k \right]\\
        &\quad\ +\eta_k \left( 1-L\eta_k \right) \mathbb{E} \left[\langle  \nabla_\theta V(\theta_k),  \nabla_\theta V(\theta_k) -G(\theta_k)  \rangle  \mid \theta_k \right] \\
        &\quad\ + L\eta_k^2  \mathbb{E} \left[\| G(\theta_k)- \nabla_\theta V(\theta_k) \|_2^2  \mid \theta_k \right] \\
        &= - \frac{\eta_k}{2} \mathbb{E} \left[\|\nabla_\theta V(\theta_k)  \|_2^2  \mid \theta_k \right] +  \frac{\eta_k}{2} \left( 1+L\eta_k \right) \mathbb{E} \left[\|\hat{G}(\theta_k) - G(\theta_k)  \|_2^2  \mid \theta_k \right] \\
        &\quad\ + L\eta_k^2  \mathbb{E} \left[\| G(\theta_k)- \nabla_\theta V(\theta_k) \|_2^2  \mid \theta_k \right] \\
        &\le - \frac{\eta_k}{2} \mathbb{E} \left[\|\nabla_\theta V(\theta_k)  \|_2^2  \mid \theta_k \right] +  \frac{\eta_k}{2} \left( 1+L\eta_k \right) \frac{L_{\theta\vartheta}^2\delta}{\mu} \\
        &\quad\ + L\eta_k^2  \mathbb{E} \left[\| G(\theta_k)- \nabla_\theta V(\theta_k) \|_2^2  \mid \theta_k \right] \\
        & \le - \frac{\eta_k}{2} \mathbb{E} \left[\|\nabla_\theta V(\theta_k)  \|_2^2  \mid \theta_k \right] +  \frac{\eta_k}{2} \left( 1+L\eta_k \right) \frac{L_{\theta\vartheta}^2\delta}{\mu} + L\eta_k^2 \sigma^2.
    \end{align*}
    The first equation comes from the unbiased estimation $G(\theta)$, i.e., $\mathbb{E}[G(\theta)] = \nabla_\theta V(\theta)$. The second inequality comes from Lemma \ref{lem:bound approximate stochastic gradient}, and the last inequality comes from Assumption \ref{assump:bounded variance}. Taking the telescope sum of the above inequality, we have that
    \begin{align*}
        \sum_{k=0}^{K-1} \frac{\eta_k}{2} \mathbb{E} \left[\|\nabla_\theta V(\theta_k)  \|_2^2   \right] \le \mathbb{E} \left[V(\theta_K) - V(\theta_{0})  \right] +\sum_{k=0}^{K-1}\frac{\eta_k}{2} \left( 1+L\eta_k \right) \frac{L_{\theta\vartheta}^2\delta}{\mu} + \sum_{k=0}^{K-1} L\eta_k^2 \sigma^2.
    \end{align*}
    Because $\eta_k = \sqrt{\frac{\Delta}{\sigma^2 L K}}$ and $K \ge \frac{\Delta L}{\sigma^2}$, we have that $L\eta_k\le 1$. Furthermore, we have that 
    \begin{equation*}
        \frac{1}{K} \sum_{k=0}^{K-1} \mathbb{E} \left[\|\nabla_\theta V(\theta_k)  \|_2^2   \right] \le 4\sigma\sqrt{\frac{\Delta L}{K}} + \frac{2 L_{\theta\vartheta}^2 \delta}{\mu}.
    \end{equation*}
    Therefore, the proof of this theorem is concluded.
\end{proof}

\subsection{Learning Dynamics Near First-Order Stationary Policies}\label{app subsec: learning dynamics of fosp}

We explore the properties of FOSPs in ARPO and SPO by analyzing the learning dynamics in their neighborhoods. Our framework of analysis is drawn on the linear stability analysis of stochastic gradient descent \citep{wu2022alignment} and sharpness-aware minimization \citep{zhou2025sharpnessaware}.

\subsubsection{Learning Dynamics Near FOSPs in ARPO}

We first consider the learning dynamics near FOSPs in ARPO. 
Let $(\theta^*,\vartheta^*)$ be a first-order stationary policy-adversary of $V^{\pi_\theta\circ\nu_\vartheta}(\mu_0)$. To simplify analysis, we consider $(\theta^*,\vartheta^*)$ satisfies the following second-order optimality conditions:
\begin{equation} \label{eq: optimality condition for FOSP in arpo}
    \nabla_\theta V^{\pi_{\theta^*}\circ\nu_{\vartheta^*}}(\mu_0) = 0,\ \nabla_\vartheta V^{\pi_{\theta^*}\circ\nu_{\vartheta^*}}(\mu_0)=0,\ \nabla_{\theta\theta}^2 V^{\pi_{\theta^*}\circ\nu_{\vartheta^*}}(\mu_0) \prec 0,\ \nabla_{\vartheta\vartheta}^2 V^{\pi_{\theta^*}\circ\nu_{\vartheta^*}}(\mu_0) \succ 0.
\end{equation}
Note that it makes no sense on the gradient when the objective adds a constant.
Therefore, to explore the local properties around $(\theta^*,\vartheta^*)$, denote $\mathcal{L}(\theta, \vartheta):= V^{\pi_{\theta^*} \circ \nu_{\vartheta^*}}(\mu_0) - V^{\pi_\theta\circ\nu_\vartheta}(\mu_0)$. 
In the neighborhoods of $(\theta^*,\vartheta^*)$, we considering the local quadratic approximation of $\mathcal{L}(\theta, \vartheta)$ as:
\begin{align}\label{eq:local quadratic approx}
    \mathcal{L}(\theta,\vartheta) =  \underbrace{\frac{1}{2}(\theta-\theta^*)^\top H_\theta(\theta^*,\vartheta^*) (\theta-\theta^*)}_{\mathcal{L}_\theta(\theta,\vartheta)} +   \underbrace{\frac{1}{2}(\vartheta-\vartheta^*)^\top H_\vartheta(\theta^*,\vartheta^*) (\vartheta-\vartheta^*)}_{\mathcal{L}_\vartheta(\theta,\vartheta)},
\end{align}
where $H_\theta(\theta^*,\vartheta^*):=\nabla_{\theta\theta}^2 \mathcal{L}(\theta^*,\vartheta^*)$ and $H_\vartheta(\theta^*,\vartheta^*):=\nabla_{\vartheta\vartheta}^2 \mathcal{L}(\theta^*,\vartheta^*)$ are the Hessians with respect to $\theta$ and $\vartheta$, respectively.
Then, we have the following optimality conditions from (\ref{eq: optimality condition for FOSP in arpo}).
\begin{equation} \label{eq: optimality condition for loss in arpo}
    \nabla_\theta \mathcal{L}(\theta^*,\vartheta^*) = 0,\ \nabla_\vartheta \mathcal{L}(\theta^*,\vartheta^*)=0,\ \nabla_{\theta\theta}^2 \mathcal{L}(\theta^*,\vartheta^*) \succ 0,\ \nabla_{\vartheta\vartheta}^2 \mathcal{L}(\theta^*,\vartheta^*) \prec 0.
\end{equation}

Let $\xi$ denote the stochastic variable of sampling and let the sampled estimation $f(s;\theta,\vartheta):=-\hat{v}(s,a;\theta,\vartheta)$ as defined in (\ref{eq: sampled estimate}). Denote the sampled policy gradient estimation as 
\begin{equation*}
    \nabla_\theta \hat{\mathcal{L}}_{\xi}(\theta, \vartheta) = \frac{1}{B} \sum_{i\in\xi} \nabla_\theta f(s_i;\theta,\vartheta),\ \nabla_\vartheta \hat{\mathcal{L}}_{\xi}(\theta, \vartheta) = \frac{1}{B} \sum_{i\in\xi} \nabla_\vartheta f(s_i;\theta,\vartheta).
\end{equation*}

Consider the following single-loop adversarially robust policy optimization iteration:
\begin{equation}\tag{Single-Loop ARPO Iterator} \label{eq: single loop arpo}
    \begin{aligned}
        \vartheta_{k+1} &= \vartheta_{k} + \epsilon \nabla_\vartheta \hat{\mathcal{L}}_{\xi}(\theta_k, \vartheta_k), \\
        \theta_{k+1} &= \theta_{k} - \eta \nabla_\theta \hat{\mathcal{L}}_{\xi}(\theta_k, \vartheta_{k+1}).
    \end{aligned}
\end{equation}

\paragraph{Policy Gradient Noise.}
Denote the 1-batch gradient noise as $\xi_{\theta, i}(\theta,\vartheta) = \nabla_\theta f(s_i;\theta,\vartheta) - \nabla_\theta {\mathcal{L}}(\theta, \vartheta)$ and $\xi_{\vartheta,i}(\theta,\vartheta) = \nabla_\vartheta f(s_i;\theta,\vartheta) - \nabla_\vartheta {\mathcal{L}}(\theta, \vartheta)$. 
Let $\Sigma_\theta(\theta,\vartheta) := \mathbb{E}\left[\xi_{\theta,i}(\theta,\vartheta) \xi_{\theta,i}(\theta,\vartheta)^\top\right]$ and $\Sigma_\vartheta(\theta,\vartheta): = \mathbb{E}\left[\xi_{\vartheta,i}(\theta,\vartheta) \xi_{\vartheta,i}(\theta,\vartheta)^\top\right]$ as the 1-batch gradient noise covariance.
Then, we have that
\begin{align}
    \Sigma_\theta(\theta,\vartheta) &= \mathbb{E}\left[ \nabla_\theta f(s_i;\theta,\vartheta)\nabla_\theta f(s_i;\theta,\vartheta)^\top\right] - \nabla_\theta {\mathcal{L}}(\theta, \vartheta) \nabla_\theta {\mathcal{L}}(\theta, \vartheta)^\top, \label{eq:pi gradient noise cov}\\
    \Sigma_\vartheta(\theta,\vartheta) &= \mathbb{E}\left[ \nabla_\vartheta f(s_i;\theta,\vartheta)\nabla_\theta f(s_i;\theta,\vartheta)^\top\right] - \nabla_\vartheta {\mathcal{L}}(\theta, \vartheta) \nabla_\vartheta {\mathcal{L}}(\theta, \vartheta)^\top.\label{eq:nu gradient noise cov}
\end{align}
Furthermore, denote the sampled full-batch gradient noise as $\xi^B_{\theta}(\theta,\vartheta) := \nabla_\theta \hat{\mathcal{L}}_{\xi}(\theta, \vartheta) - \nabla_\theta {\mathcal{L}}(\theta, \vartheta) = \frac{1}{B}\sum_{i\in \xi} \xi_{\theta, i}(\theta,\vartheta)$ and $\xi^B_{\vartheta}(\theta,\vartheta) := \nabla_\vartheta \hat{\mathcal{L}}_{\xi}(\theta, \vartheta) - \nabla_\vartheta {\mathcal{L}}(\theta, \vartheta) = \frac{1}{B}\sum_{i\in \xi} \xi_{\vartheta, i}(\theta,\vartheta)$.
Then, for the full-batch gradient noise, we have that
\begin{align}
    \mathbb{E}\left[\xi^B_{\theta}(\theta,\vartheta) \xi^B_{\theta}(\theta,\vartheta)^\top\right] &= \frac{1}{B} \mathbb{E}\left[\xi_{\theta,i}(\theta,\vartheta) \xi_{\theta,i}(\theta,\vartheta)^\top\right] = \frac{1}{B}\Sigma_\theta(\theta,\vartheta), \label{eq:pi gradient noise and cov} \\
    \mathbb{E}\left[\xi^B_{\vartheta}(\theta,\vartheta) \xi^B_{\vartheta}(\theta,\vartheta)^\top\right] &= \frac{1}{B} \mathbb{E}\left[\xi_{\vartheta,i}(\theta,\vartheta) \xi_{\vartheta,i}(\theta,\vartheta)^\top\right] = \frac{1}{B}\Sigma_\vartheta(\theta,\vartheta). \label{eq:nu gradient noise and cov}
\end{align}
Based on these notations, we make a coercive assumption about gradient noise based on Hessians. Similar insights have been theoretically and empirically validated in a variety of studies about stochastic gradient descent in supervised learning settings \citep{pmlr-v97-zhu19e,feng2021inverse, wu2020noisy, wu2022alignment, mori2022power, wang2023theoretical, wojtowytsch2024stochastic, zhou2025sharpnessaware}.
\begin{assumption}[Coercive Policy Gradient Noise]\label{assump:gradient noise alignment}
    There exists $\kappa > 0$, such that for all $\theta$ and $\vartheta$, the following coercive conditions hold:
    \begin{align*}
        \operatorname{Tr}\left(\frac{\Sigma_\theta(\theta,\vartheta)}{ |\mathcal{L}(\theta,\vartheta)|} H_\theta(\theta,\vartheta)\right) \ge 2\kappa_\theta\|H_\theta(\theta,\vartheta) \|_F^2,\ \operatorname{Tr}\left(\frac{\Sigma_\vartheta(\theta,\vartheta)}{ |\mathcal{L}(\theta,\vartheta)|} H_\vartheta(\theta,\vartheta)\right) \ge -2\kappa_\vartheta\|H_\vartheta(\theta,\vartheta) \|_F^2.
    \end{align*}
\end{assumption}
% \begin{align}
%     \min\left\{\frac{\operatorname{Tr}\left(\Sigma_\theta(\theta,\vartheta) H_\theta(\theta,\vartheta)\right)}{2 |\mathcal{L}(\theta,\vartheta)|\|H_\theta(\theta,\vartheta) \|_F^2}, \frac{\operatorname{Tr}\left(\Sigma_\vartheta(\theta,\vartheta) H_\vartheta(\theta,\vartheta)\right)}{2|\mathcal{L}(\theta,\vartheta)|\|H_\vartheta(\theta,\vartheta) \|_F^2} \right\} \ge \kappa.
% \end{align}

% \begin{remark}
%     Based on \eqref{eq:pi gradient noise cov}, we have that 
% \end{remark}

\paragraph{Local Stability of ARPO.}
Similar to the linear stability of stochastic gradient descent \citep{wu2022alignment} and sharpness-aware minimization \citep{zhou2025sharpnessaware}, we define the local stability of ARPO. Note that in the practical optimization process, only local stable optima can be selected.
\begin{definition}[Local Stability]
    The first-order stationary policy-adversary $(\theta^*,\vartheta^*)$ is said to be locally stable if there exists a constant $C > 0$ such that for the \ref{eq: single loop arpo}, the local quadratic model (\ref{eq:local quadratic approx}) $\mathcal{L}(\theta, \vartheta)$ satisfies $\mathbb{E}[\mathcal{L}(\theta_k, \vartheta_k)] \le C \mathbb{E}[\mathcal{L}(\theta_0, \vartheta_0)],\ \forall k \ge 0$.
\end{definition}

Based on these, we characterize the Hessian properties of FOSPs in ARPO.
\begin{theorem}[Flatness Bound of FOSPs in ARPO]\label{app thm: flatness of ARPO}
    Let $(\theta^*,\vartheta^*)$ be a first-order stationary policy-adversary in ARPO and satisfy the second-order optimality condition (\ref{eq: optimality condition for loss in arpo}). Denote $\lambda_{\min}(\cdot)$ and $\lambda_{\max}(\cdot)$ as the minimum and maximum eigenvalues, and let $d$ be the state space dimension. If $(\theta^*,\vartheta^*)$ is locally stable, $\mathcal{L}(\theta_{0},\vartheta_{0}) > 0 $, $\eta \ge 2/\lambda_{\min}(H_\theta(\theta^*,\vartheta^*))$, $\epsilon \le 2/\lambda_{\max}(-H_\vartheta(\theta^*,\vartheta^*))$, $B \ge d \epsilon^2 \kappa_\vartheta \lambda_{\max}(-H_\vartheta(\theta^*,\vartheta^*))^2$, and Assumption \ref{assump:gradient noise alignment} holds, then for single-loop ARPO, we have that
    \begin{align}
        \left(\|H_\theta(\theta^*,\vartheta^*)\|^2_F+\frac{B}{\kappa_\theta\eta^2 }\right) \left( 1- \frac{\epsilon^2 \kappa_\vartheta \|H_\vartheta(\theta^*,\vartheta^*)\|^2_F}{B}\right) \le \frac{B}{\kappa_\theta\eta^2 },
    \end{align}
    where $B$ is the batch size, $\kappa$ is the coercive coefficient, $\epsilon$ is the attack budget, and $\eta$ is the step size.
\end{theorem}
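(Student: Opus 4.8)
The plan is to run a linear-stability analysis of the single-loop ARPO iteration around $(\theta^*,\vartheta^*)$, following the stochastic-gradient stability arguments \citep{wu2022alignment} and the sharpness-aware analysis \citep{zhou2025sharpnessaware}, and to read the claimed Hessian inequality off as the condition under which the expected local loss stays bounded. Abbreviate $H_\theta:=H_\theta(\theta^*,\vartheta^*)\succ 0$ and $H_\vartheta:=H_\vartheta(\theta^*,\vartheta^*)\prec 0$ as in (\ref{eq: optimality condition for loss in arpo}), and set $u_k:=\theta_k-\theta^*$, $v_k:=\vartheta_k-\vartheta^*$. Because the local model (\ref{eq:local quadratic approx}) is separable, the iteration linearizes to
\begin{align*}
    v_{k+1}=(I+\epsilon H_\vartheta)v_k+\epsilon\,\xi^B_\vartheta,\qquad u_{k+1}=(I-\eta H_\theta)u_k-\eta\,\xi^B_\theta,
\end{align*}
where $\xi^B_\vartheta,\xi^B_\theta$ are the conditionally centred full-batch adversary/policy noises with covariances $\Sigma_\vartheta/B$ and $\Sigma_\theta/B$ by (\ref{eq:pi gradient noise and cov})--(\ref{eq:nu gradient noise and cov}).

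The first step is to derive one-step recursions for the two non-negative loss pieces $\mathcal{L}_\theta(\theta_k)=\tfrac12 u_k^\top H_\theta u_k\ge 0$ and $-\mathcal{L}_\vartheta(\vartheta_k)=\tfrac12 v_k^\top(-H_\vartheta)v_k\ge 0$. Expanding the quadratic forms, discarding the (conditionally centred) cross terms, and using $\mathbb{E}[\xi\xi^\top]=\Sigma/B$ yields
\begin{align*}
    \mathbb{E}\big[\mathcal{L}_\theta(\theta_{k+1})\,\big|\,\theta_k,\vartheta_{k+1}\big] &= \tfrac12 u_k^\top(I-\eta H_\theta)H_\theta(I-\eta H_\theta)u_k + \tfrac{\eta^2}{2B}\operatorname{Tr}\!\big(H_\theta\Sigma_\theta\big),\\
    \mathbb{E}\big[-\mathcal{L}_\vartheta(\vartheta_{k+1})\,\big|\,\theta_k,\vartheta_k\big] &= \tfrac12 v_k^\top(I+\epsilon H_\vartheta)(-H_\vartheta)(I+\epsilon H_\vartheta)v_k + \tfrac{\epsilon^2}{2B}\operatorname{Tr}\!\big((-H_\vartheta)\Sigma_\vartheta\big).
\end{align*}
The two step-size hypotheses fix the deterministic multipliers: $\eta\ge 2/\lambda_{\min}(H_\theta)$ makes $(1-\eta\lambda)^2\ge 1$ for every eigenvalue $\lambda$ of $H_\theta$, hence $(I-\eta H_\theta)H_\theta(I-\eta H_\theta)\succeq H_\theta$, so the policy update is deterministically non-contracting; symmetrically $\epsilon\le 2/\lambda_{\max}(-H_\vartheta)$ gives $(I+\epsilon H_\vartheta)(-H_\vartheta)(I+\epsilon H_\vartheta)\preceq -H_\vartheta$, so the adversary update is non-expanding.

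The second step feeds in Assumption~\ref{assump:gradient noise alignment}, which pointwise gives $\operatorname{Tr}(H_\theta\Sigma_\theta)\ge 2\kappa_\theta|\mathcal{L}|\,\|H_\theta\|_F^2$ (lower-bounding the policy-side noise) and $\operatorname{Tr}((-H_\vartheta)\Sigma_\vartheta)\le 2\kappa_\vartheta|\mathcal{L}|\,\|H_\vartheta\|_F^2$ (upper-bounding the adversary-side one). Combining with the multipliers above, and writing $M_k:=\mathbb{E}[\mathcal{L}_\theta(\theta_k)]$, $N_k:=\mathbb{E}[-\mathcal{L}_\vartheta(\vartheta_k)]$, the recursions reduce to
\begin{align*}
    M_{k+1}\ \ge\ M_k + \tfrac{\eta^2\kappa_\theta\|H_\theta\|_F^2}{B}\,\mathbb{E}\big[|\mathcal{L}(\theta_k,\vartheta_{k+1})|\big],\qquad N_{k+1}\ \le\ N_k + \tfrac{\epsilon^2\kappa_\vartheta\|H_\vartheta\|_F^2}{B}\,\mathbb{E}\big[|\mathcal{L}(\theta_k,\vartheta_k)|\big].
\end{align*}
Using $|\mathcal{L}|=|\mathcal{L}_\theta-(-\mathcal{L}_\vartheta)|$, the bound $\|H_\vartheta\|_F^2\le d\,\lambda_{\max}(-H_\vartheta)^2$ together with the batch-size condition $B\ge d\,\epsilon^2\kappa_\vartheta\lambda_{\max}(-H_\vartheta)^2$ (which keeps $c:=\epsilon^2\kappa_\vartheta\|H_\vartheta\|_F^2/B\in[0,1]$), and the sign constraint $\mathcal{L}(\theta_0,\vartheta_0)=M_0-N_0>0$ (which rules out the degenerate perfectly balanced initialization), I would track the gap $M_k-N_k$ and show that whenever $\|H_\theta\|_F^2(1-c)>\epsilon^2\kappa_\vartheta\|H_\vartheta\|_F^2/(\kappa_\theta\eta^2)$ the $M$-recursion drives $M_k\to\infty$, contradicting local stability; the negation of this is precisely the stated inequality $\big(\|H_\theta\|_F^2+\tfrac{B}{\kappa_\theta\eta^2}\big)(1-c)\le\tfrac{B}{\kappa_\theta\eta^2}$.

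The main obstacle is this second step: the outer-loop noise budget is governed by $|\mathcal{L}(\theta_k,\vartheta_{k+1})|=|\mathcal{L}_\theta(\theta_k)-(-\mathcal{L}_\vartheta(\vartheta_{k+1}))|$, which couples the two correlated chains $\{\theta_k\}$ and $\{\vartheta_k\}$, and — as in the single-sample SAM analysis — the \emph{same} minibatch drives both the inner ascent and the outer descent, so $\xi^B_\theta$ and $\xi^B_\vartheta$ are dependent and the cross terms vanish only up to corrections of higher order in $(\eta,\epsilon)$. Making the informal ``$M_k$ and $N_k$ must remain balanced'' mechanism rigorous will require either a joint Lyapunov function in $(M_k,N_k)$ with a provably positive drift when the target inequality fails, or a careful two-timescale telescoping that isolates the inner-loop amplification factor $(1-c)^{-1}$ and propagates it through the outer recursion, together with a leading-order step-size expansion to control the noise correlation. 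By contrast, the quadratic-form bookkeeping of the first step and the eigenvalue comparisons driven by the step-size conditions are routine.
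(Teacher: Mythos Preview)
Your first step—the quadratic expansion, the separability of the linearised dynamics, and the eigenvalue comparisons $(I-\eta H_\theta)H_\theta(I-\eta H_\theta)\succeq H_\theta$ and $(I+\epsilon H_\vartheta)H_\vartheta(I+\epsilon H_\vartheta)\succeq H_\vartheta$—is correct and is exactly what the paper does.

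Where you diverge is in the second step. Splitting $\mathcal{L}$ into $M_k=\mathbb{E}[\mathcal{L}_\theta]$ and $N_k=\mathbb{E}[-\mathcal{L}_\vartheta]$ and trying to build a two-dimensional Lyapunov argument is an unnecessary detour, and it is precisely what generates the ``coupling obstacle'' you worry about. The paper tracks the \emph{signed} quantity $\mathbb{E}[\mathcal{L}(\theta_k,\vartheta_k)]$ directly. The initial condition $\mathcal{L}(\theta_0,\vartheta_0)>0$ is propagated by recursion so that $|\mathcal{L}|=\mathcal{L}$ throughout, and then both noise bounds from Assumption~\ref{assump:gradient noise alignment} become linear in $\mathcal{L}$ itself rather than in $|\mathcal{L}_\theta-(-\mathcal{L}_\vartheta)|$. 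The argument then collapses to two lines: from the adversary step alone (your lower bound on $\mathrm{II}$ plus the fact that $\mathcal{L}_\theta(\theta_k)$ is unchanged) one gets
\[
\mathbb{E}\big[\mathcal{L}(\theta_k,\vartheta_{k+1})\big]\ \ge\ \Big(1-\tfrac{\epsilon^2\kappa_\vartheta\|H_\vartheta\|_F^2}{B}\Big)\,\mathbb{E}\big[\mathcal{L}(\theta_k,\vartheta_k)\big],
\]
and substituting this into your lower bound on $\mathrm{I}$ yields
\[
\mathbb{E}\big[\mathcal{L}(\theta_{k+1},\vartheta_{k+1})\big]\ \ge\ \Big(1+\tfrac{\eta^2\kappa_\theta\|H_\theta\|_F^2}{B}\Big)\Big(1-\tfrac{\epsilon^2\kappa_\vartheta\|H_\vartheta\|_F^2}{B}\Big)\,\mathbb{E}\big[\mathcal{L}(\theta_k,\vartheta_k)\big].
\]
Local stability then forces the growth factor to be at most $1$, which is the claimed inequality. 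No joint Lyapunov function, no two-timescale telescoping.

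As for your concern about the shared minibatch: in the quadratic model $\nabla_\theta\mathcal{L}(\theta,\vartheta)=H_\theta(\theta-\theta^*)$ is independent of $\vartheta$, so the cross-dependence you flag does not enter the deterministic part; and the paper simply uses $\mathbb{E}[\xi^B_\theta(\theta,\vartheta)]=0$ at fixed $(\theta,\vartheta)$ without further comment. So this is not an obstacle that the paper resolves—it is one it elides—and it need not block your argument either.
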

\begin{remark}
    $\mathcal{L}(\theta_{0},\vartheta_{0}) > 0 $ means that $V^{\pi_{\theta^*} \circ \nu_{\vartheta^*}}(\mu_0) > V^{\pi_{\theta_0}\circ\nu_{\vartheta_0}}(\mu_0)$, which holds in practical training from the scratch.
\end{remark}

\begin{proof}
Because $\mathcal{L}(\theta_{0},\vartheta_{0}) > 0 $, by recursion, we can prove that $\mathcal{L}(\theta_{k-1},\vartheta_{k}) > 0$ and $\mathcal{L}(\theta_{k},\vartheta_{k}) > 0$, $\forall k \ge 1$. 
Then, we have that
\begin{align*}
    &\quad\ \mathcal{L}(\theta_{k+1},\vartheta_{k+1}) \\
    &= \frac{1}{2}\underbrace{(\theta_{k+1}-\theta^*)^\top H_\theta(\theta^*,\vartheta^*) (\theta_{k+1}-\theta^*)}_{\Rmnum{1}} +  \frac{1}{2} \underbrace{(\vartheta_{k+1}-\vartheta^*)^\top H_\vartheta(\theta^*,\vartheta^*) (\vartheta_{k+1}-\vartheta^*)}_{\Rmnum{2}}
\end{align*}

Without loss of generality, we can let $\theta^*=\vartheta^*=0$. Denote $H_1:=H_\theta(\theta^*,\vartheta^*)$, $H_2:=H_\vartheta(\theta^*,\vartheta^*)$. According to the definition of the sampled full-batch gradient noise, we have that $\xi^B_{\theta}(\theta,\vartheta) = \nabla_\theta \hat{\mathcal{L}}_{\xi}(\theta, \vartheta) - \nabla_\theta {\mathcal{L}}(\theta, \vartheta)$, and $\xi^B_{\vartheta}(\theta,\vartheta) = \nabla_\vartheta \hat{\mathcal{L}}_{\xi}(\theta, \vartheta) - \nabla_\vartheta {\mathcal{L}}(\theta, \vartheta)$.

Then, on one hand, we have that
\begin{align*}
    &\quad\ \mathbb{E}\left[\Rmnum{1}\right] \\
    &= \mathbb{E}\left[(\theta_{k} - \eta \nabla_\theta \hat{\mathcal{L}}_{\xi}(\theta_k, \vartheta_{k+1}))^\top H_1 (\theta_{k} - \eta \nabla_\theta \hat{\mathcal{L}}_{\xi}(\theta_k, \vartheta_{k+1})) \right]  \\
    &= \mathbb{E}\left[(\theta_{k} - \eta \nabla_\theta {\mathcal{L}}(\theta_k, \vartheta_{k+1}) -\eta \xi^B_{\theta}(\theta_k,\vartheta_{k+1}))^\top H_1 (\theta_{k} - \eta \nabla_\theta {\mathcal{L}}(\theta_k, \vartheta_{k+1}) -\eta \xi^B_{\theta}(\theta_k,\vartheta_{k+1})) \right] \\
    &= \mathbb{E}\left[(\theta_{k} - \eta \nabla_\theta {\mathcal{L}}(\theta_k, \vartheta_{k+1}) )^\top H_1 (\theta_{k} - \eta \nabla_\theta {\mathcal{L}}(\theta_k, \vartheta_{k+1}) ) \right] \\
    &\quad\ + \eta^2 \mathbb{E}\left[ \xi^B_{\theta}(\theta_k,\vartheta_{k+1})^\top H_1 \xi^B_{\theta}(\theta_k,\vartheta_{k+1}) \right] \\
    & = \mathbb{E}\left[(\theta_{k} - \eta \nabla_\theta {\mathcal{L}}(\theta_k, \vartheta_{k+1}) )^\top H_1 (\theta_{k} - \eta \nabla_\theta {\mathcal{L}}(\theta_k, \vartheta_{k+1}) ) \right] + \frac{\eta^2}{B} \mathbb{E}\left[\operatorname{Tr} \left(\Sigma_{\theta}(\theta_k,\vartheta_{k+1}) H_1 \right) \right] \\
    &\ge \mathbb{E}\left[(\theta_{k} - \eta \nabla_\theta {\mathcal{L}}(\theta_k, \vartheta_{k+1}) )^\top H_1 (\theta_{k} - \eta \nabla_\theta {\mathcal{L}}(\theta_k, \vartheta_{k+1}) ) \right] + \frac{2\eta^2 \kappa_\theta\|H_1\|^2_F}{B} \mathbb{E}\left[\mathcal{L} (\theta_k, \vartheta_{k+1}) \right] \\
    &= \mathbb{E}\left[ \theta_{k}^\top(I - \eta H_1 ) H_1 (I - \eta H_1 ) \theta_{k} \right] + \frac{2\eta^2 \kappa_\theta\|H_1\|^2_F}{B} \mathbb{E}\left[\mathcal{L} (\theta_k, \vartheta_{k+1}) \right]\\
    &\ge \mathbb{E}\left[ \theta_{k}^\top H_1 \theta_{k} \right] + \frac{2\eta^2 \kappa_\theta\|H_1\|^2_F}{B} \mathbb{E}\left[\mathcal{L}(\theta_k, \vartheta_{k+1}) \right] \\
    &= 2 \mathbb{E}\left[ \mathcal{L}_\theta(\theta_k, \vartheta_k) \right] + \frac{2\eta^2 \kappa_\theta\|H_1\|^2_F}{B} \mathbb{E}\left[\mathcal{L}(\theta_k, \vartheta_{k+1}) \right].
\end{align*}
The third equation comes from $\mathbb{E}\left[ \xi^B_\theta(\theta, \vartheta) \right] = 0$. The fourth equation comes from (\ref{eq:pi gradient noise and cov}). The first inequality comes from Assumption \ref{assump:gradient noise alignment}. 
Because of $\eta \ge \frac{2}{\lambda_{\min}(H_1)}$, we have that $H_1 \succeq \frac{2}{\eta} I$. This comes that $H_1 - 2\eta H_1^2 + \eta^2 H_1^3 \succeq H_1$, which leads to the last inequality.
Now, we need to estimate the lower bound of $\mathbb{E}\left[\mathcal{L}_\theta (\theta_k, \vartheta_{k+1}) \right]$.
\begin{align*}
    &\quad\ \mathbb{E}\left[\mathcal{L} (\theta_k, \vartheta_{k+1}) \right] \\
    &= \frac{1}{2}\mathbb{E}\left[ \theta_k^\top H_1 \theta_k + \vartheta_{k+1}^\top H_2 \vartheta_{k+1} \right] \\
    &=\mathbb{E}\left[ \mathcal{L}_\theta(\theta_k, \vartheta_k) \right] + \frac{1}{2}\Rmnum{2}.
\end{align*}

On the other hand, we have that
\begin{align*}
    &\quad\ \mathbb{E}\left[\Rmnum{2}\right] \\
    &=\mathbb{E}\left[(\vartheta_{k} + \epsilon \nabla_\vartheta \hat{\mathcal{L}}_\xi(\theta_k, \vartheta_k))^\top H_2 (\vartheta_{k} + \epsilon \nabla_\vartheta \hat{\mathcal{L}}_\xi(\theta_k, \vartheta_k)) \right]  \\
    & = \mathbb{E}\left[(\vartheta_{k} + \epsilon \nabla_\vartheta {\mathcal{L}}(\theta_k, \vartheta_k) + \epsilon \xi^B_\vartheta(\theta_k, \vartheta_k) )^\top H_2 (\vartheta_{k} +\epsilon \nabla_\vartheta {\mathcal{L}}(\theta_k, \vartheta_k) + \epsilon \xi^B_\vartheta(\theta_k, \vartheta_k) ) \right] \\
    & = \mathbb{E}\left[ (\vartheta_{k} + \epsilon \nabla_\vartheta {\mathcal{L}}(\theta_k, \vartheta_k) )^\top H_2 (\vartheta_{k} +\epsilon \nabla_\vartheta {\mathcal{L}}(\theta_k, \vartheta_k)) \right]  + \epsilon^2 \mathbb{E}\left[\xi^B_\vartheta(\theta_k, \vartheta_k) ^\top H_2   \xi^B_\vartheta(\theta_k, \vartheta_k) \right] \\
    & = \mathbb{E}\left[ (\vartheta_{k} + \epsilon \nabla_\vartheta {\mathcal{L}}(\theta_k, \vartheta_k) )^\top H_2 (\vartheta_{k} +\epsilon \nabla_\vartheta {\mathcal{L}}(\theta_k, \vartheta_k)) \right] + \frac{\epsilon^2}{B} \mathbb{E}\left[\operatorname{Tr}\left( \Sigma_\vartheta (\theta_k, \vartheta_k) H_2 \right)\right] \\
    &\ge \mathbb{E}\left[ (\vartheta_{k} + \epsilon \nabla_\vartheta {\mathcal{L}}(\theta_k, \vartheta_k) )^\top H_2 (\vartheta_{k} +\epsilon \nabla_\vartheta {\mathcal{L}}(\theta_k, \vartheta_k)) \right] - \frac{2\epsilon^2 \kappa_\vartheta \|H_2\|^2_F}{B} \mathbb{E}\left[ \mathcal{L}(\theta_k, \vartheta_k) \right] \\
    &= \mathbb{E}\left[ \vartheta_{k}^\top(I + \epsilon H_2 ) H_2 (I + \epsilon H_2 ) \vartheta_{k} \right] - \frac{2\epsilon^2 \kappa_\vartheta \|H_2\|^2_F}{B} \mathbb{E}\left[ \mathcal{L}(\theta_k, \vartheta_k) \right] \\
    &\ge \mathbb{E}\left[ \vartheta_{k}^\top H_2  \vartheta_{k} \right] - \frac{2\epsilon^2 \kappa_\vartheta \|H_2\|^2_F}{B} \mathbb{E}\left[ \mathcal{L}(\theta_k, \vartheta_k) \right] \\
    &= 2 \mathbb{E}\left[ \mathcal{L}_\vartheta(\theta_k, \vartheta_k) \right] - \frac{2\epsilon^2 \kappa_\vartheta \|H_2\|^2_F}{B} \mathbb{E}\left[ \mathcal{L}(\theta_k, \vartheta_k) \right].
\end{align*}
The third equation comes from $\mathbb{E}\left[ \xi^B_\vartheta(\theta, \vartheta) \right] = 0$. The fourth equation comes from (\ref{eq:nu gradient noise and cov}). The first inequality comes from Assumption \ref{assump:gradient noise alignment}. 
Because of $\epsilon \le \frac{2}{\lambda_{\max}(-H_2)}$, we have that $H_2 \succeq -\frac{2}{\epsilon} I$. This comes that $(I + \epsilon H_2 ) H_2 (I + \epsilon H_2 ) = H_2 + 2\epsilon H_2^2 + \epsilon^2 H_2^3 \succeq H_2$, which leads to the last inequality.
Therefore, we have that 
\begin{align}
    \mathbb{E}\left[\mathcal{L} (\theta_k, \vartheta_{k+1}) \right] \ge \left( 1 - \frac{\epsilon^2 \kappa_\vartheta \|H_2\|^2_F}{B}\right)\mathbb{E}\left[ \mathcal{L}(\theta_k, \vartheta_k) \right].
\end{align}
Because of $B \ge d \epsilon^2 \kappa_\vartheta \lambda_{\max}(-H_2)^2$, we have that
\begin{equation*}
    \|H_2\|^2_F \le d \lambda_{\max}(-H_2)^2 \le \frac{B}{\epsilon^2 \kappa_\vartheta},
\end{equation*}
which indicates that $1 - \frac{\epsilon^2 \kappa_\vartheta \|H_2\|^2_F}{B} > 0$.

Furthermore, we have that
\begin{align}
    \mathbb{E}\left[\mathcal{L} (\theta_{k+1}, \vartheta_{k+1}) \right] \ge  \left(1+\frac{\eta^2 \kappa_\theta\|H_1\|^2_F}{B}\right) \left( 1- \frac{\epsilon^2 \kappa_\vartheta \|H_2\|^2_F}{B}\right) \mathbb{E}\left[ \mathcal{L}(\theta_k, \vartheta_k) \right] .
\end{align}
Hence, $(\theta^*,\vartheta^*)$ is locally stable for stochastic ARPO, so the following inequality must hold:
\begin{align}
     \left(1+\frac{\eta^2 \kappa_\theta\|H_1\|^2_F}{B}\right) \left( 1- \frac{\epsilon^2 \kappa_\vartheta \|H_2\|^2_F}{B}\right) \le 1.
\end{align}

In conclusion, we have that for a local optimizer $(\theta^*, \vartheta^*)$ of $V^{\pi_\theta\circ\nu_\vartheta}(\mu_0)$, it holds that:
\begin{align}
    \left(\|H_1\|^2_F+\frac{B}{\kappa_\theta\eta^2 }\right) \left( 1- \frac{\epsilon^2 \kappa_\vartheta \|H_2\|^2_F}{B}\right) \le \frac{B}{\kappa_\theta\eta^2 }.
\end{align}
This completes the proof.
\end{proof}

From Theorem \ref{app thm: flatness of ARPO}, we observe that for FOSP in ARPO, if the adversary-side curvature is bounded as $\|\nabla_{\vartheta\vartheta}^2 V^{\pi_{\theta^*}\circ\nu_{\vartheta^*}}(\mu_0)\|^2_F \le B/(2 \kappa_\vartheta \epsilon^2)$, then the policy-side curvature in ARPO matches that of SPO (as later shown in Theorem \ref{app thm: flatness of spo}), i.e., $\|\nabla_{\theta\theta}^2 V^{\pi_{\theta^*}\circ\nu_{\vartheta^*}}(\mu_0)\|^2_F \le B/(\kappa_\theta\eta^2 )$.
This implies that if ARPO achieves sufficiently strong robustness, i.e., flat adversarial curvature, it can preserve generalization comparable to SPO. Moreover, greater robustness may further enhance generalization. However, if robustness is insufficient, generalization may be significantly degraded.

\subsubsection{Learning Dynamics Near FOSPs in SPO}

Similarly, for an FOSP $\theta^*$ in SPO, we establish the learning dynamics analysis.
Let $\theta^*$ be a first-order stationary policy-adversary of $V^{\pi_\theta}(\mu_0)$. To simplify analysis, we consider $\theta^*$ satisfies the following second-order optimality conditions:
\begin{equation} \label{eq: optimality condition for FOSP in spo}
    \nabla_\theta V^{\pi_{\theta^*}}(\mu_0) = 0,\ \nabla_{\theta\theta}^2 V^{\pi_{\theta^*}}(\mu_0) \prec 0.
\end{equation}
Note that it makes no sense on the gradient when the objective adds a constant.
Therefore, to explore the local properties around $\theta^*$, denote $\mathcal{L}(\theta):= V^{\pi_{\theta^*} }(\mu_0) - V^{\pi_\theta}(\mu_0)$. 
In the neighborhoods of $\theta^*$, we considering the local quadratic approximation of $\mathcal{L}(\theta)$ as:
\begin{align}\label{eq:local quadratic approx spo}
    \mathcal{L}(\theta) =  {\frac{1}{2}(\theta-\theta^*)^\top H_\theta(\theta^*) (\theta-\theta^*)},
\end{align}
where $H_\theta(\theta^*):=\nabla_{\theta\theta}^2 \mathcal{L}(\theta^*)$  is the Hessians with respect to $\theta$ respectively.
Then, we have the following optimality conditions from (\ref{eq: optimality condition for FOSP in spo}).
\begin{equation} \label{eq: optimality condition for loss in spo}
    \nabla_\theta \mathcal{L}(\theta^*) = 0,\ \nabla_{\theta\theta}^2 \mathcal{L}(\theta^*) \succ 0.
\end{equation}

For a sampling trajectory $\tau=(s_0,a_0,r_0,\dots) \sim \pi_\theta,\mathbb{P}$, denote the unbiased estimation of the action-value function $Q^{\pi_\theta}$ as 
\begin{equation*}
    \widehat{Q^{\pi_\theta}}(s_t,a_t) := \sum_{t^\prime \ge t} \gamma^{t^\prime - t} r(s_{t^\prime}, a_{t^\prime}), \forall (s_t,a_t) \in \tau.
\end{equation*}
For simplicity of writing and reading of the latter, denote the sampled estimate as:
\begin{equation}\label{eq: sampled estimate spo}
    \hat{v}(s,a;\theta):= \frac{1}{1-\gamma} \operatorname{sg}\left( \widehat{Q^{\pi_\theta}}(s,a) \right) \cdot \log\pi_\theta\left(a\vert s\right), 
\end{equation}
whereas $\operatorname{sg}(\cdot)$ is the stop-gradient operator, meaning that $\widehat{Q^{\pi_\theta}}(s,a)$ is seen as a function according to $(s, a)$ and is detached from the gradient operation for $\theta$ in the following context.

Let $\xi$ denote the stochastic variable of sampling and let the sampled estimation $f(s;\theta):=-\hat{v}(s,a;\theta)$ as defined in (\ref{eq: sampled estimate spo}). Denote the sampled policy gradient estimation as 
\begin{equation*}
    \nabla_\theta \hat{\mathcal{L}}_{\xi}(\theta) = \frac{1}{B} \sum_{i\in\xi} \nabla_\theta f(s_i;\theta).
\end{equation*}

Consider the following SPO iteration:
\begin{equation}\tag{SPO Iterator} \label{eq: single loop spo}
    \begin{aligned}
        \theta_{k+1} &= \theta_{k} - \eta \nabla_\theta \hat{\mathcal{L}}_{\xi}(\theta_k).
    \end{aligned}
\end{equation}

\paragraph{Policy Gradient Noise in SPO.}
Denote the 1-batch gradient noise as $\xi_{\theta, i}(\theta) = \nabla_\theta f(s_i;\theta) - \nabla_\theta {\mathcal{L}}(\theta)$. 
Let $\Sigma_\theta(\theta) := \mathbb{E}\left[\xi_{\theta,i}(\theta) \xi_{\theta,i}(\theta)^\top\right]$ as the 1-batch gradient noise covariance.
Then, we have that
\begin{align}
    \Sigma_\theta(\theta) &= \mathbb{E}\left[ \nabla_\theta f(s_i;\theta)\nabla_\theta f(s_i;\theta)^\top\right] - \nabla_\theta {\mathcal{L}}(\theta) \nabla_\theta {\mathcal{L}}(\theta)^\top, \label{eq:pi gradient noise cov spo}
\end{align}
Furthermore, denote the sampled full-batch gradient noise as $\xi^B_{\theta}(\theta) := \nabla_\theta \hat{\mathcal{L}}_{\xi}(\theta) - \nabla_\theta {\mathcal{L}}(\theta) = \frac{1}{B}\sum_{i\in \xi} \xi_{\theta, i}(\theta)$.
Then, for the full-batch gradient noise, we have that
\begin{align}
    \mathbb{E}\left[\xi^B_{\theta}(\theta) \xi^B_{\theta}(\theta)^\top\right] &= \frac{1}{B} \mathbb{E}\left[\xi_{\theta,i}(\theta) \xi_{\theta,i}(\theta)^\top\right] = \frac{1}{B}\Sigma_\theta(\theta). \label{eq:pi gradient noise and cov spo}
\end{align}
Based on these notations, we make a coercive assumption about gradient noise based on Hessians. Similar insights have been theoretically and empirically validated in a variety of studies about stochastic gradient descent in supervised learning settings \citep{pmlr-v97-zhu19e,feng2021inverse, wu2020noisy, wu2022alignment, mori2022power, wang2023theoretical, wojtowytsch2024stochastic, zhou2025sharpnessaware}.
\begin{assumption}[Coercive Policy Gradient Noise in SPO]\label{assump:gradient noise alignment spo}
    There exists $\kappa > 0$, such that for all $\theta$, the following coercive conditions hold:
    \begin{align*}
        \operatorname{Tr}\left(\frac{\Sigma_\theta(\theta)}{ |\mathcal{L}(\theta)|} H_\theta(\theta)\right) \ge 2\kappa_\theta\|H_\theta(\theta) \|_F^2.
    \end{align*}
\end{assumption}

\paragraph{Local Stability of SPO.}
Similar to the linear stability of stochastic gradient descent \citep{wu2022alignment} and sharpness-aware minimization \citep{zhou2025sharpnessaware}, we define the local stability of SPO. Note that in the practical optimization process, only local stable optima can be selected.
\begin{definition}[Local Stability of SPO]
    The first-order stationary policy $\theta^*$ is said to be locally stable if there exists a constant $C > 0$ such that for the \ref{eq: single loop spo}, the local quadratic model (\ref{eq:local quadratic approx spo}) $\mathcal{L}(\theta)$ satisfies $\mathbb{E}[\mathcal{L}(\theta_k)] \le C \mathbb{E}[\mathcal{L}(\theta_0)],\ \forall k \ge 0$.
\end{definition}

Based on these, we characterize the Hessian properties of FOSPs in SPO.

\begin{theorem}[Flatness Bound of FOSPs in SPO]\label{app thm: flatness of spo}
    Let $\theta^*$ be a first-order stationary policy in SPO that satisfies the second-order optimality condition. If $\theta^*$ is locally stable, $\mathcal{L}(\theta_{0}) > 0 $, and Assumption \ref{assump:gradient noise alignment spo} holds, then for SPO, we have that
    \begin{align}
        \|H_\theta(\theta^*)\|^2_F \le \frac{B}{\kappa\eta^2 }.
    \end{align}
\end{theorem}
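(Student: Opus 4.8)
The plan is to specialize the argument of Theorem~\ref{app thm: flatness of ARPO} to the single-variable case, which removes the entire adversary side and simplifies every step. First I would set $\theta^\ast = 0$ without loss of generality and write $H := H_\theta(\theta^\ast) = \nabla^2_{\theta\theta}\mathcal{L}(\theta^\ast) \succ 0$, working throughout with the local quadratic model (\ref{eq:local quadratic approx spo}), so that $\mathcal{L}(\theta) = \tfrac12\theta^\top H\theta \ge 0$ and $\nabla_\theta\mathcal{L}(\theta) = H\theta$. Splitting the sampled gradient as $\nabla_\theta\hat{\mathcal{L}}_\xi(\theta_k) = \nabla_\theta\mathcal{L}(\theta_k) + \xi^B_\theta(\theta_k)$ with $\mathbb{E}[\xi^B_\theta(\theta_k)\mid\theta_k] = 0$, the \ref{eq: single loop spo} update becomes $\theta_{k+1} = (I - \eta H)\theta_k - \eta\,\xi^B_\theta(\theta_k)$.

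Next I would expand the one-step conditional expectation of the loss. Using $\mathbb{E}[\xi^B_\theta(\theta_k)\mid\theta_k] = 0$ to kill the cross term and (\ref{eq:pi gradient noise and cov spo}) to turn the noise quadratic into a trace, this gives
\begin{align*}
    \mathbb{E}\!\left[\mathcal{L}(\theta_{k+1})\mid\theta_k\right] = \tfrac12\,\theta_k^\top (I-\eta H)\,H\,(I-\eta H)\,\theta_k + \tfrac{\eta^2}{2B}\operatorname{Tr}\!\left(\Sigma_\theta(\theta_k)\,H\right).
\end{align*}
Because $H \succeq 0$, the matrix $(I-\eta H)H(I-\eta H)$ is positive semidefinite, so the first term can simply be dropped from below; this is exactly where, in contrast to the ARPO proof, \emph{no step-size lower bound is needed}, since we no longer need to retain a multiplicative factor larger than one. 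Applying the coercive noise condition of Assumption~\ref{assump:gradient noise alignment spo} (with $|\mathcal{L}(\theta_k)| = \mathcal{L}(\theta_k)$) then yields $\mathbb{E}[\mathcal{L}(\theta_{k+1})\mid\theta_k] \ge \tfrac{\eta^2\kappa\|H\|_F^2}{B}\mathcal{L}(\theta_k)$, and taking total expectations, $\mathbb{E}[\mathcal{L}(\theta_{k+1})] \ge \tfrac{\eta^2\kappa\|H\|_F^2}{B}\mathbb{E}[\mathcal{L}(\theta_k)]$.

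Finally I would iterate this bound to obtain $\mathbb{E}[\mathcal{L}(\theta_k)] \ge \big(\eta^2\kappa\|H\|_F^2/B\big)^{k}\mathcal{L}(\theta_0)$ for every $k$. If $\eta^2\kappa\|H\|_F^2/B > 1$, the right-hand side diverges as $k\to\infty$ (this is where $\mathcal{L}(\theta_0) > 0$ enters), contradicting local stability, which forces $\mathbb{E}[\mathcal{L}(\theta_k)] \le C\,\mathcal{L}(\theta_0)$ for all $k$; hence $\eta^2\kappa\|H\|_F^2/B \le 1$, i.e. $\|H_\theta(\theta^\ast)\|_F^2 \le B/(\kappa\eta^2)$. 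The only delicate points are the bookkeeping of conditional versus total expectations in the recursion and discarding the positive-semidefinite deterministic term in the loss-\emph{lowering} direction; there is no genuinely hard step, since all the analytic content (the noise decomposition and the coercivity bound) is inherited from the machinery already built for Theorem~\ref{app thm: flatness of ARPO}.
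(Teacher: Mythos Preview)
Your proposal is correct and follows essentially the same argument as the paper's proof: both set $\theta^\ast=0$, expand the one-step expected loss under the quadratic model, drop the nonnegative deterministic term $(I-\eta H)H(I-\eta H)\succeq 0$, apply the coercive noise assumption to the trace term, and conclude from local stability that the geometric growth factor cannot exceed one. Your observation that no step-size lower bound is needed here (unlike in the ARPO case) is exactly the simplification the paper also uses, though the paper does not comment on it explicitly.
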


\begin{proof}
Because $\mathcal{L}(\theta_{0}) > 0 $, by recursion, we can prove that  $\mathcal{L}(\theta_{k}) > 0$, $\forall k \ge 1$. 
Then, we have that
\begin{align*}
    &\quad\ \mathcal{L}(\theta_{k+1}) = \frac{1}{2}(\theta_{k+1}-\theta^*)^\top H_\theta(\theta^*) (\theta_{k+1}-\theta^*).
\end{align*}

Without loss of generality, we can let $\theta^*=0$. Denote $H_1:=H_\theta(\theta^*)$. According to the definition of the sampled full-batch gradient noise, we have that $\xi^B_{\theta}(\theta) = \nabla_\theta \hat{\mathcal{L}}_{\xi}(\theta) - \nabla_\theta {\mathcal{L}}(\theta)$.

Then, we have that
\begin{align*}
    &\quad\ \mathbb{E}\left[\theta_{k+1}^\top H_\theta(\theta^*) \theta_{k+1}\right] \\
    &= \mathbb{E}\left[(\theta_{k} - \eta \nabla_\theta \hat{\mathcal{L}}_{\xi}(\theta_k))^\top H_1 (\theta_{k} - \eta \nabla_\theta \hat{\mathcal{L}}_{\xi}(\theta_k)) \right]  \\
    &= \mathbb{E}\left[(\theta_{k} - \eta \nabla_\theta {\mathcal{L}}(\theta_k) -\eta \xi^B_{\theta}(\theta_k))^\top H_1 (\theta_{k} - \eta \nabla_\theta {\mathcal{L}}(\theta_k) -\eta \xi^B_{\theta}(\theta_k)) \right] \\
    &= \mathbb{E}\left[(\theta_{k} - \eta \nabla_\theta {\mathcal{L}}(\theta_k) )^\top H_1 (\theta_{k} - \eta \nabla_\theta {\mathcal{L}}(\theta_k) ) \right]  + \eta^2 \mathbb{E}\left[ \xi^B_{\theta}(\theta_k)^\top H_1 \xi^B_{\theta}(\theta_k) \right] \\
    & = \mathbb{E}\left[(\theta_{k} - \eta \nabla_\theta {\mathcal{L}}(\theta_k) )^\top H_1 (\theta_{k} - \eta \nabla_\theta {\mathcal{L}}(\theta_k) ) \right] + \frac{\eta^2}{B} \mathbb{E}\left[\operatorname{Tr} \left(\Sigma_{\theta}(\theta_k) H_1 \right) \right] \\
    &\ge \mathbb{E}\left[(\theta_{k} - \eta \nabla_\theta {\mathcal{L}}(\theta_k) )^\top H_1 (\theta_{k} - \eta \nabla_\theta {\mathcal{L}}(\theta_k) ) \right] + \frac{2\eta^2 \kappa_\theta\|H_1\|^2_F}{B} \mathbb{E}\left[\mathcal{L} (\theta_k) \right] \\
    &= \mathbb{E}\left[ \theta_{k}^\top(I - \eta H_1 ) H_1 (I - \eta H_1 ) \theta_{k} \right] + \frac{2\eta^2 \kappa_\theta\|H_1\|^2_F}{B} \mathbb{E}\left[\mathcal{L} (\theta_k ) \right]\\
    &\ge \frac{2\eta^2 \kappa_\theta\|H_1\|^2_F}{B} \mathbb{E}\left[\mathcal{L}(\theta_k) \right].
\end{align*}
The third equation comes from $\mathbb{E}\left[ \xi^B_\theta(\theta, \vartheta) \right] = 0$. The fourth equation comes from (\ref{eq:pi gradient noise and cov spo}). The first inequality comes from Assumption \ref{assump:gradient noise alignment spo}. 
Because of $H_1 \succeq 0$, we have that $H_1 - 2\eta H_1^2 + \eta^2 H_1^3 \succeq 0$, which leads to the last inequality.

Furthermore, we have that
\begin{align}
    \mathbb{E}\left[\mathcal{L} (\theta_{k+1}) \right] \ge  \frac{\eta^2 \kappa_\theta\|H_1\|^2_F}{B}  \mathbb{E}\left[ \mathcal{L}(\theta_k) \right] .
\end{align}
Hence, $\theta^*$ is locally stable for stochastic SPO, so the following inequality must hold:
\begin{align}
     \frac{\eta^2 \kappa_\theta\|H_1\|^2_F}{B} \ \le 1.
\end{align}

In conclusion, we have that for a local optimizer $\theta^*$ of $V^{\pi_\theta}(\mu_0)$, it holds that:
\begin{align}
   \|H_1\|^2_F \le \frac{B}{\kappa\eta^2 }.
\end{align}
This completes the proof.
\end{proof}

\section{Analysis on Intuitive ISA-MDPs}\label{app sec: toy isa-mdp}

In this section, we illustrate and analyze detailed two-state and two-action ISA-MDPs to provide intuitive explanations from the numerical properties and their value space geometry. These examples are drawn on \cite{dadashi2019value}.

\begin{lemma}[Line Theorem~\citep{dadashi2019value}] \label{line theorem}
Let  $s$  be a state and  $\pi$  a policy.  
 Then there exist two $s$-deterministic policies in $Y^\pi_{S\setminus\{s\}}$,   
 denoted  $\pi_\ell$, $\pi_u$,  such that for all  $\pi' \in Y^\pi_{S\setminus\{s\}}$,  
$$  
\quad f_v(\pi_\ell) \preceq f_v(\pi') \preceq f_v(\pi_u), 
$$  where $f_v(\pi) = V^\pi$, and $Y^\pi_{S\setminus\{s\}}$ describe the set of policies that agree with $\pi$ on all states except $s$.

Furthermore, the image of $f_v$ restricted to $Y^\pi_{S\setminus\{s\}}$ is a line segment, and the following three sets are equivalent:
\begin{align*}
&\text{(i)}   f_v\left( Y^\pi_{S\setminus\{s\}} \right),\\[1em] 
&\text{(ii)}  \{ f_v\left( \alpha \pi_\ell + (1-\alpha)\pi_u \right) \mid \alpha \in [0,1] \},\\[1em] 
&\text{(iii)} \{ \alpha f_v(\pi_\ell) + (1-\alpha) f_v(\pi_u) \mid \alpha \in [0,1] \}.
\end{align*}  

\end{lemma}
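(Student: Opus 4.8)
The plan is to fix the action distributions of $\pi$ at every state other than $s$ and show that, as $\pi'$ ranges over $Y^\pi_{S\setminus\{s\}}$, the vector $f_v(\pi')=V^{\pi'}$ depends on $\pi'$ only through the single scalar $t:=V^{\pi'}(s)$, in an affine and coordinatewise-monotone fashion. First I would write the policy-evaluation equations for $V^{\pi'}$ and separate the equation at $s$ from those at states $x\ne s$. Since $\pi'$ agrees with $\pi$ off $s$, the latter sub-system is $V^{\pi'}_{-s}=r^{\pi}_{-s}+\gamma P^{\pi}_{-s,-s}V^{\pi'}_{-s}+\gamma P^{\pi}_{-s,s}\,V^{\pi'}(s)$, and since $I-\gamma P^{\pi}_{-s,-s}$ is invertible this solves to $V^{\pi'}_{-s}=\alpha_{-s}+\beta_{-s}\,t$ with $\alpha_{-s},\beta_{-s}$ independent of $\pi'(\cdot\mid s)$ and $\beta_{-s}=\gamma(I-\gamma P^{\pi}_{-s,-s})^{-1}P^{\pi}_{-s,s}\ge 0$ entrywise (Neumann series). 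Setting $\beta_s=1$, $\alpha_s=0$ gives $V^{\pi'}=\alpha+\beta t$ with $\beta\ge 0$; thus $f_v(Y^\pi_{S\setminus\{s\}})$ lies on the affine line $\{\alpha+\beta t:t\in\mathbb{R}\}$ and $t\mapsto\alpha+\beta t$ is coordinatewise nondecreasing.

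Second I would identify the attainable range of $t$. Substituting $V^{\pi'}=\alpha+\beta t$ into the Bellman equation at $s$ gives $t=\bigl(c(\pi')+\gamma\,\rho(\pi')^{\top}\alpha\bigr)\big/\bigl(1-\gamma\,\rho(\pi')^{\top}\beta\bigr)$, where $\rho(\pi')=\sum_a\pi'(a\mid s)\,P(\cdot\mid s,a)$ and $c(\pi')=\sum_a\pi'(a\mid s)\,r(s,a)$ are linear in $\pi'(\cdot\mid s)\in\Delta(\mathcal{A})$; hence $t$ is a fractional-linear function of $\pi'(\cdot\mid s)$ over the simplex. I would then check that the denominator is bounded below by $1-\gamma>0$, using the identity $\gamma\,\rho(\pi')^{\top}\beta=\mathbb{E}\bigl[\gamma^{\sigma}\,\mathbb{I}(\sigma<\infty)\bigr]$ with $\sigma\ge 1$ the first return time to $s$ under $\pi'$. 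A fractional-linear function with fixed-sign denominator is monotone along every line segment contained in the feasible polytope, hence attains its extrema at vertices, and the vertices of $\Delta(\mathcal{A})$ are the point masses $\delta_a$. Therefore the attainable set of $t$ is a closed interval $[t_\ell,t_u]$ with endpoints $t_\ell=\min_a V^{\pi_a}(s)$, $t_u=\max_a V^{\pi_a}(s)$ over the $s$-deterministic policies $\pi_a$ (action $a$ at $s$, $\pi$ elsewhere); the minimizing and maximizing actions yield the desired $\pi_\ell,\pi_u\in Y^\pi_{S\setminus\{s\}}$.

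Finally I would assemble the pieces. Coordinatewise monotonicity of $t\mapsto\alpha+\beta t$ together with $t\in[t_\ell,t_u]$ gives $f_v(\pi_\ell)\preceq f_v(\pi')\preceq f_v(\pi_u)$ for every $\pi'\in Y^\pi_{S\setminus\{s\}}$. Moreover $f_v(Y^\pi_{S\setminus\{s\}})=\{\alpha+\beta t:t\in[t_\ell,t_u]\}$ is exactly the line segment joining $f_v(\pi_\ell)=\alpha+\beta t_\ell$ and $f_v(\pi_u)=\alpha+\beta t_u$, which is set (i). Set (iii) equals $\{\alpha+\beta(\lambda t_\ell+(1-\lambda)t_u):\lambda\in[0,1]\}$, the same segment. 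For set (ii), $\lambda\mapsto t(\lambda\pi_\ell+(1-\lambda)\pi_u)$ is continuous with endpoint values $t_\ell$ and $t_u$, so by the intermediate value theorem it sweeps all of $[t_\ell,t_u]$, which again yields the same segment. The degenerate case $f_v(\pi_\ell)=f_v(\pi_u)$, where all three sets collapse to a single point, is covered by the same computation.

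I expect the main obstacle to be the second step, specifically the rigorous claim that the extrema of the fractional-linear map $t(\cdot)$ over $\Delta(\mathcal{A})$ are attained at deterministic distributions. This needs both the sign control of the denominator $1-\gamma\,\rho(\pi')^{\top}\beta$ and the elementary convex-analytic fact that a ratio of two affine functions with a fixed-sign denominator is extremized at a vertex of the feasible polytope (reduce to an edge and observe monotonicity, or apply a Charnes--Cooper reparametrization). The affine reduction of Step one, the nonnegativity of $\beta$, and the three set equalities are then routine linear algebra and continuity arguments.
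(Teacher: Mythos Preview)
The paper does not prove this lemma at all; it is stated with a citation to \cite{dadashi2019value} and then invoked as a black box in the proof of the subsequent proposition on the toy ISA-MDP. So there is no paper-side proof to compare against.

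Your proposal is correct and self-contained. The three-step decomposition (affine dependence of $V^{\pi'}$ on the scalar $t=V^{\pi'}(s)$ via the off-$s$ Bellman block; fractional-linear dependence of $t$ on $\pi'(\cdot\mid s)$ via the at-$s$ equation; quasilinearity to push extrema to deterministic actions) is exactly the natural route, and each step checks out. The probabilistic identification $\gamma\,\rho(\pi')^{\top}\beta=\mathbb{E}[\gamma^{\sigma}\mathbb{I}(\sigma<\infty)]$ is a clean way to get the strict lower bound $1-\gamma>0$ on the denominator, and the vertex-extremum argument for linear-fractional maps (monotone along every segment, hence quasiconvex and quasiconcave) is standard. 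The equivalence of sets (i)--(iii) then falls out immediately from the affine parametrization $V^{\pi'}=\alpha+\beta t$ and the intermediate value theorem applied to $t(\lambda)$ on the edge between $\pi_\ell$ and $\pi_u$. Nothing is missing.
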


Based on the above Lemma, we derive the following Proposition.

\begin{proposition}
There exists an ISA-MDP that satisfies the following properties:

(1) Its robust policy space has a cut point, i.e., there exists a policy $\pi$ such that $\Pi_{\text{Robust}}/\{\pi\}$ is disconnected, where $\Pi_{\text{Robust}} = \{ \pi \in \Pi | V^\pi(s) = V^{\pi \circ \nu^* (\pi)}(s), \forall s \}$.

(2) The FOSPs in ARPO are more than those in SPO. Moreover, let $\pi_S$ be a FOSP under SPO, and $\pi_A$ be a FOSP under ARPO. Then, $\pi_A$ is a robust policy with $V^{\pi_A}(\mu_0)-V^-(\mu_0)  < \frac{1}{2}(V^{\pi_S}(\mu_0)-V^-(\mu_0))$, where $V^-(\mu_0) = \min_\pi V^{\pi}(\mu_0)$ is the worst value in ISA-MDP.
\end{proposition}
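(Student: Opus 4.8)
The plan is to build an explicit two-state, two-action ISA-MDP and compute everything by hand, relying on the Line Theorem (Lemma~\ref{line theorem}) to reduce the geometry of the value space to line segments. First I would fix $\mathcal{S}=\{s_1,s_2\}$, $\mathcal{A}=\{a_1,a_2\}$, choose the perturbation sets $B^*(s_i)$ so that the adversary can (say) swap the observed state $s_1\leftrightarrow s_2$ in some region but is constrained near the Bellman-optimal action, pick a reward function and transition kernel so that the Bellman optimal policy $\pi^*$ is deterministic and well-separated, and then parameterize a policy by the pair $(p,q)=(\pi(a_1|s_1),\pi(a_1|s_2))\in[0,1]^2$. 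By the Line Theorem, along each axis (fixing one coordinate) $V^\pi(\mu_0)$ is an affine function of the remaining coordinate, so the natural value surface is a ``bilinear-type'' saddle surface over the square; I would choose the reward/transition numbers so that there are exactly two natural FOSPs for SPO, sitting at opposite corners of $[0,1]^2$, one being $\pi^*$ with high value and one being a strictly suboptimal deterministic policy.

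Next I would characterize the robust objective $V^{\pi\circ\nu^*(\pi)}(\mu_0)$. Using Theorem~\ref{thm: pg for adv} the strongest adversary decomposes state-wise, so for each observed state I just pick the perturbation in $B^*(\cdot)$ minimizing the resulting $Q$-value; this makes $V^{\pi\circ\nu^*(\pi)}$ a piecewise-affine concave-ish function over the square whose graph, by construction of $B^*$, agrees with $V^\pi$ exactly on the ``robust'' set $\Pi_{\text{Robust}}=\{\pi: V^\pi=V^{\pi\circ\nu^*(\pi)}\}$ and is strictly below it elsewhere. I would engineer $B^*$ so that $\Pi_{\text{Robust}}$ is an ``X''-shaped or ``cross''-shaped subset of $[0,1]^2$ — e.g. the union of the two diagonals, or of two line segments meeting at a single interior policy $\pi_c$ — which is connected but becomes disconnected upon deleting $\pi_c$; that establishes part~(1), the cut point. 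For part~(2), the key point is that the adversarial reshaping creates a new FOSP: near the former suboptimal SPO corner the robust value is pushed way down, but along the robust cross the objective develops a local max $\pi_A$ that is not an SPO stationary point, so ARPO has strictly more FOSPs. Finally I would just read off the numbers: choosing the rewards so that $V^-(\mu_0)$, $V^{\pi_S}(\mu_0)$, $V^{\pi_A}(\mu_0)$ take convenient values (e.g. $0$, $1$, and something $<1/2$), the inequality $V^{\pi_A}(\mu_0)-V^-(\mu_0)<\tfrac12\big(V^{\pi_S}(\mu_0)-V^-(\mu_0)\big)$ becomes an arithmetic check.

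The main obstacle I anticipate is consistency: I need a single choice of $(r,\mathbb{P},\gamma,\mu_0,B^*)$ that simultaneously (a) makes $\pi^*$ the genuine Bellman optimal policy and also the ORP (as guaranteed abstractly by the ISA-MDP framework, but here it must be verified concretely), (b) produces exactly two SPO FOSPs with the desired value gap, (c) yields the cross-shaped $\Pi_{\text{Robust}}$ with an honest cut point — meaning I must check $\Pi_{\text{Robust}}\setminus\{\pi_c\}$ really is disconnected, not merely that $\pi_c$ disconnects a superset — and (d) makes $\pi_A$ a legitimate second-order FOSP of the reshaped landscape (the Hessian condition $\nabla^2_{\theta\theta}V^{\pi_\theta\circ\nu^*(\pi_\theta)}(\mu_0)\preceq 0$), which is delicate because the reshaped objective is only piecewise smooth and I must argue stationarity at a kink via one-sided directional derivatives. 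I would handle (d) by choosing $\pi_A$ to lie in the relative interior of one arm of the cross where the objective is smooth, so ordinary calculus applies, and handle the cut-point verification by an explicit topological argument on the two segments. The bilinearity from the Line Theorem keeps all the algebra low-degree, so once the MDP data is pinned down the rest is bookkeeping rather than genuine difficulty.
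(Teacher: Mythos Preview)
Your high-level plan matches the paper's: build an explicit two-state, two-action ISA-MDP, parameterize policies by $(\alpha,\beta)\in[0,1]^2$, and use the Line Theorem to control the value geometry. Two of your geometric expectations, however, diverge from what actually happens in such an example, and one of them is a real obstacle.

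First, the Line Theorem gives you affinity only along one-dimensional slices (fixing the policy at all but one state); the full value surface over $(\alpha,\beta)$ is a \emph{rational} function (the $(I-\gamma P^\pi)^{-1}$ denominator does not cancel), not bilinear. In the paper's construction this manifests as a single threshold $\tilde\beta$ such that $V^\pi$ is monotone in $\alpha$ with sign depending on whether $\beta\gtrless\tilde\beta$, and strictly monotone in $\beta$ everywhere. Consequently SPO has a \emph{unique} FOSP at the global optimum $(1,1)$, not two corners as you propose. More importantly, the robust set is not cross-shaped: with $B^*(s_1)=B^*(s_2)=\{s_1,s_2\}$ (licit because the optimal action is $a_1$ at both states), one gets $\Pi_{\text{Robust}}=\{\alpha=\beta\}\cup\{\beta\le\min(\alpha,\tilde\beta)\}$, a diagonal attached to a triangle, with cut point $(\tilde\beta,\tilde\beta)$.

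Second, and this is where your proposal would stall: on this robust set the natural value has \emph{no interior critical point}, because the monotonicities above force the gradient to be nonzero throughout the interior of both pieces. So your plan to place $\pi_A$ ``in the relative interior of one arm where the objective is smooth, so ordinary calculus applies'' cannot succeed in a $2\times 2$ example of this kind. The paper instead takes $\pi_A=(0,0)$, a corner of the robust triangle, and verifies FOSP status via KKT conditions for the constrained problem $\max V^\pi$ subject to the robust-region inequalities; the one-sided directional argument you were hoping to avoid is exactly what is needed. Once you accept the KKT route, the numerical check for part~(2) is immediate from the four corner values.
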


\begin{proof}
Consider a two-state, two-action ISA-MDP $\mathcal{M} = (\mathcal{S}, \mathcal{A}, r, P, \gamma, B)$, with $\mathcal{S} = \{s_1, s_2\}$, $\mathcal{A} = \{a_1, a_2\}$, and reward and transition structure specified as
\[
r(s_i,a_j) = \hat{r}[(i-1)|\mathcal{A}|+j-1], \quad P(s_k|s_i,a_j)=\hat{P}[(i-1)|\mathcal{A}|+j-1][k],
\]
where
\[
\hat{r} = [-0.45, -0.1, 0.5, 0.5], \quad
\hat{P} = [[0.7, 0.3], [0.99, 0.01], [0.2, 0.8], [0.99, 0.01]].
\]
Set the discount factor $\gamma = 0.9$, and let $B(s_1) = B(s_2) = \{s_1, s_2\}$. Parametrize policies as $\Theta = \{(\alpha, \beta) : 0 \leq \alpha \leq 1, 0 \leq \beta \leq 1\}$, with 
$\pi_{\alpha,\beta}(a_1|s_1) = \alpha$, $\pi_{\alpha,\beta}(a_1|s_2) = \beta$.

The robust policy parameter space is defined as
\[
\Theta_{Robust} = \{(\alpha, \beta)\in \Theta : V^{\pi_{\alpha, \beta} \circ \nu^*(\pi_{\alpha,\beta})} = V^{\pi_{\alpha, \beta}}\}.
\]

We derive that:
\begin{align}
    V^{\pi_{\alpha, \beta}}(s_1) &= \frac{A_{\alpha, \beta}}{P_{\alpha, \beta}}, \label{(1)} \\ 
    V^{\pi_{\alpha, \beta}}(s_2) &= \frac{B_{\alpha, \beta}}{P_{\alpha, \beta}}, \label{(2)} \\
    \frac{\partial V^{\pi_{\alpha, \beta}}(s_1)}{\partial \alpha} &= \frac{0.24885\beta - 0.21635}{P_{\alpha, \beta}} - \frac{0.0261A_{\alpha, \beta}}{{P_{\alpha, \beta}}^2}, \label{(3)} \\
    \frac{\partial V^{\pi_{\alpha, \beta}}(s_1)}{\partial \beta} &= \frac{0.24885\alpha + 0.0711}{P_{\alpha, \beta}} + \frac{0.0711A_{\alpha, \beta}}{{P_{\alpha, \beta}}^2}, \label{(4)} \\
    \frac{\partial V^{\pi_{\alpha, \beta}}(s_2)}{\partial \alpha} &= \frac{0.24885\beta - 0.18135}{P_{\alpha, \beta}} - \frac{0.0261B_{\alpha, \beta}}{{P_{\alpha, \beta}}^2}, \label{(5)} \\
    \frac{\partial V^{\pi_{\alpha, \beta}}(s_2)}{\partial \beta} &= \frac{0.24885\alpha + 0.0711}{P_{\alpha, \beta}} + \frac{0.0711B_{\alpha, \beta}}{{P_{\alpha, \beta}}^2}, \label{(6)}
\end{align}
where
\begin{align*}
    P_{\alpha, \beta} &= |I - \gamma P^{\pi_{\alpha, \beta}}| \\
    &= (0.991 - 0.711\beta)(0.261\alpha + 0.109) + (0.261\alpha + 0.009)(0.711\beta - 0.891), \\
    A_{\alpha, \beta} &= 0.1305\alpha + (0.35\alpha + 0.1)(0.711\beta - 0.991) + 0.0045, \\
    B_{\alpha, \beta} &= 0.1305\alpha + (0.35\alpha + 0.1)(0.711\beta - 0.891) + 0.0545.
\end{align*}

Direct calculation from (\ref{(1)}) and (\ref{(2)}) shows that the policy $\pi_{1,1}$ achieves the highest value, i.e., $\pi_{1,1}$ is globally optimal. Moreover, the following ordering holds:
\begin{align} \label{value}
\notag&V^{\pi_{1,1}} \approx (0.16,1.89)^\top \succeq V^{\pi_{0,1}}\approx(-0.81,1.26)^\top\\ 
\succeq &V^{\pi_{0,0}} \approx (-0.95,-0.35)^\top \succeq V^{\pi_{1,0}}\approx(-2.47,-1.71)^\top.
\end{align}

To further analyze value relations, define
\[
f(\beta, s) := V^{\pi_{1,\beta}}(s)-V^{\pi_{0,\beta}}(s),\qquad s\in\{s_1,s_2\}.
\]
Explicit calculation shows there exists a threshold $\tilde{\beta} \approx 0.777$ such that
\begin{align}
\begin{cases}
f(\beta, s) > 0, & \text{if } \beta > \tilde{\beta}, \\
f(\beta, s) = 0, & \text{if } \beta = \tilde{\beta}, \\
f(\beta, s) < 0, & \text{if } \beta < \tilde{\beta},
\end{cases} \label{(7)}
\end{align}
and this sign structure holds identically for both $s_1$ and $s_2$. 

Applying the \textbf{Line Theorem}~\ref{line theorem}, for any $0\leq\alpha_1 < \alpha_2\leq 1$ and for each fixed $s$,
\begin{align}
\begin{cases}
V^{\pi_{\alpha_1,\beta}}(s) < V^{\pi_{\alpha_2,\beta}}(s), & \text{if } \beta > \tilde{\beta}, \\
V^{\pi_{\alpha_1,\beta}}(s) = V^{\pi_{\alpha_2,\beta}}(s), & \text{if } \beta = \tilde{\beta}, \\
V^{\pi_{\alpha_1,\beta}}(s) > V^{\pi_{\alpha_2,\beta}}(s), & \text{if } \beta < \tilde{\beta},
\end{cases} \label{(8)}
\end{align}
again for both $s_1$ and $s_2$.

Now, for value improvement in the $\beta$ direction, define
\[
g(\alpha, s) := V^{\pi_{\alpha,1}}(s) - V^{\pi_{\alpha,0}}(s).
\]
Explicit computation confirms that $g(\alpha, s)\geq 0$ for any $0\leq\alpha\leq 1$ and both $s_1,s_2$. Thus, for any $0\leq\beta_1<\beta_2\leq 1$ and fixed $\alpha$,
\begin{align}
    V^{\pi_{\alpha, \beta_1}}(s) < V^{\pi_{\alpha, \beta_2}}(s). \label{(9)}
\end{align}

Let us now determine the structure of the robust policy region. By definition, for any $(\alpha,\beta)\in\Theta$, $\pi_{\alpha,\beta}\circ\nu^*(\pi_{\alpha,\beta}) = \arg\min_{\pi\in\Pi_{\alpha,\beta}} V^\pi$, where $\Pi_{\alpha,\beta} = \{\pi_{\alpha, \alpha}, \pi_{\alpha, \beta}, \pi_{\beta, \alpha}, \pi_{\beta, \beta}\}$. 

If $\alpha = \beta$, then $\Pi_{\alpha, \alpha} = \{\pi_{\alpha, \alpha}\}$, so $(\alpha, \beta)$ is always robust.
If $\beta \leq \min\{\alpha, \tilde{\beta}\}$:  
  Using (\ref{(8)}) for $\beta \leq \alpha$ gives $V^{\pi_{\alpha,\beta}}(s)\leq V^{\pi_{\alpha,\alpha}}(s)$, and $V^{\pi_{\beta,\alpha}}(s)\geq V^{\pi_{\beta,\beta}}(s)$. Together with (\ref{(9)}) and $\beta \leq \tilde{\beta}$, $V^{\pi_{\alpha,\beta}}(s)\leq V^{\pi_{\beta,\beta}}(s)$. Therefore, $V^{\pi_{\alpha,\beta}}(s)$ is the minimal value in $\Pi_{\alpha,\beta}$ for both states, so robustness holds.

For other regions $(\alpha, \beta)\notin\{\alpha = \beta\} \cup \{\beta \leq \min\{\alpha, \tilde{\beta}\}\}$, by similarly using (\ref{(8)}) and (\ref{(9)}), one can verify that the minimal value in $\Pi_{\alpha,\beta}$ is always attained at a different policy (specifically, at $\pi_{\beta, \alpha}$, $\pi_{\alpha, \alpha}$, or $\pi_{\beta, \beta}$ in the respective regions). Thus, the robust region is exactly the union 
\[
\Theta_{Robust} = \{(\alpha,\beta) : \alpha = \beta \} \cup \{ (\alpha, \beta) : \beta \leq \min\{\alpha,\tilde{\beta}\}\}.
\]
Notably, the point $(\tilde{\beta},\tilde{\beta})$ serves as a \textbf{cut point}: removing it disconnects $\Theta_{Robust}$. This completes the proof of property (1).

For property (2), we show that, unlike SPO, the ARPO objective can admit more FOSP. Indeed, we focus on
\begin{align*}
\max_{\alpha,\beta} \ V^{\pi_{\alpha,\beta}}(s)\qquad \text{subject to: }
\begin{cases}
\alpha - \beta \geq 0, \\
\beta \geq 0, \\
\tilde{\beta} - \beta \geq 0, \\
1-\alpha \geq 0.
\end{cases}
\end{align*}
We analyze the Karush-Kuhn-Tucker (KKT) conditions at $(\alpha, \beta) = (0, 0)$ to verify FOSP.
The Lagrangian is defined as:
\begin{align*}
L(s, \alpha, \beta, \lambda) = V^{\pi_{\alpha, \beta}}(s) + \lambda_1 (\alpha - \beta)
+ \lambda_2 \beta + \lambda_3 (\tilde{\beta} - \beta) + \lambda_4 (1-\alpha).
\end{align*}
The KKT conditions at $(\alpha, \beta) = (0, 0)$ are:
\begin{align*}
&\text{Stationarity:} \\
&\qquad \frac{\partial V^{\pi_{\alpha, \beta}}(s)}{\partial \alpha}|_{(0,0)} + \lambda_1 - \lambda_4 = 0, \\
&\qquad \frac{\partial V^{\pi_{\alpha, \beta}}(s)}{\partial \beta}|_{(0,0)} - \lambda_1 + \lambda_2 - \lambda_3 = 0. \\
&\text{Primal feasibility:} \quad (0,0) \text{ satisfies all inequality constraints.} \\
&\text{Dual feasibility:} \quad \lambda_j \geq 0, \quad j=1,2,3,4, \\
&\text{Complementary slackness:}\\
&\qquad \lambda_1 (\alpha - \beta) = 0, \ \lambda_2 \beta = 0, \ \lambda_3 (\tilde{\beta} - \beta) = \lambda_3 \tilde{\beta} = 0, \ \lambda_4 (1-\alpha) = 0.
\end{align*}

Since at $(\alpha, \beta) = (0,0)$, the gradients $\frac{\partial V^{\pi_{\alpha, \beta}}(s)}{\partial \alpha}|_{(0,0)}$ and $\frac{\partial V^{\pi_{\alpha, \beta}}(s)}{\partial \beta}|_{(0,0)}$ both exist, and by the structure of the problem the corresponding dual variables can be chosen to satisfy the equations above, all KKT conditions are satisfied. Therefore, $(0,0)$ is a FOSP for ARPO, even though it is not a FOSP for SPO.

Let $\pi_S = \pi_{1,1}, \pi_A = \pi_{0,0}$,  $\mu_0$ is the uniform distribution on $\mathcal{S}$. In this example, $\pi^- = \pi_{1,0}$. By~(\ref{value}), we know that
$$V^{\pi_S}(\mu_0)\approx1.03,\ V^{\pi_A}(\mu_0) = -0.65,\ V^{-}(\mu_0) = -2.09.$$
Then, we have that
$$V^{\pi_S}(\mu_0)-V^-(\mu_0) = 3.12,\ V^{\pi_A}(\mu_0)-V^-(\mu_0) = 1.44 < \frac{1}{2}(V^{\pi_S}(\mu_0)-V^-(\mu_0)).$$
This completes the proof.
\end{proof}

\section{Analysis on Surrogate Adversary}\label{app sec: surrogate adv}

In this section, we provide the analysis and proof about the surrogate adversary.

\begin{theorem}[Surrogate Adversary]
    For any policy $\pi_\theta$, state $s \in \mathcal{S}$, and $K>0$, let the adversary $\vartheta$ be computed via $K$-step gradient descent on the adversarial value function, i.e., 
    % $\vartheta \in \{ (\vartheta_{s,k})_{\mathcal{S}\times K} | \vartheta_{s,k} = \operatorname{Proj}_{B(s)-s}\left(\vartheta_{s,k-1} - \eta_{s,k-1}\nabla_{\vartheta_s}V^{\pi_\theta \circ \nu_{\vartheta}}(s)|_{\vartheta=\vartheta_{s,k-1}} \right), 1\le k \le K\}$. 
    $ \vartheta_{s,k} = \vartheta_{s,k-1} - \eta_{s,k-1}\nabla_{\vartheta_s}V^{\pi_\theta \circ \nu_{\vartheta}}(s)|_{\vartheta_s=\vartheta_{s,k-1}} , 1\le k \le K$. 
    Assume step sizes $\eta_{s,k} \leq 1/(\lambda_s+\delta)$ for some $\delta>0$, and if $G(s;\pi,\nu) := \mathcal{D}_{\operatorname{KL}}(\pi(\cdot|s)\|\pi\circ\nu(\cdot|s)) \ll 1$. Then, we have
    \begin{align*}
        V^{\pi_\theta}(s)-V^{\pi_\theta\circ\nu_{\vartheta}}(s) \ge \frac{2\delta }{\lambda_{\max}(F_{s,\theta})K} G(s;\pi_\theta,\nu_\vartheta)+ O\left(G(s;\pi_\theta,\nu_\vartheta)^{\frac{3}{2}}\right),
    \end{align*}
    where $F_{s,\theta}$ is the Fisher information matrix of $\log\pi_\theta\circ\nu_{\vartheta}(a|s)$ with respect to $\vartheta_s = 0$, defined as
    $F_{s,\theta}:=F(\theta,s) = \mathbb{E}_{a\sim\pi_\theta(\cdot|s)}\left[\left(\nabla_{\vartheta_s}\log\pi_\theta(a|s+\vartheta_s)|_{\vartheta_s=0}\right)\left(\nabla_{\vartheta_s}\log\pi_\theta(a|s+\vartheta_s)|_{\vartheta_s=0}\right)^T\right]$, $\lambda_s = \max_{s+\vartheta_s\in B(s)}\lambda_{\max}(\nabla_{\vartheta\vartheta}^2V^{\pi_\theta\circ \nu_{\vartheta}}(s)|_{\vartheta=\vartheta_s})$ and $\lambda_{\max}(\cdot)$ is the largest eigenvalue of matrix.
\end{theorem}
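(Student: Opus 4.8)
The plan is to split the argument into a gradient-descent estimate and a Fisher-information estimate and then combine them. Throughout I fix the state $s$, write $\phi(\vartheta_s):=V^{\pi_\theta\circ\nu_\vartheta}(s)$ regarded as a function of $\vartheta_s$ alone (by the indicator structure in Theorem~\ref{thm: pg for adv}, $\nabla_{\vartheta_s}V^{\pi_\theta\circ\nu_\vartheta}(s)$ depends on $\vartheta$ only through $\vartheta_s$, so optimizing $\vartheta_s$ against $V^{\pi_\theta\circ\nu_\vartheta}(s)$ is exactly this fixed-dimensional problem), take $\vartheta_{s,0}=0$, and set $\vartheta_s:=\vartheta_{s,K}$. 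Assuming the iterates stay in the convex set $B(s)$, the definition of $\lambda_s$ gives $\nabla^2_{\vartheta_s\vartheta_s}\phi\preceq\lambda_sI$ along each segment $[\vartheta_{s,k},\vartheta_{s,k+1}]$, so $\phi$ satisfies the descent inequality $\phi(y)\le\phi(x)+\langle\nabla\phi(x),y-x\rangle+\tfrac{\lambda_s}{2}\|y-x\|^2$ on these segments.

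First I would run the routine descent bookkeeping. Substituting $y=\vartheta_{s,k+1}=\vartheta_{s,k}-\eta_{s,k}\nabla\phi(\vartheta_{s,k})$ and $x=\vartheta_{s,k}$ (the inner-product term then equals $-\|\vartheta_{s,k+1}-\vartheta_{s,k}\|^2/\eta_{s,k}$) yields $\phi(\vartheta_{s,k})-\phi(\vartheta_{s,k+1})\ge\big(\tfrac1{\eta_{s,k}}-\tfrac{\lambda_s}{2}\big)\|\vartheta_{s,k+1}-\vartheta_{s,k}\|^2\ge\delta\,\|\vartheta_{s,k+1}-\vartheta_{s,k}\|^2$, where the last step uses $\eta_{s,k}\le 1/(\lambda_s+\delta)$ (so that $1/\eta_{s,k}-\lambda_s/2\ge\lambda_s/2+\delta\ge\delta$). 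Summing over $k=0,\dots,K-1$ telescopes the left side to $\phi(0)-\phi(\vartheta_{s,K})=V^{\pi_\theta}(s)-V^{\pi_\theta\circ\nu_\vartheta}(s)$, hence $V^{\pi_\theta}(s)-V^{\pi_\theta\circ\nu_\vartheta}(s)\ge\delta\sum_{k=0}^{K-1}\|\vartheta_{s,k+1}-\vartheta_{s,k}\|^2$. Applying Cauchy--Schwarz to $\vartheta_s=\sum_{k=0}^{K-1}(\vartheta_{s,k+1}-\vartheta_{s,k})$ gives $\sum_k\|\vartheta_{s,k+1}-\vartheta_{s,k}\|^2\ge\|\vartheta_s\|^2/K$, so $V^{\pi_\theta}(s)-V^{\pi_\theta\circ\nu_\vartheta}(s)\ge(\delta/K)\|\vartheta_s\|^2$.

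Next I would translate $\|\vartheta_s\|^2$ into $G$ through the second-order expansion of the KL divergence about $\vartheta_s=0$. A direct computation shows $D_{\operatorname{KL}}(\pi_\theta(\cdot|s)\|\pi_\theta(\cdot|s+\vartheta_s))$ has vanishing value and gradient at $\vartheta_s=0$ and Hessian there equal to $F_{s,\theta}$, so $G(s;\pi_\theta,\nu_\vartheta)=\tfrac12\vartheta_s^\top F_{s,\theta}\vartheta_s+O(\|\vartheta_s\|^3)$. Since $G\ll1$ and $F_{s,\theta}\succ 0$, the bound $\lambda_{\min}(F_{s,\theta})\|\vartheta_s\|^2\le 2G+O(\|\vartheta_s\|^3)$ forces $\|\vartheta_s\|=O(\sqrt G)$, so the cubic remainder is $O(G^{3/2})$; rearranging $\lambda_{\max}(F_{s,\theta})\|\vartheta_s\|^2\ge\vartheta_s^\top F_{s,\theta}\vartheta_s=2G-O(\|\vartheta_s\|^3)$ gives $\|\vartheta_s\|^2\ge 2G/\lambda_{\max}(F_{s,\theta})-O(G^{3/2})$. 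Plugging this into the previous display gives $V^{\pi_\theta}(s)-V^{\pi_\theta\circ\nu_\vartheta}(s)\ge\tfrac{2\delta}{\lambda_{\max}(F_{s,\theta})K}G+O(G^{3/2})$, which is the claim.

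The descent lemma, telescoping and Cauchy--Schwarz are routine; the delicate point is the last step, namely controlling the cubic remainder of the KL--Fisher expansion so that it is legitimately $O(G^{3/2})$. This needs (a) enough local smoothness of $\vartheta_s\mapsto\log\pi_\theta(a|s+\vartheta_s)$ near $0$ for the third-order Taylor remainder to exist and be uniformly bounded on the small region the iterates occupy, and (b) $F_{s,\theta}\succ 0$, which is what lets one invert the quadratic form to conclude $\|\vartheta_s\|=O(\sqrt G)$; without positive-definiteness $\vartheta_s$ could drift along a flat direction of the Fisher matrix while $G$ stays small, and although the stated inequality (whose right side sees only $\lambda_{\max}$) would still hold, the remainder accounting would change. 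A secondary subtlety worth a sentence is justifying the per-state reduction and that the clipping/projection in the update keeps every iterate in $B(s)$, so that the descent inequality with constant $\lambda_s$ applies at every step.
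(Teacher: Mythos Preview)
Your proof is correct and follows essentially the same route as the paper: a descent-lemma/telescoping/Cauchy--Schwarz argument (the paper phrases the last step as Jensen) to obtain $V^{\pi_\theta}(s)-V^{\pi_\theta\circ\nu_\vartheta}(s)\ge(\delta/K)\|\vartheta_s\|^2$, followed by the second-order Fisher expansion of the KL divergence to convert $\|\vartheta_s\|^2$ into $G$. If anything your write-up is slightly tighter---you invoke the descent lemma directly rather than via a second-order Taylor expansion carrying an extra $O(\|\Delta\|^3)$ remainder, and you state the direction of the bound $\|\vartheta_s\|^2\ge 2G/\lambda_{\max}(F_{s,\theta})$ correctly (the paper's intermediate display writes it as an upper bound, though it is used in the right direction at the final substitution).
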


\begin{proof}
We begin by examining the local behavior of the KL-divergence between the policy at the original and perturbed state:
\[
G(s;\pi_\theta,\nu_\vartheta) := D_{\mathrm{KL}}\left(\pi_\theta(\cdot|s)\|\pi_\theta(\cdot|s+\vartheta_s)\right).
\]

Let us expand $G(s;\pi_\theta,\nu_\vartheta)$ for small $\vartheta_s$:

\begin{align*}
G(s;\pi_\theta, \nu_\vartheta) 
&= \sum_a \pi_\theta(a|s) \log \frac{\pi_\theta(a|s)}{\pi_\theta(a|s+\vartheta_s)} \\
&= \mathbb{E}_{a \sim \pi_\theta(\cdot|s)} \left[ \log \pi_\theta(a|s) - \log \pi_\theta(a|s+\vartheta_s) \right].
\end{align*}

Now, expand $\log\pi_\theta(a|s+\vartheta_s)$ at $\vartheta_s = 0$ using a Taylor expansion:
\[
\log\pi_\theta(a|s+\vartheta_s) 
= \log\pi_\theta(a|s) + \nabla_{\vartheta_s} \log\pi_\theta(a|s)|_{\vartheta_s=0}^\top \vartheta_s 
+ \frac{1}{2} \vartheta_s^\top \nabla_{\vartheta_s}^2 \log\pi_\theta(a|s)|_{\vartheta_s=0} \vartheta_s 
+ O(\|\vartheta_s\|_2^3).
\]

Plugging this into the difference, $\log \pi_\theta(a|s) - \log \pi_\theta(a|s+\vartheta_s)$, and taking the expectation over $a \sim \pi_\theta(\cdot|s)$:
\begin{align*}
& \mathbb{E}_{a \sim \pi_\theta(\cdot|s)} \left[
    -\nabla_{\vartheta_s} \log\pi_\theta(a|s)|_{\vartheta_s=0}^\top \vartheta_s
    -\frac{1}{2} \vartheta_s^\top \nabla_{\vartheta_s}^2 \log\pi_\theta(a|s)|_{\vartheta_s=0} \vartheta_s
\right] + O(\|\vartheta_s\|_2^3).
\end{align*}

The first term equals zero because
\[
\mathbb{E}_{a \sim \pi_\theta(\cdot|s)} [\nabla_{\vartheta_s} \log\pi_\theta(a|s)|_{\vartheta_s=0}]
= \nabla_{\vartheta_s} \sum_a \pi_\theta(a|s+\vartheta_s) |_{\vartheta_s=0} = 0,
\]
since the sum of probabilities is always 1.  

The second term can also be separated using Fisher information matrix properties:
\[
F_{s,\theta} := \mathbb{E}_{a \sim \pi_\theta(\cdot|s)} \big[ \nabla_{\vartheta_s} \log \pi_\theta(a|s) \nabla_{\vartheta_s} \log \pi_\theta(a|s)^\top \big],
\]
and
\[
\mathbb{E}_{a \sim \pi_\theta(\cdot|s)} [\nabla_{\vartheta_s}^2 \log\pi_\theta(a|s)] = -F_{s,\theta}.
\]

Therefore, combining the above,
\[
G(s;\pi_\theta,\nu_\vartheta) = \frac{1}{2} \vartheta_s^\top F_{s,\theta} \vartheta_s + O(\|\vartheta_s\|_2^3).
\]

From the expansion above, since $F_{s,\theta}$ is positive definite (or at least positive semi-definite), we have:
\[
\frac{1}{2} \lambda_{\min}(F_{s,\theta}) \|\vartheta_s\|_2^2 \leq  \frac{1}{2} \vartheta_s^\top F_{s,\theta} \vartheta_s \leq \frac{1}{2} \lambda_{\max}(F_{s,\theta}) \|\vartheta_s\|_2^2.
\]
Neglecting the lower bound (since we only need an upper bound for $\|\vartheta_s\|_2^2$), isolate $\|\vartheta_s\|_2^2$:
\[
\|\vartheta_s\|_2^2 \leq \frac{2}{\lambda_{\max}(F_{s, \theta})} G(s; \pi_\theta, \nu_\vartheta) + O(G(s; \pi_\theta, \nu_\vartheta)^{3/2}).
\]
Here, the $O(G^{3/2})$ term comes from inverting the Taylor expansion when $\vartheta_s$ is small.
We now analyze the decrease in the value function resulting from $K$ steps of gradient descent:

For each step,
$$ \vartheta_{s,k} = \vartheta_{s,k-1} - \eta_{s,k-1}\nabla_{\vartheta_s}V^{\pi_\theta \circ \nu_{\vartheta}}(s)|_{\vartheta_s=\vartheta_{s,k-1}}.$$
Let $V_k = V^{\pi_\theta \circ \nu_{\vartheta_{s,k}}}(s)$.
Expanding $V_{k}$ around $\vartheta_{s,k-1}$ using Taylor’s theorem:
\begin{align*}
V_k &= V_{k-1} + \nabla_{\vartheta_s} V^{\pi_\theta \circ \nu_\vartheta}(s)|_{\vartheta_s = \vartheta_{s,k-1}}^\top (\vartheta_{s,k} - \vartheta_{s,k-1}) \\
&\quad+ \frac{1}{2}(\vartheta_{s,k} - \vartheta_{s,k-1})^\top \nabla^2_{\vartheta_s} V^{\pi_\theta \circ \nu_\vartheta}(s)|_{\vartheta_s = \vartheta_{s,k-1}} (\vartheta_{s,k} - \vartheta_{s,k-1}) + O(\|\vartheta_{s,k} - \vartheta_{s,k-1}\|_2^3).
\end{align*}

Summing over $k=1$ to $K$ and noting telescoping sum of $V_{k-1}-V_k$:
\begin{align*}
V^{\pi_\theta}(s) - V^{\pi_\theta \circ \nu_{\vartheta_{s,K}}}(s)
&= \sum_{k=1}^K (V^{\pi_\theta \circ \nu_{\vartheta_{s,k-1}}}(s) - V^{\pi_\theta \circ \nu_{\vartheta_{s,k}}}(s)) \\
&\quad + O\left( \sum_{k=1}^K \|\vartheta_{s,k} - \vartheta_{s,k-1}\|_2^3 \right)\\
&= -\sum_{k=1}^{K} \nabla_{\vartheta_s} V^{\pi_\theta \circ \nu_\vartheta}(s) |_{\vartheta_s = \vartheta_{s,k-1}}^\top (\vartheta_{s,k} - \vartheta_{s,k-1})\\
&\quad - \frac{1}{2} \sum_{k=1}^{K} (\vartheta_{s,k} - \vartheta_{s,k-1})^\top \nabla^2_{\vartheta_s} V^{\pi_\theta \circ \nu_\vartheta}(s)|_{\vartheta_s = \vartheta_{s,k-1}} (\vartheta_{s,k} - \vartheta_{s,k-1}) \\
&\quad + O\left( \sum_{k=1}^K \|\vartheta_{s,k} - \vartheta_{s,k-1}\|_2^3 \right).
\end{align*}

The gradient update yields:
\[
\vartheta_{s,k} - \vartheta_{s,k-1} = -\eta_{s,k-1} \nabla_{\vartheta_s} V^{\pi_\theta \circ \nu_\vartheta}(s)|_{\vartheta_s = \vartheta_{s,k-1}}.
\]
For small enough steps, and since the step size $\eta_{s,k-1} \leq 1/(\lambda_s + \delta)$ ensures sufficient decrease (similar to the classic descent lemma in optimization), we can bound the value difference as:
\[
V^{\pi_\theta}(s) - V^{\pi_\theta \circ \nu_{\vartheta_{s,K}}}(s) \geq \delta \sum_{k=1}^K \|\vartheta_{s,k} - \vartheta_{s,k-1}\|_2^2 + O\left(\sum_{k=1}^K \|\vartheta_{s,k} - \vartheta_{s,k-1}\|_2^3\right).
\]

By Jensen's inequality,
\[
\sum_{k=1}^K \|\vartheta_{s,k} - \vartheta_{s,k-1}\|_2^2 \geq \frac{1}{K}\left\| \sum_{k=1}^K (\vartheta_{s,k} - \vartheta_{s,k-1}) \right\|_2^2 = \frac{1}{K} \|\vartheta_{s,K}\|_2^2.
\]

Now substitute the earlier quadratic relationship between $\|\vartheta_{s,K}\|^2$ and $G(s;\pi_\theta, \nu_{\vartheta_{s,K}})$:

\[
V^{\pi_\theta}(s) - V^{\pi_\theta \circ \nu_{\vartheta_{s,K}}}(s)
\geq \frac{\delta}{K} \|\vartheta_{s,K}\|_2^2 + O(\|\vartheta_{s,K}\|_2^3)
\]
\[
\geq \frac{2\delta}{\lambda_{\max}(F_{s,\theta})K} G(s;\pi_\theta, \nu_{\vartheta_{s,K}}) 
+ O(G(s;\pi_\theta, \nu_{\vartheta_{s,K}})^{3/2}).
\]
This completes the proof.
\end{proof}

% \newpage

\section{Additional Algorithm Details} \label{app: additional algorithm details}

\subsection{Adversarially Robust Proximal Policy Optimization (AR-PPO)} \label{app subsec: arppo}
We apply the ARPO paradigm on top of the PPO algorithm and further incorporate the SPO-based guidance, yielding Adversarily Robust Proximal Policy Optimization (AR-PPO). The overall algorithm is presented in Algorithm \ref{alg:ar-ppo}.

\subsection{Bilevel Adversarially Robust Proximal Policy Optimization (BAR-PPO)} \label{app subsec: barppo}
We apply the BARPO paradigm on top of the PPO algorithm and further incorporate the SPO-based guidance, yielding the Bilevel Adversarially Robust Proximal Policy Optimization (BAR-PPO). The overall algorithm is presented in Algorithm \ref{alg:bar-ppo}.

\subsection{Additional Implementation Details} \label{app subsec: add imp details}

We run 2048 simulation steps per iteration and use a simple MLP network for all agents. SA-PPO, WocaR-PPO, and BAR-PPO are trained for 2 million steps (976 iterations) on Hopper, Walker2d, and Halfcheetah, and 10 million steps (4882 iterations) on Ant for convergence. RADIAL-PPO is trained for 4 million steps (2000 iterations) on Hopper, Walker2d, and Halfcheetah following the official implementation and 10 million steps (4882 iterations) on Ant. The attack budget $\epsilon$ is linearly increased from $0$ to the target value during the first 732 iterations on Hopper, Walker2d, and Halfcheetah, and 3662 iterations on Ant, before continuing with the target value for the remaining iterations. This scheduler is aligned with~\cite{zhang2020robust, liang2022efficient} without extra tuning. The regularization weight $\kappa$ is chosen from $\{0.1, 0.3, 1.0\}$ for BAR-PPO. We run 10 iterations with step size $\epsilon/10$ for both methods and set the temperature parameter $\beta=1\times 10^{-5}$ for SGLD. 

We implement all PPO agents with the same fully connected (MLP) structure as~\cite{zhang2020robust, oikarinen2021robust}. In the Ant environment, we choose the best regularization parameter $\kappa$ in $\{0.01, 0.03,0.1,0.3,1.0 \}$ for SA-PPO, RADIAL-PPO, and WocaR-PPO to achieve better robustness. For fair and comparable agent selection, we conduct multiple experiments for each setup, repeating each 17 times to account for the inherent performance variability in RL. After training, we attack all agents using random, critic, MAD, and RS attacks. Then, we select the median agent by considering the natural and robust returns as our final agent. This chosen agent is then attacked using the SA-RL and PA-AD attacks to further robustness evaluation, because these attacks involve quite high computational costs.

\subsection{Experiments Compute Resources} \label{app subsec: exp com res}

All MuJoCo experiments were conducted on a machine with an AMD EPYC 7742 CPU and no GPU acceleration. 

BARPO's training takes approximately 2 hours for Hopper, Walker2d, and HalfCheetah, and 9 hours for the more complex Ant environment. This represents roughly a 2x increase in wall-clock time compared to standard, non-robust PPO training. Crucially, this overhead is on par with other leading robust baselines~\citep{zhang2020robust, oikarinen2021robust, liang2022efficient}, which require similar inner-loop computations to find the adversary. The choice of a CPU-only setup follows baselines, as the wall-clock time in these MuJoCo tasks is overwhelmingly dominated by the environment simulation overhead, ensuring a fair comparison of the algorithmic sample complexity.

Furthermore, we confirm that BARPO introduces no significant additional memory overhead compared to standard PPO training.

\begin{algorithm}[H]
    \caption{Adversarially Robust Proximal Policy Optimization (AR-PPO).}
    \label{alg:ar-ppo}
\begin{algorithmic}[1]
    \REQUIRE Number of iterations $T$, a schedule $\epsilon_t$ for the perturbation radius $\epsilon$, robustness weighting $\kappa$.
    \STATE Initialize policy network $\pi_{\theta}(a|s)$ and value network $V_{\theta_V}(s)$.
    \FOR{$t=1$ {\bfseries to} $T$}
    \STATE Run $\pi_{\theta_t}$ in the environment to collect a set of trajectories $\mathcal{D} = \{ \tau_k \}$ containing $|\mathcal{D}|$ episodes, each $\tau_k$ is a trajectory containing $|\tau_k|$ samples, $\tau_k := \{ (s_{k,i}, a_{k,i}, r_{k,i}, s_{k,i+1}) \}, i\in [|\tau_k|]$.
    \STATE Compute reward-to-go $\hat{R}_{k,i}$ for each step $i$ in every episode $k$ using the trajectories and discount factor $\gamma$.
    \STATE Update value network $V_{\theta_V}(s)$ by regression on the mean-square error:
    $$
    \theta_V \leftarrow \underset{\theta_V}{\arg \min }\ \frac{1}{\sum_k\left|\tau_k\right|} \sum_{\tau_k \in D} \sum_{i=0}^{\left|\tau_k\right|}\left(V\left(s_{k, i}\right)-\hat{R}_{k, i}\right)^2.
    $$
    \STATE Estimate advantage $\hat{A}_{k,i}$ for each step $i$ in every episode $k$ using generalized advantage estimation~(GAE) and the current value function $V_{\theta_V}(s)$.
    \STATE Solve the inner optimization using gradient descent (PGD):
    \STATE \qquad For all $k, i$, run PGD to solve (the objective can be solved in a batch):
    $$s_{i,\nu}=\mathop{\arg\min}_{s_\nu \in B_{\epsilon_t}(s_{k, i})} g\left(\frac{\pi_\theta (a_{k, i} | s_\nu)}{\pi_{\theta_t} (a_{k, i} | s_{k, i})}, \hat{A}_{k, i} \right),$$
    where $g(x,y) = \min \left\{ x\cdot y, \operatorname{clip}\left( x, 1-\eta, 1+\eta \right) \cdot y \right\}$.
    \STATE \qquad Compute the approximation of the outer optimization:
        $$\mathcal{L}_{ar}(\theta) = \frac{1}{\sum_k\left|\tau_k\right|} \sum_{\tau_k \in D} \sum_{i=0}^{\left|\tau_k\right|} \left( -\frac{1}{\beta} \mathcal{H}\left( \pi_\theta(\cdot | s_{k, i}) \right) - g\left(\frac{\pi_\theta (a_{k, i} | s_{i,\nu})}{\pi_{\theta_t} (a_{k, i} | s_{k, i})}, \hat{A}_{k, i} \right) \right).$$
    \STATE Update the policy network by minimizing the vanilla PPO objective and the AR objective (the minimization is solved using ADAM):
    $$
    \theta_{t+1} \leftarrow \underset{\theta}{\arg \min }\ \frac{1}{\sum_k\left|\tau_k\right|} \sum_{\tau_k \in D} \sum_{i=0}^{\left|\tau_k\right|} -g\left(\frac{\pi_\theta (a_{k, i} | s_{k,i})}{\pi_{\theta_t} (a_{k, i} | s_{k, i})}, \hat{A}_{k, i} \right) + \kappa \cdot \mathcal{L}_{ar}(\theta).
    $$
    \ENDFOR
\end{algorithmic}
    
\end{algorithm}

\begin{algorithm}[H]
    \caption{Bilevel Adversarially Robust Proximal Policy Optimization (BAR-PPO).}
    \label{alg:bar-ppo}
\begin{algorithmic}[1]
    \REQUIRE Number of iterations $T$, a schedule $\epsilon_t$ for the perturbation radius $\epsilon$, robustness weighting $\kappa$.
    \STATE Initialize policy network $\pi_{\theta}(a|s)$ and value network $V_{\theta_V}(s)$.
    \FOR{$t=1$ {\bfseries to} $T$}
    \STATE Run $\pi_{\theta_t}$ in the environment to collect a set of trajectories $\mathcal{D} = \{ \tau_k \}$ containing $|\mathcal{D}|$ episodes, each $\tau_k$ is a trajectory containing $|\tau_k|$ samples, $\tau_k := \{ (s_{k,i}, a_{k,i}, r_{k,i}, s_{k,i+1}) \}, i\in [|\tau_k|]$.
    \STATE Compute reward-to-go $\hat{R}_{k,i}$ for each step $i$ in every episode $k$ using the trajectories and discount factor $\gamma$.
    \STATE Update value network $V_{\theta_V}(s)$ by regression on the mean-square error:
    $$
    \theta_V \leftarrow \underset{\theta_V}{\arg \min }\ \frac{1}{\sum_k\left|\tau_k\right|} \sum_{\tau_k \in D} \sum_{i=0}^{\left|\tau_k\right|}\left(V\left(s_{k, i}\right)-\hat{R}_{k, i}\right)^2.
    $$
    \STATE Estimate advantage $\hat{A}_{k,i}$ for each step $i$ in every episode $k$ using generalized advantage estimation~(GAE) and the current value function $V_{\theta_V}(s)$.
    \STATE Solve the inner optimization using stochastic gradient Langevin dynamics~(SGLD):
    \STATE \qquad For all $k, i$, run SGLD to solve (the objective can be solved in a batch):
    $$s_{i,\nu}=\mathop{\arg\max}_{s_\nu \in B_{\epsilon_t} (s_{k, i})} \operatorname{KL} \left(\pi_\theta (\cdot | s_{k, i}) \| \pi_\theta (\cdot | s_\nu) \right).$$
    \STATE \qquad Compute the approximation of the outer optimization:
        $$\mathcal{L}_{bar}(\theta) = \frac{1}{\sum_k\left|\tau_k\right|} \sum_{\tau_k \in D} \sum_{i=0}^{\left|\tau_k\right|} \left( -\frac{1}{\beta} \mathcal{H}\left( \pi_\theta(\cdot | s_{k, i}) \right) - g\left(\frac{\pi_\theta (a_{k, i} | s_{i,\nu})}{\pi_{\theta_t} (a_{k, i} | s_{k, i})}, \hat{A}_{k, i} \right) \right),$$
       \qquad where $g(x,y) = \min \left\{ x\cdot y, \operatorname{clip}\left( x, 1-\eta, 1+\eta \right) \cdot y \right\}$.
    \STATE Update the policy network by minimizing the vanilla PPO objective and the BAR objective (the minimization is solved using ADAM):
    $$
    \theta_{t+1} \leftarrow \underset{\theta}{\arg \min }\ \frac{1}{\sum_k\left|\tau_k\right|} \sum_{\tau_k \in D} \sum_{i=0}^{\left|\tau_k\right|} -g\left(\frac{\pi_\theta (a_{k, i} | s_{k,i})}{\pi_{\theta_t} (a_{k, i} | s_{k, i})}, \hat{A}_{k, i} \right) + \kappa \cdot \mathcal{L}_{bar}(\theta).
    $$
    \ENDFOR
\end{algorithmic}
    
\end{algorithm}

\section{Additional Experiment Results}\label{app sec: add exp results}

\subsection{Comparison of SPO, ARPO, BARPO Without Guidance, and BARPO} \label{app subsec: compare spo, arpo, barpo}
In Table~\ref{tab:arpo and barpo}, we compare the natural and robust performance of SPO, ARPO, BARPO without guidance, and BARPO for continuous control tasks in MuJoCo. We also plot Figure~\ref{fig: compare arpo and barpo with std}, the version of Figure~\ref{fig: compare arpo and barpo} without the standard deviation. We can see that BARPO, without guidance, consistently and significantly outperforms ARPO in natural and robust returns with respect to the mean performance. Moreover, BARPO further improves the overall performance by utilizing the SPO guidance.

\begin{figure}[t]
  \centering
    \subfigure[Hopper]{
    \begin{minipage}{0.22\linewidth}
        \centering
        \includegraphics[width=1.1\linewidth]{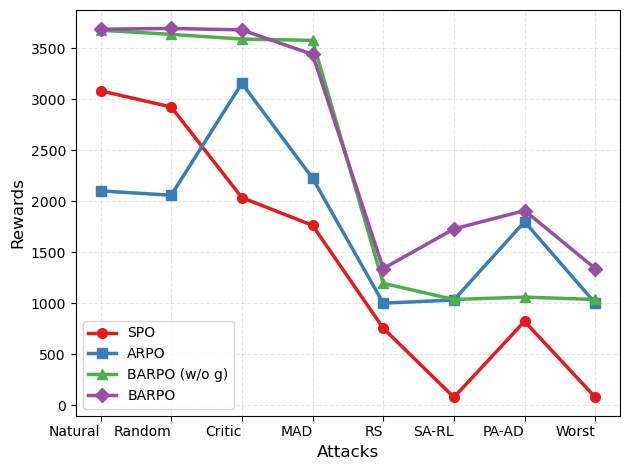}
    \end{minipage}
    }
    \subfigure[Walker2d]{
    \begin{minipage}{0.22\linewidth}
        \centering
        \includegraphics[width=1.1\linewidth]{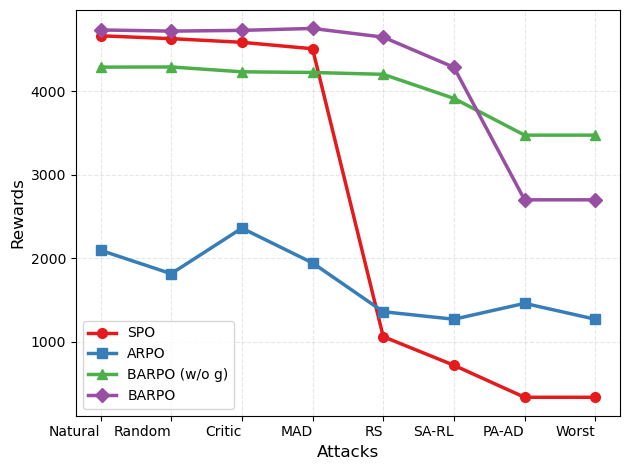}
    \end{minipage}
    }
    \subfigure[Halfcheetah]{
    \begin{minipage}{0.22\linewidth}
        \centering
        \includegraphics[width=1.1\linewidth]{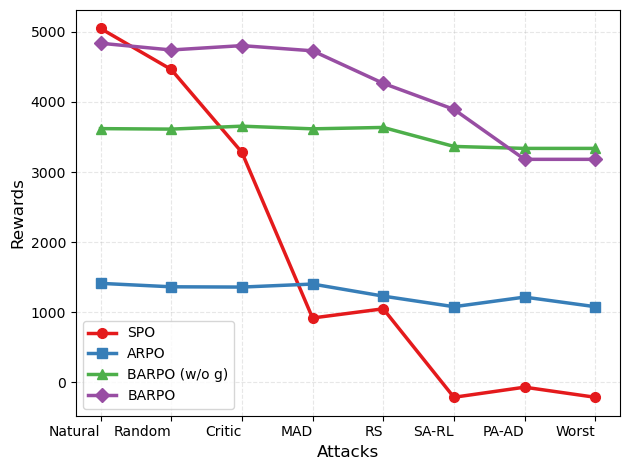}
    \end{minipage}
    }
    \subfigure[Ant]{
    \begin{minipage}{0.22\linewidth}
        \centering
        \includegraphics[width=1.1\linewidth]{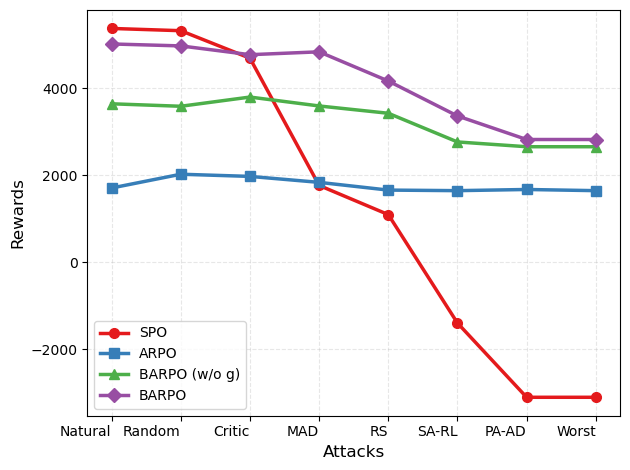}
    \end{minipage}
    }
  \caption{Natural and robust performance of SPO, ARPO, BARPO without guidance, and BARPO for four continuous control tasks. BARPO (w/o g) consistently outperforms ARPO.}
  \label{fig: compare arpo and barpo with std}
\end{figure}

\begin{table*}[htb]
\caption{Natural and robust performance of SPO, ARPO, BARPO without SPO-guidance, and BARPO for continuous control tasks in MuJoCo. Bold and underlined values indicate the top and second-best performances, respectively. }
\vspace{0.5em}
\label{tab:arpo and barpo}
\resizebox{\textwidth}{!}{%
\begin{tabular}{cccccccccccc}
\toprule
\multirow{2}{*}{\makecell{\textbf{Environment}}} & \multicolumn{2}{c}{\raisebox{-6pt}{\textbf{Paradigms}}} & \multirow{2}{*}{\makecell{\textbf{Natural}\\\textbf{Return}}} & \multicolumn{7}{c}{\textbf{Return under Attack}} & \multirow{2}{*}{\makecell{\textbf{Worst-case}\\\textbf{Robustness}}} \\
 & \multicolumn{2}{c}{} & & Random & Critic & MAD & RS & SA-RL & PA-AD & Worst & \\
\midrule
%================== Hopper ======================
\multirow{4}{*}{\makecell{\textbf{Hopper}\\\scriptsize{($\epsilon = 0.075$)}}} 
    & \multicolumn{2}{c}{SPO}    & 3081 \text{\scriptsize$\pm$ 638}  &  2923 \text{\scriptsize$\pm$ 767}&  2035 \text{\scriptsize$\pm$ 1035}&  1763 \text{\scriptsize$\pm$ 619}&   756 \text{\scriptsize$\pm$ 36}&    79 \text{\scriptsize$\pm$ 2}& 823 \text{\scriptsize$\pm$ 182}&79 \text{\scriptsize$\pm$ 2} &-0.974  \\
    & \multicolumn{2}{c}{ARPO}   & 2101 \text{\scriptsize$\pm$ 588}  & 2058 \text{\scriptsize$\pm$ 559} & 3155 \text{\scriptsize$\pm$ 528} & 2225 \text{\scriptsize$\pm$ 641} & 1001 \text{\scriptsize$\pm$ 30} & 1032 \text{\scriptsize$\pm$ 47} & \underline{1799} \text{\scriptsize$\pm$ 547} & 1001 \text{\scriptsize$\pm$ 30} & \textbf{-0.524} \\
    & \multicolumn{2}{c}{BARPO (w/o g)} & \underline{3675} \text{\scriptsize$\pm$ 15} & \underline{3634} \text{\scriptsize$\pm$ 170} & \underline{3588} \text{\scriptsize$\pm$ 263} & \textbf{3575} \text{\scriptsize$\pm$ 295} & \underline{1195} \text{\scriptsize$\pm$ 79}  & \underline{1037} \text{\scriptsize$\pm$ 24} & 1060 \text{\scriptsize$\pm$ 125} & \underline{1037} \text{\scriptsize$\pm$ 24} & -0.718 \\
    & \multicolumn{2}{c}{BARPO}  & \textbf{3684} \text{\scriptsize$\pm$ 20}    & \textbf{3692} \text{\scriptsize$\pm$ 24} & \textbf{3678} \text{\scriptsize$\pm$ 19} & \underline{3437} \text{\scriptsize$\pm$ 555} & \textbf{1340} \text{\scriptsize$\pm$ 42} & \textbf{1728} \text{\scriptsize$\pm$ 62} & \textbf{1908} \text{\scriptsize$\pm$ 410} & \textbf{1340} \text{\scriptsize$\pm$ 42} & \underline{-0.636} \\
\midrule
%================== Walker2d ======================
\multirow{4}{*}{\makecell{\textbf{Walker2d}\\\scriptsize{($\epsilon = 0.05$)}}} 
    & \multicolumn{2}{c}{SPO}    & \underline{4662} \text{\scriptsize$\pm$ 22}    &\underline{4628} \text{\scriptsize$\pm$ 21}     &\underline{4584} \text{\scriptsize$\pm$ 15}     &\underline{4507} \text{\scriptsize$\pm$ 675} &1062 \text{\scriptsize$\pm$ 150}     &719 \text{\scriptsize$\pm$ 1079}     &336 \text{\scriptsize$\pm$ 252}     &336 \text{\scriptsize$\pm$ 252}  &-0.928 \\
    & \multicolumn{2}{c}{ARPO}   & 2095 \text{\scriptsize$\pm$ 983} & 1815 \text{\scriptsize$\pm$ 930} & 2360 \text{\scriptsize$\pm$ 1021} & 1944 \text{\scriptsize$\pm$ 976} & 1360 \text{\scriptsize$\pm$ 835} & 1270 \text{\scriptsize$\pm$ 502} & 1460 \text{\scriptsize$\pm$ 739} & 1270 \text{\scriptsize$\pm$ 502} & \underline{-0.394} \\
    & \multicolumn{2}{c}{BARPO (w/o g)} & 4287 \text{\scriptsize$\pm$ 15} & 4289 \text{\scriptsize$\pm$ 17} & 4231 \text{\scriptsize$\pm$ 10} & 4223 \text{\scriptsize$\pm$ 406} & \underline{4201} \text{\scriptsize$\pm$ 44} & \underline{3913} \text{\scriptsize$\pm$ 694} & \textbf{3473} \text{\scriptsize$\pm$ 495} & \textbf{3473} \text{\scriptsize$\pm$ 495} & \textbf{-0.190} \\
    & \multicolumn{2}{c}{BARPO}  & \textbf{4732} \text{\scriptsize$\pm$ 86}  & \textbf{4718} \text{\scriptsize$\pm$ 59} & \textbf{4727} \text{\scriptsize$\pm$ 36} & \textbf{4750} \text{\scriptsize$\pm$ 65} & \textbf{4646} \text{\scriptsize$\pm$ 53}  & \textbf{4285} \text{\scriptsize$\pm$ 1282} & \underline{2699} \text{\scriptsize$\pm$ 1192} & \underline{2699} \text{\scriptsize$\pm$ 1192} & -0.436 \\
\midrule
%================== Halfcheetah ======================
\multirow{4}{*}{\makecell{\textbf{Halfcheetah}\\\scriptsize{($\epsilon = 0.15$)}}} 
    & \multicolumn{2}{c}{SPO}    & \textbf{5048} \text{\scriptsize$\pm$ 526}     &\underline{4463} \text{\scriptsize$\pm$ 650}     &3281 \text{\scriptsize$\pm$ 1101}     &918 \text{\scriptsize$\pm$ 541}     &1049 \text{\scriptsize$\pm$ 50}     &-213 \text{\scriptsize$\pm$ 103}     &-69 \text{\scriptsize$\pm$ 22}     &-213 \text{\scriptsize$\pm$ 103} &-1.042  \\
    & \multicolumn{2}{c}{ARPO}   & 1412 \text{\scriptsize$\pm$ 99} & 1363 \text{\scriptsize$\pm$ 285} & 1359 \text{\scriptsize$\pm$ 59} & 1402 \text{\scriptsize$\pm$ 64} & 1230 \text{\scriptsize$\pm$ 75} & 1079 \text{\scriptsize$\pm$ 48} & 1216 \text{\scriptsize$\pm$ 458} & 1079 \text{\scriptsize$\pm$ 48} & \underline{-0.236} \\
    & \multicolumn{2}{c}{BARPO (w/o g)} & 3619 \text{\scriptsize$\pm$ 44} & 3612 \text{\scriptsize$\pm$ 28} & \underline{3654} \text{\scriptsize$\pm$ 28} & \underline{3616} \text{\scriptsize$\pm$ 37} & \underline{3636} \text{\scriptsize$\pm$ 523} & \underline{3365} \text{\scriptsize$\pm$ 40} & \textbf{3337} \text{\scriptsize$\pm$ 45} & \textbf{3337} \text{\scriptsize$\pm$ 45} & \textbf{-0.078} \\
    & \multicolumn{2}{c}{BARPO}  & \underline{4837} \text{\scriptsize$\pm$ 99}   & \textbf{4741} \text{\scriptsize$\pm$ 501}               & \textbf{4803} \text{\scriptsize$\pm$ 54}             & \textbf{4729} \text{\scriptsize$\pm$ 105} & \textbf{4265} \text{\scriptsize$\pm$ 1077} & \textbf{3894} \text{\scriptsize$\pm$ 1322}  & \underline{3181} \text{\scriptsize$\pm$ 1593}   & \underline{3181} \text{\scriptsize$\pm$ 1593}  & -0.342 \\
\midrule
%================== Ant ======================
\multirow{4}{*}{\makecell{\textbf{Ant}\\\scriptsize{($\epsilon = 0.15$)}}} 
    & \multicolumn{2}{c}{SPO}    & \textbf{5381} \text{\scriptsize$\pm$ 1308}     &\textbf{5329} \text{\scriptsize$\pm$ 976}     &\underline{4696}  \text{\scriptsize$\pm$ 1015}    &1768 \text{\scriptsize$\pm$ 929}     &1097 \text{\scriptsize$\pm$ 633}     &-1398 \text{\scriptsize$\pm$ 318}      &-3107 \text{\scriptsize$\pm$ 1071}     &-3107 \text{\scriptsize$\pm$ 1071} &-1.577 \\
    & \multicolumn{2}{c}{ARPO}   & 1709 \text{\scriptsize$\pm$ 564} & 2026 \text{\scriptsize$\pm$ 38} & 1976 \text{\scriptsize$\pm$ 131} & 1839 \text{\scriptsize$\pm$ 350} & 1661 \text{\scriptsize$\pm$ 593} & 1648 \text{\scriptsize$\pm$ 666} & 1675 \text{\scriptsize$\pm$ 573} & 1648 \text{\scriptsize$\pm$ 666} & \textbf{-0.034} \\
    & \multicolumn{2}{c}{BARPO (w/o g)} & 3647 \text{\scriptsize$\pm$ 128} & 3590 \text{\scriptsize$\pm$ 431} & 3802 \text{\scriptsize$\pm$ 84} & \underline{3597} \text{\scriptsize$\pm$ 357} & \underline{3429} \text{\scriptsize$\pm$ 679} & \underline{2769} \text{\scriptsize$\pm$ 608} & \underline{2659} \text{\scriptsize$\pm$ 541} & \underline{2659} \text{\scriptsize$\pm$ 541} & \underline{-0.270} \\
    & \multicolumn{2}{c}{BARPO}  & \underline{5024} \text{\scriptsize$\pm$ 117}              & \underline{4979} \text{\scriptsize$\pm$ 114}                & \textbf{4777} \text{\scriptsize$\pm$ 122}            & \textbf{4843} \text{\scriptsize$\pm$ 120} & \textbf{4171} \text{\scriptsize$\pm$ 826} & \textbf{3367} \text{\scriptsize$\pm$ 902} & \textbf{2825} \text{\scriptsize$\pm$ 757}  & \textbf{2825} \text{\scriptsize$\pm$ 757}  & -0.438 \\
\bottomrule
\end{tabular}
}
\end{table*}

\subsection{Statistical Significance of Comparison Results} \label{app subsec: stat sign}

We discuss the statistical significance of comparison results between BARPO and baselines from two key considerations: (1) the comparison of mean performance, and (2) a comprehensive analysis incorporating both the mean and the standard deviation. 

\paragraph{Superiority in Mean Performance.} 
From the perspective of mean returns, BARPO demonstrates a clear and significant advantage. To quantify this, we have calculated the average improvement of BARPO's mean returns over each baseline across all four environments, both with and without attacks. The results, summarized in Table \ref{tab: mean improvement}, show that BARPO consistently and substantially outperforms the baselines on average.

\begin{table}[htb]
\caption{The average improvement of BARPO's mean of returns over baselines.}
\label{tab: mean improvement}
\vspace{0.5em}
\resizebox{\textwidth}{!}{%
\begin{tabular}{ccccccccc}
\toprule
\textbf{Compared Method} & \textbf{Natural Return} & \textbf{Random} & \textbf{Critic} & \textbf{MAD} & \textbf{RS} & \textbf{SA-RL} & \textbf{PA-AD} & \textbf{Worst} \\ \hline 
SA & -0.85\% & +4.17\% & -2.19\% & +8.20\% & +28.29\% & +48.31\% & +85.93\% & +74.96\% \\ 
RADIAL & +29.40\% & +37.89\% & +34.20\% & +48.35\% & +77.60\% & +294.27\% & +24.88\% & +156.68\% \\ 
WocaR & +10.59\% & +8.94\% & +7.21\% & +13.28\% & +28.34\% & +92.61\% & +31.30\% & +65.65\% \\ 
SPO & +2.56\% & +6.94\% & +82.99\% & +197.59\% & +350.34\% & +1238.33\% & +1708.38\% & +1000.94\% \\ 
ARPO & +159.37\% & +158.24\% & +128.23\% & +149.80\% & +168.34\% & +167.73\% & +80.29\% & +103.16\% \\ \bottomrule
\end{tabular}%
}
\end{table}

\paragraph{Advantage Considering Standard Deviation.} 
Beyond the mean, a robust comparison must account for performance variance. Our results show that BARPO's advantage holds even when considering the standard deviation. (1) In the majority of cases, the upper performance bound of the baselines (i.e., mean + standard deviation) is still lower than the mean performance of BARPO alone. (2) More strikingly, in several scenarios, the upper bound of a baseline's performance (mean + standard deviation) is lower than the lower performance bound of BARPO (mean - standard deviation).

These points strongly indicate that the performance distributions are well-separated, and BARPO's superiority is not an artifact of sampling variance. This demonstrates a statistically significant edge over existing methods.

\subsection{Extended Experiments on Humanoid Task} \label{app subsec: humanoid exp}

Humanoid task in the MuJoCo environment is a 3D bipedal robot designed to simulate a human and is widely considered the most difficult MuJoCo locomotion task, featuring:
\setlength{\itemsep}{0.1pt}
\begin{itemize}[topsep=0em, leftmargin=0em, itemindent=1em]
    \item A 348-dimensional observation space (~31.6x Hopper, ~20.5x Walker2d/HalfCheetah, ~3.3x Ant).
    \item A 17-dimensional action space (~5.7x Hopper, ~2.8x Walker2d/HalfCheetah, ~2.1x Ant).
\end{itemize}

For this challenging environment, we implemented a comprehensive comparison of methods: SPO, ARPO, BARPO (w/o g), ARPO (w/ g), BARPO, and the robust baseline SA-PPO. Each method's training was repeated 17 times. We evaluated both the natural performance and the robust performance under four different attack types, and we report the results of the median-performing agent. For the training of ARPO (w/ g), BARPO, and SA-PPO, the hyperparameter $\kappa$ was selected via a grid search over the set $\{0.1, 0.3, 1.0\}$.

Our results, shown in Tables \ref{tab: human 1} and \ref{tab: human 2}, demonstrate that BARPO continues to achieve a strong natural and robust performance even in this significantly more complex environment. This further validates the effectiveness of our approach.

\begin{table}[htb]
\centering
\caption{Performance of SPO, ARPO, and BARPO without guidance in the Humanoid task.}
\label{tab: human 1}
\vspace{0.5em}
\resizebox{0.8\textwidth}{!}{%
\begin{tabular}{cccccc}
\toprule
\textbf{Paradigm} & \textbf{Natural Return} & \textbf{Random} & \textbf{Critic} & \textbf{MAD} & \textbf{RS} \\ \hline
SPO & 5227 $\pm$ 963 & 5320 $\pm$ 730 & 4232 $\pm$ 1545 & 1059 $\pm$ 563 & 910 $\pm$ 595 \\ 
ARPO & 551 $\pm$ 44 & 544 $\pm$ 47 & 553 $\pm$ 51 & 554 $\pm$ 52 & 522 $\pm$ 60 \\
BARPO (w/o g) & \textbf{6177} $\pm$ 21 & \textbf{6177} $\pm$ 18 & \textbf{6204} $\pm$ 13 & \textbf{6180} $\pm$ 18 & \textbf{5871} $\pm$ 1066 \\ \bottomrule
\end{tabular}%
}
\end{table}

\begin{table}[htb]
\centering
\caption{Performance of SA-PPO, ARPO with guidance, and BARPO in the Humanoid task.}
\label{tab: human 2}
\vspace{0.5em}
\resizebox{0.8\textwidth}{!}{%
\begin{tabular}{cccccc}
\toprule
\textbf{Paradigm} & \textbf{Natural Return} & \textbf{Random} & \textbf{Critic} & \textbf{MAD} & \textbf{RS} \\ \hline
SA-PPO & 6315 $\pm$ 12 & 6313 $\pm$ 16 & 6385 $\pm$ 10 & 6321 $\pm$ 15 & 5889 $\pm$ 92 \\ 
ARPO (w/ g) & 6035 $\pm$ 38 & 6041 $\pm$ 41 & 5978 $\pm$ 559 & 5940 $\pm$ 93 & 5637 $\pm$ 622 \\ 
BARPO & \textbf{6615} $\pm$ 39 & \textbf{6612} $\pm$ 32 & \textbf{6698} $\pm$ 33 & \textbf{6623} $\pm$ 40 & \textbf{6031} $\pm$ 827 \\ \bottomrule
\end{tabular}%
}
\end{table}

\subsection{Ablations on Regularization Coefficient} \label{app subsec: exp regu coef}

The regularization coefficient $\kappa$ plays a key role in robust training. Following baselines~\citep{zhang2020robust, oikarinen2021robust, liang2022efficient}, we performed a limited grid search for BARPO in $\{0.1, 0.3, 1.0\}$. We emphasize that adversarial RL typically requires moderate tuning, and our method introduces no additional burdens than standard methods in the field. We also add ablations to reflect the performance of BARPO across different $\kappa$ choices in Table \ref{tab: kappa}.

\begin{table}[htb]
\centering
\caption{Natural and worst-case robust performance of BARPO across various $\kappa$.}
\label{tab: kappa}
\vspace{0.5em}
\resizebox{0.5\textwidth}{!}{%
\begin{tabular}{cccc}
\toprule
\textbf{Env} & \textbf{$\kappa$} & \textbf{Natural Return} & \textbf{Worst Robust Return} \\ \hline
\multirow{3}{*}{Hopper} & 1.0 & 3636 & 1083 \\
 & 0.3 & \textbf{3684} & \textbf{1340} \\
 & 0.1 & 2151 & 959 \\ \hline
\multirow{3}{*}{Walker2d} & 1.0 & 4401 & \textbf{3254} \\
 & 0.3 & \textbf{4732} & 2669 \\
 & 0.1 & 4672 & 1615 \\ \hline
\multirow{3}{*}{Halfcheetah} & 1.0 & 3843 & \textbf{3436} \\
 & 0.3 & 4837 & 3181 \\
 & 0.1 & \textbf{5133} & 749 \\ \hline
\multirow{3}{*}{Ant} & 1.0 & 5024 & \textbf{2825} \\
 & 0.3 & 5557 & 1413 \\
 & 0.1 & \textbf{5596} & 439 \\ \bottomrule
\end{tabular}%
}
\end{table}

\subsection{Ablations on Inner Steps and Step Sizes}

We further compare the BARPO's performance with different inner steps (10, 15, 20) and varying step sizes (scaling from $1\times$ to $3\times$ the base ratio of $\epsilon$/steps) in Tables \ref{tab: inner step} and \ref{tab: inner step size}. Across these variations, BARPO consistently maintained its performance, indicating it is not sensitive to specific parameter tuning.

\begin{table}[htb]
\centering
\caption{Natural and worst-case robust performance of BARPO across various inner steps.}
\label{tab: inner step}
\vspace{0.5em}
\resizebox{0.5\textwidth}{!}{%
\begin{tabular}{ccccc}
\toprule
\textbf{Env} & \textbf{Return} & \textbf{10-steps} & \textbf{15-steps} &\textbf{20-steps} \\ \hline
\multirow{2}{*}{Hopper} &Natural & 3677 & \textbf{3742} & 3652 \\
 & Worst & \textbf{1240} & 1049 & 1194 \\ \hline
\multirow{2}{*}{Walker2d} &Natural & 4401 & \textbf{4733} & 4519 \\
 & Worst & \textbf{4140} & 3536 & 3913 \\ \hline
\multirow{2}{*}{Halfcheetah} &Natural & \textbf{4623} & 4408 & 4455 \\
 & Worst & 3453 & 3845 & \textbf{3972} \\ \hline
\multirow{2}{*}{Ant} &Natural & \textbf{5024} & 4917 & 4852 \\
 & Worst & 4171 & \textbf{4333} & 4303 \\ \bottomrule
\end{tabular}%
}
\end{table}

\begin{table}[htb]
\centering
\caption{Natural and worst-case robust performance of BARPO across various inner step sizes.}
\label{tab: inner step size}
\vspace{0.5em}
\resizebox{0.5\textwidth}{!}{%
\begin{tabular}{ccccc}
\toprule
\textbf{Env} & \textbf{Return} & \textbf{$\epsilon$/steps} & \textbf{$2\epsilon$/steps} &\textbf{$3\epsilon$/steps} \\ \hline
\multirow{2}{*}{Hopper} &Natural & \textbf{3677} & 3573 & 3673 \\
 & Worst & 1240 & 1242 & \textbf{1250} \\ \hline
\multirow{2}{*}{Walker2d} &Natural & 4401 & \textbf{4424} & 4219 \\
 & Worst & 4140 & \textbf{4284} & 3159 \\ \hline
\multirow{2}{*}{Halfcheetah} &Natural & \textbf{4623} & 4307 & 4549 \\
 & Worst & 3453 & 3754 & \textbf{3988} \\ \hline
\multirow{2}{*}{Ant} &Natural & \textbf{5024} & 4772 & 4853 \\
 & Worst & 4171 & 4124 & \textbf{4226} \\ \bottomrule
\end{tabular}%
}
\end{table}

\subsection{Further Ablations on Effects of the Bilevel Structure and SPO} \label{app subsec: ablation on bilevel and spo}
As shown in “Effect of SPO Guidance” of Section~\ref{subsec: ablation} (Figures \ref{fig: compare arpo and barpo} and \ref{fig: natural training curves}), we perform ablations comparing BARPO with and without SPO guidance, showing that SPO improves natural return but reduces robustness. Additionally, as detailed in the second part of this section, Table \ref{tab:arpo-wg-barpo} shows that introducing bilevel optimization on top of SPO substantially improves robust return with only minor degradation in natural return.

To further disentangle the effects of the bilevel structure and SPO, we now include additional comparisons (ARPO, ARPO w/ g, BARPO w/o g) and report the relative improvements of ARPO w/ g and BARPO w/o g compared to ARPO in Table \ref{tab: compare bilevel and spo}.

\begin{table}[htb]
\caption{Relative improvements of ARPO w/ g and BARPO w/o g compared to ARPO}
\label{tab: compare bilevel and spo}
\vspace{0.5em}
\resizebox{\textwidth}{!}{%
\begin{tabular}{cccccccccc}
\toprule
\textbf{Env} & \textbf{Paradigm} & \textbf{Natural Return} & \textbf{Random} & \textbf{Critic} & \textbf{MAD} & \textbf{RS} & \textbf{SA-RL} & \textbf{PA-AD} & \textbf{Worst} \\ \hline
\multirow{2}{*}{Hopper} & ARPO (w g) & \textbf{+76.0}\textbf{\%} & \textbf{+77.5}\textbf{\%} & \textbf{+17.1}\textbf{\%} & +40.8\% & +12.9\% & \textbf{+23.8}\textbf{\%} & \textbf{-2.4}\textbf{\%} & \textbf{+12.9}\textbf{\%} \\
 & BARPO (w/o g) & +74.9\% & +76.6\% & +13.7\% & \textbf{+60.7}\textbf{\%} & \textbf{+19.4}\textbf{\%} & +0.5\% & -41.1\% & +3.6\% \\ \hline
\multirow{2}{*}{Walker2d} & ARPO (w g) & +100.8\% & +133.1\% & \textbf{+81.5}\textbf{\%} & \textbf{+117.4}\textbf{\%} & +130.8\% & +180.7\% & +70.9\% & +96.5\% \\
 & BARPO (w/o g) & \textbf{+104.6}\textbf{\%} & \textbf{+136.3}\textbf{\%} & +79.3\% & +117.2\% & \textbf{+208.9}\textbf{\%} & \textbf{+208.1}\textbf{\%} & \textbf{+137.9}\textbf{\%} & \textbf{+173.5}\textbf{\%} \\ \hline
\multirow{2}{*}{Halfcheetah} & ARPO (w g) & \textbf{+253.9\%} & \textbf{+261.9\%} & \textbf{+270.9\%} & +83.3\% & +193.1\% & +25.7\% & -10.7\% & +0.6\% \\
 & BARPO (w/o g) & +156.3\% & +165.0\% & +168.8\% & \textbf{+157.9\%} & \textbf{+195.6\%} & \textbf{+211.9\%} & \textbf{+174.4\%} & \textbf{+209.3\%} \\ \hline
\multirow{2}{*}{Ant} & ARPO (w g) & \textbf{+215.3\%} & \textbf{+158.0\%} & \textbf{+141.3\%} & \textbf{+120.7\%} & +60.7\% & -28.1\% & +4.9\% & -28.1\% \\
 & BARPO (w/o g) & +113.4\% & +77.2\% & +92.3\% & +95.6\% & \textbf{+106.4\%} & \textbf{+68.0\%} & \textbf{+58.7\%} & \textbf{+61.3\%} \\ \bottomrule
\end{tabular}%
}
\end{table}

\subsection{Combination Effect of Bilevel Framework and KL Surrogate} \label{app subsec: ablations on bilevel and KL}

We have found that BARPO improves the optimization landscape.
We further analyze whether the improvement primarily arises from the bilevel optimization structure itself or from the KL surrogate objective.
We clarify that the improved optimization landscape in BARPO arises from the combination effect of both the bilevel optimization structure and the KL surrogate objective. This means that neither component alone is sufficient to achieve the observed improvements.

First, a bilevel structure by itself does not guarantee a smooth or well-behaved landscape. As noted in Section \ref{subsec: unified formulation}, both SPO and ARPO can be interpreted as special cases of bilevel optimization. However, they suffer from undesirable first-order stationary policies. The empirical improvements of BARPO over SPO and ARPO thus highlight the importance of a well-chosen surrogate objective. Our Theorem~\ref{thm: surrogate adversary} provides theoretical support for using the KL divergence as a meaningful and tractable surrogate. In contrast, poorly chosen surrogates without such properties may result in worse behavior. 
% Due to time constraints, we leave a broader theoretical and empirical investigation of alternative surrogate functions to future work. 
In particular, the reformulation (\ref{eq: equiv maximin}) offers a promising direction for identifying optimal surrogates theoretically.

Second, the KL surrogate alone is insufficient without the bilevel structure. To disentangle the contributions of each component, we conducted an ablation study comparing BARPO to an augmented version of SPO that uses the KL surrogate directly within a minimax formulation. The results (see Table~\ref{tab: kl with minimax and bilevel}) show that BARPO consistently achieves a better trade-off between natural and robust returns. Specifically, in Hopper and HalfCheetah environments, BARPO outperforms the Minimax baseline on both metrics. In Walker2d and Ant, BARPO achieves slightly lower natural returns but significantly higher robust performance, suggesting a better robustness-optimality balance overall.

\begin{table}[htb]
\centering
\caption{Comparison of KL surrogate with minimax and bilevel structure.}
\label{tab: kl with minimax and bilevel}
\vspace{0.5em}
\resizebox{0.5\textwidth}{!}{%
\begin{tabular}{cccc}
\toprule
\textbf{Env} & \textbf{Structure} & \textbf{Natural Return} & \textbf{Worst Robust Return} \\ \hline
\multirow{2}{*}{Hopper} & Minimax & 3518 & 1286 \\
 & Bilevel & \textbf{3684} & \textbf{1340} \\ \hline
\multirow{2}{*}{Walker2d} & Minimax & \textbf{4875} & 997 \\
 & Bilevel & 4732 & \textbf{2669} \\ \hline
\multirow{2}{*}{Halfcheetah} & Minimax & 4780 & 1443 \\
 & Bilevel & \textbf{4837} & \textbf{3181} \\ \hline
\multirow{2}{*}{Ant} & Minimax & \textbf{5367} & 2355 \\
 & Bilevel & 5024 & \textbf{2825} \\ \bottomrule
\end{tabular}%
}
\end{table}

\section{Additional Discussions and Clarifications} \label{app sec: add dis}

\subsection{Discussion on Assumptions in Theoretical Analysis of Convergence} \label{app subsec: dis on convergence assum}

We offer the following clarifications and discussions on the assumptions in the theoretical analysis of convergence for ARPO.

\paragraph{Clarification for core contributions.} 
Our main contribution is not the convergence proof itself, but the identification of a fundamental tension between optimality and robustness in policy gradient methods, and further developing a principled solution to mitigate this tradeoff. Our theoretical analysis reveals the root cause of this tradeoff, the landscape distortion induced by the strongest adversaries, and motivates the design of a bilevel framework to alleviate it. The convergence proof merely supports the conclusion that ARPO converges to FOSPs.

\paragraph{Justification of assumptions.}
We acknowledge that assumptions like Lipschitz smoothness and strong convexity may not strictly hold for deep neural networks. However, they are widely adopted in prior works on adversarial robustness~\citep{sinha2018certifiable, pmlr-v97-wang19i}. Lipschitz continuity of sampled policy gradients has been explored and supported in various works~\citep{allen2019convergence, du2019gradient, zou2020gradient} that study smoothness properties of deep neural networks. The assumption of locally strongly convex adversaries is also common in robust optimization literature~\citep{sinha2018certifiable, lee2018minimax} and can be justified through its connection with distributionally robust optimization.

\paragraph{Potential relaxations.}
These conditions can be further relaxed. Specifically, the Lipschitz gradient assumption can be relaxed to $(L_0,L_1)$-smoothness~\citep{Zhang2020Why, zhang2020improved}, generalized smoothness~\citep{li2023convex}, or smooth adaptivity~\citep{bauschke2017descent, bolte2018first, ding2025nonconvex}. Strong convexity can be weakened to weak strong convexity~\citep{necoara2019linear}, restricted secant inequality~\citep{agarwal2012fast}, PL condition~\citep{karimi2016linear}, KL condition~\citep{bolte2014proximal}, error bounds, and quadratic growth conditions~\citep{drusvyatskiy2018error}.

\subsection{Clarification on Novelty of the Bilevel Framework} \label{app subsec: bilevel novelty}
We clarify that our core contributions extend significantly beyond the proposal of using a bilevel framework. We view the bilevel framework not as the end goal, but as a significant tool to address fundamental challenges in adversarial robust policy optimization that we, to our knowledge, are the first to identify and analyze.
Our key contributions are cohesively threefold:

\textbf{New Insight into Inherent Vulnerability.} We reveal that even under theoretically aligned models, standard policy optimization is inherently vulnerable. We pinpoint the underlying cause: its tendency to converge to fragile first-order stationary policies. This vulnerability is not due to model mismatch but a fundamental issue in the optimization process, which is a perspective overlooked in the prior literature.

\textbf{Revealing and Analyzing a New Optimality-Robustness Tension.} We uncover and analyze a fundamental tension between optimality and adversarial robustness in practical policy gradient methods. We further attribute this trade-off to an intrinsic \textit{reshaping effect} induced by the strongest adversary on the optimization landscape, an insight not explicitly identified in prior works. This analysis explains why naively optimizing for robustness can degrade performance.

\textbf{A Principled Solution to Mitigate the Trade-off.} Motivated by the above findings, we propose the bilevel optimization framework. More importantly, we propose a practical, theoretically grounded instantiation of this framework designed specifically to mitigate this trade-off and find policies that are both high-performing and robust.

While bilevel optimization as a concept is not new, \textbf{its application to analyze and solve this specific, newly identified problem in policy-based RL is, to the best of our knowledge, entirely novel}. We believe the main novelty lies in the complete narrative: identifying the problem, analyzing its root cause, and providing a theoretically-principled solution.

To the best of our knowledge, we are the first to use a bilevel framework to analyze the adversarial robustness of policy-based reinforcement learning.

\subsection{Quantify the GAP Between BARPO and ARPO} \label{app subsec: gap between barpo and arpo}

We quantify the connection between our KL-based bilevel formulation (BARPO) and the original minimax formulation (ARPO) from two perspectives.

\paragraph{Theoretical alignment.} 
We emphasize that under ISA-MDP, BARPO and ARPO share the same global optima in theory. Theorem~\ref{thm: surrogate adversary} further shows that BARPO's inner optimization is approximately equivalent to that of ARPO.

\paragraph{Optimization proximity.} 
Following \cite{kwon2023fully, lu2024first}, for sufficiently large $\alpha$, BARPO can be reformulated:
\begin{align*}
    \max_{\theta,\nu^\diamond}\min_{\vartheta} L^\alpha (\theta,\nu^\diamond,\vartheta) &= \mathbb{E}_{s\sim\mu_0} \left[ V^{\pi_\theta\circ\nu^\diamond}(s)\right] \\
&\quad\ + \alpha \mathbb{E}_{s\sim\mu_0} \left[\operatorname{KL} \left(\pi_\theta (s) \| \pi_\theta\circ\nu^\diamond (s) \right) - \operatorname{KL} \left(\pi_\theta (s) \| \pi_\theta\circ\nu_\vartheta (s) \right)  \right]. 
\end{align*}
We quantify the deviation from the original objective using the bound:
\begin{align*}
    |L^\alpha (s;\theta,\nu^\diamond,\vartheta) - V^{\pi_\theta\circ\nu_\vartheta}(s)| &\le |V^{\pi_\theta\circ\nu^\diamond}(s)-V^{\pi_\theta\circ\nu_\vartheta}(s)| \\
    &\quad\ + \alpha| \operatorname{KL} \left(\pi_\theta(s)  \| \pi_\theta\circ\nu^\diamond (s) \right) - \operatorname{KL} \left(\pi_\theta (s) \| \pi_\theta\circ\nu_\vartheta (s)  \right)|.
\end{align*}
The first term is bounded by $C \sqrt{\operatorname{KL}(\pi_\theta\circ\nu^\diamond \| \pi_\theta\circ\nu_\vartheta)}$ through Theorem 5 in \cite{zhang2020robust}. The second term can be bounded by $C\left\| \frac{\pi_\theta}{\pi_\theta\circ\nu^\diamond} \right\|_\infty \left\| \frac{\pi_\theta\circ\nu^\diamond}{\pi_\theta\circ\nu_\vartheta} \right\|_\infty \sqrt{\operatorname{KL}(\pi_\theta\circ\nu^\diamond \| \pi_\theta\circ\nu_\vartheta)}$ by reverse Pinsker's and Pinsker's inequalities. Specifically, when $\pi_\theta\circ\nu^\diamond$ and $\pi_\theta\circ\nu_\vartheta$ are not too far apart, the second term is also $O\left(\sqrt{\operatorname{KL}(\pi_\theta\circ\nu^\diamond \| \pi_\theta\circ\nu_\vartheta)}\right)$ by Taylor expansion.
Thus, the deviation between BARPO and ARPO can be controlled by the square root of the maximum KL divergence between $\pi_\theta\circ\nu^\diamond$ and $\pi_\theta\circ\nu_\vartheta$.

These connections indicate that BARPO, while structurally different, retains a strong theoretical connection to the ARPO. 

\subsection{Further Theoretical Understanding of BARPO} \label{app subsec: theoretical understanding of barpo}

We further delve into a deeper theoretical understanding of how BARPO reshapes the optimization landscape from three complementary perspectives:

\textbf{Adversarial Value Improvement through Landscape Lifting.} Let $\nu^\diamond$ denote the solution to BARPO's inner problem and $\nu^*$ denote the strongest adversary defined in ARPO. By definition, $V^{\pi\circ\nu^\diamond(\pi)} \ge V^{\pi\circ\nu^*(\pi)}$, indicating that BARPO tends to elevate low-value regions in ARPO's landscape. Furthermore, Theorem~\ref{thm: surrogate adversary} shows that maximizing the KL surrogate aligns with minimizing adversarial value: when the KL surrogate is high, the adversarial value is correspondingly low. This provides a theoretical link between BARPO’s surrogate objective and ARPO’s robustness.

\textbf{Smoothing via Regularized Maximin Reformulation.} Following prior work \citep{kwon2023fully, lu2024first}, when $\alpha$ is sufficiently large, BARPO can be reformulated as an equivalent maximin problem:
\begin{equation}\label{eq: equiv maximin}
    \max_{\theta,\nu^\diamond}\min_{\vartheta} L^\alpha (\theta,\nu^\diamond,\vartheta) = \mathbb{E}_{s\sim\mu_0} [ V^{\pi_\theta\circ\nu^\diamond}(s) + \alpha(\operatorname{KL} \left(\pi_\theta (s) \| \pi_\theta\circ\nu^\diamond (s) \right) - \operatorname{KL} \left(\pi_\theta (s) \| \pi_\theta\circ\nu_\vartheta (s) \right) ) ].    
\end{equation}
Compared to ARPO’s objective, this formulation introduces a regularization term that has a smoothing effect on the outer landscape. Structurally, this resembles the use of Bregman divergences in defining the Moreau envelope~\citep{bauschke2018regularizing}, which is known to smooth the nonsmooth optimization problems. 
% We plan to further develop this interpretation using bilevel optimization techniques in future work.

\textbf{Robustness under Date-driven Value Approximation.} In practice, ARPO optimizes a lower bound $L(\pi, \nu) \le V^{\pi\circ\nu}$ due to practical data-driven estimation for $V^{\pi\circ\nu}$, i.e., the practical objective becomes $\max_\pi \min_\nu L(\pi,\nu)$, which is no longer equivalent to the ideal adversarial objective. In contrast, BARPO's inner optimization minimizes an upper bound on the adversarial value, while the outer loop maximizes a lower bound. This formulation is more faithful to the original robust motivation and offers better resilience under value function approximation, helping to improve performance in practice.

We believe these perspectives offer promising foundations toward a more complete theoretical understanding of BARPO and plan to explore them in greater depth in future work.

\section{Limitations and Future Work} \label{sec: limit}
Despite the advancements introduced in this work, several challenges remain open for future research.
First, we solve BARPO using a straightforward approach that ignores the second-order term. 
This solution process can be further improved via an advanced optimization method on a bilevel problem, such as transforming the bilevel optimization into a penalty-based minimax formulation and then solving it via fully first-order gradient methods.
Additionally, we derive a surrogate for the strongest adversaries under specific conditions, which approximates but does not coincide with the strongest adversary.
Nonetheless, BARPO, based on this surrogate, achieves strong robustness, even significantly outperforming ARPO, which relies on the strongest adversary, in tasks like Walker2d and HalfCheetah. 
While these empirical results are somewhat aligned with theoretical consistency between optimality and robustness, there remains a larger theoretical space to further explore. 
Finally, our analysis and results can be naturally extended to more general settings where strict theoretical consistency may not hold, offering valuable insights for related fields.

\section{Use of Large Language Models} \label{app sec: llm}

The core method development in this research does not involve large language models (LLMs) as any important, original, or non-standard components. We utilized the LLM to assist only in the writing process of this manuscript. All content, ideas, and arguments were conceived and written by the human authors. The LLM was then used to provide suggestions for improving grammar, phrasing, and readability. The authors reviewed these suggestions and retained full editorial control, taking sole responsibility for the final version of the text.

\end{document}